\tikzstyle{arrow} = [thick,->,>=stealth]
\newcommand{\cmark}{\ding{51}}
\newcommand{\xmark}{\ding{55}}
\newcolumntype{R}[1]{>{\raggedleft\let\newline\\\arraybackslash\hspace{0pt}}m{#1}}
\newcolumntype{L}[1]{>{\raggedright\let\newline\\\arraybackslash\hspace{0pt}}m{#1}}
\newcolumntype{C}[1]{>{\centering\let\newline\\\arraybackslash\hspace{0pt}}m{#1}}
\theoremstyle{definition}
\renewcommand{\deg}{\text{deg }}
\newcommand{\density}{\text{density}}
\newcommand{\Var}{\text{Var}}
\newcommand{\Cov}{\text{Cov}}
\DeclareMathOperator*{\argmin}{arg\,min}
\newtheorem{theorem}{Theorem}[section]
\newtheorem{lemma}[theorem]{Lemma}
\newtheorem{proposition}[theorem]{Proposition}
\newtheorem{definition}[theorem]{Definition}
\newtheorem{assumption}[theorem]{Assumption}
\newtheorem{procedure}[theorem]{Procedure}
\newcommand{\cheng}[1]{}
\newcommand{\SK}[1]{}
\tikzstyle{boxprocess} = [rectangle, rounded corners, text width=4cm,  minimum height=1cm,text centered, draw=black]
\title{Graphon Mixtures}
\author{Sevvandi Kandanaarachchi, Cheng Soon Ong}
\date{}
\begin{document}

\maketitle

\begin{abstract}
 Social networks have a small number of large hubs, and a large number of small dense communities.
    We propose a generative model that captures both hub and dense structures.
    Based on recent results about graphons on line graphs, our model is a graphon mixture,
    enabling us to generate sequences of graphs where each graph is a combination of sparse and dense graphs.
    We propose a new condition on sparse graphs (the max-degree), which enables us to identify hubs.
    We show theoretically that we can estimate the normalized degree of the hubs, as well as
    estimate the graphon corresponding to sparse components of graph mixtures.
    We illustrate our approach on synthetic data, citation graphs, and social networks,
    showing the benefits of explicitly modeling sparse graphs.
\end{abstract}

\section{Introduction}\label{sec:intro}
Edge density is a graph attribute that helps to characterize graph sequences into dense or sparse graphs. 
A graph sequence is called dense if the edges grow quadratically with the nodes, and it is called sparse if the edges grow sub-quadratically with the nodes. A characteristic of dense graphs is that nodes are more connected to each other.  A characteristic of sparse graphs is that they are  less connected. Simple examples of sparse graphs include stars, paths and rings.

Observed sparse graphs such as social networks exhibit two contrasting types of behavior: a small number of high degree nodes called \textit{hubs} inducing sparsity and a large number of small, dense communities \citep{Zang8280512,zhao2021community}. Such hubs and communities are also observed in complex multi-functional networks such as  neurological networks \citep{van2013network, schwarz2008community}. 
A hub, viewed in isolation can be thought of as a star graph and a  community by itself can be thought of as a dense graph. The prevalence of these two components in both biological and social networks is evidence that they are fundamental structures in evolving graphs. 

Motivated by these considerations we consider graph mixtures -- a combination of two graphs, one sparse and one dense generated from two different types of graph generators: graphons.  A graphon is a symmetric and measurable function $W:[0,1]^2 \rightarrow [0,1]$. 
Graphons are used to learn the underlying structure or representation of graphs \citep{xu2021learning} and recently graphons are used in neural operators and graph neural networks \citep{levie2023graphon, tieu2024temporal, cheng2023equivariant}. A graphon can be thought of as a graph blueprint and can be used to generate graphs of an arbitrarily large size. As a consequence of the Aldous-Hoover theorem \citep{aldous1981representations}, graphs generated from a graphon $W$ are dense.  Thus, while it straightforward to generate dense graphs from a graphon $W$, generating sparse graphs from a graphon is not quite that simple.  

There have been several extensions to graphons to model sparse graphs. \citet{Caron_Panero_Rousseau_2023} model graphs as exchangeable point processes and extend the classical graphon framework to the sparse regime. Their graphons are defined on $\mathbb{R}_+^2$ instead of on the unit square $[0,1]^2$. Research led by Borgs and Chayes tackle sparsity in different ways. For example, \citet{borgs2018sparse} consider `stretched' and `rescaled' graphons that can represent sparse graphs.  \citet{veitch2015class} introduce graphexes, a triple describing isolated edges, infinite stars and a graphon defined on $\mathbb{R}_+^2$. 
\citet{kandandOng2024graphons} model sparse graphs by considering graphons of line graphs. 
We use results from \citet{kandandOng2024graphons} and \citet{Janson2016321} to define graphon mixtures, which can generate sparse and dense graphs depending on the mixture properties.

Our approach for sparse graph generation considers line graphs. Line graphs, also known as non-backtracking graphs \citep{krzakala2013spectral} are obtained by mapping edges to vertices. A line graph of a graph $G_n$ denoted by $L(G_n)$ maps edges of $G_n$ to vertices of $L(G_n)$ and connects vertices in $L(G_n)$ if the corresponding edges in $G_n$ share a vertex (Figure \ref{fig:subfig1}). \citet{kandandOng2024graphons} showed that for a subset of sparse graphs, which they called \textit{square-degree}, the line graphs are dense. 
Hence, for  square-degree graphs, by taking line graphs, we are back in the space of dense graphs which we know how to model using the standard graphon. However, this hardly tells us anything about the structure of the line graph graphon. \citet{Janson2016321} showed that line graph limits are disjoint clique graphs and further showed that disjoint clique graphs can be modelled by a sequence of positive numbers, which he called a \textit{mass-partition}.  
This result combined with the inverse line graph operation enable us to define graphon mixtures. 

The contributions of this paper are: (1)  we propose graphon mixtures -- a novel approach that explicitly models sparsity and thereby generates graph mixtures; (2) given a graph mixture, we estimate the degree of hubs of unseen graphs and the graphon corresponding to the sparse components, and we provide theoretical guarantees on our estimates that have exponential or polynomial convergence; (3) we show empirically that our approach is useful on synthetic and real data. All proofs are presented in the Appendix.

\section{Limits of sparse graphs and line graphs}
\label{sec:notation}

We review the setting of graphons, in particular with respect to the recent results on graphons on line graphs~\citep{kandandOng2024graphons}. 


\begin{definition}[\bf Dense and sparse graph sequences]\label{def:denseandsparse}
    A sequence of graphs $\{G_n\}_n$ is dense if the number of edges $m$ grow quadratically with the number of nodes $n$, i.e., $\liminf_{n \to \infty} \frac{m}{n^2} = c > 0 \, .$
    A sequence of graphs $\{G_n\}_n$ is sparse if the number of edges $m$ grow sub-quadratically with the number of nodes $n$, i.e., $ \lim_{n \to \infty} \frac{m}{n^2} = 0 \, . $
\end{definition}

 A graphon is a symmetric and measurable function  $W:[0,1]^2 \rightarrow [0,1]$. Graphs and graphons are linked by the adjacency matrix. 
 The \textit{empirical graphon} (Definition \ref{def:empiricalgraphon}) of a graph is obtained by scaling the adjacency matrix to the unit square and coloring smaller squares of size $1/n \times 1/n$ black if the corresponding entry in the adjacency matrix is 1 and coloring it white if the entry is 0.  Given a graphon $W$, a graph with $n$ nodes for any $n \in {\mathbb{N}}$ can be generated as follows:

\begin{definition}\label{def:wrandomgraphs}Uniformly pick $x_1, x_2, \ldots x_n$ from $[0,1]$. A \textbf{W-random graph} $\mathbb{G}(n,W)$ has the vertex set $1, 2, \ldots n$ and vertices $i$ and $j$ are connected with probability $W(x_i, x_j)$.
\end{definition}

Given a sequence of graphs $\{G_n\}_n$, convergence is defined using the \textit{cut norm} (Definition \ref{def:cut1}) and the \textit{cut metric} (Definition \ref{def:cut2}). \cite{borgs2008convergent} showed that every convergent graph sequence converges to a graphon $W$. Therefore, graphons are graph limits. The problem is that all sparse graph sequences converge to $W = 0$. Hence, the standard graphon is not representative of sparse graphs, in the sense that graphs generated from  $W = 0$ are isolated nodes without any edges. 

\subsection{Sparse graphs are stars in the limit}

\begin{figure}[ht]
    \centering
    \subfloat[]{%
        \includegraphics[width=0.45\linewidth,trim= 0 0.5cm 0.8cm 0.3cm ,clip]{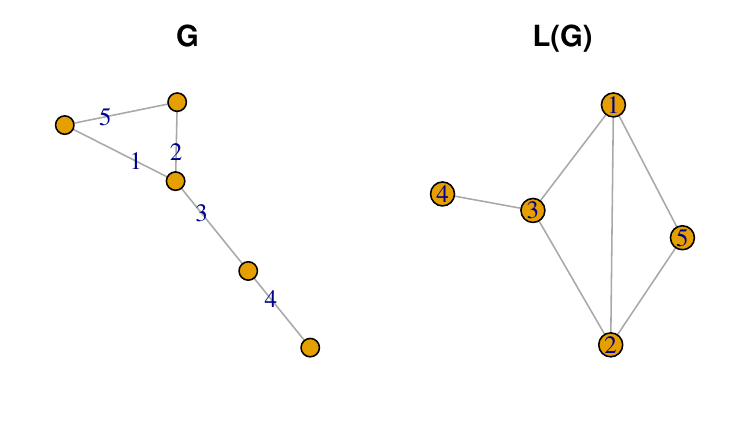}
        \label{fig:subfig1}%
    }
    \hfill  
    \subfloat[]{%
        \includegraphics[width=0.45\linewidth, trim=0 0.5cm 0.5cm 0.3cm, clip]{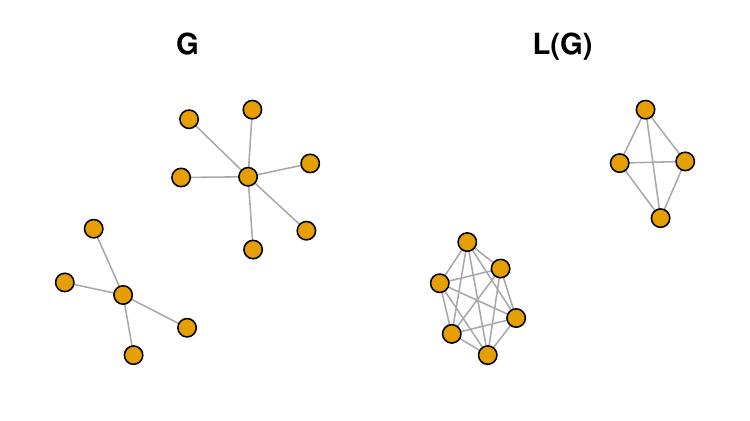}%
        \label{fig:subfig2}%
    }
    \caption{(a) A graph $G$ and its line graph $L(G)$. Vertices in $L(G)$ are edges of $G$. Vertices of $L(G)$ are connected if the corresponding edges in $G$ share a vertex. (b) Line graphs of disjoint stars are disjoint cliques. Thus, the inverse line graph  of disjoint cliques are disjoint stars. }
    \label{fig:graphsandlinegraphs}
\end{figure}

To address this problem, more mathematically intricate constructions and convergences have been defined \citep{borgs2018sparse, Caron_Panero_Rousseau_2023}. \cite{kandandOng2024graphons} showed that if sparse graphs satisfy a certain condition, then their line graphs (Definition \ref{def:linegraph}, Figure \ref{fig:subfig1}) are dense.  They called this condition the \textit{square-degree property} (Definition \ref{def:square}). The set of graph sequences satisfying the square-degree property is denoted by $S_q$. They showed that if $\{G_n\}_n \in S_q$ and the line graphs $\{L(G_n)\}_n$ converge, then they converge to a graphon $U \neq 0$. Here, we elucidate the structure of $U$, based on an earlier result by \cite{Janson2016321}.



\begin{theorem}\label{thm:Jansonmain} {\bf(\cite{Janson2016321} Thm 8.3) }
 A graph limit is a line graph limit if and only if it is a disjoint clique graph limit.     
\end{theorem}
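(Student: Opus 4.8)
The plan is to prove the two directions separately. The forward direction, that every disjoint clique limit is a line graph limit, is essentially immediate: the complete graph $K_r$ is the line graph of the star $K_{1,r}$, and $L$ distributes over disjoint unions, so any disjoint union of cliques $\bigsqcup_j K_{a_j}$ equals $L\big(\bigsqcup_j K_{1,a_j}\big)$ and is itself a line graph; hence a convergent sequence of disjoint-clique graphs is a convergent sequence of line graphs, and its limit is a line graph limit. For the converse I would use the reformulation that the disjoint clique limits are exactly the cut-metric closure of the finite disjoint-clique graphs, so that it suffices, given a convergent sequence of line graphs $L(H_n)$ with $m_n := |E(H_n)| = |V(L(H_n))| \to \infty$, to exhibit disjoint-clique graphs $D_n$ on the vertex set $V(L(H_n))$ with $d_{\mathrm{cut}}(L(H_n), D_n) \to 0$; the case $m_n \not\to \infty$ corresponds to the limit of an essentially constant sequence and is the degenerate reading of the statement.

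To build $D_n$, fix $\epsilon > 0$ and write $d_v$ for the degree of $v$ in $H_n$; call $v$ a \emph{hub} if $d_v \ge \epsilon m_n$, and note that $\sum_v d_v = 2 m_n$ forces at most $2/\epsilon$ hubs. Recall that the vertices of $L(H_n)$ are the edges of $H_n$, that the edges of $H_n$ through a fixed vertex $v$ form a clique $A_v$ in $L(H_n)$, and --- crucially, since $H_n$ is simple --- that two distinct edges of $H_n$ meet in at most one vertex, so every edge of $L(H_n)$ lies in exactly one clique $A_v$. Assign each edge $\{u,w\} \in E(H_n)$ to whichever of $u,w$ has larger degree in $H_n$ (ties broken arbitrarily); this partitions $V(L(H_n))$ into sets $Q_v$. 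Let $D_n$ consist of the cliques on the sets $Q_v$ over all hubs $v$, together with one isolated vertex for each remaining edge of $H_n$ (one with no hub endpoint). Then $D_n$ is a disjoint-clique graph, and since $Q_v \subseteq A_v$ every edge of $D_n$ is an edge of $L(H_n)$, so $E(D_n) \subseteq E(L(H_n))$.

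The main step is the estimate $|E(L(H_n)) \setminus E(D_n)| \le \epsilon m_n^2 + O_\epsilon(m_n)$. An edge counted on the left is a pair of $H_n$-edges sharing a vertex whose two ends were assigned to different parts $Q_v$, and it is of one of three kinds. (i) Both ends are non-hub edges: their number is $\sum_{v \text{ non-hub}} \binom{d_v}{2} \le \tfrac12 (\epsilon m_n) \sum_v d_v \le \epsilon m_n^2$. (ii) One end lies in $Q_v$ for a hub $v$ and the other is a non-hub edge: the shared vertex must be the non-hub endpoint $x$ of the first edge, so these number at most $\sum_{x \sim v} d_x \le 2 m_n$ per hub, hence $\le (2/\epsilon)(2m_n)$ in all. (iii) The ends lie in $Q_v$ and $Q_{v'}$ for distinct hubs $v \ne v'$: since two edges of the simple graph $H_n$ share at most one vertex, such a pair either uses the single edge $\{v,v'\}$ (at most $d_v \le m_n$ of these) or is of the form $\{v,x\},\{v',x\}$ for a common neighbour $x$ of $v,v'$ (again at most $m_n$), for a total of at most $3 m_n \binom{2/\epsilon}{2}$. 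Adding the three contributions gives the claim, and therefore $d_{\mathrm{cut}}(L(H_n), D_n) \le 2\epsilon + o(1)$, since the associated step-function graphons differ on a set of measure at most $2|E(L(H_n)) \setminus E(D_n)| / m_n^2$. Letting $\epsilon \to 0$ and using that the class of disjoint clique limits is cut-closed shows $\lim_n L(H_n)$ is a disjoint clique limit.

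The one genuinely delicate point is case (iii), and this is precisely where simpleness of $H_n$ is indispensable: for multigraphs a parallel class of $\Theta(m_n)$ copies of $\{v,v'\}$ would join $Q_v$ and $Q_{v'}$ completely, and line graphs of multigraphs can converge to blow-ups of $C_5$ (with cliques on the blocks), which are not disjoint clique limits. Everything else is bookkeeping with degree sums, the other point needing care being the opening reduction to the cut-closure and to sequences with $m_n \to \infty$. One could instead apply the weak regularity lemma to $L(H_n)$ and argue that the limiting block densities are forced to be $\{0,1\}$-valued with transitive support --- using claw-freeness of simple line graphs (the induced-$K_{1,3}$ density vanishes in the limit) together with the same simple-graph fact behind (iii) --- so that the limit is a diagonal block graphon; but the direct clique-assignment argument above is shorter and self-contained.
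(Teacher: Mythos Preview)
The paper does not prove this statement. Theorem~\ref{thm:Jansonmain} is quoted from \cite{Janson2016321} (Theorem~8.3 there) and used as a black box; neither the main text nor the appendix contains a proof or even a sketch. There is therefore nothing in the paper against which to compare your attempt.

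For the record, your argument is sound. The easy direction is immediate, as you say. For the converse, the hub/non-hub split together with the degree-majority assignment $e\mapsto Q_v$ is the right construction, and your three-case bound on $|E(L(H_n))\setminus E(D_n)|$ goes through: case~(i) gives the main $\epsilon m_n^2$ term via $\sum_{v\text{ non-hub}}\binom{d_v}{2}\le \tfrac{\epsilon m_n}{2}\sum_v d_v$; cases~(ii) and~(iii) are $O_\epsilon(m_n)$, with~(iii) being the one place where simplicity of $H_n$ is genuinely needed (your multigraph/$C_5$-blowup remark is exactly the obstruction). Your use of $\sum_{x\sim v}d_x\le 2m_n$ in case~(ii) is correct since each edge of $H_n$ is counted at most twice in that sum. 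The closing step---diagonalising over $\epsilon\to 0$ and noting that the cut-closure of the disjoint clique graphs is, tautologically, closed---is routine. The alternative regularity-lemma route you mention would also work but is heavier; the direct clique-assignment argument you give is the cleaner one.
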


A clique is a subset of nodes in the graph that are connected to each other. This result is important because as shown in Figure \ref{fig:subfig2}, the inverse line graph of a disjoint clique graph is a disjoint star graph, and stars are the most sparse of the graphs that exhibit power-law degrees. Therefore, the line graph limit explains sparsity in a fundamental way via the inverse line graph operation. It is worth noting that if a graph is a line graph, then its inverse exists and is unique \citep{whitney1932congruent} with the only exception when the line graph is a triangle. 
\cite{Janson2016321} represented disjoint clique graph limits by a sequence of proportions called the \textit{mass-partition} (Definition \ref{def:masspartitionJanson}) where the $j$th element of the mass-partition gives the proportion of nodes in the $j$th disjoint clique. A mass-partition $\bm{p} = \{p_i\}_{i = 1}^{\infty}$  uniquely defines a disjoint clique graphon 
$W_{\bm{p}}^\mathcal{M}$ (Definition \ref{def:masspartitiongraphon}) resembling a block-diagonal matrix (see $U$ in Figure \ref{fig:infographic}).   
We combine disjoint clique graph limits of \citet{Janson2016321} with  converging sparse square-degree graphs $S_q$ (Definition \ref{def:square})  of \citet{kandandOng2024graphons} in the following lemma.
Furthermore, we provide a new definition (so called max-degree) which is equivalent to square-degree property.

\begin{lemma}\label{lemma:converginginSq}
      Let $\{G_n\}_n \in S_q$ (Definition \ref{def:square}) and let $H_m = L(G_n)$ where $L$ denotes the line graph operation. If $\{H_m\}_m$ converges to $U$ then $U$ is a disjoint clique graphon and there exists a unique mass-partition $\bm{p}$ that describes $U$. 
      That is, $U = W_{\bm{p}}^\mathcal{M}$ (see Definition \ref{def:masspartitiongraphon}) for some mass-partition $\bm{p}$. 
\end{lemma}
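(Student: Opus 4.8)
The plan is to chain together the two results the excerpt has already made available: Theorem~\ref{thm:Jansonmain} of \cite{Janson2016321} (a graph limit is a line graph limit iff it is a disjoint clique graph limit) and the result of \cite{kandandOng2024graphons} that a convergent sequence of line graphs of square-degree graphs converges to a nonzero graphon $U$. First I would observe that, by hypothesis, $\{G_n\}_n \in S_q$, so by the square-degree result the line graphs $\{H_m\}_m = \{L(G_n)\}_m$ form a convergent sequence of \emph{dense} graphs; since convergent dense graph sequences have graphon limits (the Borgs--Chayes theory cited around Definition~\ref{def:wrandomgraphs}), the assumed limit $U$ is a genuine graphon and $U \neq 0$. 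The point to stress is that $U$ is, by construction, the limit of a sequence of line graphs, i.e.\ $U$ is a \emph{line graph limit} in the sense of Theorem~\ref{thm:Jansonmain}.

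Next I would invoke Theorem~\ref{thm:Jansonmain} directly: since $U$ is a line graph limit, it is a disjoint clique graph limit. Then I would quote the representation theorem of \cite{Janson2016321} (Definition~\ref{def:masspartitionJanson} and Definition~\ref{def:masspartitiongraphon}): every disjoint clique graph limit is described by a unique mass-partition $\bm{p} = \{p_i\}_{i=1}^\infty$, and conversely a mass-partition determines the disjoint clique graphon $W_{\bm{p}}^{\mathcal M}$. Combining these, $U = W_{\bm{p}}^{\mathcal M}$ for a mass-partition $\bm{p}$, and uniqueness of $\bm{p}$ is inherited from the uniqueness clause in Janson's representation. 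This yields the stated conclusion.

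The only real subtlety — and the step I would be most careful about — is the ``there exists a unique mass-partition'' clause: uniqueness of a mass-partition representing a disjoint clique graphon holds up to the natural equivalence that identifies mass-partitions differing only by permutation of the blocks and by adjoining or deleting zero entries, so I would state the claim with that convention made explicit (e.g.\ taking $\bm p$ to be nonincreasing, which pins down a canonical representative). A second minor point worth checking is that the notation $H_m = L(G_n)$ implicitly fixes the correspondence $m = m(n)$ between the number of edges of $G_n$ and the number of vertices of $H_m$; since $\{G_n\}_n$ is sparse with growing edge count, $m(n) \to \infty$, so ``$\{H_m\}_m$ converges'' is meaningful and the limit is unchanged by this reindexing. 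Beyond these bookkeeping remarks the proof is essentially a citation-level composition of Theorem~\ref{thm:Jansonmain}, the square-degree theorem of \cite{kandandOng2024graphons}, and the mass-partition representation of \cite{Janson2016321}; I do not expect any genuinely hard analytic step.
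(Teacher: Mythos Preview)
Your proposal is correct and matches the paper's approach exactly: the paper's own proof is the single line ``Immediate from Theorem~\ref{thm:Jansonmain} and~\ref{thm:Janson1} \citep{Janson2016321},'' which is precisely the chaining you describe. Your additional remarks about $U\neq 0$ via the square-degree result, the ordering convention for uniqueness of $\bm p$, and the $m=m(n)$ reindexing are all sound bookkeeping but go beyond what the paper bothers to spell out.
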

\begin{proof}
Immediate from Theorem \ref{thm:Jansonmain} and \ref{thm:Janson1} \citep{Janson2016321}.
\end{proof}

\begin{definition}{\bf (Max-degree condition)}\label{def:maxdegreetoedges}
    Let $\{G_{n}\}_n$ be a sequence of graphs where $G_{n}$ has $n$ nodes and $m$ edges. Let the maximum degree of $G_{n}$ be denoted by $d_{\max, n}$. We say the $\{G_{n}\}_n$ satisfies the max-degree condition if there exists $c > 0$ 
    \[ \liminf_{n, m \to \infty} \frac{d_{\max, n}}{m} = c > 0 \, . 
    \]
    Let $S_x$ denote the set of graph sequences satisfying the max-degree condition. 
\end{definition}

We show that for graph sequences with converging line graphs, the square-degree property is equivalent to the max-degree condition

\begin{restatable}{lemma}{lemmasquareandmaxequivlence}\label{lemma:squareandmaxequivlence}
    Let $\{G_n\}_n$ be a graph sequence with $H_m = L(G_n)$. Suppose $\{H_m\}_m$ converges to $U$.  Then
    \[  \{G_n\}_n \in S_q \equiv \{G_n\}_n \in S_x \, . 
    \]
\end{restatable}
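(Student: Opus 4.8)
The plan is to unfold the two definitions and relate the relevant quantities through the line graph construction. Recall that the square-degree property (Definition~\ref{def:square}) controls the growth of $\ssq(G_n) = \sum_{v} \binom{d_v}{2}$, the number of edges in $L(G_n)$, relative to $\binom{m}{2}$, while the max-degree condition controls $d_{\max,n}/m$. Since $L(G_n) = H_m$ has $m$ vertices and $\ssq(G_n)$ edges, the hypothesis that $\{H_m\}_m$ converges to a graphon $U$ ties $\ssq(G_n)$ to $m^2$: by Lemma~\ref{lemma:converginginSq}, $U = W_{\bm p}^{\mathcal M}$ is a disjoint clique graphon, so $\ssq(G_n)/\binom{m}{2} \to \|\bm p\|_2^2 =: s$, and the limit $s$ is strictly positive precisely when $\{G_n\}_n \in S_q$. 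This is the backbone: both conditions are statements about whether a certain normalized quantity has a strictly positive liminf, and the convergence hypothesis upgrades liminf to an honest limit.

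First I would establish the direction $S_x \Rightarrow S_q$. If $d_{\max,n}/m \to c > 0$ along a subsequence, then the single vertex of maximum degree contributes $\binom{d_{\max,n}}{2} \sim \tfrac12 c^2 m^2$ to $\ssq(G_n)$, so $\ssq(G_n)/\binom{m}{2} \gtrsim c^2 > 0$, which is exactly the square-degree property; hence $\{G_n\}_n \in S_q$. This direction only needs the trivial bound $\ssq(G_n) \ge \binom{d_{\max,n}}{2}$ and costs essentially nothing.

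The converse $S_q \Rightarrow S_x$ is where the work is, and it is the step I expect to be the main obstacle. Here I would use the structure of the limit $U$ supplied by Lemma~\ref{lemma:converginginSq}: $H_m$ converges to a disjoint clique graphon $W_{\bm p}^{\mathcal M}$ with $\bm p = (p_1 \ge p_2 \ge \cdots)$, $\sum_i p_i \le 1$. Cliques in $L(G_n)$ correspond (via Whitney's theorem / Krausz-type decompositions) to stars in $G_n$, i.e., to vertices of $G_n$: a vertex $v$ of degree $d_v$ in $G_n$ yields a clique on $d_v$ vertices in $L(G_n)$, and these cliques cover all edges of $L(G_n)$, overlapping only where edges of $G_n$ share endpoints. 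So the largest clique in $H_m$ has size $d_{\max,n}$ (up to lower-order corrections from the two-endpoints-per-edge bookkeeping), and graphon convergence to $W_{\bm p}^{\mathcal M}$ forces the normalized largest-clique size $d_{\max,n}/m$ to converge to $p_1$. Since $\{G_n\}_n \in S_q$ means $s = \|\bm p\|_2^2 > 0$ and hence $p_1 > 0$, we get $\liminf d_{\max,n}/m = p_1 > 0$, i.e., $\{G_n\}_n \in S_x$. The delicate points are: (i) justifying that graphon convergence to $W_{\bm p}^{\mathcal M}$ really does pin down the size of the largest clique block — this should follow from the fact that homomorphism densities (or cut-distance) of the empirical graphon determine the mass-partition, as in \cite{Janson2016321} Thm~\ref{thm:Janson1}; and (ii) handling the overlap between the vertex-cliques in $L(G_n)$ so that the clique sizes are $d_v$ and not something smeared by shared edges — this is a standard feature of line graphs and only affects lower-order terms since each edge lies in exactly two such cliques. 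I would organize the converse as: (a) identify cliques of $H_m$ with vertices of $G_n$; (b) note $\max$-clique size $= d_{\max,n}$; (c) invoke convergence to $W_{\bm p}^{\mathcal M}$ to get $d_{\max,n}/m \to p_1$; (d) observe $S_q \iff \|\bm p\|_2 > 0 \iff p_1 > 0$; conclude.

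One alternative, if pinning down $p_1$ from graphon convergence proves fiddly, is a direct counting argument: suppose for contradiction that $\{G_n\}_n \in S_q \setminus S_x$, so $\ssq(G_n) \ge \delta \binom{m}{2}$ infinitely often while $d_{\max,n} = o(m)$. Then $\ssq(G_n) = \sum_v \binom{d_v}{2} \le \tfrac12 d_{\max,n} \sum_v d_v = d_{\max,n}\, m$, giving $d_{\max,n} m \ge \tfrac{\delta}{2} m(m-1)$, hence $d_{\max,n} \ge \tfrac{\delta}{2}(m-1)$, contradicting $d_{\max,n} = o(m)$. This elementary inequality $\ssq(G_n) \le d_{\max,n}\, m$ actually makes the converse almost immediate and sidesteps the clique bookkeeping entirely; I would likely present this as the main argument and relegate the graphon-structure discussion to a remark, since it is cleaner and uses only $\sum_v d_v = 2m$.
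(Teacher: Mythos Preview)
Your proposal is correct, and for the direction $S_x \Rightarrow S_q$ it matches the paper's argument essentially verbatim (the paper writes $\sum d_v^2/m^2 \ge d_{\max,n}^2/m^2 \ge c^2$).

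For the converse $S_q \Rightarrow S_x$, your final elementary argument via the inequality $\sum_v d_v^2 \le d_{\max,n}\sum_v d_v = 2m\,d_{\max,n}$ is genuinely different from the paper's route and in fact stronger. The paper uses the convergence hypothesis to upgrade the $\liminf$ of $\sum d_v^2/m^2$ to an actual limit, then argues informally that this limit decomposes as $\sum_j \lim_n d_{(j)}^2/m^2$ over the ordered degrees, so at least one summand --- hence the largest, $d_{\max,n}^2/m^2$ --- must have a positive limit. That decomposition step is delicate since the number of terms grows with $n$. Your inequality bypasses all of this: from $\sum d_v^2 \ge c_1(2m)^2$ and $\sum d_v^2 \le 2m\,d_{\max,n}$ you get $d_{\max,n}/m \ge 2c_1$ directly, with no appeal to the convergence of $\{H_m\}$ at all. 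So your argument actually proves the unconditional equivalence $S_q = S_x$, whereas the paper's proof (and the lemma as stated) carries the convergence hypothesis. Your first approach via identifying $d_{\max,n}/m \to p_1$ through Janson's Theorem~\ref{thm:Janson1} is closer in spirit to what the paper is gesturing at, but the elementary inequality is the cleaner and more robust choice.
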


\subsection{Related work}


Sparse graphs including scale-free and power-law graphs have been studied in depth. For scale-free graphs described by the Barabási-Albert model  \citep{barabasi1999emergence} the proportion $P(d)$ of vertices with degree $d$ obeys a power law $P(d) \propto d^{-\gamma}$, where $\gamma \approx 3$. \citet{bollobas2001degree} showed for these graphs the maximum degree is $\Theta(\sqrt{n})$, where $n$ is the number of the nodes in the graph. 

\cite{leskovec2010kronecker} proposed a graph generative model called Kronecker graphs. They modeled graph evolution using the Kronecker product and had two versions (deterministic and stochastic) of graph generators. They showed they can generate graphs with heavy-tailed degree distributions and other such properties.

Classical graphons were used to describe dense graphs. Over time the original definitions, constructions and convergences were extended to incorporate sparse graphs \citep{Borgs2021}. 
\cite{caron2017sparse} and \cite{Caron_Panero_Rousseau_2023} modeled graphs as exchangeable point processes where the graphon  $W$ is defined on  $\mathbb{R}_+^2$.  These were also known as graphex processes. They used Kallenberg exchangeability in their theoretical framework and sparsity was achieved by extending the support of the graphon from a finite subset to the whole positive quadrant.

Our approach is different to the above as we use inverse line graphs of disjoint clique graphs to generate the sparse component. As both graphons in our mixture are defined on the unit square, it makes it easier to estimate these graphons in practice, especially the sparse component. By varying mixture properties we can generate both dense and sparse graphs. 
\section{Mixtures of sparse and dense graphs}

\begin{figure*}
    \includegraphics[width=0.95\linewidth,trim= 0 3cm 0cm 2cm ,clip]{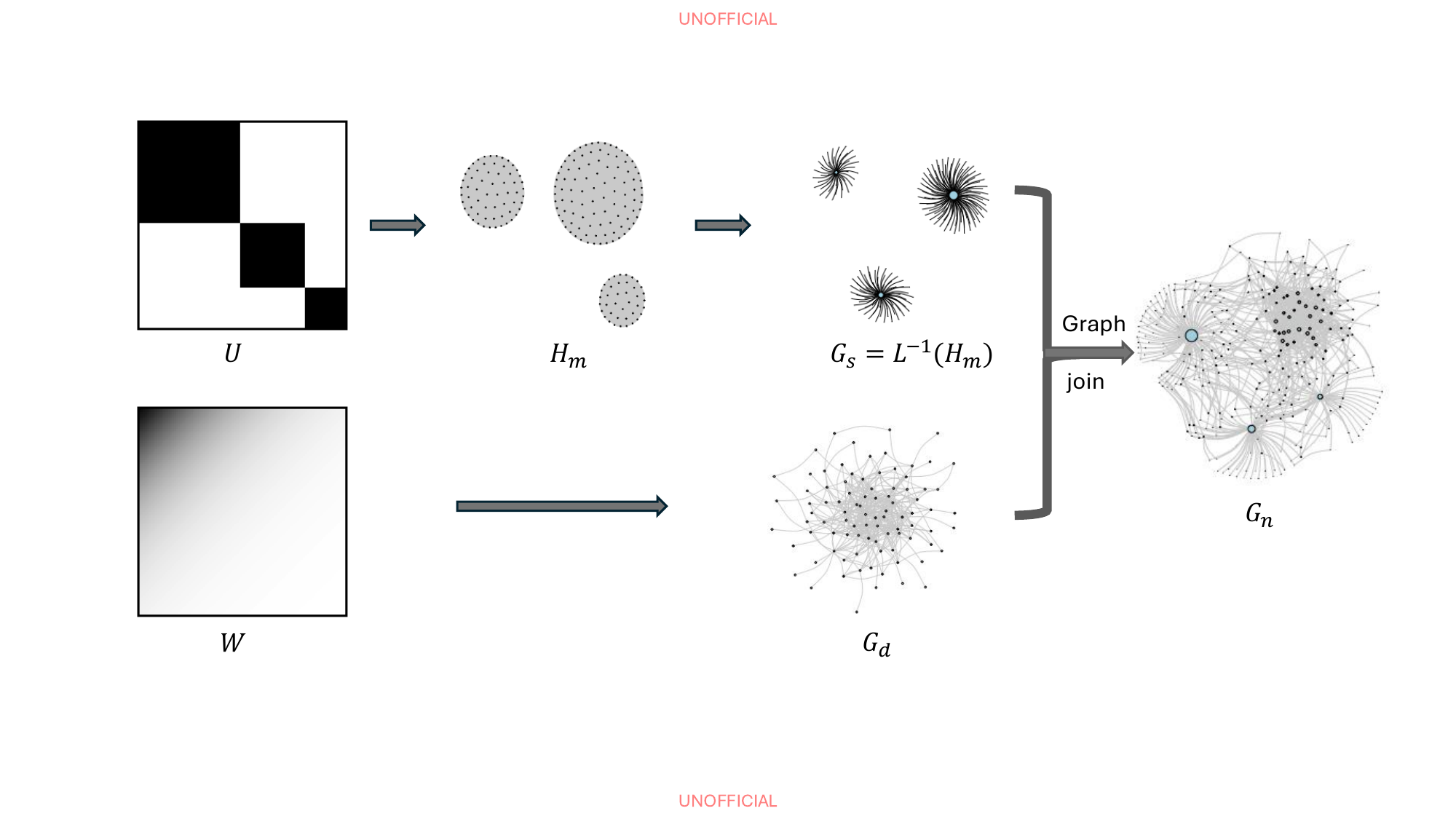}
    \caption{The overview of $(U,W)$-mixture graphs. A disjoint clique graph $H_m$ is sampled from graphon $U$. The inverse line graph $G_s = L^{-1}(H_m)$ is a disjoint set of stars. A graph $G_d$ is sampled from graphon $W$. Then, $G_s$ and $G_d$ are joined according to graph joining rules resulting in the mixture graph $G_n$. }
    \label{fig:infographic}
\end{figure*}


We generate a graph mixture by picking a graphon $W$ and a disjoint clique graphon $U$ (see Figure \ref{fig:infographic}).  
First we generate two graph sequences using $U$ and $W$, noting that both graph sequences are dense, but the graph sequence generated by $U$ lives in the line graph space. The sparse graph sequence is obtained by taking the inverse line graphs of the graphs generated by $U$. Then we join the $i$th  graph generated by $W$ with the $i$th inverse line graph generated by $U$ according to a set of simple joining rules and obtain the graph mixture. The results in Sections \ref{sec:topk}, \ref{sec:estimatingUfinite} and \ref{sec:EstimatingUinfinitePartitions} do not have any terms pertaining to the join. They hold if we consider the disjoint union of the two graph sequences. However, the real graphs in Table 2 are not disjoint unions. As such, we consider a simple way to join nodes randomly without adding a lot of extra edges.  We ensure the main structural parts are contributed by $U$ and $W$, but not the join. 

A key component in the mixing process is the ratio of the number of nodes in the sparse graph to that of the dense graph. If this ratio goes to infinity along the sequence, then the mixture sequence is sparse. If the ratio is bounded, then the mixture sequence is dense. Thus, for a given $W$ and $U$ by changing the mixture ratio function, we can generate graph sequences ranging from dense to sparse.  



\subsection{The graph mixture model}\label{sec:mixturemodel}

A disjoint clique graphon $U$ is defined by a sequence of proportions (the mass-partition) $\bm{p} = \{p_i\}_{i=1}^{\infty}$. \cite{Janson2016321} considered $\sum_i p_i \leq 1$. In our work we consider a rescaled version of the original mass-partition and let the elements of the mass-partition add up to 1. We explore the inverse line graphs of graphs generated by $U$ when $\sum_i p_i < 1$ in Appendix \ref{sec:whenpileq1}. 
\begin{definition}\label{def:masspartition}(\textbf{Mass-partition})
    We define a mass-partition to be a sequence $\bm{p} = \{p_i\}_{i = 1}^{\infty}$ of non-negative real numbers such that
    \[ p_1 \geq p_2 \geq \cdots  \geq p_i \geq \cdots  \quad \text{and} \quad \sum_{i = 1}^\infty p_i = 1 \, . 
    \]
     We refer to the number of non-zero elements in $\bm{p}$ as partitions. Thus $U$ corresponding to $\bm{p}$ can have finite or infinite partitions. 
\end{definition}

\begin{definition} \label{def:WURandomMixtureGraphs}
    Given a graphon $W$, a disjoint clique graphon $U$, and positive, integer valued sequences $\{n_{d_i}\}_i$ and $\{m_{s_i}\}_i$ increasing with $i$, a \textbf{$\bm{(U,W)}$-mixture graph sequence} $G_{n_i} \sim \mathbb{G}\left(U,W, n_{d_i}, m_{s_i}\right)$ is constructed as follows: 
    \begin{enumerate}
        \item Let $G_{d_i} \sim \mathbb{G}(n_{d_i}, W)$ and $H_{s_i} \sim \mathbb{G}(m_{s_i}, U)$, i.e., $G_{d_i}$ is a $W$-random graph (Definition \ref{def:wrandomgraphs}) and $H_{s_i}$ is a $U$-random graph. Sequences $\{G_{d_i}\}_i$ and $\{H_{s_i}\}_i$ consist of dense graphs. 
     \item As $U$ is a disjoint clique graphon, the inverse line graph of $H_{s_i}$ exists and is a disjoint union of stars. Suppose $G_{s_i} = L^{-1}(H_{s_i})$ has $n_{s_i}$ nodes where $L^{-1}$ denotes the inverse line graph operation. Thus, $\{G_{s_i}\}_i$ is a sparse graph sequence. 
     \item Join $\{G_{d_i}\}_i$ and $\{G_{s_i}\}_i$ satisfying graph joining rules in Definition \ref{def:joiningrules}. Let $\{G_{n_i}\}_i$ denote the resulting graph sequence. 
    \end{enumerate}
     We say  $G_{n_i} \sim \mathbb{G}\left(U,W, n_{d_i}, m_{s_i}\right)$ is a  $(U,W)$-mixture graph. We call $G_{d_i}$ the \textbf{dense part} and $G_{s_i}$ the \textbf{sparse part} of $G_{n_i}$.  
\end{definition}

Given two graph sequences -- one dense and one sparse --  we consider a simple way to randomly join a dense graph and a sparse graph with a given number of edges, making sure the main structural parts are contributed by the two graphs but not the join.   We formally define the joining rules below. 

\begin{definition}\label{def:joiningrules}\textbf{\textbf{(Graph Joining Rules)}}
    Let $\{G_{s_i}\}_i$ and $\{G_{d_i}\}_i$ be two graph sequences. We are interested in joining $G_{s_i}$ and $G_{d_i}$ for every $i$. Let $G_{n_i}$ denote the joined graph. We consider graph joins that satisfy the following conditions. 
    \begin{enumerate}
        \item \textbf{No new nodes}: no new nodes are added as part of the joining process. 
         \item \textbf{No deletion}: No nodes or edges are deleted as part of the joining process.     
     \item \textbf{Random edges}\label{cond:randomedges}: if edges are added, they are added randomly in the sense that nodes within a graph are equally likely to be selected to form edges. 
     \item \textbf{New edges}: 
        if edges are added, the number of new edges added  $m_{new_i}$ satisfies $m_{new_i} = c m_{d_i}$, for small $c \in \mathbb{R}$, where $ m_{d_i}$ denotes the number of edges in $G_{d_i}$. 
    
    \end{enumerate}
\end{definition}
In defining these rules our motivation is to keep the joining process as simple as possible. As such, we have employed a random join. 
These rules can be further explored and modified to suit different needs that result in more targeted mixtures.

\subsection{Expectations and edge density}

In graphon mixtures, randomness arise in two ways: (1) when the graphs are generated from $U$ and $W$ and (2) when the graphs are joined according to graph joining rules. We compute expectations of hub nodes and nodes in the dense part with respect to both sources of randomness in Lemmas \ref{lemma:expnewedges} and \ref{lemma:WUrandomgraphsaboutU2}. 
Depending on the mixing ratio $\frac{n_{s_i}}{n_{d_i}}$ the $(U,W)$ mixture graphs can be sparse or dense as shown in Lemma \ref{lemma:WUrandomgraphs1}.

\begin{restatable}{lemma}{lemmaWUrandomgraphsOne}\label{lemma:WUrandomgraphs1} Let $\{G_{n_i}\}_i$ be a sequence of $(U,W)$-mixture graphs (Definition \ref{def:WURandomMixtureGraphs}) with $G_{n_i} \sim \mathbb{G}\left(U,W, n_{d_i}, m_{s_i} \right)$. Let  $G_{d_i}$ be the dense part of $G_{n_i}$ and  let $G_{s_i}$ be the sparse part. Let $n_{d_i}$ and $n_{s_i}$ be the number of nodes in $G_{d_i}$ and  $G_{s_i}$ respectively. Then 
\begin{enumerate}
    \item If there exists $c \in \mathbb{R}^+$ such that $\limsup_{i \to \infty} \frac{n_{s_i}}{n_{d_i}} = c$ \, 
     then $\{G_{n_i}\}_i$ is dense. 
    \item 
     If $\lim_{i \to \infty} \frac{n_{s_i}}{n_{d_i}} = \infty$, then $\{G_{n_i}\}_i$ is sparse. 
\end{enumerate}
\end{restatable}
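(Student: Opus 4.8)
The plan is to compute, for each $i$, the number of vertices $n_i$ and the number of edges $m_{n_i}$ of the mixture graph $G_{n_i}$ in terms of its three constituents -- the dense part $G_{d_i}$, the sparse part $G_{s_i}=L^{-1}(H_{s_i})$, and the edges added in the join -- and then to read off the growth of $m_{n_i}/n_i^2$ from how $n_{d_i}$ and $n_{s_i}$ compare. Because each of the three ingredient sizes can be pinned down up to constant factors, the two cases fall out dichotomously.

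First I would record the size estimates. By the no-new-nodes and no-deletion rules (Definition \ref{def:joiningrules}) we have $n_i=n_{d_i}+n_{s_i}$ exactly. Since $U$ is a disjoint clique graphon, $H_{s_i}$ is a disjoint union of cliques on its $m_{s_i}$ vertices, so by Definition \ref{def:WURandomMixtureGraphs} the sparse part $G_{s_i}$ is a disjoint union of stars whose edges are in bijection with the vertices of $H_{s_i}$; hence $G_{s_i}$ has exactly $m_{s_i}$ edges, and being a forest of $r_i$ star components it satisfies $n_{s_i}=m_{s_i}+r_i$ with $1\le r_i\le m_{s_i}$, so $m_{s_i}<n_{s_i}\le 2m_{s_i}$. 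Next, since $\{G_{d_i}\}_i$ is a dense graph sequence (Definition \ref{def:WURandomMixtureGraphs}, Definition \ref{def:denseandsparse}) I would fix $\beta>0$ with $\beta\,n_{d_i}^2\le m_{d_i}\le\tfrac12 n_{d_i}^2$ for all large $i$, where $m_{d_i}$ is the edge count of $G_{d_i}$; and by the new-edges rule the join contributes $m_{new_i}=c_0\, m_{d_i}$ edges for a constant $c_0\ge 0$. Combining, $m_{n_i}=m_{d_i}+m_{s_i}+m_{new_i}=(1+c_0)m_{d_i}+m_{s_i}$, which is $\Theta(n_{d_i}^2)+\Theta(n_{s_i})$.

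Then I would split into the two cases. For part (1), if $\limsup_i n_{s_i}/n_{d_i}=c<\infty$ then $n_{s_i}\le(c+1)n_{d_i}$ eventually, so $n_{d_i}\le n_i\le(c+2)n_{d_i}$ and
\[
\frac{m_{n_i}}{n_i^2}\ \ge\ \frac{m_{d_i}}{n_i^2}\ \ge\ \frac{\beta\,n_{d_i}^2}{(c+2)^2 n_{d_i}^2}\ =\ \frac{\beta}{(c+2)^2}\ >\ 0 ,
\]
whence $\liminf_i m_{n_i}/n_i^2>0$ and $\{G_{n_i}\}_i$ is dense. For part (2), if $n_{s_i}/n_{d_i}\to\infty$ then $n_{d_i}/n_{s_i}\to0$ and $n_i\ge n_{s_i}\to\infty$, and using $m_{d_i}\le\tfrac12 n_{d_i}^2$ and $m_{s_i}<n_{s_i}$,
\[
\frac{m_{n_i}}{n_i^2}\ \le\ \frac{(1+c_0)\tfrac12 n_{d_i}^2+n_{s_i}}{n_{s_i}^2}\ =\ \frac{1+c_0}{2}\Big(\frac{n_{d_i}}{n_{s_i}}\Big)^2+\frac{1}{n_{s_i}}\ \longrightarrow\ 0 ,
\]
so $\{G_{n_i}\}_i$ is sparse.

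The hard part will not be the asymptotics but the bookkeeping around randomness: $m_{d_i}$ and the component count $r_i$ (hence $n_{s_i}$) are random, and the assertion that $\{G_{d_i}\}_i$ is dense must be interpreted in an appropriate probabilistic sense. I would deal with this by noting that the sparse-side facts -- $G_{s_i}$ has $m_{s_i}$ edges and $m_{s_i}<n_{s_i}\le 2m_{s_i}$ -- are deterministic, and that $m_{d_i}/n_{d_i}^2$ concentrates around $\tfrac12\iint_{[0,1]^2}W>0$ (or, equivalently, by running the argument in expectation using $\E[m_{d_i}]=\binom{n_{d_i}}{2}\iint_{[0,1]^2}W$), so that the two displayed inequalities hold almost surely for all large $i$. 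Everything else is elementary.
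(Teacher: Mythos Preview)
Your proposal is correct and follows essentially the same approach as the paper: both arguments express $n_i=n_{d_i}+n_{s_i}$ and $m_{n_i}=(1+c_0)m_{d_i}+m_{s_i}$, then bound $m_{n_i}/n_i^2$ from below by $\Theta(1)$ in case~(1) and from above by $o(1)$ in case~(2) using $m_{d_i}=\Theta(n_{d_i}^2)$ and $m_{s_i}=\Theta(n_{s_i})$. Your version is in fact a bit tidier with explicit constants and more honest about the randomness of $m_{d_i}$ (which the paper glosses over), but the structure of the argument is the same.
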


Lemma \ref{lemma:WUrandomgraphs1} shows that $(U,W)$-mixture graphs are a general construction that allows us to generate sparse or dense graphs. While our focus is on sparse graphs, dense graphs that cannot be fully explained by a single graphon $W$ can be generated by a $(U,W)$ mixture (see Appendix \ref{sec:Wcantfullyexplain}). Thus, $(U,W)$-mixture graphs can model a richer set of dense graphs compared to a single graphon $W$.  We  give examples of $(U,W)$-mixture graphs in Appendix \ref{app:mixtures} and show that by changing mixture properties we get  graph sequences ranging from dense to sparse using the same $(U,W)$ combination. 

\section{The top-$k$ degrees of sparse $(U,W)$ mixture graphs}\label{sec:topk}

In a network the highest-degree nodes (the hubs) often represent the most influential or central entities of the network. In social networks they maybe key individuals, in transportation networks they can represent critical infrastructure and in a neurological context hub overload and failure can explain neurological disorders \citep{stam2014modern}. 
Thus modeling high-degree nodes is important. 
By observing a mixture graph without knowing $U$, $W$ or the mixture ratio function, we  predict the high-degrees of an unseen graph, by knowing only the number of  nodes in that graph.

For a sparse $(U,W)$ mixture sequence, we show that the high-degree nodes are contributed by $U$ (Lemma \ref{lemma:WUrandomgraphs6})). 
Recall that graphon $U$ is in the line graph space, and by Theorem~\ref{thm:Jansonmain} is formed by disjoint cliques. Hence the inverse line graphs are disjoint stars.  
As $U$ is described by a mass-partition $\bm{p} = (p_1, p_2, \ldots)$, if $p_j > p_k$ the degree of the star corresponding to $p_j$ is larger than that of $p_k$ even after joining the sparse and dense parts (Lemma \ref{lemma:WUrandomgraphs5}). Furthermore, the joining process does not change the order of the highest degrees as stated below.


\begin{restatable}{proposition}{proprankpreserving}\label{prop:rankpreserving}(\textbf{Order Preserving Property})
    Let $\{G_{n_i}\}_i$ be a sequence of sparse $(U,W)$-mixture graphs (Definition \ref{def:WURandomMixtureGraphs}) with dense and sparse parts $G_{d_i}$ and $G_{s_i}$ respectively. Let $\bm{p} = (p_1, p_2, \ldots )$ be the mass-partition (Definition \ref{def:masspartition}) associated with $U$ which has at least $k$ partitions. 
    Let $\tilde{q}_{j,i}$ be the degree of the star in $G_{s_i}$ corresponding to $p_j \neq 0$. Let $q_{j,i}$ denote the degree of the corresponding vertex in $G_{n_i}$.  Let $\deg_{G_{n_i}} v_{(r)}$ denote the $r$th highest degree in $G_{n_i}$.  
    Then 
\ifthenelse{\boolean{twocolumn}}{
   \begin{align*}
      P & \left( \bigcap_{j = 1}^k \left( q_{j, i} = \deg_{G_{n_i}} v_{(j)} \right) \right)  \geq \left( 1 - \frac{c_1}{m_{s_i}}\right)^{k} \times \\
      & \left( 1 - \exp\left(-c_2 \frac{m_{s_i}^2}{n_{d_i}^2} \right)  - \exp\left( -c_3m_{s_i}  \right) \right)
  \end{align*}
}{
   \[
 P\left( \bigcap_{j = 1}^k \left( q_{j, i} = \deg_{G_{n_i}} v_{(j)} \right) \right) \geq \left( 1 - \frac{c_1}{m_{s_i}}\right)^{k}\left( 1 - \exp\left(-c_2 \frac{m_{s_i}^2}{n_{d_i}^2} \right)  - \exp\left( -c_3m_{s_i}  \right) \right) 
 \]
}


That is, with high probability the order of the stars in the sparse part are preserved by joining.
\end{restatable}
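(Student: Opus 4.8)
Write $\mathcal E:=\bigcap_{j=1}^{k}\{q_{j,i}=\deg_{G_{n_i}}v_{(j)}\}$. I would first record that $\mathcal E$ holds as soon as two things happen: (I) $q_{1,i}>q_{2,i}>\cdots>q_{k,i}$, and (II) $q_{k,i}>\deg_{G_{n_i}}(w)$ for every vertex $w$ of $G_{n_i}$ other than the $k$ star centres corresponding to $p_1,\dots,p_k$. (This uses $p_1>\cdots>p_k$, and $p_k>p_{k+1}$ when $U$ has more than $k$ partitions, so that the ranking is well defined; Lemmas~\ref{lemma:WUrandomgraphs5} and~\ref{lemma:WUrandomgraphs6} give qualitative versions of (I)--(II) that this argument makes quantitative.) The plan is to introduce a ``clique-size'' event $\mathcal G$ (probability $\ge(1-c_1/m_{s_i})^{k}$) and a ``no large fluctuation'' event $\mathcal J$ with $\mathcal G\cap\mathcal J\subseteq\mathcal E$, and to show $P(\mathcal J^{c}\mid\mathcal G)\le\exp(-c_2m_{s_i}^2/n_{d_i}^2)+\exp(-c_3m_{s_i})$; then $P(\mathcal E)\ge P(\mathcal G)P(\mathcal J\mid\mathcal G)$ gives the bound.

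\textbf{The clique sizes.} Since $U=W^{\mathcal M}_{\bm p}$ is a disjoint-clique graphon, the $U$-random graph $H_{s_i}$ is literally a disjoint union of cliques, one per occupied block, and the vector $(t_{1,i},t_{2,i},\dots)$ of clique sizes is multinomial$(m_{s_i};p_1,p_2,\dots)$; moreover $\tilde q_{j,i}=t_{j,i}$ (the $j$-th star of $G_{s_i}$ is $K_{1,t_{j,i}}$) and $q_{j,i}=t_{j,i}+N_{j,i}$ with $N_{j,i}\ge0$ the number of joining edges at that centre. For a small enough constant $\epsilon=\epsilon(\bm p,k)>0$ put $A_{j,i}:=\{|t_{j,i}-p_jm_{s_i}|\le\epsilon m_{s_i}\}$ and $\mathcal G:=\bigcap_{j\le k}A_{j,i}$. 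Conditioning on $t_{1,i},\dots,t_{j-1,i}$, the law of $t_{j,i}$ is again Binomial with mean within $O(\epsilon m_{s_i})$ of $p_jm_{s_i}$ on $A_{1,i}\cap\cdots\cap A_{j-1,i}$ and variance $\le m_{s_i}$, so Chebyshev gives $P(A_{j,i}\mid A_{1,i},\dots,A_{j-1,i})\ge1-c_1/m_{s_i}$ uniformly (once $\epsilon$ is small), whence $P(\mathcal G)\ge(1-c_1/m_{s_i})^{k}$. On $\mathcal G$ the sizes $\tilde q_{1,i}>\cdots>\tilde q_{k,i}$ are separated by gaps of order $m_{s_i}$, and $\tilde q_{k,i}\ge(p_k-\epsilon)m_{s_i}$ strictly exceeds the pre-join degree of every other vertex: a leaf has degree $1$, a dense-part vertex has degree $<n_{d_i}=o(m_{s_i})$, and the centre of a star indexed $l>k$ has degree $t_{l,i}\le(p_{k+1}+\epsilon)m_{s_i}<(p_k-\epsilon)m_{s_i}$ — the last with probability $\ge1-e^{-\Theta(m_{s_i})}$ (Chernoff, using $\sum_{l>k}p_l<1$), which I fold into $\mathcal J$. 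Here $n_{d_i}=o(m_{s_i})$ follows from the sparsity hypothesis $n_{s_i}/n_{d_i}\to\infty$ together with $n_{s_i}=\Theta(m_{s_i})$, since $G_{s_i}$ is a disjoint union of stars with exactly $m_{s_i}$ edges.

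\textbf{The joining edges, and the main obstacle.} Let $\mathcal J$ be the event that no clique outside the top $k$ exceeds $(p_{k+1}+\epsilon)m_{s_i}$ and that every $N_{j,i}$ ($j\le k$) and every $M_w$ (the number of joining edges at a vertex $w$ that is not a top-$k$ centre) is at most $\tfrac13\epsilon m_{s_i}$. Since on $\mathcal G$ the gaps between the $\tilde q_{j,i}$ and the slack between $\tilde q_{k,i}$ and all other pre-join degrees are of order $m_{s_i}$, adding at most $\tfrac13\epsilon m_{s_i}$ to any degree cannot reorder the $\tilde q_{j,i}$ or let another vertex overtake $\tilde q_{k,i}$; hence $\mathcal G\cap\mathcal J\subseteq\mathcal E$. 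To bound $P(\mathcal J^{c}\mid\mathcal G)$ I would note that, by the joining rules, $m_{new_i}=c\,m_{d_i}=\Theta(n_{d_i}^2)$ edges are added (the dense part has quadratically many edges), each endpoint uniform within its side, so $M_w$ is stochastically dominated by a Binomial of mean $O(m_{new_i}/n_{d_i})=O(n_{d_i})$ if $w\in G_{d_i}$ and $O(m_{new_i}/n_{s_i})=O(n_{d_i}^2/m_{s_i})$ if $w\in G_{s_i}$; similarly for $N_{j,i}$. Each threshold $\tfrac13\epsilon m_{s_i}$ lies far above the corresponding mean, so a Chernoff/Bernstein tail bound plus a union bound over the at most $n_i$ vertices yields $P(\mathcal J^{c}\mid\mathcal G)\le\exp(-c_2m_{s_i}^2/n_{d_i}^2)+\exp(-c_3m_{s_i})$, the first term absorbing the $G_{s_i}$-side vertices (mean joining degree of order $n_{d_i}^2/m_{s_i}$, so the needed relative deviation is of order $m_{s_i}^2/n_{d_i}^2$) and the second the $G_{d_i}$-side vertices together with the event that some clique outside the top $k$ is too large; a crude bound suffices since the true probability is smaller. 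Combining,
\[
P(\mathcal E)\ \ge\ P(\mathcal G)\,P(\mathcal J\mid\mathcal G)\ \ge\ \Bigl(1-\tfrac{c_1}{m_{s_i}}\Bigr)^{k}\bigl(1-\exp(-c_2m_{s_i}^2/n_{d_i}^2)-\exp(-c_3m_{s_i})\bigr),
\]
with $c_1,c_2,c_3>0$ depending only on $\bm p,W,c,k$. The hard part is the last step: the joining injects a \emph{dense-order} number $\Theta(n_{d_i}^2)$ of edges, and one must rule out that even the single most favoured vertex collects enough of them to overtake the smallest top-$k$ star, whose degree is only $\Theta(m_{s_i})$; this is exactly where the sparsity assumption $n_{s_i}/n_{d_i}\to\infty$ is indispensable, and producing explicit exponential rates needs a genuine tail estimate uniform over all vertices, not a first-moment bound.
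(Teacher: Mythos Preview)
Your argument is correct and in fact more careful than the paper's. The paper proceeds by writing
\[
P\Bigl(\bigcap_{j=1}^{k}\{q_{j,i}=\deg_{G_{n_i}}v_{(j)}\}\Bigr)=\prod_{j=1}^{k-1}P(q_{j,i}>q_{j+1,i})\cdot P(\deg_{G_{n_i}}v_u<q_{k,i})
\]
as an equality, then plugs in the single-event bounds from Lemmas~\ref{lemma:WUrandomgraphs5} and~\ref{lemma:WUrandomgraphs6} for the two factors. It never justifies treating the pairwise comparisons as independent, nor does it union-bound over the $n_{d_i}$ dense vertices when controlling $v_u$; it simply asserts that the single-vertex bound applies to the maximiser. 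Your conditioning decomposition $P(\mathcal E)\ge P(\mathcal G)P(\mathcal J\mid\mathcal G)$ avoids both issues: the $(1-c_1/m_{s_i})^{k}$ factor arises from a genuine chain of conditional Chebyshev bounds on the multinomial coordinates, and the exponential factor from a union bound over all vertices' joining increments. One cosmetic difference: in the paper the term $\exp(-c_2m_{s_i}^2/n_{d_i}^2)$ comes from a dense-part vertex being too large (Lemma~\ref{lemma:WUrandomgraphs4}) and $\exp(-c_3m_{s_i})$ from a hub being too small, whereas you attribute the first to sparse-side joining fluctuations and the second to dense-side ones plus the extra-clique event. Since all of the true exponents you compute are at least as large as the stated ones, your allocation still yields a valid upper bound; indeed, as you note, the true rates are strictly better. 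Your approach buys a rigorous handling of dependence at the cost of a slightly longer setup; the paper's approach is shorter but leans on an unstated independence heuristic.
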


The Order Preserving Property helps us to estimate the top-$k$ degrees of graphs when mass-partitions have at least $k$ non-zero entries. 

\begin{restatable}{lemma}{lemmaDegreePredict}\label{lemma:DegreePredict}
  Suppose $G_{n_i}$ and $G_{n_j}$ are two graphs from a sparse $(U,W)$-mixture graph sequence (Definition \ref{def:WURandomMixtureGraphs}). We treat $G_{n_i}$ as the training graph and $G_{n_j}$ as the test graph. Suppose $G_{n_i}$ and $G_{n_j}$ have $n_i$ and $n_j$ nodes respectively. 
  Let $\bm{p} = (p_1, p_2, \ldots )$ be the mass-partition (Definition \ref{def:masspartition}) associated with $U$, which has at least $k$ partitions.  Let $\deg_{G_{n_i}} v_{(\ell)}$ denote the $\ell$th largest degree in $G_{n_i}$  where $\ell \leq k$.  Then we estimate the $\ell$th largest degree in $G_{n_j}$ as
\begin{equation}\label{eq:degreepred}
    \deg_{G_{n_j}} \hat{v}_{(\ell)} = \deg_{G_{n_i}} v_{(\ell)} \times \frac{n_j}{n_i} \, , 
\end{equation}
 which satisfies
    \begin{equation}\label{eq:degprederror}
        \frac{ \left| \deg_{G_{n_j}} \hat{v}_{(\ell)} -  \mathbb{E}( q_{\ell,j})  \right|}{m_{s_j}} \leq c p_{\ell} \left \vert 1 - \frac{n_jm_{s_i}}{ n_i m_{s_j}}  \right\vert  
    \end{equation}
    
    with high probability, where ${q}_{\ell,j}$ denotes the degree of the hub vertex  corresponding to $p_{\ell} \neq 0$ in $G_{n_j}$ and $m_{s_i}$ and $m_{s_j}$ denote the number of edges in the sparse parts $G_{s_i}$ and $G_{s_j}$.
\end{restatable}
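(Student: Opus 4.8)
The plan is to express the estimation error as a sum of two contributions: a ``scaling mismatch'' term that compares the ideal normalized degree across the two graphs, and a ``concentration'' term controlling the deviation of the actual hub degree $q_{\ell,j}$ from its expectation. First I would recall that, by the construction in Definition \ref{def:WURandomMixtureGraphs}, the hub corresponding to $p_\ell$ in $G_{s_j}$ is the centre of a star whose degree $\tilde q_{\ell,j}$ is the clique size in $H_{s_j}\sim\mathbb G(m_{s_j},U)$ associated with the $\ell$th block of the mass-partition; since $U = W_{\bm p}^{\mathcal M}$, the expected size of that clique is $p_\ell m_{s_j}$, and standard concentration for $U$-random graphs (as used in Lemma \ref{lemma:WUrandomgraphs5} and Proposition \ref{prop:rankpreserving}) gives $|\tilde q_{\ell,j} - p_\ell m_{s_j}| \le \epsilon\, m_{s_j}$ with high probability, with an analogous statement for $G_{s_i}$. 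The New edges condition in Definition \ref{def:joiningrules} together with the Random edges condition means the joining adds at most $c\,m_{d_j}$ edges spread uniformly, so the expected increment to the hub degree from joining is $O(m_{d_j}/n_j)$, which is lower order relative to $m_{s_j}$ in the sparse regime (here is where $\lim n_{s_i}/n_{d_i}=\infty$ is used, via Lemma \ref{lemma:WUrandomgraphs1}); this lets me write $\mathbb E(q_{\ell,j}) = p_\ell m_{s_j}\,(1+o(1))$, and similarly $\deg_{G_{n_i}} v_{(\ell)} = p_\ell m_{s_i}\,(1+o(1))$ with high probability once the Order Preserving Property (Proposition \ref{prop:rankpreserving}) identifies $\deg_{G_{n_i}} v_{(\ell)}$ with the $\ell$th hub degree $q_{\ell,i}$.

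Next I would substitute the definition $\deg_{G_{n_j}}\hat v_{(\ell)} = \deg_{G_{n_i}} v_{(\ell)}\cdot n_j/n_i$ and divide through by $m_{s_j}$. The numerator becomes
\[
\left|\,\deg_{G_{n_i}} v_{(\ell)}\cdot\frac{n_j}{n_i} - \mathbb E(q_{\ell,j})\,\right|
= \left|\, p_\ell m_{s_i}\cdot\frac{n_j}{n_i} - p_\ell m_{s_j}\,\right| + (\text{error terms}),
\]
so dividing by $m_{s_j}$ yields the leading term $p_\ell\left|\frac{n_j m_{s_i}}{n_i m_{s_j}} - 1\right|$, which is exactly the right-hand side of \eqref{eq:degprederror} up to the constant $c$. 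The remaining job is to absorb all the $o(1)$ and concentration error terms into the constant $c$: the $\epsilon m_{s_i}$ and $\epsilon m_{s_j}$ fluctuations each contribute $O(\epsilon)\cdot p_\ell^{-1}\cdot p_\ell$-scale terms after normalization, and the joining increments contribute $O(m_{d_i}/(n_i m_{s_j}))$ and $O(m_{d_j}/m_{s_j})$ which vanish; collecting these and using $p_\ell\le 1$ one bounds everything by $c\,p_\ell\left|1 - \frac{n_j m_{s_i}}{n_i m_{s_j}}\right|$ plus a genuinely negligible additive remainder, which I would either fold into ``with high probability'' or state is dominated when the ratio mismatch is nonzero.

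The main obstacle I anticipate is bookkeeping the ``with high probability'' event so that a single event of probability close to $1$ simultaneously controls: (i) the clique-size concentration for both $H_{s_i}$ and $H_{s_j}$, (ii) the order-preservation event of Proposition \ref{prop:rankpreserving} for $G_{n_i}$ (and, if one wants the estimate interpreted as hitting the $\ell$th degree rather than the $\ell$th hub, also for $G_{n_j}$), and (iii) the concentration of the number and placement of joining edges. Each piece already appears in earlier lemmas with its own failure probability — of the form $c_1/m_{s_i}$, $\exp(-c_2 m_{s_i}^2/n_{d_i}^2)$, $\exp(-c_3 m_{s_i})$ — so the real care is a union bound and making sure the constants $c_1,c_2,c_3$ and the resulting polynomial/exponential rates are consistent across the two graphs $G_{n_i}$ and $G_{n_j}$; the inequality \eqref{eq:degprederror} itself then follows by the triangle-inequality decomposition above once that clean event is in hand. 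A secondary subtlety is that the bound is only meaningful (indeed, only an inequality in the stated direction) because the normalization is by $m_{s_j}$ rather than by $\deg_{G_{n_j}} v_{(\ell)}$, so I would make explicit that the additive joining error is $o(m_{s_j})$ and hence harmless on the normalized scale.
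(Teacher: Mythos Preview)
Your proposal is correct and follows essentially the same route as the paper: invoke the Order Preserving Property (Proposition~\ref{prop:rankpreserving}) to identify $\deg_{G_{n_i}} v_{(\ell)}$ with $q_{\ell,i}$, replace $q_{\ell,i}$ by its expectation $m_{s_i}p_\ell + c\,m_{new_i}/n_{s_i}$ via the concentration in Lemma~\ref{lemma:WUrandomgraphsaboutU2}, substitute into the estimator, and simplify using that the joining increments are $o(m_{s_j})$ in the sparse regime. The paper is somewhat looser than you are about the union-bound bookkeeping (it simply writes $\approx$ and ``with high probability'' at the relevant step), so your extra care there is fine but not strictly needed to match the argument.
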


We use equation \eqref{eq:degreepred} to predict degrees and thus model highest-degree nodes. Equation \eqref{eq:degprederror} gives an error bound. 

\section{Estimating $U$ for finite number of partitions}\label{sec:estimatingUfinite}

Graphon $U$ describes the relative strength (or vulnerability)  of the hubs in a mixture. In a social media network, hubs act as superspreaders of information and reach; $U$ shows the relative power of the big players in the network. In an energy/telecommunications network, it can show the vulnerability/resilience in the event of an attack. Consider two energy networks $A$ and $B$ with $U_A = (0.5, 0.3, 0.2)$ and $U_B = (0.1, \ldots, 0
.1)$. Then mixture $A$ is far more vulnerable from a resilience viewpoint compared to mixture $B$ if attacked. By attacking one node 
 (the hub corresponding to 0.5), approximately 50\% of the network can be crippled in an energy/telecommunications network scenario. 

Graphon $U$ can have finite or infinite partitions. By observing mixture graphs, we  estimate the mass-partition $\bm{p}$ (Definition \ref{def:masspartition}), both when $\bm{p}$ has finite and infinite partitions. In this section we consider finite partitions. For both cases, we consider a class of well-behaved $W$ in our mixture model. We do not need $W$ to be well-behaved to estimate the topmost degrees of $G_{n_i}$. However, we need some conditions on $W$ so that  we can estimate $U$.


\begin{definition}\label{def:degreefunction}(\textbf{Degree Function})
    As in \cite{delmas2021asymptotic} we define the degree function 
    \[
    D(x) = \int_0^1 W(x,y) \, dy \, , 
    \]
    where $D(x)$ is defined on $x \in [0,1]$. 
\end{definition}

\begin{assumption}\label{assump:1}
    We assume $W$ to have a continuous degree function $D(x)$ (Definition \ref{def:degreefunction}).
\end{assumption}

When $W$ satisfies Assumption \ref{assump:1}, the degrees in the dense part $G_{d_i}$ are closely packed, i.e., given a degree value, there is another degree value quite close in $G_{d_i}$ (see Lemma \ref{lemma:krelated2}). 
Using Assumption \ref{assump:1} we estimate $k$, the number of partitions.

\begin{restatable}{proposition}{PropestimatingKOne}\label{prop:estimatingK1}
     Let $\{G_{n_i}\}_i$ be a sequence of sparse $(U,W)$-mixture graphs (Definition \ref{def:WURandomMixtureGraphs}) with dense and sparse parts $G_{d_i}$ and $G_{s_i}$ having nodes $n_{d_i}$ and $n_{s_i}$ respectively.  Suppose $W$ satisfies Assumption \ref{assump:1} and $W \neq 1$.
     Let $\bm{p} = (p_1, p_2, \ldots )$ be the mass-partition (Definition \ref{def:masspartition}) associated with $U$ which has only $k$ partitions. 
     Let $\deg_{G_{n_i}} v_{(\ell)}$ denote the $\ell$th largest degree in $G_{n_i}$. Then there exists $I_0$ such that  for $i > I_0$ we have 
     \begin{equation}\label{eq:khatfinite}
          k = \max_\ell \left( \log  \deg_{G_{n_i}} v_{(\ell)} - \log \deg_{G_{n_i}} v_{(\ell + 1)} \right)\, , 
     \end{equation}
where we exclude small degrees.
That is, in the log scale the difference between successive top $k$ degrees is largest at $k$. 
\end{restatable}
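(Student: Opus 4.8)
The plan is to compare the top-$k$ degrees of $G_{n_i}$, which are the hub degrees $q_{1,i} \geq q_{2,i} \geq \cdots \geq q_{k,i}$ coming from the sparse part $G_{s_i}$ (by the Order Preserving Property, Proposition~\ref{prop:rankpreserving}, this identification holds with high probability for all large $i$), against the remaining degrees, which all come from the dense part $G_{d_i}$ up to the joining perturbation. The key asymptotic is that each hub degree $q_{\ell,i}$ is of order $p_\ell \, m_{s_i}$ (this is essentially Lemma~\ref{lemma:expnewedges}/\ref{lemma:WUrandomgraphs5} plus concentration), while $m_{s_i} \to \infty$ faster than $n_{d_i}$ since the mixture is sparse and $n_{s_i}/n_{d_i} \to \infty$; meanwhile every degree in the dense part is $O(n_{d_i})$. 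So after excluding the small degrees (the non-hub, dense-part degrees), the ratio of consecutive hub degrees $\deg v_{(\ell)}/\deg v_{(\ell+1)}$ tends to $p_\ell/p_{\ell+1}$, a finite constant bounded away from $1$ whenever $p_\ell > p_{\ell+1}$, for $\ell = 1,\dots,k-1$.

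The crux is the jump at $\ell = k$. Here $\deg_{G_{n_i}} v_{(k)} = q_{k,i} \asymp p_k m_{s_i}$, whereas $\deg_{G_{n_i}} v_{(k+1)}$ is the largest degree in the dense part after joining, which is $\Theta(n_{d_i})$ (using that $W$ has a continuous degree function and $W \neq 1$, so the maximal degree in $G_{d_i}$ is asymptotically $(\max_x D(x)) \, n_{d_i} < n_{d_i}$, and the joining adds only $O(n_{d_i})$ edges per node by the New edges and Random edges conditions). Since $m_{s_i}/n_{d_i} \to \infty$, the ratio $\deg v_{(k)}/\deg v_{(k+1)} \to \infty$, so $\log \deg v_{(k)} - \log \deg v_{(k+1)} \to \infty$. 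First I would make the three regimes precise: (i) for $\ell < k$, the log-difference converges to $\log(p_\ell/p_{\ell+1}) < \infty$; (ii) for $\ell = k$, the log-difference $\to \infty$; (iii) for $\ell > k$ (among the retained, non-small degrees in the dense part), I would invoke Assumption~\ref{assump:1} and Lemma~\ref{lemma:krelated2} — continuity of $D$ forces consecutive dense-part degrees to be close, so the log-differences stay bounded (in fact $\to 0$ once we stay away from the small-degree tail). Combining (i)--(iii), for $i$ large enough the maximum of the successive log-gaps is attained uniquely at $\ell = k$, which is exactly \eqref{eq:khatfinite}; the threshold $I_0$ is chosen so that the gap at $k$ exceeds $\max_{\ell<k}\log(p_\ell/p_{\ell+1}) + 1$ and the gaps at $\ell > k$ stay below that, and so that the Order Preserving event holds.

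The main obstacle I expect is controlling regime (iii) rigorously: I need a clean statement that, after discarding the ``small degrees'' (which I should define, e.g., degrees below a threshold like $\epsilon\, n_{d_i}$ that separates the bulk of the dense part from its isolated-vertex tail), the consecutive ratios of the retained dense-part degrees are bounded by a constant arbitrarily close to $1$. This is where Lemma~\ref{lemma:krelated2} and the continuity of $D$ from Assumption~\ref{assump:1} do the work — continuity means the image $D([0,1])$ is an interval, the empirical degree sequence of $\mathbb{G}(n_{d_i},W)$ concentrates around $n_{d_i} D(x_{(r)})$, and no macroscopic gap appears in the interior of this interval — but the joining perturbation (adding $c\, m_{d_i}$ random edges) must be shown not to create a spurious gap, which follows from the Random edges condition spreading the new edges roughly evenly, hence changing each dense-part degree by a comparable $\Theta(m_{d_i}/n_{d_i}) = \Theta(n_{d_i})$ amount. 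Once these concentration and no-gap facts are assembled, the rest is the elementary three-regime comparison above.
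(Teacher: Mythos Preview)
Your proposal is correct and follows essentially the same three-regime structure as the paper's proof: Order Preserving (Proposition~\ref{prop:rankpreserving}) pins the top $k$ degrees to the hubs, the gap at $\ell=k$ blows up because $q_{k,i}\asymp p_k m_{s_i}$ while $\deg v_{(k+1)}=O(n_{d_i})$, and Lemma~\ref{lemma:krelated2} kills the gaps for $\ell>k$. The only cosmetic difference is that the paper factors the comparison between regimes (i) and (ii) into a separate intermediate result (Proposition~\ref{prop:krelated1}, which shows $\log q_{k,i}-\log(1+\alpha)n_{d_i}\ge \log q_{\ell,i}-\log q_{\ell+1,i}$ via a Chebyshev argument on $X_i=(1+\alpha)q_{\ell,i}n_{d_i}-q_{k,i}q_{\ell+1,i}$), whereas you argue the boundedness of the ratios $q_{\ell,i}/q_{\ell+1,i}\to p_\ell/p_{\ell+1}$ directly; both routes use the same concentration inputs and arrive at the same conclusion.
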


Once we have found the partitions $k$, we can estimate the mass-partition $\bm{p} = (p_1, \ldots, p_k, 0 , 0, \ldots )$.
\begin{restatable}{proposition}{proppj}\label{prop:pj}
    Let $\{G_{n_i}\}_i$ be a sequence of sparse $(U,W)$-mixture graphs (Definition \ref{def:WURandomMixtureGraphs}) with dense and sparse parts $G_{d_i}$ and $G_{s_i}$ respectively. Let $\bm{p} = (p_1, p_2, \ldots )$ be the mass-partition (Definition \ref{def:masspartition}) associated with $U$ with only $k$ partitions. 
    Let $q_{j,i}$ denote the degree of the hub vertex corresponding to $p_j \neq 0$ in $G_{n_i}$.   Then using the second order approximation of the Taylor expansion for the expectation and the first order approximation for the variance, and for some constants $c_1$ and $c_2$ we have
    \ifthenelse{\boolean{twocolumn}}{
   \[
 \left\vert \mathbb{E}\left( \frac{q_{j, i}}{\sum_{\ell=1}^k q_{\ell, i} } \right) - p_j  \right\vert \leq \frac{c_1}{m_{s_i}} \, ,  \Var\left( \frac{q_{j, i}}{\sum_{\ell=1}^k q_{\ell, i} } \right) \leq \frac{c_2}{m_{s_i}} \, . 
    \]
}{
    \[
 \left\vert \mathbb{E}\left( \frac{q_{j, i}}{\sum_{\ell=1}^k q_{\ell, i} } \right) - p_j  \right\vert \leq \frac{c_1}{m_{s_i}} \quad \text{and} \quad  \Var\left( \frac{q_{j, i}}{\sum_{\ell=1}^k q_{\ell, i} } \right) \leq \frac{c_2}{m_{s_i}} \, . 
    \]
}
 \end{restatable}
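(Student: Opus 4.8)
\textbf{Proof proposal for Proposition \ref{prop:pj}.}

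The plan is to treat $q_{j,i}$ as a random variable whose expectation and variance are already controlled (via the earlier lemmas on the expected hub degrees, e.g.\ Lemma \ref{lemma:expnewedges} and the order-preserving estimates), and then apply the delta method / Taylor expansion to the ratio $R_j := q_{j,i}/\sum_{\ell=1}^k q_{\ell,i}$. First I would record the building blocks: from the $U$-random graph construction, the star in $G_{s_i}$ corresponding to $p_j$ has degree $\tilde q_{j,i}$ with $\mathbb{E}(\tilde q_{j,i}) \approx p_j m_{s_i}$ up to an $O(1)$ correction coming from the discretization of the mass-partition when sampling $m_{s_i}$ clique-vertices, and with variance $O(m_{s_i})$ (a sum of weakly dependent indicators, essentially a multinomial/occupancy count). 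The joining rules (Definition \ref{def:joiningrules}, conditions on random edges and on $m_{new_i}=c\,m_{d_i}$) add only $O(m_{d_i})$ extra edges spread uniformly, so $q_{j,i} = \tilde q_{j,i} + (\text{a random contribution of order } m_{d_i}/n_{s_i})$; since the sequence is sparse, $m_{d_i}/n_{s_i} \to 0$ relative to $m_{s_i}$, so $\mathbb{E}(q_{j,i}) = p_j m_{s_i}(1+o(1))$ and $\Var(q_{j,i}) = O(m_{s_i})$ still hold. Hence $\mathbb{E}(S) = m_{s_i}(1+o(1))$ where $S := \sum_{\ell=1}^k q_{\ell,i}$, and $\Var(S) = O(m_{s_i})$.

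Next I would carry out the second-order Taylor expansion of $g(x_1,\dots,x_k) = x_j / \sum_\ell x_\ell$ around the mean vector $\bm{\mu} = (\mathbb{E} q_{1,i},\dots,\mathbb{E} q_{k,i})$. The first-order term vanishes in expectation, so
\[
\mathbb{E}(R_j) \;=\; \frac{\mu_j}{\sum_\ell \mu_\ell} \;+\; \frac12 \sum_{a,b} \partial_{ab} g(\bm\mu)\,\Cov(q_{a,i},q_{b,i}) \;+\; (\text{higher order}).
\]
The leading term is $\frac{\mu_j}{\sum_\ell \mu_\ell} = \frac{p_j m_{s_i}(1+o(1))}{m_{s_i}(1+o(1))} = p_j + O(1/m_{s_i})$, using $\sum_{\ell=1}^k p_\ell = 1$ (here I use that $\bm p$ has exactly $k$ partitions). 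The second-order correction involves $\partial_{ab}g(\bm\mu)$, which is $O(1/\|\bm\mu\|) = O(1/m_{s_i})$ times bounded coefficients, multiplied by $\Cov(q_{a,i},q_{b,i}) = O(m_{s_i})$; the product is $O(1)$, which naively is too big, but one more factor of $1/\sum_\ell \mu_\ell = O(1/m_{s_i})$ appears in every second derivative of $g$ (each $\partial_{ab}g$ actually scales like $1/(\sum\mu)^2 \cdot \mu_j$ or $1/(\sum\mu)$ — both $O(1/m_{s_i})$ after accounting for the $\mu_j = \Theta(m_{s_i})$ numerator), so the correction is $O(1/m_{s_i})$, giving the claimed bound $|\mathbb{E}(R_j) - p_j| \le c_1/m_{s_i}$. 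For the variance, the first-order delta method gives $\Var(R_j) \approx \sum_{a,b}\partial_a g(\bm\mu)\partial_b g(\bm\mu)\Cov(q_{a,i},q_{b,i})$; each $\partial_a g(\bm\mu) = O(1/m_{s_i})$, the covariances are $O(m_{s_i})$, and there are $O(k^2)$ terms, so $\Var(R_j) = O(k^2/m_{s_i}) = O(1/m_{s_i})$ since $k$ is a fixed constant. This yields $\Var(R_j)\le c_2/m_{s_i}$.

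The main obstacle I anticipate is bookkeeping the covariance structure of $(q_{1,i},\dots,q_{k,i})$ carefully enough to be sure the second-order Taylor term is genuinely $O(1/m_{s_i})$ and not $O(1)$: the $\tilde q_{\ell,i}$ are negatively correlated (a vertex assigned to clique $a$ cannot be in clique $b$), which only helps, but the joining step introduces positive correlations through the shared pool of dense-part endpoints, and one must check these are small — of order $m_{d_i}^2/n_{s_i}^2$, hence negligible for a sparse mixture by Lemma \ref{lemma:WUrandomgraphs1}. A secondary subtlety is justifying that the Taylor remainder is controlled, i.e.\ that $S$ is bounded away from $0$ with overwhelming probability so the expansion of $1/S$ is legitimate; this follows because $q_{1,i} \ge \tilde q_{1,i}$ concentrates around $p_1 m_{s_i}$ with exponentially small deviations (a Chernoff bound for the occupancy count), matching the "with high probability" qualifier in the statement, and on that event $S \ge p_1 m_{s_i}/2$ say. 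Everything else is the routine delta-method computation the proposition explicitly licenses ("second order approximation of the Taylor expansion for the expectation and first order for the variance").
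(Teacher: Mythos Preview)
Your approach is essentially the same as the paper's: both compute the first two moments of the hub degrees $q_{\ell,i}$ from the mixture construction and then apply the second-order Taylor (delta-method) expansion to the ratio. The one simplification you miss is that, because $\sum_{\ell=1}^k p_\ell = 1$, the star degrees before joining satisfy $\sum_{\ell=1}^k \tilde q_{\ell,i} = m_{s_i}$ \emph{exactly}, so $S = m_{s_i} + (\text{joining noise})$ and $\Var(S)$ is driven only by the $m_{new_i}$ contribution rather than being $O(m_{s_i})$; the paper exploits this to reduce everything to the two-variable formula $\mathbb{E}(X/Y)$ of Lemma~\ref{lemma:expetationofRVratio}, which makes the second-order bookkeeping cleaner than your $k$-variable Hessian argument (and in particular removes the slightly muddled ``one more factor of $1/\sum_\ell \mu_\ell$'' step), though your cruder bound $\Var(S)=O(m_{s_i})$ still suffices for the stated $O(1/m_{s_i})$ conclusion.
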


Therefore, when $U$ has finite partitions, first we estimate $k$ using equation \eqref{eq:khatfinite}. 
Then we estimate the mass-partition $\bm{p} = (p_1, p_2, \ldots, p_k, 0, \ldots )$ by
\begin{equation}\label{eq:pjhat}
    \hat{p}_{j} = \frac{ \deg_{G_{n_i}} v_{(j)} }{ \sum_{\ell = 1}^{\hat{k}} \deg_{G_{n_i}} v_{\ell)} } \, , 
\end{equation}
where $\deg_{G_{n_i}} v_{(\ell)}$ denotes the observed $\ell$th highest degree in ${G_{n_i}}$.  From the Order Preserving Property (Proposition \ref{prop:rankpreserving}) we know that for large $i$, $\deg_{G_{n_i}} v_{(\ell)} = q_{\ell, i}$, and thus the bounds in Proposition \ref{prop:pj} are satisfied.

\section{Estimating $U$ for infinite number of partitions}\label{sec:EstimatingUinfinitePartitions}

\begin{figure}[t]
    \centering
    \includegraphics[width=0.8\linewidth]{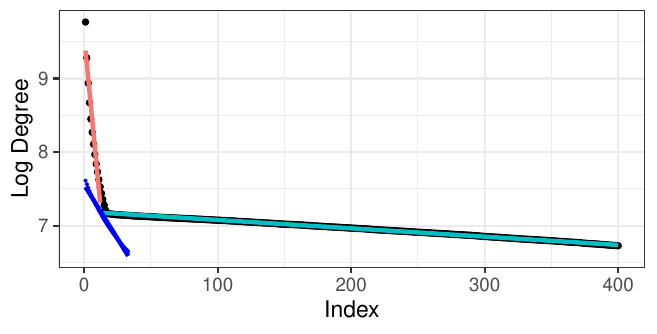}
    \caption{Unique log degree values of a $(U,W)$ mixture graph $G_{n_i}$. The line fitted to points $\{(j, \log( \deg_{G_{n_i}} v_{(j)}))\}_{j = 1}^{k_i}$ is shown in red and the line fitted to degrees generated from $W$ is shown in green. The line fitted to points $\{(j, \log \frac{m_{s_i}}{j}) \}_{j = 1}^{N}$ is shown in blue. }
    \label{fig:logComparison}
\end{figure}

When $U$ has infinite partitions, Proposition \ref{prop:estimatingK1} does not hold. In the finite partition case, there is a widening gap between the hub degrees generated by $U$ and the degrees generated by $W$. But in the infinite partition case, there are always smaller values $p_i$ that fill this gap.
%
However, we show that the degrees generated by $U$ are different to those generated by $W$. Let $k_i$ denote the number of hub nodes in $G_{n_i}$ generated by $U$ with degrees higher than those generated by $W$. Our goal is to estimate $k_i$. Thus, we focus on the larger degrees of $G_{n_i}$ and consider the unique degree values of $G_{n_i}$ greater than some percentile $C$. 
We consider the ordered, unique log degree values $ \log( \deg_{G_{n_i}} v_{(j)})$, where $\deg_{ G_{n_i}} v_{(j)}$ is the $j$th  highest degree in $G_{n_i}$. 
To the points $(j, \log( \deg_{G_{n_i}} v_{(j)}))$, we fit two lines using the OLS estimator. We show that the line fitted to the first $k_i$ points $\{(j, \log( \deg_{G_{n_i}} v_{(j)}))\}_{j = 1}^{k_i}$ has a steeper slope compared to the line fitted to the rest of the points, which are generated by $W$.  In Figure \ref{fig:logComparison} these two lines are shown in red and green respectively. We do this by comparing with the line fitted to points on the curve $y = \log(m_{s_i}/x)$, which is shown in blue in Figure \ref{fig:logComparison}. 

We do this in two parts. First we show that the red line has a steeper slope than the blue line. 
The key observation is that the series $\sum_{j} \frac{1}{j}$ diverges to infinity, and as such cannot represent a mass-partition, implying that a mass-partition $\{p_i\}_{i=1}^{\infty}$ has to converge to zero more steeply (or faster) than  $\{1/j\}_{j = 1}^{\infty}$. This is shown in Lemma \ref{lemma:sparsepartsteeperline}.
Next we show that the green line is much less steep than the blue line. As $W$ has a continuous degree function $D(x)$ (Assumption \ref{assump:1}) the sorted, unique degree values are close to each other, and for large $i$, they are mostly consecutive integers.  Using this we show that the green line has a less steep slope than the blue line. This is shown in Lemma \ref{lemma:densepartflatterline}.

\begin{procedure}\label{proc:fit2lines}
Let $G_{n_i}$ be a graph from a sparse $(U,W)$-mixture graph sequence with $U$ having infinite partitions. Let $k_i$ denote the number of hub nodes in $G_{n_i}$ generated by $U$ with degrees higher than those generated by $W$. We consider the points $(j, \log(\deg_{G_{n_i}} v_{(j)})) \in \mathbb{R}^2$ for unique degree values greater than some percentile $C$. Suppose there are $N$ such points. For different cutoff points $r$ we fit two lines -- one for $ \{ (j, \log(\deg_{G_{n_i}} v_{(j)}))\}_{j=1}^r$ and another for $ \{ (j, \log(\deg_{G_{n_i}} v_{(j)}))\}_{j=r+1}^N$ using OLS regression. Suppose the $L_2$ loss functions for the first and second fitted lines are given by $\mathcal{L}_1(r)$ and  $\mathcal{L}_2(N-r)$ respectively. Then we estimate $\hat{k}_i$ as 
\begin{equation}\label{eq:khati}
    \hat{k}_i = \argmin_{r} \left( \mathcal{L}_1(r) + \mathcal{L}_2(N-r) \right) \, . 
\end{equation}
That is, $\hat{k}_i$ is the number of points  on the first fitted line segment when the sum of the loss functions is minimized. 
\end{procedure}

Lemma \ref{lemma:khatiIncreases} shows that the sequence $\{ \hat{k}_i\}_i$ tends to infinity with high probability.  This allows us to show that given $\hat{k}_i$ the ratio 
\begin{equation}\label{eq:hatpjforinfinite}
    \hat{p}_{j, i} = \frac{\deg_{G_{n_i}} v_{(j)} }{ \sum_{\ell = 1}^{\hat{k}_i} \deg_{G_{n_i}} v_{(j)} }
\end{equation}

converges to $p_j$ in expectation. The proof is similar to that of Proposition \ref{prop:pj}. 

\begin{restatable}{proposition}{proppjInfiniteK}\label{prop:pjInfiniteK}
    Let $\{G_{n_i}\}_i$ be a sequence of sparse $(U,W)$-mixture graphs (Definition \ref{def:WURandomMixtureGraphs}) with dense and sparse parts $G_{d_i}$ and $G_{s_i}$ respectively. Suppose $U$ has infinite partitions and let $\bm{p} = (p_1, p_2, \ldots )$ be the associated mass-partition (Definition \ref{def:masspartition}). Let $q_{j,i}$ denote the degree of the hub vertex corresponding to $p_j$ in $G_{n_i}$ and $\hat{k}_i$ be estimated using Procedure \ref{proc:fit2lines}.  Then using the second order approximation of the Taylor expansion for the expectation and the first order approximation for the variance, and for some constants $c_1$ and $c_2$ we have
    \ifthenelse{\boolean{twocolumn}}{
      \[
   \left\vert \mathbb{E}\left( \frac{q_{j, i}}{\sum_{\ell=1}^{\hat{k}_i} q_{\ell, i} }   \right) - p_j \right\vert \leq \frac{c_1}{m_{s_i}}  \, ,  \Var\left( \frac{q_{j, i}}{\sum_{\ell=1}^{\hat{k}_i} q_{\ell, i} }\right) \leq \frac{c_2}{m_{s_i}}
    \]
    }{
       \[
   \left\vert \mathbb{E}\left( \frac{q_{j, i}}{\sum_{\ell=1}^{\hat{k}_i} q_{\ell, i} }   \right) - p_j \right\vert \leq \frac{c_1}{m_{s_i}}  \quad \text{and} \quad \Var\left( \frac{q_{j, i}}{\sum_{\ell=1}^{\hat{k}_i} q_{\ell, i} }\right) \leq \frac{c_2}{m_{s_i}}
    \]
    }
   
 \end{restatable}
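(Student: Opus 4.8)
The plan is to follow the same route as the proof of Proposition~\ref{prop:pj}, which handled the finite-partition case, and adapt it to account for the fact that the summation index $\hat k_i$ is now random and growing. The target quantity is the ratio $R_{j,i} = q_{j,i}/\sum_{\ell=1}^{\hat k_i} q_{\ell,i}$. The first step is to condition on the event that $\hat k_i$ takes a particular value and on the event (call it $\mathcal{E}_i$) guaranteed by Proposition~\ref{prop:rankpreserving} that the top $\hat k_i$ degrees of $G_{n_i}$ are exactly the hub degrees $q_{1,i},\dots,q_{\hat k_i,i}$ in the correct order; on $\mathcal{E}_i$ the ratio in the statement coincides with $\hat p_{j,i}$ from~\eqref{eq:hatpjforinfinite}. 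By Proposition~\ref{prop:rankpreserving} and Lemma~\ref{lemma:khatiIncreases}, $\mathcal{E}_i$ holds with probability at least $1 - O(1/m_{s_i})$ and $\hat k_i \to \infty$ with high probability, so the small-probability complement contributes at most an $O(1/m_{s_i})$ term to expectation and variance (the degrees are bounded by $m_{s_i}$, so the ratio is bounded by $1$, keeping the error controlled).

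Second, working on $\mathcal{E}_i$ with $\hat k_i = k$ fixed, I would recall from Lemma~\ref{lemma:expnewedges} (or whichever earlier lemma records $\mathbb{E}(q_{\ell,i})$ and $\Var(q_{\ell,i})$) that $\mathbb{E}(q_{\ell,i}) = m_{s_i} p_\ell + O(1)$ with variance $O(m_{s_i})$, essentially because $q_{\ell,i}$ is the degree of the star in $G_{s_i}$ corresponding to $p_\ell$ (a sum of roughly $m_{s_i} p_\ell$ weakly dependent clique-membership indicators from $H_{s_i}$) plus the $O(m_{d_i})$ new edges distributed randomly. Then I apply the standard second-order delta method to the smooth function $f(x_1,\dots,x_k) = x_j/\sum_\ell x_\ell$ around the mean vector $(m_{s_i}p_1,\dots,m_{s_i}p_k)$: the leading term is $p_j$ (homogeneity of degree $0$ makes the $m_{s_i}$ cancel), the first-order correction vanishes in expectation, and the second-order term is a quadratic form in the centered moments divided by $(\sum_\ell \mathbb{E} q_{\ell,i})^2 = \Theta(m_{s_i}^2)$, giving a bias of order $\Var(q_{\ell,i})/m_{s_i}^2 = O(1/m_{s_i})$; similarly the first-order variance computation gives $\Var(R_{j,i}) \le \sum_\ell (\partial_\ell f)^2 \Var(q_{\ell,i}) = O(m_{s_i})/\Theta(m_{s_i}^2) = O(1/m_{s_i})$. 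The constants $c_1,c_2$ absorb the dependence on $p_1,\dots,p_k$ and on the joining constant $c$ from Definition~\ref{def:joiningrules}.

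The main obstacle is making the delta-method bounds uniform in $k = \hat k_i$ as $k \to \infty$. In the finite-partition case $k$ was a fixed constant, so the $O(\cdot)$ constants did not depend on it; here one must check that the derivative sums $\sum_\ell (\partial_\ell f)^2$ and the Hessian norm of $f$ at the mean do not blow up with $k$. Because $p_\ell$ is summable and decreasing, $\partial_j f = (\sum_{\ell\neq j} \mathbb{E} q_{\ell,i})/(\sum_\ell \mathbb{E} q_{\ell,i})^2 = O(1/m_{s_i})$ and $\partial_\ell f = -\mathbb{E} q_{j,i}/(\sum \mathbb{E} q_{\ell,i})^2 = O(p_j/m_{s_i})$ for $\ell\neq j$, so $\sum_\ell (\partial_\ell f)^2 \Var(q_{\ell,i}) = O(1/m_{s_i})\cdot\sum_\ell p_\ell^2 \cdot$(something bounded), and $\sum_\ell p_\ell^2 \le (\sum_\ell p_\ell)^2 = 1$; the Hessian estimate is analogous. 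So the summability $\sum p_\ell = 1$ is exactly what saves uniformity, and I would make this explicit. A secondary point to handle carefully is the weak dependence among the clique-membership indicators in $H_{s_i} \sim \mathbb{G}(m_{s_i},U)$: one should cite the same covariance estimate used in Proposition~\ref{prop:pj} to justify $\Var(q_{\ell,i}) = O(m_{s_i})$ and the cross-covariances $\Cov(q_{\ell,i},q_{\ell',i}) = O(1)$ or $O(m_{s_i})$, which feed into both the bias and variance bounds above. Once uniformity in $k$ is secured, combining the two steps via the law of total expectation and total variance over the value of $\hat k_i$ and over $\mathcal{E}_i$ yields the stated bounds.
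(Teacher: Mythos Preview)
Your plan follows the same skeleton as the paper's proof --- reduce to the finite-$k$ argument of Proposition~\ref{prop:pj} via the second-order Taylor approximation --- but the execution differs in one substantive place. The paper does \emph{not} pass to a multivariate delta method on $(q_{1,i},\dots,q_{k,i})$; it keeps the bivariate ratio formula of Lemma~\ref{lemma:expetationofRVratio} with $X_i=q_{j,i}$ and $Y_i=\sum_{\ell\le\hat k_i}q_{\ell,i}$, exactly as in Proposition~\ref{prop:pj}. The only new work is to check that the extra $\hat k_i$-dependent pieces now appearing in $\mathbb{E}(Y_i)$ and $\Var(Y_i)$ (namely $\hat k_i\,c\,m_{new_i}/n_{s_i}$, from the joining process applied to $\hat k_i$ hubs) remain negligible after dividing by $m_{s_i}$. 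For this the paper derives the growth bound $\hat k_i\in O(m_{s_i}/n_{d_i})$ from the defining requirement that the $\hat k_i$-th hub still dominate the dense degrees, $m_{s_i}p_{\hat k_i}\ge C\,n_{d_i}$, and then uses $\hat k_i\,m_{new_i}/(m_{s_i}n_{s_i})\le C'(m_{s_i}/n_{d_i})\cdot n_{d_i}^2/m_{s_i}^2\to 0$.

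This is precisely where your uniformity-in-$k$ argument falls short. Summability $\sum_\ell p_\ell=1$ does control the sparse-part contribution $m_{s_i}\sum_\ell p_\ell(1-p_\ell)$ to $\sum_\ell\Var(q_{\ell,i})$, but the joining-process contribution $m_{new_i}\,c/n_{s_i}$ is the \emph{same} for every $\ell$ and so acquires a raw factor of $\hat k_i$ when summed; your summability trick does nothing for that term, and without a bound on $\hat k_i$ you cannot conclude $\hat k_i\cdot m_{new_i}/n_{s_i}=O(m_{s_i})$. A related slip: the leading delta-method term at the mean vector is $p_j/\sum_{\ell\le k}p_\ell$, not $p_j$ --- homogeneity cancels $m_{s_i}$ but not the partial sum --- so you still need $\hat k_i\to\infty$ from Lemma~\ref{lemma:khatiIncreases} to send $\sum_{\ell\le\hat k_i}p_\ell\to 1$, which the paper does invoke. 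Your explicit conditioning on $\hat k_i$ and on the order-preserving event is more careful than what the paper actually writes, but the missing ingredient in your plan is the $\hat k_i=O(m_{s_i}/n_{d_i})$ bound; once you add it, your route and the paper's coincide.
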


We use equation \eqref{eq:khati} to compute $\hat{k}_i$ and equation \eqref{eq:hatpjforinfinite} to compute $\hat{p}_{j}$ for $j \in \{1, \ldots, \hat{k}_i\}$, which estimates the first $\hat{k}_i$ elements of the mass-partition $\bm{p}$ describing $U$. Additionally, if we know the rate at which the elements in the mass-partition $\{p_i\}_i$ go to zero, then we can estimate $\sum_{j = {\hat{k}_i}}^{\infty} p_j$, the sum of the unestimated elements in the partition. This is shown in Lemma \ref{lemma:errorbounds}. 
\section{Experiments}\label{sec:Exp}

\begin{table*}[t]
    \centering
        \caption{Proposed method is an order of magnitude better than baseline on synthetic data. MAPE (smaller is better) reported with standard deviation in parenthesis.}
    \begin{tabular}{cccccc} 
    \toprule
    Task & Description    &  Experiment 1 & Experiment 2 & Experiment 3 & Experiment 4  \\
    \midrule
    \multirow{2}{*}{Top-$k$ deg.  }  & Proposed & 0.386 (0.337) &   0.225 (0.178) &  0.394 (0.288) &  0.341 (0.391) \\ 
         & Baseline &   9.078 (0.436) &  9.098 (0.238) & 9.185 (0.336) & 9.074 (0.488) \\
    \hline
    \multirow{2}{*}{Finite $U$} & Proposed &  1.800 (1.571)  & 0.602  (0.428) &   0.845   (0.644)  &   1.621  (1.448) \\
    & Baseline & 98.709 (0.030)  &   98.702 (0.012) &    97.952 (0.024)  &   97.766 (0.025) \\
     \hline                   
     \multirow{2}{*}{Infinite $U$} 
                                            & Proposed & 10.917 (0.098) &  1.559 (0.069) &  6.351 (0.051) &     0.339  (0.171) \\
                                            & Baseline & 97.238 (0.002) &  95.233 (0.003 ) & 95.070   (0.003)  &  98.245  (0.003) \\
    \bottomrule                                         
    \end{tabular}
    \label{tab:results}
\end{table*}

\begin{table*}[t]
    \centering
     \caption{Comparison of proposed method with \cite{bollobas2001degree}, \cite{caron2017sparse} and 3 versions of Kronecker graphs \citep{leskovec2010kronecker} on 6 datasets. Average MAPE reported with standard deviation in parenthesis. Best results in bold. Sig. denotes statistical significance with $\alpha = 0.1$.}
     \resizebox{\textwidth}{!}{
    \begin{tabular}{cccccccc}
    \toprule
  Dataset & Bollobas & Caron-Fox &  Kronfit1 &  Kronfit2  
  &   Kronfit3  & UW-Mixture & Sig. \\
    \midrule    
FB  &  0.130 (0.041) &   0.175 (0.103) &  0.289 (0.068)  & 111.0 (14.9) & 0.518 (0.047) & \textbf{0.101} (0.044)  & \cmark\\
HEP-PH & 0.177 (0.105) & 0.435 (0.066) & 0.351 (0.165) & 20.20 (4.01) & 0.294 (0.115) & \textbf{0.061} (0.046)  & \cmark\\
MOOC &   0.015 (0.012) & 0.854 (0.014) & 0.772 (0.068) & 0.797 (0.022) & 0.835 (0.031) & \textbf{0.005} (0.005) & \cmark\\
SMS  &   0.051 (0.033) & 0.426 (0.051) & 0.702 (0.032) &  --           & 0.747 (0.024) & \textbf{0.029} (0.020) & \cmark\\
UCI  &   0.201 (0.106) & 0.757 (0.044) & 0.404 (0.098) &  2.19 (0.409) & 0.908 (0.011) & \textbf{0.185} (0.090) & \xmark \\
Yahoo &  0.030 (0.037)  & 0.446 (0.046) & 0.356 (0.080) &  --            & 0.834 (0.014) & \textbf{0.021} (0.011) & \cmark\\
\bottomrule      
    \end{tabular}}
    \label{tab:realworld}
\end{table*}

We conduct experiments with synthetic and real data on a standard laptop. 
As the evaluation metric we use the Mean Absolute Percentage Error (MAPE) given by $100\times \frac{1}{N}\sum_{i=1}^N \frac{| \hat{y}_i - y_i|}{y_i}$.  For degree prediction, $y_i$ denotes the top-$k$ degree values and for estimating $U$, $y_i$ denotes the  mass-partition elements $p_i$.
More details of the experiments are given in Appendix \ref{app:experiments}.

\subsection{Illustration with synthetic data}

With synthetic data experiments we focus on 3 tasks: (1)  predict the top-$k$ degrees, (2)  estimate $U$ when 
$\bm{p}$ has finite partitions and (3)  estimate $U$ when 
$\bm{p}$ has infinite partitions.
To predict the top-$k$ degrees we use training graphs with $n_i$ nodes and test graphs with $n_j$ nodes. Using equation \eqref{eq:degreepred} we predict the top-$k$ degrees for graphs in  the test set. To estimate $U$ we only use training graphs.  When $U$ has  finite partitions we use equation \eqref{eq:pjhat} and for infinite partitions we use equation \eqref{eq:hatpjforinfinite}.  For each task we conduct 4 experiments. For degree prediction we consider $W_1 = \exp(-(x+y))$, $W_2 = 0.1$  and $U_1$ with mass-partition $\bm{p}_1 \propto \{1/j^{1.2}\}_{j=2}^{50}$ and $U_2$ with mass-partition $\bm{p}_2 \propto \{1/1.2^{j}\}_{j=2}^{50}$.  The mass-partitions are normalized so that they add up to 1. 
A baseline comparison is conducted using scale-free graph properties \citep{bollobas2001degree}.  
Table \ref{tab:results} gives the results of the experiments.

\subsection{Real-world networks}

For observed real world graphs, we do not have a ground truth model for $U$, hence we only compute MAPE for predicted degrees.
We use Facebook (FB) links \citep{viswanath2009activity}, Hep-PH Physics citations  \citep{leskovec2005graphs, gehrke2003overview}, MOOC interactions \citep{kumar2019predicting}, SMS \citep{wu2010evidence}, UCI Messages \citep{panzarasa2009patterns} and Yahoo messages \citep{yahoo} datasets. Each dataset is given as an edgelist with timestamped edges. We use daily graphs for  Facebook links, MOOC interactions,  SMS,  and UCI messages datasets. For Yahoo messages we use 2-hourly graphs and for Hep-PH citations monthly graphs. Each dataset consists of a sequence of growing graphs $\{G_{n_i}\}_i$ and we consider $G_{n_i}$ for $i \in \{20, \ldots , 24 \}$ as training graphs. We select the test graphs $G_{n_j}$ such that $j = \min_{k} \{ |G_{n_k}|: |G_{n_k}| - |G_{n_i}| \geq 500  \}$, i.e., $G_{n_j}$ is the first graph that has more than 500 nodes compared to $G_{n_i}$. We predict the top-10 degrees of $G_{n_j}$ using \cite{bollobas2001degree}, \cite{caron2017sparse} and 3 variations of Kronecker graphs \citep{leskovec2010kronecker}. Furthermore, we test for statistical significance between the top 2 methods. Table \ref{tab:realworld} gives the results with blanks denoting timed out instances. The $(U,W)$-mixture estimate is significantly better for 5 out of 6 datasets. 
\section{Conclusions\label{sec:conclusions}}

We present graphon mixtures, an approach that explicitly models sparse and dense components, and generates graph mixtures using two graphons. The graphon $W$ is used to generate dense graphs and the disjoint clique graphon $U$ is used to generate sparse graphs via the inverse line graph operation. Then the sparse and the dense sequences are joined resulting in a mixture sequence. 

Our focus has been on $U$, the new piece in the puzzle. We have modeled the highest degrees of sparse $(U,W)$ mixtures and estimated $U$.  We can estimate $U$ with high accuracy when $U$ has finite disjoint cliques. When $U$ has infinitely many disjoint cliques, the accuracy depends on the rate at which the clique proportions go to zero. We have focused on sparse $(U,W)$ mixtures. 
While we briefly touch on dense $(U,W)$ mixtures in Appendix \ref{app:mixtures}, this topic can be explored further.  Exploring sparsity resulting from other types of graphs such as paths and rings is an avenue for future work. 


\bibliographystyle{agsm}
\bibliography{references}

\newpage
\appendix
\section{Background: Limits of sparse graphs and line graphs}

First we define line graphs. 
\begin{definition}\label{def:linegraph}(\textbf{Line graph}) Let $G$ denote a graph with at least one edge. Then its \textbf{line graph} denoted by $L(G)$ is the graph whose vertices are the edges of $G$, with two vertices being adjacent if the corresponding edges are adjacent in $G$ \citep{beineke2021line}.
\end{definition}

The \textbf{empirical graphon} maps a graph to a function defined on a unit square. 
\begin{definition}\label{def:empiricalgraphon}(\textbf{empirical graphon})
    Given a graph $G$ with $n$ vertices labeled $\{1, \ldots, n\}$, we define its \textbf{empirical graphon} $W_G: [0, 1]^2 \rightarrow [0, 1]$ as follows: We split the interval $[0, 1]$ into $n$ equal intervals $\{J_1, J_2, \ldots, J_n \}$ (first one closed, all others half open) and for $x \in J_i, y \in J_j$ define
    \[ W_G(x,y) = \begin{cases}
        1 & \, \text{if} \quad  ij \in E(G) \, \\
        0 & \, \text{otherwise} \, ,
    \end{cases}
    \]
where $E(G)$ denotes the edges of $G$. The empirical graphon replaces the the adjacency matrix with a unit square and the $(i,j)$th entry of the adjacency matrix is replaced with a square of size $(1/n) \times (1/n)$.
\end{definition}

Graph convergence is defined using the \textbf{cut-norm} and the \textbf{cut-metric}. 

\begin{definition}\label{def:cut1}The \textbf{cut norm} of graphon $W$   \citep{frieze1999quick, borgs2008convergent} is defined as
\begin{equation}
     \lVert W \rVert_\square = \sup_{A, B} \left\vert\int_{A\times B} W(x,y) \, dx dy \right\vert  \, , 
\end{equation}
where the supremum is taken over all measurable sets $A$ and $B$ of $[0 ,1]$.  \end{definition}

\begin{definition}\label{def:cut2}(\textbf{cut metric}) Given two graphons $W_1$ and $W_2$ the \textbf{cut metric}  \citep{borgs2008convergent} is defined as 
\[ \delta_{\square}(W_1, W_2) = \inf_\varphi \left\lVert W_1 - W_2^\varphi \right\rVert_\square \, , 
\]
where the infimum is taken over all measure preserving bijections $\varphi:[0,1] \rightarrow [0,1]$.  
\end{definition}


The cut metric is a pseudo-metric because $\delta_{\square}(W_1, W_2)  = 0 $ does not imply $W_1 = W_2$, i.e., $ \delta_{\square}(W_1, W_2)  \geq 0 $ for $W_1 \neq W_2$.


Next we give some definitions from \citet{Janson2016321}.
\begin{definition}\label{def:masspartitionJanson}(\textbf{Janson's mass-partition})
    We define a mass-partition to be a sequence $\bm{p} = \{p_i\}_{i = 1}^{\infty}$ of non-negative real numbers such that
    \[ p_1 \geq p_2 \geq \cdots \geq 0 \quad \text{and} \quad \sum_{i = 1}^\infty p_i \leq 1 \, . 
    \]
    Let $\mathcal{M}$ be the set of all mass-partitions. 
\end{definition}
For every mass-partition we can define a graphon as follows:
\begin{definition}\label{def:masspartitiongraphon}
Given a mass-partition $\bm{p}$ (Definition \ref{def:masspartitionJanson}) we define a graphon $W_{\bm{p}}^\mathcal{M}$ by taking disjoint subsets $\{A_i\}_{i = 1}^\infty$ of a probability space $(S, \mu)$ such that $\mu(A_i) = p_i$ and defining $W_{\bm{p}}^\mathcal{M} = \sum_{i = 1}^{\infty} \bm{1}_{A_i \times A_i}$. 
\end{definition}

We can define a mass-partition for any graph. To do that we use component sizes. 
\begin{definition}\label{def:compsize}
    For any graph $G$, we denote the component sizes by $C_1(G) \geq C_2(G) \geq \cdots$, ordered such that when $k$ is larger than the number of components $C_k(G) = 0$.  
\end{definition}
Each graph $G$ defines a mass-partition $\{C_i(G)/|G|\}_{i=1}^\infty$ where $|G|$ denotes the number of nodes in $|G|$.

The next theorem links mass partitions, graphons and disjoint clique graphs. 
\begin{theorem}\label{thm:Janson1} {\bf(\cite{Janson2016321} Thm 7.5) } If $\{G_n\}_n$ is a sequence of disjoint clique graphs with $|G_n| \to \infty$ and $W$ is a graphon, then $G_n \to W$ if an only if $W = W_{\bm{p}}^\mathcal{M}$  for some $\bm{p} \in \mathcal{M}$ and $\{ \left(C_i(G_n)/|G_n| \right)_i \} \to \bm{p}$.
\end{theorem}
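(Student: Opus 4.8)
This is \cite{Janson2016321}'s Theorem 7.5, so in the paper the author most likely just invokes it; the plan below reconstructs it from the standard toolkit of dense graph limits — homomorphism densities, the Counting Lemma, and the cut metric $\delta_{\square}$ of Definition \ref{def:cut2} — plus a compactness argument on the space $\mathcal{M}$ of mass-partitions. The first move is a reduction: for a disjoint clique graph $G_n$ with component sizes $C_1(G_n) \ge C_2(G_n) \ge \cdots$ (Definition \ref{def:compsize}), the empirical graphon $W_{G_n}$ agrees, up to a measure-preserving rearrangement of $[0,1]$, with the mass-partition graphon $W_{\bm{q}_n}^{\mathcal{M}}$ of Definition \ref{def:masspartitiongraphon}, where $\bm{q}_n = (C_i(G_n)/|G_n|)_i$ — each clique becomes one diagonal block, and since the components partition all vertices the blocks tile $[0,1]$. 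So $G_n \to W$ is the same as $\delta_{\square}(W_{\bm{q}_n}^{\mathcal{M}}, W) \to 0$, and everything reduces to two properties of the map $\bm{p} \mapsto W_{\bm{p}}^{\mathcal{M}}$: continuity and injectivity.

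For the ``if'' direction I would prove continuity: if $\bm{q}$ is coordinatewise close to $\bm{p}$ then $\delta_{\square}(W_{\bm{q}}^{\mathcal{M}}, W_{\bm{p}}^{\mathcal{M}})$ is small. Align the $i$-th blocks of the two partitions, pass to their common refinement, and bound the cut norm of the difference: the finitely many ``large'' coordinates contribute at most $\sum_{i \le N}|p_i - q_i|$, while all the leftover mass sits in tiny diagonal blocks, and a disjoint union of diagonal blocks of sizes $\epsilon_i$ has cut norm at most $\sum_i \epsilon_i^2 \le (\max_i \epsilon_i)\sum_i \epsilon_i$, which vanishes as the block sizes do. Hence coordinatewise convergence of the sorted sequences already suffices, and $\bm{q}_n \to \bm{p}$ yields $G_n \to W_{\bm{p}}^{\mathcal{M}} = W$.

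For the ``only if'' direction, suppose $G_n \to W$. The $\bm{q}_n$ all lie in $\mathcal{M}$, which is compact (non-increasing, entries in $[0,1]$, summing to at most $1$), so a diagonal argument extracts a subsequence with $\bm{q}_n \to \bm{p}$ coordinatewise for some $\bm{p} \in \mathcal{M}$ (closedness by Fatou), and along it the first direction gives $W = W_{\bm{p}}^{\mathcal{M}}$. To upgrade this to convergence of the whole sequence I would use that clique densities are continuous in $\delta_{\square}$ and that $t(K_r, W_{\bm{p}}^{\mathcal{M}}) = \sum_i p_i^r$ with $t(K_r, G_n) \to \sum_i p_i^r$ whenever $\bm{q}_n \to \bm{p}$; consequently any two subsequential limits $\bm{p}, \bm{p}'$ of $\{\bm{q}_n\}$ share all power sums $\sum_i p_i^r$, $r \ge 1$, and a Hausdorff-moment argument forces $\bm{p} = \bm{p}'$. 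So $\bm{q}_n \to \bm{p}$ as a whole, which is precisely the stated condition $\{(C_i(G_n)/|G_n|)_i\} \to \bm{p}$, together with $W = W_{\bm{p}}^{\mathcal{M}}$.

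The hard part will be the two estimates that make $\mathcal{M}$ a faithful parameter space: first, that ``dust'' — mass escaping into infinitely many vanishing cliques — is genuinely negligible in cut norm (this is the cut-norm bound on a union of diagonal blocks, and it is what pins down the right topology on $\mathcal{M}$); and second, the moment-determinacy step, i.e.\ that $\{\sum_i p_i^r\}_{r \ge 1}$ is a complete invariant of $W_{\bm{p}}^{\mathcal{M}}$. Both are carried out carefully in \cite{Janson2016321}, so in practice I would cite his Theorem 7.5 rather than reproduce these computations here.
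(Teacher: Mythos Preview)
You anticipated correctly: the paper does not prove this statement at all --- it is quoted verbatim as background from \cite{Janson2016321} (Theorem 7.5) and used as a black box, exactly as you said you would do ``in practice''. Your reconstruction is essentially Janson's own route (identify the empirical graphon of a disjoint clique graph with a mass-partition graphon up to $o(1)$, prove continuity of $\bm{p}\mapsto W_{\bm{p}}^{\mathcal{M}}$ in cut distance, then recover $\bm{p}$ from the limit via clique densities and a compactness/uniqueness argument), so there is nothing to compare against here.
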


Theorem \ref{thm:Janson1} tells us that a mass-partition $\bm{p}$ can be used to describe a disjoint clique graphon, i.e., there is a one-to-one correspondence between disjoint clique graphons and mass-partitions.  

Theorem \ref{thm:Jansonmain} and Theorem \ref{thm:Janson1}  \citep{Janson2016321} tells us that graphons of line graphs are disjoint clique graphons and as such they can be described by mass-partitions.

\subsection{Square-degree property and max-degree condition}\label{sec:appendix1}

\begin{definition}[\bf Square-degree property]\label{def:square} Let $\{G_n\}_n$ denote a sequence of graphs.  Then $\{G_n\}_n$ exhibits the square-degree property if there exists some $c_1  > 0 $ and $N_0 \in \mathbb{N}$ such that for all $n \geq N_0$ 
\[ \sum \deg v_{i, n}^2 \geq c_1 \left( \sum \deg v_{i, n} \right)^2  \, . 
\]  
The set of graph sequences satisfying the square-degree property is denoted by $S_q$. 
\end{definition}

\lemmasquareandmaxequivlence*
\begin{proof}
    See Lemmas  \ref{lemma:maxdegimpliesSquaredeg} and \ref{lemma:SquaredegimpliesMaxDeg}.
\end{proof}

\begin{lemma}
\label{lemma:maxdegimpliesSquaredeg}
    Let $\{G_n\}_n$ be a graph sequence with $H_m = L(G_n)$. Suppose $\{H_m\}_m$ converges to $U$.  Then
    \[  \{G_n\}_n \in S_x \implies \{G_n\}_n \in S_q \, , 
    \]
    where $S_x$ denotes the set of graph sequences satisfying the max-degree condition (Definition \ref{def:maxdegreetoedges}) and $S_q$ denotes the set of graph sequences satisfying the square-degree property (Definition \ref{def:square}).
\end{lemma}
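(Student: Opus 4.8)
The plan is to notice that the square-degree property is already forced by the presence of a \emph{single} vertex whose degree is a positive fraction of the number of edges, so that the max-degree condition implies it almost immediately. In particular, the hypothesis that $\{H_m\}_m$ converges to $U$ plays essentially no role in this direction: it is needed only to guarantee $|H_m| = m \to \infty$ (so that the $\liminf$ in Definition~\ref{def:maxdegreetoedges} is over a genuinely infinite index set), and will only be used in earnest for the converse inclusion in Lemma~\ref{lemma:SquaredegimpliesMaxDeg}. So the whole argument reduces to an elementary inequality between $\sum_i (\deg v_{i,n})^2$ and $(\sum_i \deg v_{i,n})^2$.

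Concretely, I would proceed as follows. First recall the handshake identity $\sum_i \deg v_{i,n} = 2m$, where $m$ is the number of edges of $G_n$, so the right-hand side of the square-degree inequality is $4c_1 m^2$. Second, from $\{G_n\}_n \in S_x$ and the definition of $\liminf$, pick $N_0$ so large that $d_{\max,n} \geq \tfrac{c}{2}\,m$ for every $n \geq N_0$ (here we also use $m \to \infty$). Third, bound the sum of squared degrees from below by the term coming from a vertex of maximum degree:
\[
\sum_i (\deg v_{i,n})^2 \;\geq\; d_{\max,n}^2 \;\geq\; \frac{c^2}{4}\,m^2 \;=\; \frac{c^2}{16}\,(2m)^2 \;=\; \frac{c^2}{16}\Bigl(\sum_i \deg v_{i,n}\Bigr)^2 .
\]
This is exactly the square-degree property with constant $c_1 = c^2/16$ and threshold $N_0$, so $\{G_n\}_n \in S_q$.

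There is no real obstacle here; the only care needed is in turning the $\liminf$ hypothesis into the eventual pointwise lower bound $d_{\max,n} \geq \tfrac c2 m$ and in bookkeeping the constant through the handshake identity. I would close with the remark that this shows $S_x \subseteq S_q$ unconditionally (modulo $m\to\infty$), which highlights the asymmetry with the reverse inclusion: going from a crowd of moderately large degrees back to a single dominant hub genuinely uses the disjoint-clique structure of the line-graph limit $U$, and that is where the convergence hypothesis does the work.
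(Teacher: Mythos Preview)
Your proposal is correct and follows essentially the same approach as the paper: bound $\sum_i (\deg v_{i,n})^2$ below by $d_{\max,n}^2$ and use the max-degree condition to compare with $m^2 \asymp (\sum_i \deg v_{i,n})^2$. You are in fact a bit more careful than the paper, which simply writes $d_{\max,n}^2/m^2 \geq c^2$ without explicitly passing from the $\liminf$ hypothesis to an eventual pointwise bound; your extraction of $d_{\max,n} \geq \tfrac{c}{2}m$ for $n \geq N_0$ and your tracking of the constant through the handshake identity make the argument airtight.
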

\begin{proof}
We compute the ratio of the sum of degree squares to the number of edges squared as this quantity determines the square-degree property (Definition \ref{def:square}).  For $ \{G_n\}_n \in S_x$ 
    \begin{equation}
        \frac{ \sum \deg v_{i, n}^2}{m^2}  \geq \frac{d_{\max, n}^2}{m^2} \geq c^2 > 0 \, , 
    \end{equation}
    we obtain the result.
\end{proof}

\begin{lemma}
\label{lemma:SquaredegimpliesMaxDeg}
    Let $\{G_n\}_n$ be a graph sequence with $H_m = L(G_n)$. Suppose $\{H_m\}_m$ converges to $U$.  Then
    \[  \{G_n\}_n \in S_q \implies \{G_n\}_n \in S_x \, , 
    \]
       where $S_x$ denotes the set of graph sequences satisfying the max-degree condition (Definition \ref{def:maxdegreetoedges}) and $S_q$ denotes the set of graph sequences satisfying the square-degree property (Definition \ref{def:square}).
\end{lemma}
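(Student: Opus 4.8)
The plan is to prove the implication directly from the handshake lemma, the square-degree inequality, and the single elementary bound $\sum_i (\deg v_{i,n})^2 \le d_{\max,n}\sum_i \deg v_{i,n}$; the convergence of $\{H_m\}_m$ to $U$, although part of the ambient setting of the section, is not needed for this direction (it merely ensures $m = m_n \to \infty$, so that the $\liminf$ in Definition \ref{def:maxdegreetoedges} is meaningful).

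First I would record the two basic facts about $G_n$. By the handshake lemma, $\sum_i \deg v_{i,n} = 2m$. By the square-degree property (Definition \ref{def:square}) there exist $c_1 > 0$ and $N_0 \in \mathbb{N}$ such that for all $n \ge N_0$,
\[
\sum_i (\deg v_{i,n})^2 \;\ge\; c_1\Big(\sum_i \deg v_{i,n}\Big)^2 \;=\; 4 c_1 m^2 .
\]
The key step is the reverse-type estimate: since $\deg v_{i,n} \le d_{\max,n}$ for every vertex,
\[
\sum_i (\deg v_{i,n})^2 \;=\; \sum_i (\deg v_{i,n})(\deg v_{i,n}) \;\le\; d_{\max,n}\sum_i \deg v_{i,n} \;=\; 2m\,d_{\max,n}.
\]
Combining the two displays gives $4 c_1 m^2 \le 2 m\, d_{\max,n}$, i.e., $d_{\max,n}/m \ge 2 c_1$ for all $n \ge N_0$. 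Letting $n, m \to \infty$ yields $\liminf_{n,m\to\infty} d_{\max,n}/m \ge 2 c_1 > 0$, which is exactly the max-degree condition, so $\{G_n\}_n \in S_x$. Together with Lemma \ref{lemma:maxdegimpliesSquaredeg} this completes the proof of the equivalence in Lemma \ref{lemma:squareandmaxequivlence}.

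I do not expect a genuine obstacle here: the statement is essentially immediate once one observes that $\sum_i (\deg v_{i,n})^2 \le d_{\max,n}\sum_i \deg v_{i,n}$ is the exact counterpart of the trivial lower bound $\sum_i (\deg v_{i,n})^2 \ge d_{\max,n}^2$ used in the proof of Lemma \ref{lemma:maxdegimpliesSquaredeg}, so the two implications are perfectly symmetric. The only points deserving a word of care are that $G_n$ has at least one edge with $m_n \to \infty$ (forced by $H_m = L(G_n)$ being a genuine convergent sequence, so $d_{\max,n}/m$ and its $\liminf$ are well defined; note also $d_{\max,n}\le m$, so the $\liminf$ is finite), and keeping the square-degree constant $c_1$ explicit so the resulting constant $c = 2c_1$ is exhibited. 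Should one prefer an argument that genuinely invokes the limit $U$, one can instead compare clique densities: a clique of size at least $4$ in a line graph forces a common vertex in $G_n$, so $t(K_4, L(G_n)) = m^{-4}\sum_v 4!\binom{\deg v}{4} \le 2(d_{\max,n}/m)^3$, while $t(K_4, L(G_n)) \to t(K_4,U) = \sum_i p_i^4 \ge p_1^4 > 0$ because $\sum_i p_i^2 = \lim_n t(K_2, L(G_n)) \ge 4c_1$ and $U = W_{\bm{p}}^{\mathcal{M}}$ by Lemma \ref{lemma:converginginSq}; this gives the same conclusion with heavier machinery.
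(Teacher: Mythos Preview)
Your proof is correct and takes a genuinely different, more elementary route than the paper's. You use the single inequality $\sum_i (\deg v_{i,n})^2 \le d_{\max,n}\sum_i \deg v_{i,n}$ together with the handshake lemma to obtain $d_{\max,n}/m \ge 2c_1$ directly, without invoking the convergence of $\{H_m\}_m$ at all. The paper instead uses the convergence hypothesis to upgrade the $\liminf$ of $\sum_i(\deg v_{i,n})^2/m^2$ to a genuine limit $c$, then argues that this limit decomposes as $\sum_j \lim_n d_{(j)}^2/m^2$ and concludes that the leading term $d_{(1)}^2/m^2$ must have a positive limit since otherwise all terms would vanish. Your argument is shorter, needs no limit interchange, and shows that the implication $S_q \Rightarrow S_x$ holds for \emph{any} graph sequence, not just those with convergent line graphs; the paper's approach, by contrast, ties the conclusion to the mass-partition structure of the limit $U$, which is thematically consistent with the rest of the section but unnecessary for this particular implication.
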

\begin{proof} For $ \{G_n\}_n \in S_q$ as $H_m$ converges to $U$
\begin{align}
    \liminf_{n \to \infty}  \frac{ \sum \deg v_{i, n}^2}{m^2} =  \lim_{n \to \infty}  \frac{ \sum \deg v_{i, n}^2}{m^2} & = c > 0
\end{align}
Let us denote the ordered degrees by $d_{(1)}, d_{(2)}$, and so on where  $d_{max, n} = d_{(1)}$. Then for any $n$
\[ \frac{d_{(1)}}{m} \geq \frac{d_{(2)}}{m} \geq \cdots \geq  \frac{d_{(n)}}{m} \,  ,  
\]
and 
\[ \sum_i \deg v_{i, n}^2 = d_{(1)}^2 + d_{(2)}^2 + \cdots +  d_{(n)}^2 \, .
\]
Noting the series $\frac{ \sum \deg v_{i, n}^2}{m^2}$ is absolutely convergent and the individual terms are positive  $\left(  \frac{d_{(i)}^2}{m^2} >0 \right) $ we have 
\[  \lim_{n \to \infty}  \frac{ \sum \deg v_{i, n}^2}{m^2}  =  \lim_{n \to \infty}  \frac{d_{(1)}^2 }{m^2}  + \lim_{n \to \infty}  \frac{d_{(2)}^2 }{m^2}   + \cdots + \lim_{n \to \infty}  \frac{d_{(n)}^2 }{m^2}   = c \, . 
\]
If each individual limit on the right hand side of the above equation were zero, then we would get a contradiction. As such, at least some limits need to be positive. As $d_{(1)}/m$ is the largest, there exists a constant $c_2 > 0$ such that
\[ \frac{d_{(1)}^2 }{m^2} = \frac{d_{\max, n}^2 }{m^2} = c_2 \, , 
\]
which completes the proof.
\end{proof}

Lemmas \ref{lemma:maxdegimpliesSquaredeg} and \ref{lemma:SquaredegimpliesMaxDeg} show that if we consider sparse graph sequences that have converging line graphs, then the square-degree property is equivalent to the max-degree property. 

\section{Mixtures of sparse and dense graphs}\label{app:mixtures}
\subsection{$(U,W)$ mixture examples}
We give two examples of $(U,W)$-mixture graphs in this section. 

\subsubsection{Example 1}
We consider $U$ and $W$ shown in Figure \ref{fig:UandWEx1} where $U$ is given by the mass-partition $\bm{p}= (0.5, 0.3, 0.2)$ and $W$ is given by a stochastic block model graphon.  Graphon $W$ has 4 equally sized communities where the edge probability of two nodes $i$ and $j$ within the same community is given by $p_{ij} = 0.6$ and the edge probability of two nodes $i$ and $k$ in two different communities is given by $p_{ik} =0.025$. 
 
\begin{figure}[t]
    \centering
    \includegraphics[width=0.6\linewidth]{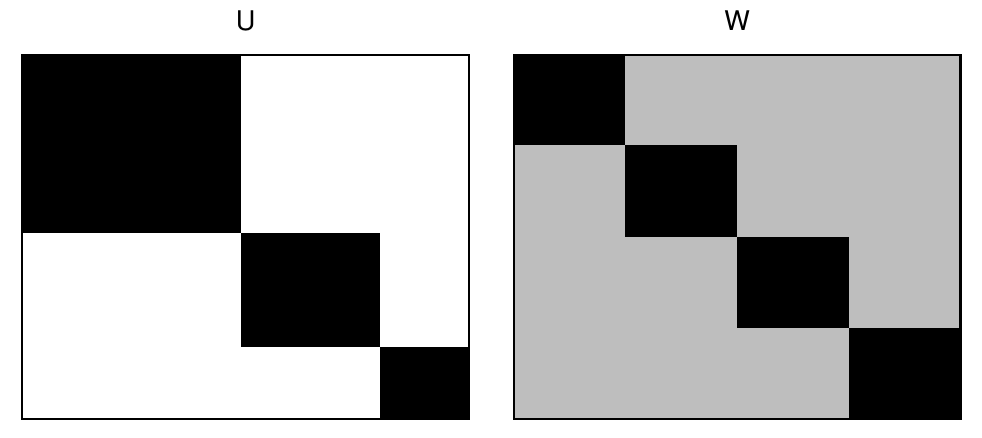}
    \caption{Graphons $U$ and $W$ used in Example 1.}
    \label{fig:UandWEx1}
\end{figure}

Figures \ref{fig:TwoGraphMixturesForEx11} and \ref{fig:TwoGraphMixturesForEx12} show examples of mixture graphs from this $U$ and $W$. In Figure \ref{fig:TwoGraphMixturesForEx11} the dense part  $G_{d_i}$ has 100 nodes and the sparse part $G_{s_i}$ has 600 nodes. As $G_{s_i}$ has a larger number of nodes, the resulting mixture graph $G_{n_i}$ has large hubs, resembling a graph from a sparse sequence. In contrast, in Figure \ref{fig:TwoGraphMixturesForEx12} the dense part $G_{d_i}$ has 100 nodes and the sparse part $G_{s_i}$ has 50 nodes making the mixture graph $G_{n_i}$ resemble a graph from a dense sequence. 

From this $U$ and $W$ we construct a sparse $(U,W)$-mixture sequence with $n_{s_i} = \lceil n_{d_i} \times \sqrt{5i} \rceil$ for $i \in \{1, \ldots 20\}$. For each mixture graph $G_{n_i}$, we compute the global clustering coefficient, the mean distance of the graph and its spectral radius. Table \ref{tab:graphPropEx1} gives the results. We see that the clustering coefficient does not go to zero in this example. 

\begin{table}[!ht]
    \centering
    \caption{The global clustering coefficient, mean distance and spectral radius of the sparse $(U,W)$ mixture graphs sequence discussed in Example 1}
    \begin{tabular}{ccccc}
    \toprule
    $n_{d_i}$ & $n_{s_i}$ &  Clustering Coefficient & Mean Distance & Spectral Radius \\
    \midrule
   100  & 224        &      0.32    &      4.06     &      17.08 \\
   200  & 633        &      0.31    &      3.83     &      33.60 \\ 
   300  & 1162       &       0.33   &       3.67    &       50.50 \\
   400  & 1789       &       0.33   &       3.75    &       67.46 \\
   500  & 2500       &       0.33   &       3.66    &       84.18 \\
   600  & 3287       &       0.33   &       3.57    &      101.17 \\
   700  & 4142       &       0.33   &       3.54    &      117.99 \\
   800  & 5060       &       0.33   &       3.42    &      134.99 \\
   900  & 6038       &       0.33   &       3.48    &      151.80 \\
 1000  & 7072        &      0.33    &      3.48     &     168.85 \\
 1100  & 8158        &      0.33    &      3.47     &     186.21 \\
 1200  & 9296        &      0.33    &      3.41     &     203.01 \\
 1300 & 10481        &      0.33    &      3.46     &     219.61 \\
 1400 & 11714        &      0.33    &      3.37     &     236.22 \\
 1500 & 12991        &      0.33    &      3.32     &     252.36 \\
 1600 & 14311        &      0.33    &      3.28     &     270.54 \\
 1700 & 15674        &      0.33    &     3.27      &    287.00 \\
 1800 & 17077        &      0.33    &      3.25     &     303.76 \\
 1900 & 18519        &      0.33    &      3.29     &     320.70 \\
 2000 & 20000        &      0.33    &      3.26     &     337.55 \\
\bottomrule
    \end{tabular}
    
    \label{tab:graphPropEx1}
\end{table}

\begin{figure}[!ht]
    \centering
    \includegraphics[width=0.95\linewidth]{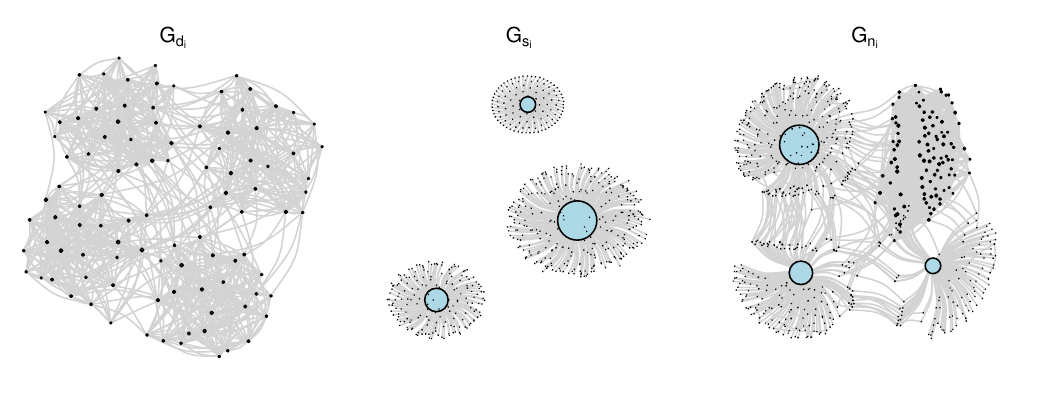}
     \caption{A $(U,W)$-mixture graph with $n_{d_i} = 100$ and $n_{s_i} = 600$ where $U$ and $W$ are shown in Figure \ref{fig:UandWEx1}. The mixture graph $G_{n_i}$ resembles a graph from a sparse sequence. }
      \label{fig:TwoGraphMixturesForEx11}
    \includegraphics[width=0.95\linewidth]{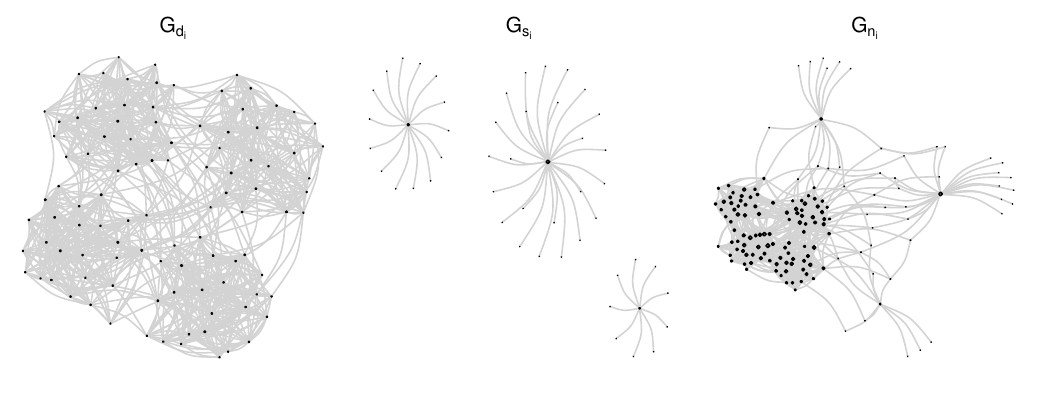}
    \caption{A $(U,W)$-mixture graph with $n_{d_i} = 100$ and $n_{s_i} = 50$ where $U$ and $W$ are shown in Figure \ref{fig:UandWEx1}. The mixture graph $G_{n_i}$ resembles a graph from a dense sequence.}
    \label{fig:TwoGraphMixturesForEx12}
\end{figure}

\subsubsection{Example 2}
Next we consider $U$  having the mass-partition $\bm{p} \propto \left(\frac{1}{2}, \frac{1}{3}, \ldots, \frac{1}{10} \right)$ such that $\sum_i p_i = 1$. Let $W$ be given by 
\[
W = 0.9 \exp\left( \frac{(-y^2 - (x-1)^2)}{0.05^2}  \right) + 0.9 \exp\left( \frac{(-(y-1)^2 - x^2)}{0.05^2}  \right) + 0.9 \exp \left( - \frac{\left( \sin \left(3\pi/4\right)x + \cos \left(3\pi/4\right)x \right)}{0.05^2} \right) \, , 
\]
as in \cite{xia2023implicit}. Graphon $W$ produces dense ring-type graphs. The two graphons are shown in Figure \ref{fig:UandWEx2}.  Figure \ref{fig:TwoGraphMixturesForEx21} shows the dense part $G_{d_i}$ sampled from $W$ with $n_{d_i} = 100$, the sparse part $G_{s_i}$ with $n_{s_i} = 100$ and the mixture graph $G_{n_i}$.  Figure \ref{fig:TwoGraphMixturesForEx22} shows the dense part $G_{d_i}$ with $n_{d_i} = 100$, the sparse part $G_{s_i}$ with $n_{s_i} = 600$ and the mixture graph $G_{n_i}$. We see that the hub structure is stronger in Figure \ref{fig:TwoGraphMixturesForEx22} as $n_{s_i} > n_{d_i}$.   

Again we construct a sparse mixture sequence from this $U$ and $W$. Similar to Example 1, we let $n_{s_i} = \lceil n_{d_i} \times \sqrt{5i} \rceil$ for $i \in \{1, \ldots 20\}$. For each mixture graph $G_{n_i}$, we compute the global clustering coefficient, the mean distance of the graph and its spectral radius. Table \ref{tab:graphPropEx2} gives the results. Similar to Example 1, we see a robust clustering coefficient as $n_{s_i}$ increases. 

\begin{table}[!ht]
    \centering
    \caption{The global clustering coefficient, mean distance and spectral radius of the sparse $(U,W)$ mixture graphs sequence discussed in Example 2}
    \begin{tabular}{ccccc}
    \toprule
    $n_{d_i}$ & $n_{s_i}$ &  Clustering Coefficient & Mean Distance & Spectral Radius \\
    \midrule
   100  &  224      &        0.27    &      6.06    &       10.62 \\
   200  & 633       &       0.29     &     4.51     &      21.84 \\
   300  & 1162      &        0.31    &      4.28    &       33.22 \\
   400  & 1789      &        0.31    &      4.23    &       44.12 \\
   500  & 2500      &        0.32    &      4.06    &       55.80 \\
   600  & 3287      &        0.32    &      4.14    &       66.76 \\ 
   700  & 4142      &        0.31    &      4.08    &       77.61 \\
   800  & 5060      &        0.32    &      4.09    &       89.42 \\
   900  & 6038      &        0.32    &      3.98    &      100.76 \\
 1000  & 7072       &       0.32     &     4.03     &     111.96 \\
 1100  & 8158       &       0.32     &     4.04     &     123.18 \\
 1200  & 9296       &       0.32     &     4.03     &     133.78 \\
 1300 & 10481       &       0.32     &     4.02     &     145.67 \\
 1400 & 11714       &       0.32     &     3.88     &     156.22 \\ 
 1500 & 12991       &       0.32     &     3.97     &     167.71 \\
 1600 & 14311       &       0.32     &     3.96     &     178.63 \\ 
 1700 & 15674       &       0.32     &     3.96     &     190.49 \\
 1800 & 17077       &       0.32     &     3.92     &     201.24 \\
 1900 & 18519       &       0.32     &     3.86     &     212.29 \\
 2000 & 20000       &       0.32     &     3.91     &     223.61  \\
\bottomrule
    \end{tabular}
    \label{tab:graphPropEx2}
\end{table}

\begin{figure}[!ht]
    \centering
    \includegraphics[width=0.6\linewidth]{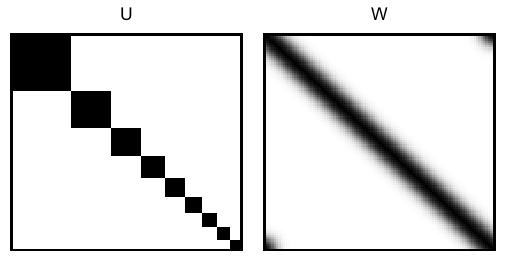}
    \caption{Graphons $U$ and $W$ used in Example 2.}
    \label{fig:UandWEx2}
\end{figure}

\begin{figure}[!ht]
    \centering
    \includegraphics[width=0.95\linewidth]{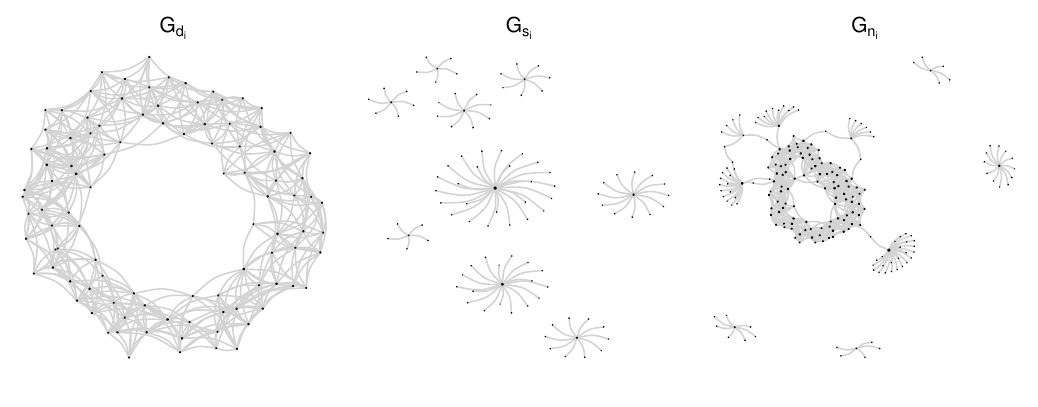}
    \caption{A $(U,W)$-mixture graph with $n_{d_i} = 100$ and $n_{s_i} = 100$ where $U$ and $W$ are shown in Figure \ref{fig:UandWEx2}.}
    \label{fig:TwoGraphMixturesForEx21}
    \includegraphics[width=0.95\linewidth]{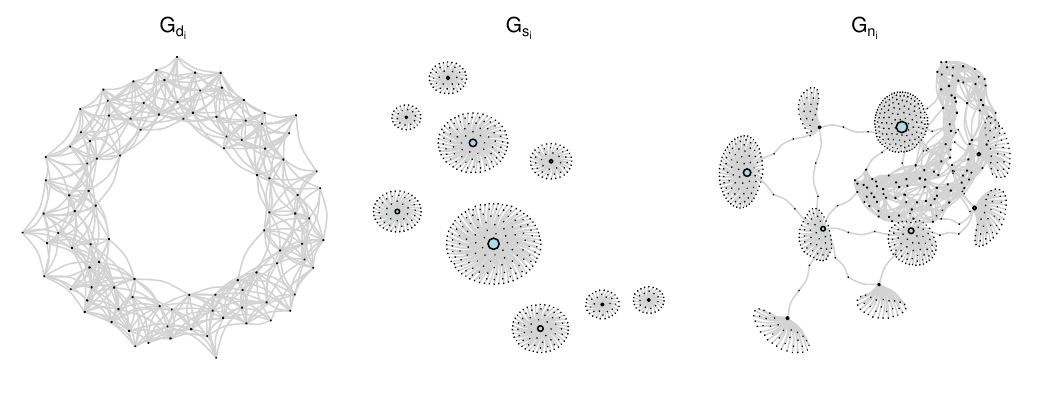}
    \caption{A $(U,W)$-mixture graph with $n_{d_i} = 100$ and $n_{s_i} = 600$ where $U$ and $W$ are shown in Figure \ref{fig:UandWEx2}}
    \label{fig:TwoGraphMixturesForEx22}
\end{figure}

We see that the mixture graphs $G_{n_i}$ shown in Figures \ref{fig:TwoGraphMixturesForEx11} and \ref{fig:TwoGraphMixturesForEx12} are quite different from those in Figures \ref{fig:TwoGraphMixturesForEx21} and \ref{fig:TwoGraphMixturesForEx22}.  The main reason is that graphon $W$ stipulates the dense part of the mixture and as such, different $W$ produces different mixture graphs. 

\subsection{An example where the cut metric cannot explain the anomalous node}\label{sec:Wcantfullyexplain}
Consider the following example: Let $\{G_{n}\}_{n}$ be a sequence of graphs such that originally $G_n \sim G(n, p)$ where $G(n, p)$ denotes a graph generated from the Erdős–Rényi model where $G_n$ has $n$ nodes and the edge probability is $p$. Then we modify each graph in the sequence such that a randomly selected node in $G_n$ is connected to all other nodes in $G_n$. Figure \ref{fig:GnpWithAStar} shows  the empirical graphons for $n \in \{50, 100, 200, 400\}$ with $p = 0.1$ for this construction.

\begin{figure}[!ht]
    \centering
    \includegraphics[width=0.9\linewidth]{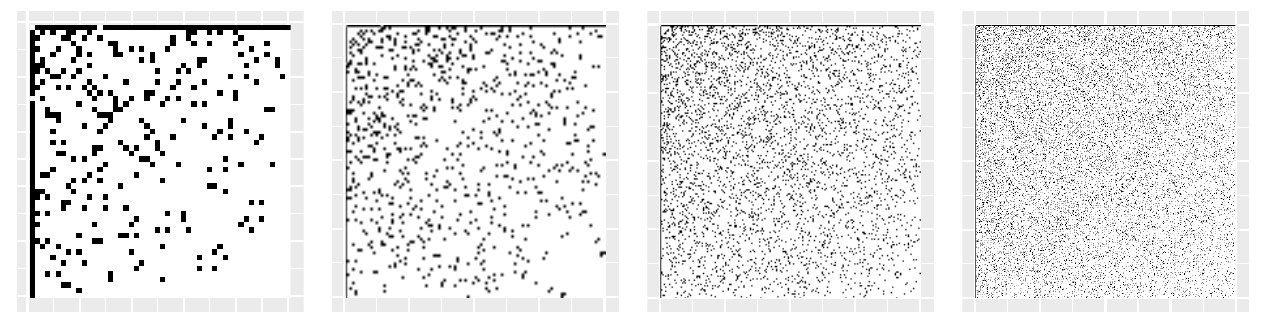}
    \caption{Empirical graphons of $G_n \sim G(n,p)$  with one node connected to all other nodes for $n \in \{50, 100, 200, 400\}$. }
    \label{fig:GnpWithAStar}
\end{figure}

The empirical graphons in this example converge to $W = 0.1$ in the cut metric (Definition \ref{def:cut2}). However if we inspect the degree distribution, we see a persistent anomalous node with high degree as shown in  Figure \ref{fig:GnpandStarHistogram}.

\begin{figure}[!ht]
    \centering
    \includegraphics[width=0.9\linewidth]{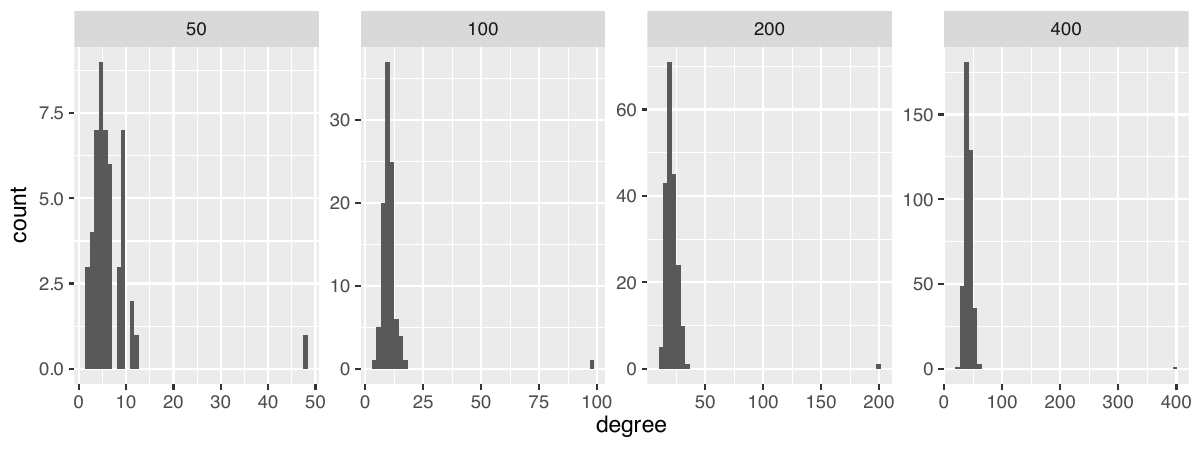}
    \caption{Degree histograms of the graphs corresponding to empirical graphons in Figure \ref{fig:GnpWithAStar}. }
    \label{fig:GnpandStarHistogram}
\end{figure}

The empirical graphons (Definition \ref{def:empiricalgraphon}) converge to $W = 0.1$ in the cut-metric (Definition \ref{def:cut2}). While the graph sequence is dense, it has a sparse component that a single graphon $W$ does not fully  explain. That is, graphs generated from $W = 0.1$, would not have the persistent anomalous node shown in Figure \ref{fig:GnpandStarHistogram}. This shows that $W = 0.1$ does not fully capture the observed graphs. Modeling these graphs as a mixture gives the flexibility to account for the persisting high degree node.

\subsection{Generating dense and sparse graph sequences with a $(U,W)$ mixture }\label{app:example}
In this section we consider $(U,W)$-mixture graph sequences $\{ G_{n_i} \}_i$ generated with $W = \exp(-(x+y))$ and $U$ with mass-partition $\left(\frac{2}{3}, \frac{1}{3} \right)$. The graphons $U$ and $W$ are shown in Figure \ref{fig:WUExample}. We consider graph generation for different rates of evolution of $\frac{n_{s_i}}{n_{d_i}}$.

\begin{figure}[!p]
    \centering
    \includegraphics[width=0.7\linewidth]{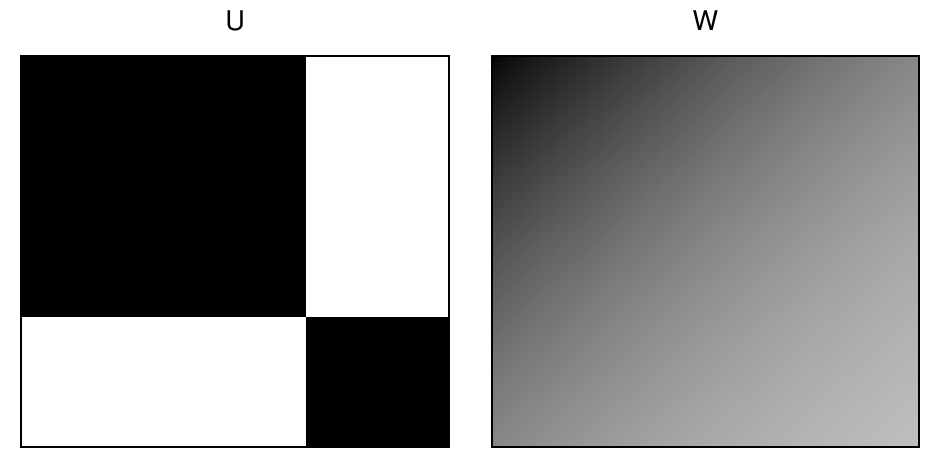}
    \caption{Graphons $U$ and $W$ in the $(U,W)$ mixture with $W = \exp(-(x+y))$ and $U$ having the mass-partition $(2/3, 1/3)$. Graphon $U$ lives in the line graph space and is used to generate sparse graphs via the inverse line graph operation. Graphon $W$ is the limit of the dense part.}
    \label{fig:WUExample}
    \includegraphics[width=0.9\linewidth]{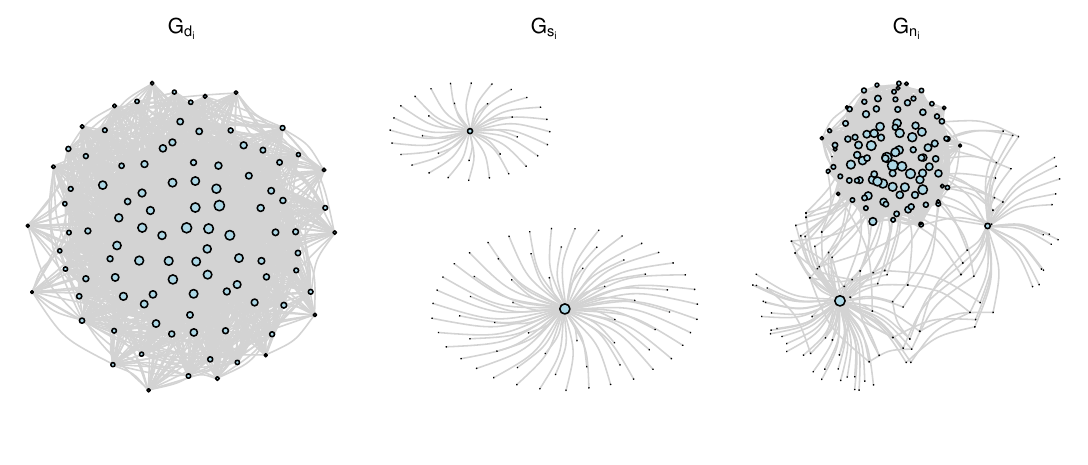}
    \caption{The dense part $G_{d_i}$, sparse part $G_{s_i}$ and the mixture graph $G_{n_i}$ Both dense part $G_{d_i}$ and sparse part $G_{s_i}$ have 100 nodes each. }
    \label{fig:densemixgraphratio1}
    \includegraphics[width=0.9\linewidth ]{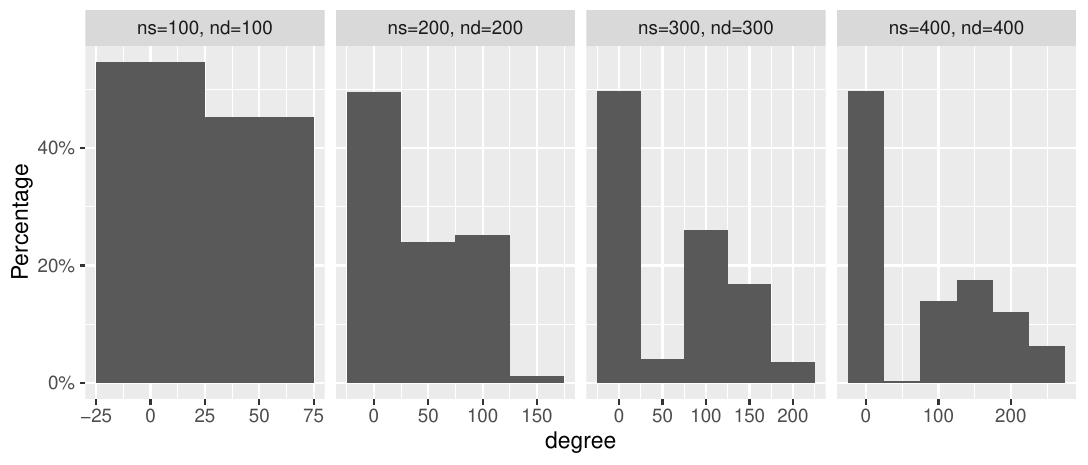}
    \caption{Degree distribution when $\frac{n_{s_i}}{n_{d_i}} = 1$}
    \label{fig:estimatedWUandDegreeDistEg11}
\end{figure}

\paragraph{Dense $\{G_{n_i}\}_i$, when $\frac{n_{s_i}}{n_{d_i}} \to c \in \mathbb{R}$:} 
When $\frac{n_{s_i}}{n_{d_i}} \to c \in \mathbb{R}$ the graph sequence $\{G_{n_i}\}_i$ is dense (Lemma \ref{lemma:WUrandomgraphs1}), i.e., the edge density of $G_{n_i}$ does not go to zero. Figure \ref{fig:densemixgraphratio1} shows an example of the dense part $G_{d_i}$, the sparse part $G_{s_i}$ and the mixture graph $G_{n_i}$ for $n_{d_i} = n_{s_i} = 100$.  Figure \ref{fig:estimatedWUandDegreeDistEg11} shows the degree distributions for different values of $n_{d_i}$ and $n_{s_i}$ when the ratio $\frac{n_{s_i}}{n_{d_i}} = 1$. When the ratio $\frac{n_{s_i}}{n_{d_i}}$ converges to $c \leq 1$, depending on $W$ and $U$, the largest degree may be contributed by the dense part. In Figure \ref{fig:estimatedWUandDegreeDistEg11} the largest degree is actually produced by $U$, however, it is so close to the degrees generated by $W$ that we cannot distinguish the effect of $U$ in this example. 

Figure  \ref{fig:densemixgraphratio3} shows the dense part $G_{d_i}$ and the sparse part $G_{s_i}$ when $n_{d_i} = 100$ and $n_{s_i} = 300$. Figure \ref{fig:estimatedWUandDegreeDistEg12} shows the degree distribution  when $\frac{n_{s_i}}{n_{d_i}} = 3$.  When the ratio $\frac{n_{s_i}}{n_{d_i}}$ converges to $c > 1$, for large enough $i$ the sparse part contributes to highest-degree nodes, which are anomalous. In Figure \ref{fig:estimatedWUandDegreeDistEg12} two anomalous nodes are contributed by $U$ for each pair of $n_{s_i}$ and $n_{d_i}$.

 \begin{figure}[!p]
    \centering
     \includegraphics[height=0.21\textheight]{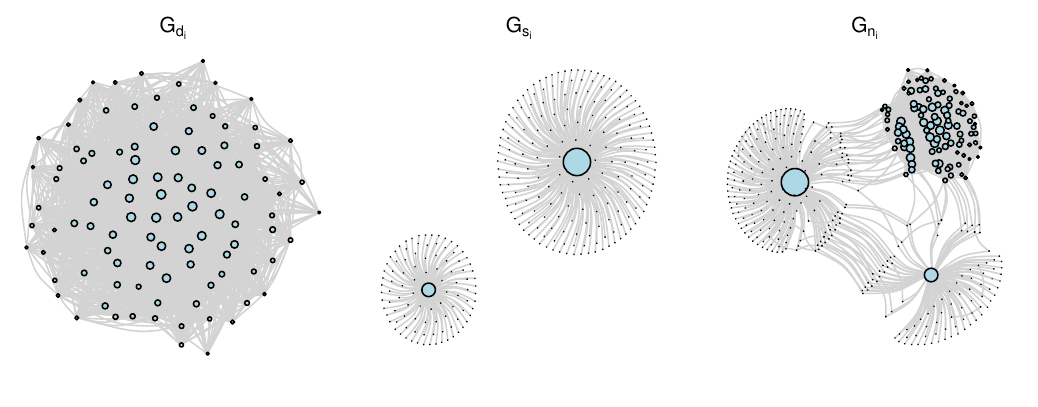}
    \caption{The dense part $G_{d_i}$, sparse part $G_{s_i}$ and the mixture graph $G_{n_i}$ with $n_{d_i} = 100$ and $n_{s_i} = 300$. }
    \label{fig:densemixgraphratio3}
    \includegraphics[height=0.21\textheight ]{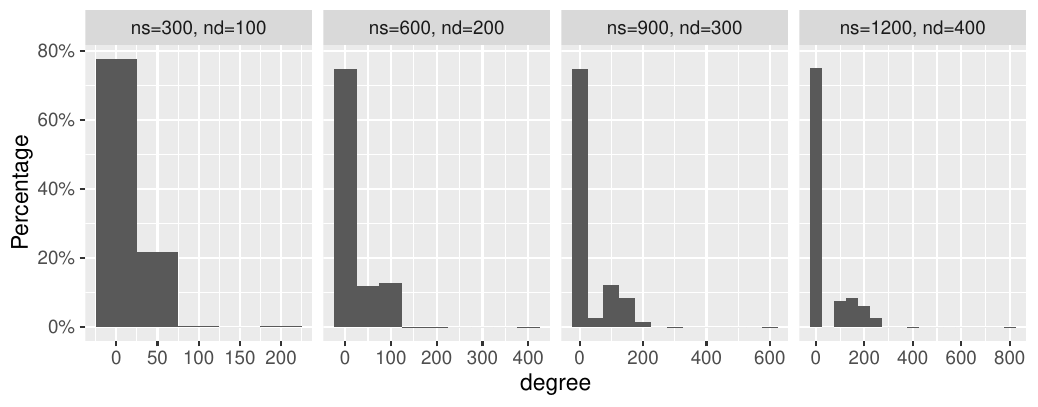}
    \caption{When $\frac{n_{s_i}}{n_{d_i}} = 3$}
    \label{fig:estimatedWUandDegreeDistEg12}

    \includegraphics[height=0.21\textheight]{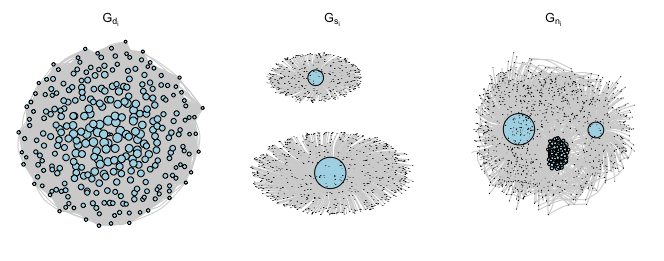}
    \caption{The dense part $G_{d_i}$, sparse part $G_{s_i}$ and the mixture graph $G_{n_i}$ with $n_{d_i} = 300$ and $n_{s_i} = 1162$. }
    \label{fig:sparsemixgraph1}

       \includegraphics[height=0.21\textheight]{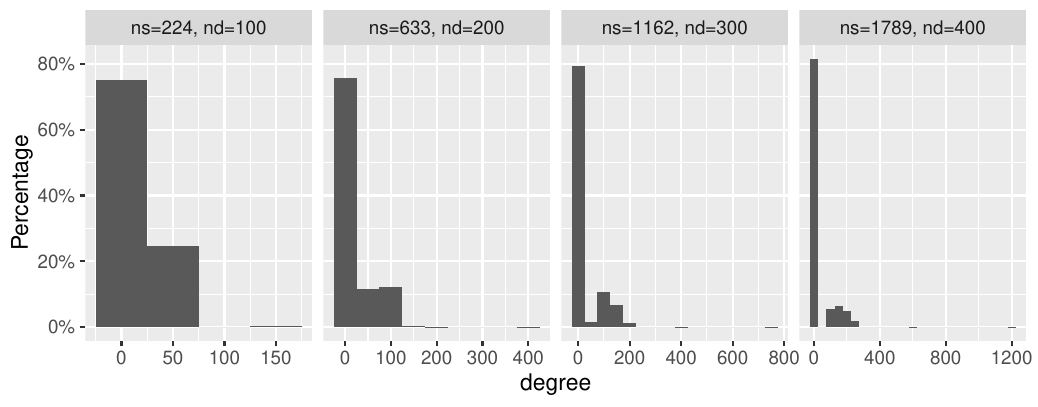}
    \caption{When $\frac{n_{s_i}}{n_{d_i}} \to \infty$  }
    \label{fig:estimatedWUandDegreeDistEg21}

   \end{figure}

 \begin{figure}[h]
    \centering
      \includegraphics[height=0.21\textheight]{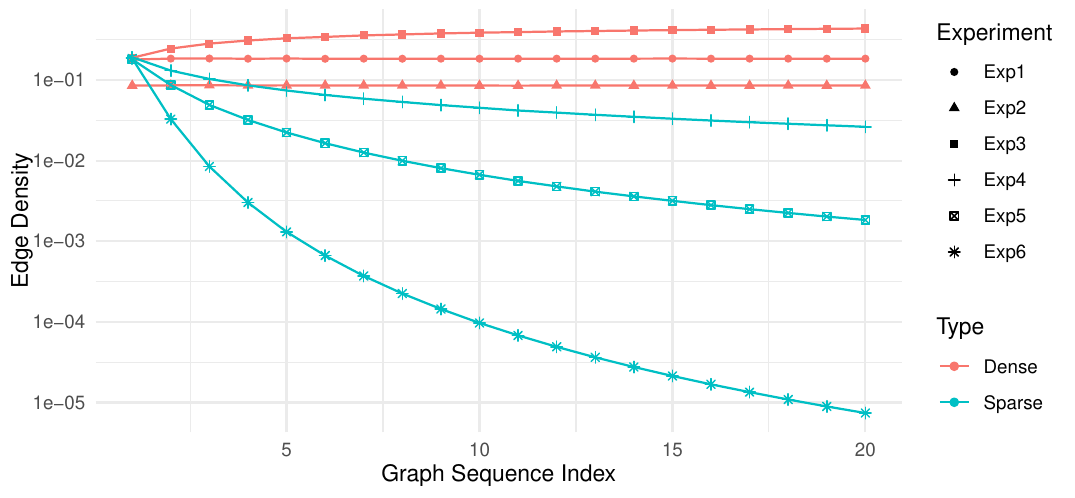}
    \caption{Sparse and dense $(U,W)$ sequences for different functions of $n_{s_i}/n_{d_i}$.}
    \label{fig:GeneratingDifferentSequences}
\end{figure}

\paragraph{Sparse $\{G_{n_i}\}_i$, when $\frac{n_{s_i}}{n_{d_i}} \to \infty$:} 
If $\{G_{n_i}\}_i$ is sparse, then $n_{s_i}$ grows faster than $n_{d_i}$ making $\frac{n_{s_i}}{n_{d_i}}$ go to infinity (Lemma \ref{lemma:WUrandomgraphs1}). We used $n_{s_i} = \lceil \sqrt{5i} \times n_{d_i} \rceil $ for this experiment and  Figure \ref{fig:estimatedWUandDegreeDistEg21} shows the degree distributions. Two anomalous nodes contributed by $U$ can be seen in 3 of the 4 subplots. Figure \ref{fig:sparsemixgraph1} shows the dense part $G_{d_i}$, the sparse part $G_{s_i}$ and the mixture graph $G_{n_i}$ when $n_{d_i} = 300$ and $n_{s_i} = 1162$. We see that the effect of the stars is higher in this mixture. 

\paragraph{Different $n_{s_i}/n_{d_i}$ ratios:} Figure \ref{fig:GeneratingDifferentSequences} shows the edge density along the graph sequence for six functions of $n_{s_i}/n_{d_i}$. The first experiment (Exp 1 in Figure \ref{fig:GeneratingDifferentSequences}) considers $n_{s_i}/n_{d_i} = 1$ and experiment 2 considers $n_{s_i}/n_{d_i} = 2$. For experiment 3 we consider $n_{s_i}/n_{d_i} \approx 1/\sqrt{i}$. As $n_{s_i}/n_{d_i}$ are bounded from above for the first 3 experiments we get dense graphs.  For experiment 4, we considered $n_{s_i}/n_{d_i} \approx \sqrt{i}$ making the mixture sequence sparse. For experiments 5 and 6 we considered $n_{s_i}/n_{d_i} = i$ and $n_{s_i}/n_{d_i} = i^2$ respectively. Experiments 4, 5 and 6 produced sparse sequences. For sparse sequences $\{n_{s_i}/n_{d_i}\}_i$ determines the rate at which the mixture becomes sparse.

\subsection{Expectations}

\begin{lemma}\label{lemma:expnewedges}
     Let $\{G_{n_i}\}_i$ be a sequence of $(U,W)$-mixture graphs (Definition \ref{def:WURandomMixtureGraphs}) with $G_{n_i} \sim \mathbb{G}\left(U,W, n_{d_i}, m_{s_i} \right)$ with dense and sparse parts $G_{d_i}$ and $G_{s_i}$ respectively. Let $n_{d_i}$ and $n_{s_i}$ be the number of nodes in $G_{d_i}$ and  $G_{s_i}$ respectively.  Suppose $m_{new_i}$ edges are added as part of the joining process. Suppose $v_\ell$ is originally a vertex in the dense part $G_{d_i}$. We denote its degree in $G_{d_i}$ by $\deg_{G_{d_i}} v_\ell$ and its degree in $G_{n_i}$ by $\deg_{G_{n_i}} v_\ell$. Then taking expectations over the joining process we have
     \begin{align}
         \mathbb{E}\left( \deg_{G_{n_i}} v_\ell \right)  & = \deg_{G_{d_i}} v_\ell  + c_1 \frac{m_{new_i}}{n_{d_i}} \, \\
         \Var(\deg_{G_{n_i}} v_\ell) & = m_{new_i}\frac{c_1}{n_{d_i}}\left(1 -  \frac{c_1}{n_{d_i}}\right) \, , 
     \end{align}
     where $c_1>0$ depend on the joining process. 
     Similarly, suppose $v_j$ is originally a vertex in the sparse part $G_{s_i}$. We denote its degree in $G_{s_i}$ by $\deg_{G_{s_i}} v_j$ and its degree in $G_{n_i}$ by $\deg_{G_{n_i}} v_j$. Then, taking expectations over the joining process, 
     \begin{align}
         \mathbb{E}\left( \deg_{G_{n_i}} v_j \right)  & = \deg_{G_{s_i}} v_j  + c_2 \frac{m_{new_i}}{n_{s_i}} \, , \\
          \Var(\deg_{G_{n_i}} v_j) & = m_{new_i}\frac{c_2}{n_{s_i}}\left(1 -  \frac{c_2}{s_{d_i}}\right) \, , 
     \end{align}
     where $c_2 >0$. 
\end{lemma}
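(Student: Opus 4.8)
The plan is to condition on the graphs $G_{d_i}$ and $G_{s_i}$ produced in Steps 1--2 of Definition \ref{def:WURandomMixtureGraphs}, fixing all of their degrees, and to take expectation and variance only over the $m_{new_i}$ edges added during the joining step. Label the added edges $e_1,\dots,e_{m_{new_i}}$. By the No-new-nodes and No-deletion conditions of Definition \ref{def:joiningrules} every vertex and every pre-existing edge survives, so for a vertex $v_\ell$ originally in $G_{d_i}$ we have the exact identity
\[
\deg_{G_{n_i}} v_\ell \;=\; \deg_{G_{d_i}} v_\ell \;+\; \sum_{t=1}^{m_{new_i}} X_t,
\qquad X_t := \bm{1}\{\, e_t \text{ is incident to } v_\ell \,\}.
\]
By the Random-edges condition, whenever $e_t$ places an endpoint inside the dense part that endpoint is uniform over the $n_{d_i}$ nodes of $G_{d_i}$; since self-loops are never created, summing the incidence probability over all $n_{d_i}$ dense vertices and using symmetry gives $P(X_t=1) = c_1/n_{d_i}$, where $c_1>0$ is the expected number of endpoints an added edge contributes to the dense part --- a constant determined by the joining rule, independent of $i$, $\ell$ and $n_{d_i}$, and positive whenever edges are added at all.

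Given this decomposition the conclusions are immediate. The term $\deg_{G_{d_i}} v_\ell$ is constant, so linearity of expectation yields $\mathbb{E}(\deg_{G_{n_i}} v_\ell) = \deg_{G_{d_i}} v_\ell + m_{new_i}\,c_1/n_{d_i}$. Since distinct added edges are drawn independently, $\sum_t X_t$ is a sum of $m_{new_i}$ independent $\Bernoulli(c_1/n_{d_i})$ variables, and the additive constant drops out of the variance, so $\Var(\deg_{G_{n_i}} v_\ell) = m_{new_i}\,\tfrac{c_1}{n_{d_i}}\bigl(1-\tfrac{c_1}{n_{d_i}}\bigr)$. For a vertex $v_j$ originally in $G_{s_i}$ the argument is word-for-word identical, with the $n_{s_i}$ sparse nodes replacing the dense ones and a constant $c_2>0$ replacing $c_1$, giving $\mathbb{E}(\deg_{G_{n_i}} v_j) = \deg_{G_{s_i}} v_j + m_{new_i}\,c_2/n_{s_i}$ and $\Var(\deg_{G_{n_i}} v_j) = m_{new_i}\,\tfrac{c_2}{n_{s_i}}\bigl(1-\tfrac{c_2}{n_{s_i}}\bigr)$.

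The one genuinely delicate point is justifying that $P(X_t=1)$ factorizes as $c_1/n_{d_i}$ with $c_1$ a fixed constant: Definition \ref{def:joiningrules} only pins down uniformity of endpoint choice within a part and the proportionality $m_{new_i}=c\,m_{d_i}$, so one must argue that the rate at which an added edge attaches to the dense part does not drift with $i$. The same modeling choice underlies the independence used for the variance: we take the joining mechanism to add its $m_{new_i}$ edges via independent draws with no multi-edge exclusion; were such an exclusion imposed, the $X_t$ would be mildly negatively correlated and the stated binomial variance would become an upper bound rather than an equality. A final harmless subtlety --- if both endpoints of $e_t$ land in the same part, $X_t$ is still a bona fide $\{0,1\}$ indicator, so none of the counts above changes.
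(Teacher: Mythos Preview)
Your argument is correct and follows essentially the same route as the paper: both invoke the Random-edges condition to conclude that each new edge hits a given vertex in $G_{d_i}$ (respectively $G_{s_i}$) with probability $c_1/n_{d_i}$ (respectively $c_2/n_{s_i}$), so the increment is binomial with the stated mean and variance. Your version is in fact more explicit than the paper's---you write out the indicator decomposition and flag the modeling assumptions (independence across added edges, constancy of $c_1$) that the paper leaves implicit---but the underlying argument is the same.
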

\begin{proof}
The \textit{Random edges condition} (Definition \ref{def:joiningrules} Condition \ref{cond:randomedges}) stipulates that nodes within a graph are equally likely to be selected.  That is, the probability of selecting a node is inversely proportional to the number of nodes in the graph. For a new edge, the probability of selecting a node in $G_{d_i}$ is $\frac{c_1}{n_{d_i}}$ and the probability of selecting a node in $G_{s_i}$ is $\frac{c_2}{n_{s_i}}$. This gives rise to a binomial distribution. Thus, the expected number of new edges for node $v_{\ell}$ in $G_{d_i}$ is $c_1\frac{m_{new_i}}{n_{d_i}}$ and its variance is $m_{new_i}\frac{c_1}{n_{d_i}}\left(1 -  \frac{c_1}{n_{d_i}}\right)$. 
For node $v_{j}$ in $G_{s_i}$ the expected number of new edges is $c_2\frac{m_{new_i}}{n_{s_i}}$ and its variance is $m_{new_i}\frac{c_2}{n_{s_i}}\left(1 -  \frac{c_2}{s_{d_i}}\right)$.


\end{proof}

 \begin{lemma}\label{lemma:WUrandomgraphsaboutU2}
Let $\{G_{n_i}\}_i$ be a sequence of $(U,W)$-mixture graphs (Definition \ref{def:WURandomMixtureGraphs}) with $G_{n_i} \sim \mathbb{G}\left(U,W, n_{d_i}, m_{s_i} \right)$ with sparse part $G_s$. Let $\bm{p} = (p_1, p_2, \ldots)$ be the mass-partition (Definition \ref{def:masspartition}) corresponding to $U$.  Let $\tilde{q}_{j,i}$ be the degree of the star in $G_{s_i}$ corresponding to $p_j \neq 0$ and let $q_{j,i}$ denote the degree of the corresponding node in $G_{n_i}$. Then taking expectations over graph generation and the joining process
\begin{align}
    \mathbb{E}({q}_{j, i}) & = m_{s_i} p_j + \frac{c m_{new_i}}{n_{s_i}} \, , \\
    \Var({q}_{j, i}) &= m_{s_i} p_j(1 - p_j) + m_{new_i} \frac{c} {n_{s_i}} \left( 1 - \frac{c} {n_{s_i}} \right)   \, . 
\end{align}
 \end{lemma}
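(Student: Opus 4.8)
\textbf{Proof proposal for Lemma~\ref{lemma:WUrandomgraphsaboutU2}.}
The plan is to decompose the random variable $q_{j,i}$ into two independent contributions: the degree $\tilde q_{j,i}$ that the hub vertex already has in the sparse part $G_{s_i}$ (randomness from generating $H_{s_i}\sim\mathbb{G}(m_{s_i},U)$ and applying $L^{-1}$), and the number $N_{j,i}$ of new edges incident to that vertex that get added during the joining process. Since the joining rules add no nodes and delete nothing (Definition~\ref{def:joiningrules}), every edge at the hub vertex in $G_{n_i}$ is either an original sparse edge or one of the $m_{new_i}$ new edges, so $q_{j,i}=\tilde q_{j,i}+N_{j,i}$. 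The two sources of randomness are independent by construction (graphs are generated first, then joined), so $\mathbb{E}(q_{j,i})=\mathbb{E}(\tilde q_{j,i})+\mathbb{E}(N_{j,i})$ and $\Var(q_{j,i})=\Var(\tilde q_{j,i})+\Var(N_{j,i})$.

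The first term is handled by the structure of disjoint clique graphons. By Theorem~\ref{thm:Jansonmain} and Lemma~\ref{lemma:converginginSq}, $U=W^{\mathcal M}_{\bm p}$, so in $H_{s_i}\sim\mathbb{G}(m_{s_i},U)$ each of the $m_{s_i}$ sampled points lands in clique-block $j$ independently with probability $p_j$; the size of the $j$th clique is therefore $\mathrm{Binomial}(m_{s_i},p_j)$. Taking the inverse line graph turns the $j$th clique on $t$ vertices into a star on $t$ edges, i.e.\ a star whose center has degree $t$ (using that $\sum_i p_i=1$ so that cliques tile the vertex set and $L^{-1}$ is the disjoint-union-of-stars map described in Definition~\ref{def:WURandomMixtureGraphs}). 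Hence $\tilde q_{j,i}\sim\mathrm{Binomial}(m_{s_i},p_j)$, giving $\mathbb{E}(\tilde q_{j,i})=m_{s_i}p_j$ and $\Var(\tilde q_{j,i})=m_{s_i}p_j(1-p_j)$. The second term is exactly the quantity computed in Lemma~\ref{lemma:expnewedges}: under the \textit{Random edges condition}, each of the $m_{new_i}$ new edges picks a node of $G_{s_i}$ with probability $c/n_{s_i}$, so $N_{j,i}\sim\mathrm{Binomial}(m_{new_i},c/n_{s_i})$, with mean $cm_{new_i}/n_{s_i}$ and variance $m_{new_i}\frac{c}{n_{s_i}}(1-\frac{c}{n_{s_i}})$. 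Adding the two means and the two variances yields the stated formulas.

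The main obstacle — really the only delicate point — is justifying that $\tilde q_{j,i}$ is precisely the clique-size Binomial and not something more complicated. One must argue (i) that the $j$th largest clique in a $U$-random graph corresponds to the $j$th mass-partition atom $p_j$ with high probability rather than cross-labeling (this is where one invokes the ordering $p_1\ge p_2\ge\cdots$ and, implicitly, the concentration underlying the Order Preserving Property, Proposition~\ref{prop:rankpreserving}), and (ii) that the inverse line graph of a clique on $t$ vertices is a star with a degree-$t$ center, with the sole exceptional case $t=3$ (triangle) noted after Theorem~\ref{thm:Jansonmain}, which is asymptotically negligible. A secondary bookkeeping point is the independence of $\tilde q_{j,i}$ and $N_{j,i}$: since the joining rules in Definition~\ref{def:joiningrules} only reference node counts and not the internal structure of $G_{s_i}$, the new-edge allocation is independent of which clique sizes were realized, so the variances genuinely add with no covariance term. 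Once these are in place the result is immediate from combining $\mathrm{Binomial}$ moments, so I would not grind through further algebra.
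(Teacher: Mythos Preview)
Your proof is correct and matches the paper's approach exactly: decompose $q_{j,i}=\tilde q_{j,i}+N_{j,i}$, use the $\mathrm{Binomial}(m_{s_i},p_j)$ law for $\tilde q_{j,i}$ (the paper packages this as Lemma~\ref{lemma:WUrandomgraphsaboutU}) and Lemma~\ref{lemma:expnewedges} for $N_{j,i}$, then add means and variances by independence. Your ``main obstacle'' (i) is a non-issue, though --- the lemma defines $\tilde q_{j,i}$ as the star \emph{corresponding to} $p_j$ (i.e.\ the count of sampled points landing in the $j$th block of $U$), not the $j$th largest star, so the Binomial law is immediate and invoking Proposition~\ref{prop:rankpreserving} here would in fact be circular.
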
  
\begin{proof}
    The \textit{Random edges condition} in Definition \ref{def:joiningrules} specifies that edges are added randomly. From Lemma \ref{lemma:WUrandomgraphsaboutU} we know that 
     \[ \mathbb{E}(\tilde{q}_{j, i}) = m_{s_i} p_j  \quad \text{and} \quad  \Var(\tilde{q}_{j, i})  = m_{s_i} p_j(1 - p_j) \, . 
\]
Combining with Lemma \ref{lemma:expnewedges} we obtain
\begin{align}
    \mathbb{E}(q_{j, i}) & = \mathbb{E}(\tilde{q}_{j, i}) + \frac{c m_{new_i}}{n_{s_i}} \, \\
     \mathbb{E}(q_{j, i}) & = m_{s_i}p_j + \frac{c m_{new_i}}{n_{s_i}} \, . 
\end{align}
Similarly the variance satisfies
\begin{align}
     \Var(q_{j, i}) & = \Var(\tilde{q}_{j, i}) + \Var(\text{new edges for this vertex}) \, , \\
    \Var(q_{j, i}) & = m_{s_i} p_j(1 - p_j)  + m_{new_i}\frac{c}{n_{s_i}}\left( 1 - \frac{c}{n_{s_i}} \right) \, . 
\end{align}
\end{proof}

\subsection{Edge density of $(U,W)$-mixture graphs }

\lemmaWUrandomgraphsOne*
\begin{proof}
    Suppose $G_{d_i}$, $G_{s_i}$ and $G_{n_i}$ have $n_{d_i}$, $n_{s_i}$ and $n_{i}$ nodes and $m_{d_i}$, $m_{s_i}$ and $m_i$ edges respectively. Then
     $m_{d_i} + m_{s_i} \leq  m_i$ and $n_i = n_{d_i} + n_{s_i}$.  
    \begin{enumerate}
        \item Then the edge density of $G_{n_i}$ satisfies
        \begin{align}
        \density(G_{n_i}) & \geq \frac{2(m_{d_i} + m_{s_i}) }{ (n_{d_i} + n_{s_i})^2} \, , \\
         & = \frac{2(m_{d_i} + m_{s_i}) }{(n_{d_i}^2 + 2 n_{d_i} n_{s_i}  +  n_{s_i}^2)} \, , \\
        & = 2\frac{\frac{m_{d_i}}{n_{d_i}^2} + \frac{m_{s_i}}{n_{d_i}^2}}{ 1 + 2 \frac{n_{s_i}}{n_{d_i}} + \frac{n_{s_i}^2}{n_{d_i}^2}} \, ,  \\
        & = 2\frac{\frac{m_{d_i}}{n_{d_i}^2} + \frac{m_{s_i}}{n_{s_i}^2} \frac{n_{s_i}}{n_{d_i}^2} }{ 1 + 2 \frac{n_{s_i}}{n_{d_i}} + \frac{n_{s_i}^2}{n_{d_i}^2}} \, , \\
        & \geq \frac{ \rho_{d_i}}{1 + 2c + c^2} \, , 
    \end{align}    

    where $\rho_{d_i}$ is the edge density of the dense part $G_{d_i}$. As $G_{d_i}$ is sampled from $W$, the edge density $\rho_{d_i}$ is bounded away from zero as $i$ goes to infinity. This makes the edge density of $G_{n_i}$ strictly positive  as $i$ goes to infinity, making $\{G_{n_i}\}_i$  dense.  
    \item Suppose 
    $\lim_{i \to \infty} \frac{n_{s_i}}{n_{d_i}} = \infty$. Equivalently, $\lim_{i \to \infty} \frac{n_{d_i}}{n_{s_i}} = 0$. As $m_i = m_{d_i} + m_{s_i} + c'm_{d_i}$ where the new edges $m_{new_i} = c'm_{d_i}$ depend on the joining process, the edge density of $G_{n_i}$ is given by
    \begin{align}
         \density(G_{n_i}) & = \frac{2(m_{d_i} + m_{s_i} + c'm_{d_i} ) }{ (n_{d_i} + n_{s_i})^2} \, , \\
          & = \frac{2((1+ c')m_{d_i} + m_{s_i}) }{ n_{d_i}^2 + 2 n_{d_i} n_{s_i} + n_{s_i}^2  } \, , \\
          & = 2 \frac{  (1+ c')\frac{m_{d_i}}{n_{s_i}^2}  + \frac{m_{s_i}}{n_{s_i}^2}  }{  \frac{n_{d_i}^2}{n_{s_i}^2} + 2 \frac{n_{d_i}}{n_{s_i}}  + 1 } \\
        & = 2 \frac{  (1+ c')\frac{m_{d_i}}{n_{d_i}^2}\frac{n_{d_i}^2}{{n_{s_i}^2}} + \frac{m_{s_i}}{n_{s_i}^2}  }{  \frac{n_{d_i}^2}{n_{s_i}^2} + 2 \frac{n_{d_i}}{n_{s_i}}  + 1 } \\
        &  = 2 \frac{  (1+ c')\rho_{d_i}\frac{n_{d_i}^2}{{n_{s_i}^2}} + \frac{m_{s_i}}{n_{s_i}^2}  }{  \frac{n_{d_i}^2}{n_{s_i}^2} + 2 \frac{n_{d_i}}{n_{s_i}}  + 1 } \\
   \lim_{i \to \infty}  \density(G_{n_i})     & = 0 \, , 
    \end{align}
 as $\{G_{s_i}\}_i$ is sparse and as $\lim_{i \to \infty} \frac{n_{d_i}}{n_{s_i}} = 0$.         
\end{enumerate}
\end{proof}

\begin{lemma}
    \label{lemma:WUrandomgraphs3}
    Let $\{G_{n_i}\}_i$ be a sequence of sparse $(U,W)$-mixture graphs (Definition \ref{def:WURandomMixtureGraphs}) with dense and sparse parts of $G_{n_i}$ being $G_{d_i}$ and $G_{s_i}$ respectively. Let $\bm{p} = (p_1, p_2, \ldots )$ be the mass-partition (Definition \ref{def:masspartition}) associated with $U$ and suppose $\bm{p}$ has $k$ non-zero elements. Recall $G_{s_i}$ is a union of disjoint stars and isolated edges.  Let $\tilde{q}_{j,i}$ be the degree of the star in $G_{s_i}$ corresponding to $p_j \neq 0$. 
    Then, if we consider $G_{n_i}$ to be the disjoint union of $G_{d_i}$ and $G_{s_i}$ 
    the top $k$ degrees denoted by $\deg v_{(1\ldots k)}$ satisfies  
    \[ \lim_{i \to \infty } P\left( \bigcup_{j=1}^k \tilde{q}_{j, i} \subseteq \deg v_{(1\ldots k)} \right) = 1 \, . 
    \]
\end{lemma}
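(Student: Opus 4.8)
The plan is to show that, for every large $i$, the $k$ star centres of the sparse part $G_{s_i}$ carry degrees of order $m_{s_i}$, whereas every other vertex of the disjoint union $G_{d_i}\sqcup G_{s_i}$ has a much smaller degree; since the mixture is sparse this pins the $k$ star centres to the top-$k$ slots with probability tending to $1$.

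First I would nail down the exact structure of $H_{s_i}\sim\mathbb{G}(m_{s_i},U)$. Since $U=W_{\bm{p}}^{\mathcal{M}}$ with $\bm{p}=(p_1,\dots,p_k,0,\dots)$ and $\sum_{j\le k}p_j=1$ (Definition~\ref{def:masspartitiongraphon}), each of the $m_{s_i}$ i.i.d.\ uniform points lands almost surely in exactly one class $A_1,\dots,A_k$, and two points are adjacent iff they lie in the same class; hence $H_{s_i}$ is exactly a disjoint union of $k$ cliques whose sizes $(S_{1,i},\dots,S_{k,i})$ follow $\mathrm{Multinomial}(m_{s_i};p_1,\dots,p_k)$, so in particular $S_{j,i}\sim\mathrm{Bin}(m_{s_i},p_j)$. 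Taking the inverse line graph turns the clique of size $S_{j,i}$ into the star $K_{1,S_{j,i}}$; for $i$ large enough that every $S_{j,i}\ge 3$ this is unambiguous, $G_{s_i}$ is a disjoint union of $k$ stars in which the centre associated with $p_j$ has degree $\tilde q_{j,i}=S_{j,i}$ (consistent with $\mathbb{E}\tilde q_{j,i}=m_{s_i}p_j$ from Lemma~\ref{lemma:WUrandomgraphsaboutU}), and every non-centre vertex of $G_{s_i}$ has degree $1$.

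Then I would run two simple estimates and combine them with sparsity. (i) Concentration of the centre degrees: by the multiplicative Chernoff bound for each binomial $S_{j,i}$ and a union bound over $j=1,\dots,k$, with probability at least $1-k\exp(-cm_{s_i})$ one has $\tilde q_{j,i}=S_{j,i}\ge\tfrac12 m_{s_i}p_j\ge\tfrac12 m_{s_i}p_k$ simultaneously for all $j\le k$ (this also makes every clique nonempty, so $n_{s_i}=m_{s_i}+k$ on this event). (ii) A crude bound on the dense part: $G_{d_i}\sim\mathbb{G}(n_{d_i},W)$ has $n_{d_i}$ vertices, so deterministically $\deg_{G_{d_i}}v\le n_{d_i}-1$ for every $v$. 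Now invoke sparsity: since the mixture is sparse, Lemma~\ref{lemma:WUrandomgraphs1} forces $n_{s_i}/n_{d_i}\to\infty$, and because $n_{s_i}=m_{s_i}+k$ this gives $m_{s_i}/n_{d_i}\to\infty$, so there is $I_0$ with $\tfrac12 m_{s_i}p_k>n_{d_i}-1$ for all $i\ge I_0$. On the intersection of the Chernoff event with $\{i\ge I_0\}$, every centre degree $\tilde q_{j,i}$ strictly exceeds both $\max_{v\in G_{d_i}}\deg v$ and the degree ($=1$) of every non-centre vertex of $G_{s_i}$; hence the $k$ largest degrees of $G_{d_i}\sqcup G_{s_i}$ are exactly $\tilde q_{1,i},\dots,\tilde q_{k,i}$, i.e.\ $\bigcup_{j=1}^k\{\tilde q_{j,i}\}\subseteq\deg v_{(1\ldots k)}$. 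Since the probability of this event tends to $1$, the stated limit follows.

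I expect the only real difficulty to be bookkeeping rather than mathematics: keeping straight that $n_{s_i}$ and $m_{s_i}$ differ only by the bounded number of stars $k$, disposing of the inverse-line-graph ambiguities for cliques of size $\le 3$ (which disappear for large $i$ since each $S_{j,i}\to\infty$), and reading off ``sparse $\Rightarrow n_{s_i}/n_{d_i}\to\infty$'' correctly from Lemma~\ref{lemma:WUrandomgraphs1}. The multinomial description of $H_{s_i}$, the Chernoff bound, and the trivial degree bound on $G_{d_i}$ are all routine.
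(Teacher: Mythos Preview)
Your proposal is correct and follows essentially the same route as the paper: both use that sparsity forces $n_{s_i}/n_{d_i}\to\infty$ (via Lemma~\ref{lemma:WUrandomgraphs1}), bound every dense-part degree trivially by $n_{d_i}$, apply a Chernoff bound to each $\tilde q_{j,i}\sim\mathrm{Bin}(m_{s_i},p_j)$, and finish with a union bound over the $k$ hubs. Your write-up is a bit more careful on bookkeeping (the multinomial description, the exact relation $n_{s_i}=m_{s_i}+k$, the non-centre degree-$1$ vertices, and the inverse-line-graph ambiguity for tiny cliques), but the argument is the same.
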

\begin{proof}
From Lemma \ref{lemma:WUrandomgraphs1} we know if $\{G_{n_i}\}_i$ is sparse, then $n_{s_i}/n_{d_i} \to \infty$ as $i$ tends to infinity. Furthermore, $n_i = n_{d_i} + n_{s_i}.$ 
In the dense part $G_{d_i}$, vertices have degree less than or equal to $n_{d_i}$. 

From Lemma \ref{lemma:WUrandomgraphsaboutU} the degree of the star in $G_{s_i}$ corresponding to $p_j$ satisfies $\mathbb{E}(\tilde{q}_j) = m_{s_i}p_j$.  Then using Chernoff bounds
\[ P\left( \tilde{q}_j \leq (1 - \epsilon) m_{s_i}p_j  \right) \leq \exp\left( - \frac{m_{s_i}p_j \epsilon^2}{2} \right) \, . 
\]
The sparse part $G_{s_i}$ has $n_{s_i}$ nodes and $m_{s_i}$ edges and is the inverse line graph of  $ H_{s_i} \sim \mathbb{G}(U, m_{s_i})$. As the inverse line graph of a clique is a star and the inverse line graph of an isolated node is an isolated edge, we have $m_{s_i} \in \Theta(n_{s_i})$. This gives
\[ m_{s_i} p_j \geq c_1p_j n_{s_i} > n_{d_i} \, . 
\]
for large enough $i$ as $n_{s_i}/n_{d_i} \to \infty$. Thus, by letting $n_{d_i} \leq (1 - \epsilon) m_{s_i} p_j$ we get 
\begin{equation}\label{eq:chernoffbounds1}
    P\left( \tilde{q}_j \leq n_{d_i}  \right) \leq \exp\left( - \frac{m_{s_i}p_j}{2} \left(1 - \frac{n_{d_i}}{m_{s_i}p_j} \right)^2 \right) \, ,
\end{equation}
which goes to zero fast as $n_{d_i}/(m_{s_i}p_j)$ goes to zero and as $m_{s_i}$ goes to infinity. For $j \in \{1, \ldots, k\}$ if all $\tilde{q}_{j,i} > n_{d_i}$ then $ \bigcup_{j=1}^k \tilde{q}_{j, i} \subseteq \deg v_{(1\ldots k)}$ for large $i$. Then
\ifthenelse{\boolean{twocolumn}}{
  \begin{align}
    P& \left( \bigcup_{j=1}^k \tilde{q}_{j, i} \subseteq \deg v_{(1\ldots k)} \right)  =  \\
    & P\left( (\tilde{q}_{1, i} > n_{d_i}) \cap (\tilde{q}_{2, i} > n_{d_i}) \cdots (\tilde{q}_{k, i} > n_{d_i}) \right) \, , \\
    & \geq 1 - \sum_{j = 1}^k P\left( \tilde{q}_{j, i} \leq n_{d_i} \right) \, , \\
    & \geq 1 - \sum_{j = 1}^k \exp\left( - \frac{m_{s_i}p_j}{2} \left(1 - \frac{n_{d_i}}{m_{s_i}p_j} \right)^2 \right) \, , 
\end{align}
giving the result. 
}{
  \begin{align}
    P\left( \bigcup_{j=1}^k \tilde{q}_{j, i} \subseteq \deg v_{(1\ldots k)} \right) & = P\left( (\tilde{q}_{1, i} > n_{d_i}) \cap (\tilde{q}_{2, i} > n_{d_i}) \cdots (\tilde{q}_{k, i} > n_{d_i}) \right) \, , \\
    & \geq 1 - \sum_{j = 1}^k P\left( \tilde{q}_{j, i} \leq n_{d_i} \right) \, , \\
    & \geq 1 - \sum_{j = 1}^k \exp\left( - \frac{m_{s_i}p_j}{2} \left(1 - \frac{n_{d_i}}{m_{s_i}p_j} \right)^2 \right) \, , 
\end{align}
giving the result. 
}

\end{proof}

\subsection{When $\sum_i p_i < 1$}\label{sec:whenpileq1}
If we focus on $U$ in the $(U,W)$-mixture graphs, we see that $U \rightarrow H_m \rightarrow G_s = L^{-1}(H_m)$. The graphon $U$ is used to generate a disjoint clique graph $H_m$, of which the inverse line graph is computed to obtain $G_s$, the sparse part. Suppose $\bm{p}$ is the mass-partition corresponding to $U$. If $\sum_j p_j = 1$, then $H_m$ consists of  disjoint cliques and $G_s$ consists of disjoint stars. If $\sum_j p_j < 1$, then $H_m$ consists of disjoint cliques including isolated vertices, which are technically cliques of size 1. The inverse line graph of isolated vertices are isolated edges, which are technically stars of 2 vertices, $K_{1,1}$. This is illustrated in Figure \ref{fig:linegraphonUandStarAndIsolatedEdges}. 
 
\begin{figure}[!ht]
    \centering
    \includegraphics[width=0.3\linewidth]{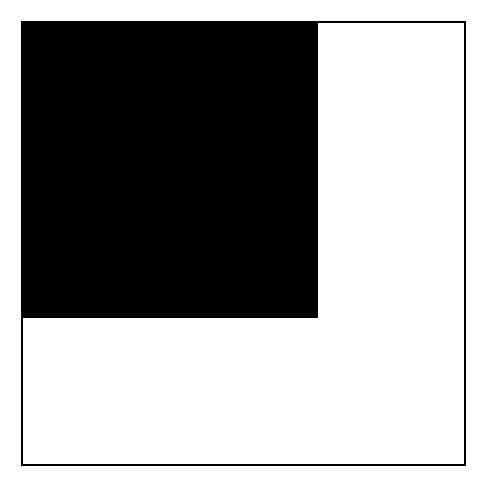}
    \includegraphics[width=0.6\linewidth]{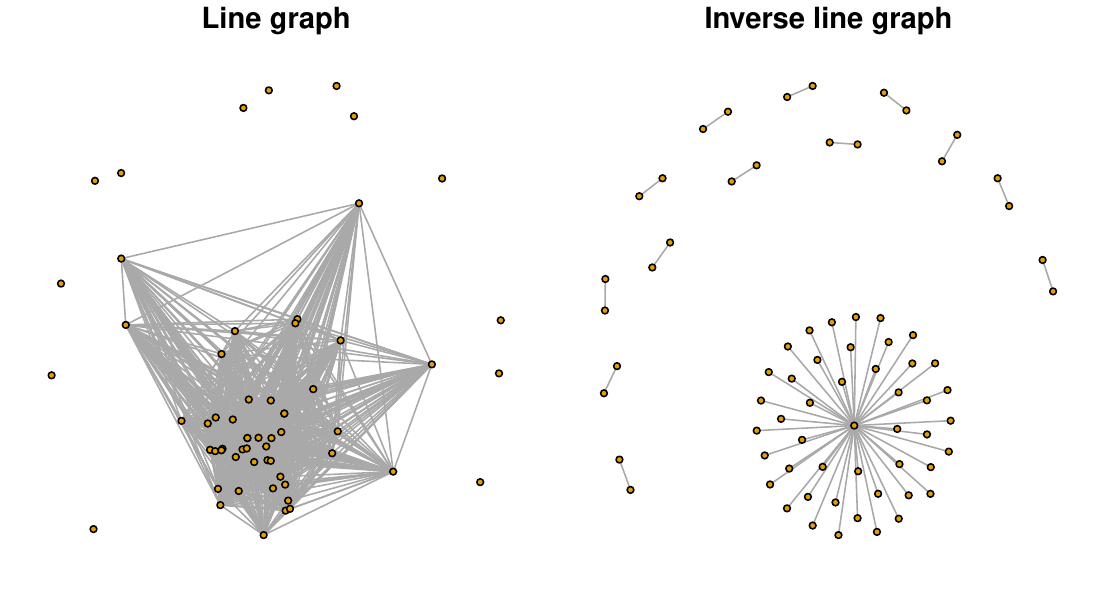}    
    \caption{A disjoint clique graphon $U$ on the left, graph $H_m \sim \mathbb{G}(U, m)$ in the middle and $G_n = L^{-1}(H_m)$ on the right. Graph $H_m$ has a large clique and some isolated nodes. Graph $G_n$ is the inverse line graph of $H_m$ and has a star and isolated edges. The star is the inverse line graph of the large clique and isolated edges are the inverse of isolated vertices. }
    \label{fig:linegraphonUandStarAndIsolatedEdges}
\end{figure}

\begin{lemma}\label{lemma:WUrandomgraphsaboutU}
 Let $\{G_{n_i}\}_i$ be a sequence of $(U,W)$-mixture graphs (Definition \ref{def:WURandomMixtureGraphs}) with $G_{n_i} \sim \mathbb{G}\left(U,W, n_{d_i}, m_{s_i} \right)$ with sparse part $G_s$. Let $\bm{p} = (p_1, p_2, \ldots)$ be the mass-partition (Definition \ref{def:masspartition}) corresponding to $U$.  Then every non-zero $p_j$ results in a star $K_{1, \tilde{q}_j}$ in $G_{s}$ with degree of the hub-node $\tilde{q}_{j, i}$ satisfying
 \[ \mathbb{E}(\tilde{q}_{j, i}) = m_{s_i} p_j  \quad \text{and} \quad  \Var(\tilde{q}_{j, i})  = m_{s_i} p_j(1 - p_j) \, . 
\]
If $\sum_j p_j < 1$ then $G_{s}$ has isolated edges in addition to disjoint stars. Let $\tilde{I}_i$ denote the number of isolated edges in $G_{s}$. Then
\ifthenelse{\boolean{twocolumn}}{
  \begin{align*}
      \mathbb{E}(\tilde{I}_i) & = m_{s_i} \left(1 - \sum_j p_j \right)   \text{and} \, , \\
      \Var(\tilde{I}_i) & = m_{s_i} \sum_j p_j \left(1 - \sum_j p_j \right) \, .
  \end{align*}
  }{
   \[ \mathbb{E}(\tilde{I}_i) = m_{s_i} \left(1 - \sum_j p_j \right)  \quad \text{and} \quad \Var(\tilde{I}_i) = m_{s_i} \sum_j p_j \left(1 - \sum_j p_j \right) \, . 
     \]
}

 \end{lemma}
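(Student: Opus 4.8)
The plan is to track how the random clique structure of the $U$-random graph $H_{s_i}\sim\mathbb{G}(m_{s_i},U)$ is transported through the inverse line graph operation, and then read off the two sets of moments from binomial distributions. First I would unpack the construction of $H_{s_i}$ using the concrete representation of a disjoint clique graphon from Definition~\ref{def:masspartitiongraphon}: write $U=W_{\bm{p}}^{\mathcal M}=\sum_{j\ge 1}\bm{1}_{A_j\times A_j}$ for disjoint sets $A_j$ in a probability space $(S,\mu)$ with $\mu(A_j)=p_j$. Generating $H_{s_i}$ (Definition~\ref{def:wrandomgraphs}) then amounts to drawing i.i.d.\ points $X_1,\dots,X_{m_{s_i}}\sim\mu$ and joining $X_a$ to $X_b$ precisely when both land in a common $A_j$. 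Hence $H_{s_i}$ is a disjoint union of cliques: the vertices falling in $A_j$ span a clique $K_{N_j}$ with $N_j=\bigl|\{a:X_a\in A_j\}\bigr|$, and the vertices falling in $S\setminus\bigcup_j A_j$ are isolated. By disjointness of the $A_j$, each $N_j$ is marginally $\mathrm{Binomial}(m_{s_i},p_j)$ and the number $\tilde I_i$ of isolated vertices of $H_{s_i}$ is marginally $\mathrm{Binomial}(m_{s_i},\,1-\sum_j p_j)$.

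Next I would invoke the inverse line graph operation. Since a disjoint union of cliques is the line graph of a disjoint union of stars, $L^{-1}$ is defined on $H_{s_i}$ and acts componentwise, with $L^{-1}(K_n)=K_{1,n}$ for every $n\ge 1$ — resolving the single ambiguous case $K_3$ by the star convention $K_{1,3}$ (rather than the triangle) used throughout the paper, cf.\ \citet{whitney1932congruent}. Thus each clique $K_{N_j}$ of $H_{s_i}$ becomes a star $K_{1,N_j}$ in $G_{s_i}$ whose hub has degree $\tilde q_{j,i}=N_j$, and each isolated vertex of $H_{s_i}$ (formally a clique $K_1$, i.e.\ a single edge of $G_{s_i}$) becomes an isolated edge $K_{1,1}$; hence there are exactly $\tilde I_i$ isolated edges in $G_{s_i}$ when $\sum_j p_j<1$. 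The asserted moments are then immediate from the binomial moment formulas: $\mathbb{E}(\tilde q_{j,i})=\mathbb{E}(N_j)=m_{s_i}p_j$ and $\Var(\tilde q_{j,i})=m_{s_i}p_j(1-p_j)$, while $\mathbb{E}(\tilde I_i)=m_{s_i}(1-\sum_j p_j)$ and $\Var(\tilde I_i)=m_{s_i}(1-\sum_j p_j)\sum_j p_j$.

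I expect the only real obstacle to be bookkeeping rather than analysis: one must verify carefully that $L^{-1}$ applied to a disjoint union of cliques is well defined (it is a genuine line graph), that it acts componentwise, that the hub degree of each resulting star equals the number of vertices of the clique it came from — so that the $N_j$ in $\mathrm{Binomial}(m_{s_i},p_j)$ is exactly the random variable $\tilde q_{j,i}$ — and that the $K_3$ ambiguity is resolved consistently. Once this dictionary between cliques of $H_{s_i}$ and stars/isolated edges of $G_{s_i}$ is established, the statement is an exact moment identity for each fixed $i$, so no concentration or limiting argument is needed, and the remaining computation is the one-line evaluation of binomial means and variances.
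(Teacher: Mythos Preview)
Your proposal is correct and follows essentially the same approach as the paper: both arguments identify the clique sizes in $H_{s_i}$ as $\mathrm{Binomial}(m_{s_i},p_j)$ (and the isolated-vertex count as $\mathrm{Binomial}(m_{s_i},1-\sum_j p_j)$), then transport these through $L^{-1}$ to obtain the hub degrees and isolated-edge count in $G_{s_i}$. Your version is slightly more explicit about the $K_3$ ambiguity and the componentwise action of $L^{-1}$, but the core argument is identical.
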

\begin{proof}
    For a $(U,W)$-mixture graph $G_{n_i}$, the sparse part is given by $G_{s_i}$. For the sparse part, we draw $m_s$ points $x_1, x_2, \ldots, x_{m_s}$ uniformly from $[0, 1]$ (see Definition \ref{def:WURandomMixtureGraphs}). The vertices $k$ and $\ell$ are connected with probability $U(x_k, x_\ell)$. This graph is called $H_{s}$. As $U$ is a disjoint clique graphon it can be represented by a mass-partition $\bm{p} = (p_1, p_2, \ldots )$. Thus, $H_s$ is a collection of disjoint cliques. As in Definition \ref{def:compsize} let $C_j$ denote the component size or the number of nodes in the $j$th largest component.  As $x_1, x_2, \ldots, x_{m_s}$ are uniformly drawn, for non-zero $p_j$, the component size $C_j(H_s)$ has a binomial distribution with parameters $m_s$ and $p_j$. Thus for non-zero $p_j$
    \[ \mathbb{E}(C_j(H_s)) = m_{s_i} p_j \, , 
    \]
    and 
    \[ \Var(C_j(H_s))  = m_{s_i} p_j(1 - p_j)
    \]
    where expectation is computed with respect to different randomizations. 
    When we compute the inverse line graph of $L^{-1}(H_s)$, each clique corresponding to non-zero $p_j$ is converted to a star $K_{1, \tilde{q}_j}$ such that the degree of the hub node is equal to the number of nodes in the clique, i.e., $\tilde{q}_{j,i} = C_j(H_s)$,  giving us
    \[ \mathbb{E}(\tilde{q}_{j, i} ) = m_{s_i} p_j   \quad \text{and} \quad  \Var(\tilde{q}_{j, i})  = m_{s_i} p_j(1 - p_j) \, .
    \]
    \item     When $\sum_j p_j < 1$, this results in isolated nodes in $H_s$. The number of isolated nodes $H_s$ is equal to $\tilde{I}_i$, the number of isolated edges in $G_s$ as $G_s$ is the inverse line graph of $H_s$. As $x_1, x_2, \ldots, x_{m_s}$ are uniformly drawn from $[0,1]$, $\tilde{I}_{i}$ has a binomial distribution with parameters $m_{s_i}$ and $p_0 = 1 - \sum_j p_j$ making
    \ifthenelse{\boolean{twocolumn}}{
      \begin{align*}
          \mathbb{E}(\tilde{I}_{i} ) & = m_{s_i}p_0 = m_{s_i} \left(1 - \sum_j p_j \right)  \quad \text{and}  \\
          \Var(\tilde{I}_{i}) & =  m_{s_i}p_0(1 - p_0) =m_{s_i}  \sum_j p_j \left(1 - \sum_j p_j \right)
      \end{align*}
    }{
       \[ \mathbb{E}(\tilde{I}_{i} ) = m_{s_i}p_0 = m_{s_i} \left(1 - \sum_j p_j \right)  \quad \text{and} \quad \Var(\tilde{I}_{i}) =  m_{s_i}p_0(1 - p_0) =m_{s_i}  \sum_j p_j \left(1 - \sum_j p_j \right) \, . 
    \]
    }

\end{proof}
\section{Predicting top-$k$ degrees}

\begin{lemma}\label{lemma:WUrandomgraphs5}
    Let $\{G_{n_i}\}_i$ be a sequence of sparse $(U,W)$-mixture graphs (Definition \ref{def:WURandomMixtureGraphs}) with dense and sparse parts $G_{d_i}$ and $G_{s_i}$ respectively. Let $\bm{p} = (p_1, p_2, \ldots )$ be the mass-partition (Definition \ref{def:masspartition}) associated with $U$ and suppose $p_j > p_k \neq 0$. Let $\tilde{q}_{j, i}$ and  $\tilde{q}_{k, i}$ denote the degree of star vertices in $G_{s_i}$ corresponding to $p_j$ and $p_k$ respectively. Let  ${q}_{j, i}$ and  ${q}_{k, i}$ denote their degrees in $G_{n_i}$. Then
\[
 P\left( q_{k, i} >  q_{j,i} \right) \leq \frac{C}{m_{s_i}}
\]
    That is, with high probability the graph mixture does not change the order of the degree of star vertices.
\end{lemma}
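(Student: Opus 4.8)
The plan is to decompose the degrees as $q_{j,i}=\tilde q_{j,i}+N_{j,i}$ and $q_{k,i}=\tilde q_{k,i}+N_{k,i}$, where $N_{j,i}$ (resp.\ $N_{k,i}$) is the number of edges incident to the hub of the $p_j$-star (resp.\ $p_k$-star) that are added during the joining step (these exist by the \textit{No deletion condition} of Definition~\ref{def:joiningrules}), and then apply Chebyshev's inequality to $Y:=q_{j,i}-q_{k,i}$. The point is that the ``signal'' $\mathbb{E}(Y)$ grows \emph{linearly} in $m_{s_i}$ whereas the ``noise'' $\Var(Y)$ is only $O(m_{s_i})$, so $P(q_{k,i}>q_{j,i})=P(Y<0)$ decays like $1/m_{s_i}$.

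First I would compute $\mathbb{E}(Y)$. Conditioning on the graph-generation randomness $\mathcal{F}_i:=\sigma(G_{d_i},H_{s_i})$ (so that $\tilde q_{j,i},\tilde q_{k,i},n_{s_i},n_{d_i},m_{new_i}$ are all $\mathcal{F}_i$-measurable), the \textit{Random edges condition} makes every node of $G_{s_i}$ equally likely to receive a new edge, hence $\mathbb{E}(N_{j,i}\mid\mathcal{F}_i)=\mathbb{E}(N_{k,i}\mid\mathcal{F}_i)$ and $\mathbb{E}(Y\mid\mathcal{F}_i)=\tilde q_{j,i}-\tilde q_{k,i}$. By Lemma~\ref{lemma:WUrandomgraphsaboutU} the clique sizes $\tilde q_{j,i}$ form a multinomial$(m_{s_i};p_1,p_2,\dots)$ vector, so $\mathbb{E}(\tilde q_{j,i})=m_{s_i}p_j$ and therefore $\mathbb{E}(Y)=m_{s_i}(p_j-p_k)=:m_{s_i}\delta$ with $\delta>0$ by hypothesis. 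Next I would bound $\Var(Y)$ by the law of total variance, $\Var(Y)=\Var\big(\mathbb{E}(Y\mid\mathcal{F}_i)\big)+\mathbb{E}\big(\Var(Y\mid\mathcal{F}_i)\big)$. For the first term, the multinomial covariance formula gives $\Var(\tilde q_{j,i}-\tilde q_{k,i})=m_{s_i}\big(p_j(1-p_j)+p_k(1-p_k)+2p_jp_k\big)\le 4m_{s_i}$. For the second term, conditionally on $\mathcal{F}_i$ the counts $N_{j,i},N_{k,i}$ are sums of indicators with $m_{new_i}$ trials and per-trial probability $\Theta(1/n_{s_i})$ (Lemma~\ref{lemma:expnewedges}), so $\Var(Y\mid\mathcal{F}_i)\le 2\Var(N_{j,i}\mid\mathcal{F}_i)+2\Var(N_{k,i}\mid\mathcal{F}_i)\le C'\,m_{new_i}/n_{s_i}$. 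Now I use sparsity of the mixture: Lemma~\ref{lemma:WUrandomgraphs1} gives $n_{s_i}/n_{d_i}\to\infty$, so $n_{d_i}<n_{s_i}$ eventually; together with $m_{new_i}=c\,m_{d_i}\le c\,n_{d_i}^2$ (the \textit{New edges condition}) and $n_{s_i}\in\Theta(m_{s_i})$ (as in the proof of Lemma~\ref{lemma:WUrandomgraphs3}) this yields $m_{new_i}/n_{s_i}\le c\,n_{d_i}^2/n_{s_i}\le c\,n_{s_i}=O(m_{s_i})$. Hence $\Var(Y)\le C_0\,m_{s_i}$ for some constant $C_0$ and all large $i$.

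Finally, Chebyshev's inequality gives
\[
P(q_{k,i}>q_{j,i})=P\big(Y-\mathbb{E}(Y)<-m_{s_i}\delta\big)\le\frac{\Var(Y)}{m_{s_i}^2\delta^2}\le\frac{C_0}{\delta^2\,m_{s_i}},
\]
so the claim holds with $C=C_0/\delta^2$, after enlarging $C$ to absorb the finitely many small $i$ for which the asymptotic estimates above have not yet taken effect (there the trivial bound $P\le 1\le C/m_{s_i}$ suffices once $C\ge\sup$ of the relevant $m_{s_i}$). I expect the main obstacle to be the bookkeeping in the variance estimate: one must keep the two sources of randomness (graphon sampling versus joining) separate via conditioning on $\mathcal{F}_i$, control the negative correlation between the two clique sizes, and—most importantly—exploit sparsity ($n_{s_i}/n_{d_i}\to\infty$, $n_{s_i}\in\Theta(m_{s_i})$, $m_{d_i}\le n_{d_i}^2$) to ensure the joining noise $m_{new_i}/n_{s_i}$ stays $O(m_{s_i})$ and does not swamp the linear signal $m_{s_i}\delta$.
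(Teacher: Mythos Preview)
Your proposal is correct and follows essentially the same route as the paper: define the difference of the two hub degrees, show its mean is $m_{s_i}(p_j-p_k)$ and its variance is $O(m_{s_i})$, then apply Chebyshev to obtain the $C/m_{s_i}$ bound. The paper works with $X_i=q_{k,i}-q_{j,i}$ and quotes Lemma~\ref{lemma:WUrandomgraphsaboutU2} directly for the mean and variance, whereas you condition on $\mathcal{F}_i$ and use the law of total variance; you also keep the multinomial cross-term $+2p_jp_k$ that the paper silently drops (it treats the two clique sizes as independent), but this only affects the constant and not the $O(m_{s_i})$ order, so the arguments are effectively identical.
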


\begin{proof}
Let $G_{d_i}, G_{s_i}$ and $G_{n_i}$ have $n_i$, $n_{d_i}$ and $n_{s_i}$ number of nodes and $m_i$, $m_{d_i}$ and $m_{s_i}$ number of edges respectively. Let $m_{new_i}$ denote the number of edges added as part of the joining process. 
From Lemma \ref{lemma:WUrandomgraphsaboutU2} we know that 
\begin{align}
    \mathbb{E}({q}_{j, i}) & = m_{s_i} p_j + \frac{c m_{new_i}}{n_{s_i}} \, , \\
    \Var({q}_{j, i}) &= m_{s_i} p_j(1 - p_j) + m_{new_i} \frac{c} {n_{s_i}} \left( 1 - \frac{c} {n_{s_i}} \right)   \, . 
\end{align}

We define $X_i = q_{k,i} - q_{j,i}$, where $q_{k,i}$ and $q_{j,i}$ are the degrees of the hubs in $G_{n_i}$ corresponding to $p_k$ and $p_j$ respectively.  Then 
\ifthenelse{\boolean{twocolumn}}{
  \begin{align}
     \mathbb{E}(X_i) & = m_{s_i}( p_k - p_j)  \, ,  \\
     \Var(X_i) & = m_{s_i} \left( p_j(1 - p_j) +  p_k(1 - p_k) \right)   \\
     & \quad  +  2m_{new_i} \frac{c} {n_{s_i}} \left( 1 - \frac{c} {n_{s_i}} \right)  \, , \\
     & = m_{s_i} \left( p_j(1 - p_j) + p_k(1 - p_k) \right) \\
     & \quad + 2c'm_{d_i} \frac{c} {n_{s_i}} \left( 1 - \frac{c} {n_{s_i}} \right) \, . 
\end{align}
}{
  \begin{align}
     \mathbb{E}(X_i) & = m_{s_i}( p_k - p_j)  \, ,  \\
     \Var(X_i) & = m_{s_i} \left( p_j(1 - p_j) + p_k(1 - p_k) \right) + 2m_{new_i} \frac{c} {n_{s_i}} \left( 1 - \frac{c} {n_{s_i}} \right)  \, , \\
     & = m_{s_i} \left( p_j(1 - p_j) + p_k(1 - p_k) \right) + 2c'm_{d_i} \frac{c} {n_{s_i}} \left( 1 - \frac{c} {n_{s_i}} \right) \, . 
\end{align}
}

As $m_{s_i} \in \Theta(n_{s_i})$,  $m_{d_i} \in \Theta(n_{d_i}^2)$ and $n_{d_i}/n_{s_i} \to 0$ for sparse graphs we obtain
\begin{align}\label{eq:1LemmaWUrandomgraphs5}
    \frac{\Var X_i}{ \mathbb{E}(X_i)^2} & \leq \frac{C}{m_{s_i}} \, ,  \\
\text{making} \quad   \lim_{i \to \infty} \frac{\Var X_i}{ \mathbb{E}(X_i)^2} & = 0 \, ,  \notag
\end{align}
As $p_j > p_k$, we have $\mathbb{E}(X_i) < 0$. 
Furthermore, the probability 
\ifthenelse{\boolean{twocolumn}}{
  \begin{align*}
    P(X_i >0  ) & = P(X_i - \mathbb{E}(X_i) \geq - \mathbb{E}(X_i)) \, , \\
    & = P(X_i - \mathbb{E}(X_i) \geq |\mathbb{E}(X_i)|) \, ,  \\
     & \qquad \text{as } -\mathbb{E}(X_i) = | \mathbb{E}(X_i)| \, ,  \\
    & \leq P(|X_i - \mathbb{E}(X_i)| \geq |\mathbb{E}(X_i)|)  \, ,  \\
    & \leq \frac{\Var(X_i)}{\mathbb{E}(X_i)^2}
    \end{align*}
}{
  \begin{align*}
    P(X_i >0  ) & = P(X_i - \mathbb{E}(X_i) \geq - \mathbb{E}(X_i)) \, , \\
    & = P(X_i - \mathbb{E}(X_i) \geq |\mathbb{E}(X_i)|) \quad \text{as } -\mathbb{E}(X_i) = | \mathbb{E}(X_i)| \, ,  \\
    & \leq P(|X_i - \mathbb{E}(X_i)| \geq |\mathbb{E}(X_i)|)  \, ,  \\
    & \leq \frac{\Var(X_i)}{\mathbb{E}(X_i)^2}
    \end{align*}
}

where we have used Chebyshev’s inequality in the last step and the fact that $P(|X| \geq a) \geq P(X \geq a)$ as the set $X \geq a$ is a subset of the set $|X| \geq a$. Combining with equation \eqref{eq:1LemmaWUrandomgraphs5} we get
\[
  P(q_{k,i} > q_{j, i}) \leq \frac{C}{m_{s_i}} \, , 
\]
making
\[ \lim_{i \to \infty} P(q_{k,i} > q_{j, i}) = 0 \, . 
\]
\end{proof}

\begin{lemma}\label{lemma:WUrandomgraphs4}
    Let $\{G_{n_i}\}_i$ be a sequence of sparse $(U,W)$-mixture graphs (Definition \ref{def:WURandomMixtureGraphs}) with dense and sparse parts $G_{d_i}$ and $G_{s_i}$ respectively. Let $\bm{p} = (p_1, p_2, \ldots )$ be the mass-partition (Definition \ref{def:masspartition}) associated with $U$ and suppose $p_k \neq 0$.  Let $v_\ell$ be a vertex originally from the dense part $G_{d_i}$ and let us denote the degree of $v_\ell$ in $G_{d_i}$ by  $\deg_{G_{d_i}} v_\ell$  and its degree in $G_{n_i}$ by $\deg_{G_{n_i}} v_\ell$. Then 
\[
 P\left( \deg_{G_{n_i}} v_\ell \geq (1 - \epsilon) m_{s_i} p_k \right) \leq \exp\left(-C \frac{m_{s_i}^2}{n_{d_i}^2} \right)
\]
    where $\epsilon >0$ and $m_{s_i}$ denotes the number of edges in the sparse part.  That is, the probability of a node in the dense part having a larger degree than the expected degree of a star node $G_{n_i}$ goes to zero. 
\end{lemma}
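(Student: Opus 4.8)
The plan is to split $\deg_{G_{n_i}} v_\ell$ into the degree $v_\ell$ already has inside $G_{d_i}$ plus the number of new edges the join attaches to it, bound the first term crudely by $n_{d_i}$, and control the second by concentration; the heart of the matter will be showing that the threshold $(1-\epsilon)m_{s_i}p_k$ sits on a strictly larger scale than both of those quantities. By the no-deletion and no-new-nodes conditions of Definition~\ref{def:joiningrules}, $v_\ell$ retains all its original edges, so $\deg_{G_{n_i}} v_\ell = \deg_{G_{d_i}} v_\ell + Y_\ell$, where $Y_\ell$ counts the new edges incident to $v_\ell$; since $G_{d_i}$ is a simple graph on $n_{d_i}$ vertices, $\deg_{G_{d_i}} v_\ell < n_{d_i}$ for every realization. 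By the random-edges condition, conditionally on the number $m_{new_i}$ of new edges, each new edge independently has $v_\ell$ as an endpoint with probability $c_1/n_{d_i}$, so $Y_\ell\mid m_{new_i}\sim\mathrm{Bin}(m_{new_i},c_1/n_{d_i})$, consistent with Lemma~\ref{lemma:expnewedges}; and by the new-edges condition $m_{new_i}=c'm_{d_i}\le c'\binom{n_{d_i}}{2}$, so both $m_{new_i}$ and $\mathbb{E}(Y_\ell\mid m_{new_i})=c_1 m_{new_i}/n_{d_i}$ are deterministically $O(n_{d_i}^2)$ and $O(n_{d_i})$.

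Next I would fix the scales. As in the proof of Lemma~\ref{lemma:WUrandomgraphs3}, $G_{s_i}$ is a disjoint union of stars and isolated edges, hence a forest with every component carrying at least one edge, so $m_{s_i}\le n_{s_i}\le 2m_{s_i}$; and since the mixture is sparse, $n_{s_i}/n_{d_i}\to\infty$ by Lemma~\ref{lemma:WUrandomgraphs1}, whence $m_{s_i}/n_{d_i}\to\infty$. Consequently there is an index $I_0$, depending only on $\epsilon$, $p_k$ and the joining constants, such that for all $i>I_0$ we have simultaneously $n_{d_i}\le\tfrac12(1-\epsilon)m_{s_i}p_k$ and $\mathbb{E}(Y_\ell\mid m_{new_i})\le\tfrac14(1-\epsilon)m_{s_i}p_k$.

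Finally I would conclude by concentration. Fix $i>I_0$. On the event $\{\deg_{G_{n_i}} v_\ell\ge(1-\epsilon)m_{s_i}p_k\}$, subtracting $\deg_{G_{d_i}} v_\ell<n_{d_i}$ and then the conditional mean gives $Y_\ell-\mathbb{E}(Y_\ell\mid m_{new_i})\ge(1-\epsilon)m_{s_i}p_k-n_{d_i}-\tfrac14(1-\epsilon)m_{s_i}p_k\ge\tfrac14(1-\epsilon)m_{s_i}p_k=:s_i$. Applying Hoeffding's inequality to the sum of $m_{new_i}$ independent $[0,1]$-valued indicators and then taking the expectation over $m_{new_i}$,
\[
P\big(\deg_{G_{n_i}} v_\ell\ge(1-\epsilon)m_{s_i}p_k\big)\le\mathbb{E}\!\left[\exp\!\left(-\frac{2s_i^2}{m_{new_i}}\right)\right]\le\exp\!\left(-\frac{2s_i^2}{c'\binom{n_{d_i}}{2}}\right)\le\exp\!\left(-C\,\frac{m_{s_i}^2}{n_{d_i}^2}\right),
\]
with $C=(1-\epsilon)^2p_k^2/(4c')>0$, which is the claimed bound.

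The argument is short, and the single load-bearing decision is to bound $Y_\ell$ with Hoeffding against $m_{new_i}=\Theta(n_{d_i}^2)$ trials rather than with a multiplicative Chernoff bound: a Chernoff bound would only yield the weaker rate $\exp(-\Theta(n_{d_i}))$, whereas Hoeffding, with deviation $s_i=\Theta(m_{s_i})$ and $m_{new_i}=\Theta(n_{d_i}^2)$ in the denominator, produces precisely the $m_{s_i}^2/n_{d_i}^2$ exponent appearing in the statement. The only other care needed is that the scale relations $m_{s_i}\asymp n_{s_i}$ and $n_{s_i}/n_{d_i}\to\infty$ are inherited from Lemmas~\ref{lemma:WUrandomgraphs3} and \ref{lemma:WUrandomgraphs1}, and that when $m_{d_i}$ (hence $m_{new_i}$) is itself random one conditions on it and uses only the deterministic bound $m_{new_i}\le c'\binom{n_{d_i}}{2}$, so no further high-probability qualification is required.
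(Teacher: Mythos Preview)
Your argument is correct and follows the same skeleton as the paper's: both split the degree into its $G_{d_i}$-part (bounded by $n_{d_i}$) plus the new edges from joining, observe that the mean is $O(n_{d_i})$ while the target $(1-\epsilon)m_{s_i}p_k$ is on the strictly larger scale $\Theta(m_{s_i})$, and finish with a tail bound. The difference is the concentration inequality used. The paper applies a multiplicative Chernoff bound directly to $\deg_{G_{n_i}} v_\ell$ with mean $\mu_{\ell,i}=O(n_{d_i})$ and $\delta = (1-\epsilon)m_{s_i}p_k/\mu_{\ell,i}-1\to\infty$; you instead peel off the deterministic bound $\deg_{G_{d_i}} v_\ell<n_{d_i}$ and apply Hoeffding to the binomial $Y_\ell$ with $m_{new_i}=O(n_{d_i}^2)$ trials. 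Your route has the advantage that Hoeffding's exponent $2s_i^2/m_{new_i}$, with $s_i=\Theta(m_{s_i})$ and $m_{new_i}=O(n_{d_i}^2)$, lands exactly on the $m_{s_i}^2/n_{d_i}^2$ rate of the statement without further massaging, and sidesteps the issue that the $\mu\delta^2/3$ form of Chernoff is only stated for $\delta\le 1$. One small correction to your closing remark: the large-$\delta$ Chernoff bound $\exp(-c\mu\delta)$ would give $\exp(-\Theta(m_{s_i}))$ here, not $\exp(-\Theta(n_{d_i}))$; neither of these dominates $\exp(-Cm_{s_i}^2/n_{d_i}^2)$ uniformly, so Hoeffding is indeed the natural choice to match the lemma as stated.
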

\begin{proof}

Recall the joining process (Definition \ref{def:joiningrules}) does not delete or add vertices. Hence each vertex in $G_{n_i}$ comes either from the dense part or from the sparse part. Suppose vertex $v_\ell$ comes from the dense part $G_{d_i}$. Let us  denote the degree of $v_\ell$ in $G_{d_i}$ by  $\deg_{G_{d_i}} v_\ell$  and denote its degree in $G_{n_i}$ by $\deg_{G_{n_i}} v_\ell$. Then from Lemma \ref{lemma:expnewedges} we know that 
\[ \mathbb{E}(\deg_{G_{n_i}} v_\ell) =  \deg_{G_{d_i}} v_\ell + c \frac{m_{new_i}}{n_{d_i}}  \leq \deg_{G_{d_i}} v_\ell + c' \frac{m_{d_i}}{n_{d_i}}  \, . 
\]
Rewriting $\mu_{\ell,i} = \mathbb{E}(\deg_{G_{n_i}} v_\ell) $ for $\delta > 0$ we have the following Chernoff bound
\[ P\left( \deg_{G_{n_i}} v_\ell \geq (1 + \delta)\mu_{\ell, i} \right) \leq \exp\left( - \frac{ \mu_{\ell, i} \delta^2}{3} \right) \, . 
\]
From Lemma \ref{lemma:WUrandomgraphsaboutU} we know $\mathbb{E}(\tilde{q}_{k, i}) = m_{s_i} p_k$ where $\tilde{q}_{k, i}$ denotes the degree of the hub vertex corresponding to $p_k$ in $G_{s_i}$. Then as $n_{s_i}/n_{d_i}$ goes to infinity for sparse graphs, for a fixed $\epsilon >0$ we have $(1 - \epsilon) m_{s_i} p_k >> (1 + \delta) \mu_{\ell, i}$ for increasing $i$. Hence, the probability of $\deg_{G_{n_i}} v_\ell \geq (1 - \epsilon) m_{s_i} p_k $ decreases for a fixed $\epsilon >0$. Thus, for $\delta \leq (1 - \epsilon) m_{s_i} p_k/\mu_{\ell, i} - 1 $ we obtain
\ifthenelse{\boolean{twocolumn}}{
  \begin{align*}
       P & \left( \deg_{G_{n_i}} v_\ell \geq  (1 - \epsilon)m_{s_i} p_k \right) \\
       & \leq \exp\left( - \frac{ \mu_{\ell, i}}{3}\left( \frac{(1 - \epsilon)m_{s_i} p_k}{ \mu_{\ell, i}} - 1 \right)^2\right)  
  \end{align*}
  \[ 
 \]
}{
  \[ P\left( \deg_{G_{n_i}} v_\ell \geq  (1 - \epsilon)m_{s_i} p_k \right) \leq \exp\left( - \frac{ \mu_{\ell, i}}{3}\left( \frac{(1 - \epsilon)m_{s_i} p_k}{ \mu_{\ell, i}} - 1 \right)^2\right)  
 \]
}
As $\mu_{\ell, i} \in \mathcal{O}(n_{d_i})$, $m_{s_i} \in \mathcal{\Theta}(n_{s_i})$ and  $n_{s_i}/n_{d_i}$ goes to infinity for sparse graphs the expression $ \left( \frac{(1 - \epsilon)m_{s_i} p_k}{ \mu_{\ell, i}} - 1 \right)^2 $ goes to infinity. Therefore, the probability $P\left( \deg_{G_{n_i}} v_\ell \geq  (1 - \epsilon)m_{s_i} p_k \right)$ goes to zero satisfying 
\[ 
P\left( \deg_{G_{n_i}} v_\ell \geq  (1 - \epsilon)m_{s_i} p_k \right) \leq  \exp \left( -C \frac{m_{s_i}^2}{n_{d_i}^2}\right) \, . 
\]
\end{proof}

\begin{lemma}\label{lemma:WUrandomgraphs6}
    Let $\{G_{n_i}\}_i$ be a sequence of sparse $(U,W)$-mixture graphs (Definition \ref{def:WURandomMixtureGraphs}) with dense and sparse parts $G_{d_i}$ and $G_{s_i}$ respectively. Let $\bm{p} = (p_1, p_2, \ldots )$ be the mass-partition (Definition \ref{def:masspartition}) associated with $U$. Let $\tilde{q}_{j,i}$ be the degree of the star in $G_{s_i}$ corresponding to $p_j \neq 0$. Let $q_{j,i}$ denote the degree of the corresponding vertex in $G_{n_i}$. Let $v_\ell$ be a vertex originally from the dense part $G_{d_i}$ and let us denote the degree of $v_\ell$ in $G_{d_i}$ by  $\deg_{G_{d_i}} v_\ell$  and its degree in $G_{n_i}$ by $\deg_{G_{n_i}} v_\ell$.  Then 
\[
 P\left( \deg_{G_{n_i}} v_\ell \geq  q_{j, i} \right) \leq \exp \left( -c \frac{m_{s_i}^2}{n_{d_i}^2} \right) + \exp (-c' m_{s_i})
\]
That is, the probability of a node originally in the dense part having a larger degree than that of a star node in $G_{n_i}$ goes to zero. 
\end{lemma}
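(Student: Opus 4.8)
The plan is to interpolate with the deterministic threshold $(1-\epsilon)m_{s_i}p_j$ and split the bad event according to whether the hub degree $q_{j,i}$ falls above or below this threshold. Concretely, I would write
\begin{align}
 P\left( \deg_{G_{n_i}} v_\ell \geq  q_{j, i} \right)
 &\leq P\left( \deg_{G_{n_i}} v_\ell \geq q_{j,i},\ q_{j,i} \geq (1-\epsilon)m_{s_i}p_j \right) + P\left( q_{j,i} < (1-\epsilon)m_{s_i}p_j \right) \notag \\
 &\leq P\left( \deg_{G_{n_i}} v_\ell \geq (1-\epsilon)m_{s_i}p_j \right) + P\left( q_{j,i} < (1-\epsilon)m_{s_i}p_j \right) \, ,
\end{align}
for a fixed small $\epsilon > 0$. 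The first term is then handled directly by Lemma \ref{lemma:WUrandomgraphs4}, which gives $P\left( \deg_{G_{n_i}} v_\ell \geq (1-\epsilon)m_{s_i}p_j \right) \leq \exp\left(-c\, m_{s_i}^2/n_{d_i}^2\right)$, valid because $n_{s_i}/n_{d_i} \to \infty$ for sparse mixtures so the threshold dominates the expected dense-part degree.

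For the second term, I would use that the joining process only adds edges (the \emph{No deletion condition} in Definition \ref{def:joiningrules}), so $q_{j,i} \geq \tilde{q}_{j,i}$ pointwise, hence $P\left( q_{j,i} < (1-\epsilon)m_{s_i}p_j \right) \leq P\left( \tilde{q}_{j,i} < (1-\epsilon)m_{s_i}p_j \right)$. By Lemma \ref{lemma:WUrandomgraphsaboutU}, $\tilde{q}_{j,i}$ is binomial with mean $m_{s_i}p_j$, so the standard lower-tail Chernoff bound (exactly as in the derivation of \eqref{eq:chernoffbounds1}) yields $P\left( \tilde{q}_{j,i} \leq (1-\epsilon)m_{s_i}p_j \right) \leq \exp\left(-m_{s_i}p_j\epsilon^2/2\right) = \exp(-c'm_{s_i})$ with $c' = p_j\epsilon^2/2 > 0$. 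Adding the two contributions gives the claimed bound.

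I do not expect a serious obstacle here: the lemma is essentially a bookkeeping combination of Lemmas \ref{lemma:WUrandomgraphs4} and \ref{lemma:WUrandomgraphsaboutU}. The only point requiring mild care is the choice of $\epsilon$: it must be fixed (independent of $i$) and small enough that both tail bounds are simultaneously nontrivial, and one must note that the $n_{d_i}/n_{s_i} \to 0$ asymptotics guarantee $(1-\epsilon)m_{s_i}p_j$ eventually exceeds $(1+\delta)\mu_{\ell,i}$ so that Lemma \ref{lemma:WUrandomgraphs4} applies with the stated $\delta$. Absorbing the resulting constants into $c$ and $c'$ completes the argument.
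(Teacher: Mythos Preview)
Your proposal is correct and follows essentially the same route as the paper: split on the event $\{q_{j,i} \geq (1-\epsilon)m_{s_i}p_j\}$, bound the first piece via Lemma~\ref{lemma:WUrandomgraphs4}, and bound the second piece by a lower-tail Chernoff inequality for the hub degree. The only cosmetic difference is that you reduce the second term to the binomial $\tilde{q}_{j,i}$ via the No-deletion monotonicity $q_{j,i}\geq\tilde{q}_{j,i}$, whereas the paper applies Chernoff directly to $q_{j,i}$ using the mean from Lemma~\ref{lemma:WUrandomgraphsaboutU2}; your variant is arguably a touch cleaner but the argument is the same.
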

\begin{proof}
    Let $G_{d_i}, G_{s_i}$ and $G_{n_i}$ have $n_i$, $n_{d_i}$ and $n_{s_i}$ number of nodes and $m_i$, $m_{d_i}$ and $m_{s_i}$ number of edges respectively. Then for $\epsilon >0 $
    \ifthenelse{\boolean{twocolumn}}{
      \begin{align}\label{eq:lemmaWU6eq1}
         P& \left( \deg_{G_{n_i}} v_\ell \geq  q_{j, i} \right) \\
         & \quad  = P\left( \deg_{G_{n_i}} v_\ell \geq  q_{j, i} | q_{j, i} \geq (1 - \epsilon) m_{s_i} p_j \right) \times \\
         & \qquad P\left( q_{j, i} \geq (1 - \epsilon) m_{s_i} p_j  \right)  \notag \\
        & \quad + P\left( \deg_{G_{n_i}} v_\ell \geq  q_{j, i} | q_{j, i} \leq (1 - \epsilon) m_{s_i} p_j \right) \times \\
        & \qquad P\left( q_{j, i} \leq (1 - \epsilon) m_{s_i} p_j  \right)  \, , \notag \\
        & \leq P\left( \deg_{G_{n_i}} v_\ell \geq  (1 - \epsilon) m_{s_i} p_j  \right) \times \\
        & \qquad P\left( q_{j, i} \geq (1 - \epsilon) m_{s_i} p_j  \right)   \notag \\
        & \quad + P\left( \deg_{G_{n_i}} v_\ell \geq  q_{j, i} | q_{j, i} \leq (1 - \epsilon) m_{s_i} p_j \right) \times \\
        & \qquad P\left( q_{j, i} \leq (1 - \epsilon) m_{s_i} p_j  \right)  \, , \notag \\ 
        & \leq P\left( \deg_{G_{n_i}} v_\ell \geq  (1 - \epsilon) m_{s_i} p_j  \right) +  \\
        & \qquad P\left( q_{j, i} \leq (1 - \epsilon) m_{s_i} p_j  \right) 
    \end{align}
    }{
      \begin{align}\label{eq:lemmaWU6eq1}
         P\left( \deg_{G_{n_i}} v_\ell \geq  q_{j, i} \right)  & = P\left( \deg_{G_{n_i}} v_\ell \geq  q_{j, i} | q_{j, i} \geq (1 - \epsilon) m_{s_i} p_j \right) P\left( q_{j, i} \geq (1 - \epsilon) m_{s_i} p_j  \right)  \notag \\
        & \quad + P\left( \deg_{G_{n_i}} v_\ell \geq  q_{j, i} | q_{j, i} \leq (1 - \epsilon) m_{s_i} p_j \right) P\left( q_{j, i} \leq (1 - \epsilon) m_{s_i} p_j  \right)  \, , \notag \\
        & \leq P\left( \deg_{G_{n_i}} v_\ell \geq  (1 - \epsilon) m_{s_i} p_j  \right) P\left( q_{j, i} \geq (1 - \epsilon) m_{s_i} p_j  \right)   \notag \\
        & \quad + P\left( \deg_{G_{n_i}} v_\ell \geq  q_{j, i} | q_{j, i} \leq (1 - \epsilon) m_{s_i} p_j \right) P\left( q_{j, i} \leq (1 - \epsilon) m_{s_i} p_j  \right)  \, , \notag \\ 
        & \leq P\left( \deg_{G_{n_i}} v_\ell \geq  (1 - \epsilon) m_{s_i} p_j  \right) +  P\left( q_{j, i} \leq (1 - \epsilon) m_{s_i} p_j  \right) 
    \end{align}
    }
    
From Lemma \ref{lemma:WUrandomgraphs4} we know that 
\[
P\left( \deg_{G_{n_i}} v_\ell \geq  (1 - \epsilon) m_{s_i} p_j  \right) \leq \exp \left( -C \frac{m_{s_i}^2}{n_{d_i}^2}\right) \, .
\]

From Lemma \ref{lemma:WUrandomgraphsaboutU2} we know
\[  \mathbb{E}({q}_{j, i})  = m_{s_i} p_j + \frac{c m_{new_i}}{n_{s_i}} \, . 
\]
Using Chernoff bounds we get
\[ P\left( q_{j, i} \leq (1 - \epsilon) m_{s_i} p_j  \right) \leq \exp\left( -\frac{m_{s_i} p_j \epsilon^2}{2} \ \right) \, .
\]
Substituting in equation \eqref{eq:lemmaWU6eq1} we get
\[
P\left( \deg_{G_{n_i}} v_\ell \geq  q_{j, i} \right) \leq \exp \left( -C \frac{m_{s_i}^2}{n_{d_i}^2}\right) + \exp \left( -c' m_{s_i}\right) \, .
\]
\end{proof}

\proprankpreserving*
\begin{proof}
    Let us explore the condition $q_{j, i} = \deg_{G_{n_i}} v_{(j)}$ for $j \in [k]$. This can only happen when $q_{1,i} > q_{2, i} > \cdots > q_{k, i}$ and when all other nodes have degrees lower than $q_{k, i}$. Let us denote the maximum degree of the other nodes by $\deg_{G_{n_i}} v_u $.
    The probability
    \ifthenelse{\boolean{twocolumn}}{
      \begin{align}\label{eq:bigprobability}
        P\left( \bigcap_{j = 1}^k q_{j, i} = \deg_{G_{n_i}} v_{(j)} \right) & = P\left(q_{1,i} > q_{2, i} > \cdots > q_{k, i} \right) \times  \notag  \\
         & \quad P\left( \deg_{G_{n_i}} v_u  < q_{k,i} \right) \, ,  \notag \\
        & = \prod_{j= 1}^{k-1} P\left(q_{j,i} > q_{j+1, i} \right) \times  \notag \\ 
        & \quad P\left( \deg_{G_{n_i}} v_u  < q_{k, i} \right) \, . 
    \end{align}
    }{
     \begin{align}\label{eq:bigprobability}
        P\left( \bigcap_{j = 1}^k q_{j, i} = \deg_{G_{n_i}} v_{(j)} \right) & = P\left(q_{1,i} > q_{2, i} > \cdots > q_{k, i} \right) \times P\left( \deg_{G_{n_i}} v_u  < q_{k,i} \right) \, ,  \notag \\
        & = \prod_{j= 1}^{k-1} P\left(q_{j,i} > q_{j+1, i} \right) \times P\left( \deg_{G_{n_i}} v_u  < q_{k, i} \right) \, . 
    \end{align}
    }

From Lemma \ref{lemma:WUrandomgraphs5} we know that 
\[
P\left(q_{j+1,i} > q_{j, i} \right) \leq \frac{C}{m_{s_i}} \, ,
\]
making 
\begin{equation}\label{eq:prop1eq1}
  P\left(q_{j,i} > q_{j+1, i} \right) \geq 1 - \frac{C}{m_{s_i}} \, . 
\end{equation}

Let $v_\ell$ be a vertex originally from the dense part $G_{d_i}$ and let us denote the degree of $v_\ell$ in $G_{d_i}$ by  $\deg_{G_{d_i}} v_\ell$  and its degree in $G_{n_i}$ by $\deg_{G_{n_i}} v_\ell$. From Lemma \ref{lemma:WUrandomgraphs6} we know that for vertices in the dense part
\begin{equation} 
P\left( \deg_{G_{n_i}} v_\ell \geq  q_{j, i} \right) \leq \exp \left( -C \frac{m_{s_i}^2}{n_{d_i}^2}\right) + \exp \left( -c' m_{s_i}\right) \, .
\end{equation}
making
\begin{equation} \label{eq:prop1eq2}
P\left( \deg_{G_{n_i}} v_\ell <  q_{j, i} \right) \geq 1 - \exp \left( -C \frac{m_{s_i}^2}{n_{d_i}^2}\right) - \exp \left( -c' m_{s_i}\right) \, .
\end{equation}
Suppose $U$ has $k$ partitions. Then the maximum degree of the ``other'' nodes is obtained by a vertex in the dense part. 
In addition to the vertices in the dense part, there are degree-1 vertices in the sparse part. These are the leaf or non-hub nodes in every star. Their degrees are much smaller than the rest.  
Thus, equation \eqref{eq:prop1eq2} holds for all ``other'' vertices including the vertex which has the maximum degree of these vertices. Combining equations \eqref{eq:prop1eq1} and \eqref{eq:prop1eq2} with equation \eqref{eq:bigprobability} we obtain 
\begin{align}
 P\left( \bigcap_{j = 1}^k \left( q_{j, i} = \deg_{G_{n_i}} v_{(j)} \right) \right)  & \geq \left( 1 - \frac{c_1}{m_{s_i}}\right)^{k-1}\left( 1 - \exp\left(-c_2 \frac{m_{s_i}^2}{n_{d_i}^2} \right)  - \exp\left( -c_3m_{s_i}  \right) \right)   \, ,   \\
 & > \left( 1 - \frac{c_1}{m_{s_i}}\right)^{k}\left( 1 - \exp\left(-c_2 \frac{m_{s_i}^2}{n_{d_i}^2} \right)  - \exp\left( -c_3m_{s_i}  \right) \right) \, , 
\end{align}
where $c_1$ denotes the largest $C$ in equation \eqref{eq:prop1eq1} for different $j \leq k-1$. 

Suppose $U$ has either infinite partitions or $\ell$ finite partitions with $\ell > k$. Then the probability  
 \begin{align}
        P\left( \bigcap_{j = 1}^k q_{j, i} = \deg_{G_{n_i}} v_{(j)} \right) & = P\left(q_{1,i} > q_{2, i} > \cdots > q_{k, i} \right) \times P\left(q_{k,i} > \max_{j \in \{k+1, \ldots  \}} q_{j,i} \right) \times P\left( \deg_{G_{n_i}} v_u  < q_{k,i} \right) \, ,  \notag 
    \end{align}
where $\max_{j \in \{k+1, \ldots  \}} q_{j,i}$ denotes the maximum degree of other hub nodes contributed by $U$.  Similar to equation \eqref{eq:prop1eq1} the probability
\[
P\left(q_{k,i} > \max_{j \in \{k+1, \ldots  \}} q_{j,i} \right)  \geq 1 - \frac{C}{m_{s_i}}
\]
making 
\[
 P\left( \bigcap_{j = 1}^k \left( q_{j, i} = \deg_{G_{n_i}} v_{(j)} \right) \right)   \geq \left( 1 - \frac{c_1}{m_{s_i}}\right)^{k}\left( 1 - \exp\left(-c_2 \frac{m_{s_i}^2}{n_{d_i}^2} \right)  - \exp\left( -c_3m_{s_i}  \right) \right)  \, . 
\]
\end{proof}

\lemmaDegreePredict*
\begin{proof}
    For large $i$ and $j$ with high probability the expression
    \begin{align}
         \frac{ \left| \deg_{G_{n_j}} \hat{v}_{(\ell)} -  \mathbb{E}( q_{\ell,j})  \right|}{m_{s_j}}  & = \frac{1}{m_{s_j}} \left \vert \deg_{G_{n_i}} v_{(\ell)} \frac{n_j}{n_i}  - \left( m_{s_j} p_{\ell} + \frac{c m_{new_j}}{n_{s_j}} \right)  \right\vert  \, , \\
         & = \frac{1}{m_{s_j}} \left \vert q_{\ell, i} \frac{n_j}{n_i}  - \left( m_{s_j} p_{\ell} + \frac{c m_{new_j}}{n_{s_j}} \right)  \right\vert  \, , \\
    \end{align}
    where we have used equation \eqref{eq:degreepred} and Lemma \ref{lemma:WUrandomgraphsaboutU2} for $\mathbb{E}( q_{\ell,j})$ in the first line and the Order Preserving Property to say $\deg_{G_{n_i}} v_{(\ell)} = q_{\ell,i}$  in the second line for large $i$. 
    From Lemma \ref{lemma:WUrandomgraphsaboutU2} we can see that 
    \[
    \lim_{i \to \infty }\frac{\Var(q_{\ell,i})}{\mathbb{E}(q_{\ell,i})^2} = 0 \, , 
    \]
    implying that the observed values of $q_{\ell,i}$ lie close to its expected value for increasing $i$.  Therefore,

    \begin{align}\label{eq:limitratio}
         \frac{ \left| \deg_{G_{n_j}} \hat{v}_{(\ell)} -  \mathbb{E}( q_{\ell,j})  \right|}{m_{s_j}} & \approx \frac{1}{m_{s_j}} \left \vert \left(m_{s_i} p_{\ell} + \frac{c m_{new_i}}{n_{s_i}} \right)  \frac{n_j}{n_i}  - \left( m_{s_j} p_{\ell} + \frac{c m_{new_j}}{n_{s_j}} \right)  \right\vert  \, , \notag \\
        & =  \frac{n_j}{m_{s_j}} \left \vert \left( \frac{m_{s_i} p_{\ell}}{n_i} + \frac{c m_{new_i}}{n_{s_i} n_i} \right)    - \left( \frac{m_{s_j} p_{\ell}}{n_j} + \frac{c m_{new_j}}{n_{s_j} n_j} \right)  \right\vert  \, , \notag \\ 
        & \leq  c_1 \frac{n_j}{m_{s_j}} \left \vert  \frac{m_{s_i} p_{\ell}}{n_i}     -  \frac{m_{s_j} p_{\ell}}{n_j}   \right\vert  \, , \notag \\ 
        & = c_1 \frac{n_j p_{\ell}}{m_{s_j}} \left \vert  \frac{m_{s_i} }{n_i}     -  \frac{m_{s_j}}{n_j} \right\vert  \, ,  \\
        & = c_1p_{\ell} \left \vert 1 - \frac{n_jm_{s_i}}{ n_i m_{s_j}}  \right\vert  \, . 
    \end{align}

where we have used $\frac{c m_{new_i}}{n_{s_i} n_i}$ goes to zero. This is because $m_{new_i} \in \Theta(n_{d_i}^2)$ as a result of the joining process (Definition \ref{def:joiningrules}) and $n_{s_i}/n_{i}$ goes to 1  as $n_{d_i}/n_{s_i}$ goes to zero for sparse graphs. 

The number of nodes and edges in the sparse part are linked by 
\[
m_{s_i} + k_i = n_{s_i}
\]
where $k_i$ denotes the number of stars in the sparse part $G_{s_i}$ and $n_{s_i}$ denote the number of nodes. If $U$ has finite partitions, then $k_i$ is bounded for all large $i$. However, if $U$ has infinite partitions then the number of stars observed $k_i$ will increase with increasing $i$. Nevertheless, $k_i/n_{s_i}$ goes to zero. This is because the number of nodes in a collection of stars grows much faster than the number of stars when we uniformly sample from the disjoint clique graphon $U$. Thus we have
\begin{equation}\label{eq:msiovernsi}
    \lim_{i \to \infty} \frac{m_{s_i}}{n_{s_i}} = 1 \, . 
\end{equation}
Let $n_{d_i}$  denote the number of nodes in the dense part. Then
\begin{align}
    n_{s_i} + n_{d_i} & = n_i \, , \\
    \frac{n_{s_i}}{n_i} + \frac{n_{d_i}}{n_i} & = 1 \, . 
 \end{align}
As $n_{s_i}/n_{d_i}$ goes to infinity for sparse graphs we have $n_{d_i}/n_i$ going to zero making $n_{s_i}/n_i$ tending to one. Combining with \eqref{eq:msiovernsi} we get
\[
\lim_{i \to \infty} \frac{m_{s_i}}{n_i} = 1 \, . 
\]
Substituting this in equation \eqref{eq:limitratio} , we get 
\[
\lim_{i, j \to \infty} \frac{ \left| \deg_{G_{n_j}} \hat{v}_{(\ell)} -  \mathbb{E}( q_{\ell,j})  \right|}{m_{s_j}}  = 0 \, . 
\]
\end{proof}
\section{Estimating $U$ for finite partitions}

Using the Taylor series expansion, we derive the second order approximation of the expectation of a ratio of random variables and the first order approximation for the variance. 

\begin{lemma}\label{lemma:expetationofRVratio}(\cite{stuart2010kendall})
    Let $X$ and $Y$ be two random variables where $Y$ has no mass at zero.   Then the second order Taylor series approximation of  $\mathbb{E}(X/Y)$ is given by
    \[ \mathbb{E}\left( \frac{X}{Y} \right ) =   \frac{\mathbb{E}(X)}{\mathbb{E}(Y)} - \frac{\Cov(X,Y)}{\mathbb{E}(Y)^2} +  \frac{\mathbb{E}(X) \Var(Y)}{\mathbb{E}(Y)^3}    \, , 
    \]
    and the first order Taylor approximation of $\Var(X/Y)$ is given by
    \[
    \Var\left(\frac{X}{Y} \right) = \frac{ \mathbb{E}(X)^2}{\mathbb{E}(Y)^2} \left( \frac{\Var(X)}{ \mathbb{E}(X)^2 } - 2\frac{ \Cov(X,Y)}{ \mathbb{E}(X) \mathbb{E}(Y) } + \frac{\Var(Y)}{ \mathbb{E}(Y)^2} \right) \, . 
    \]
\end{lemma}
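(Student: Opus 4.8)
The plan is to treat this as the classical delta-method expansion: write $Y$ as its mean plus a fluctuation, factor that out of the denominator, expand the resulting $1/(1+\cdot)$ as a geometric series, and then take expectations term by term, truncating at the order stated in the lemma. Concretely, set $\mu_X = \mathbb{E}(X)$ and $\mu_Y = \mathbb{E}(Y)$, and write $X = \mu_X + \delta_X$, $Y = \mu_Y + \delta_Y$ with $\mathbb{E}(\delta_X) = \mathbb{E}(\delta_Y) = 0$. Since $Y$ has no mass at zero we may divide, and
\[
\frac{X}{Y} = \frac{\mu_X + \delta_X}{\mu_Y}\cdot\frac{1}{1 + \delta_Y/\mu_Y} = \frac{\mu_X+\delta_X}{\mu_Y}\left(1 - \frac{\delta_Y}{\mu_Y} + \frac{\delta_Y^2}{\mu_Y^2} - \cdots\right),
\]
where the expansion is understood in the formal, small-fluctuation sense.

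For the mean, I would multiply out and discard every term that is a product of three or more fluctuations, giving
\[
\frac{X}{Y} \approx \frac{\mu_X}{\mu_Y} + \frac{\delta_X}{\mu_Y} - \frac{\mu_X\delta_Y}{\mu_Y^2} - \frac{\delta_X\delta_Y}{\mu_Y^2} + \frac{\mu_X\delta_Y^2}{\mu_Y^3}.
\]
Taking expectations and using $\mathbb{E}(\delta_X)=\mathbb{E}(\delta_Y)=0$, $\mathbb{E}(\delta_X\delta_Y)=\Cov(X,Y)$, $\mathbb{E}(\delta_Y^2)=\Var(Y)$ reproduces exactly the claimed second-order formula $\mathbb{E}(X/Y) = \mu_X/\mu_Y - \Cov(X,Y)/\mu_Y^2 + \mu_X\Var(Y)/\mu_Y^3$. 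For the variance I would instead retain only the first-order part, $X/Y \approx \mu_X/\mu_Y + \delta_X/\mu_Y - \mu_X\delta_Y/\mu_Y^2$, which is affine in $(\delta_X,\delta_Y)$; computing the variance of this affine form gives
\[
\Var\!\left(\frac{X}{Y}\right) \approx \frac{\Var(X)}{\mu_Y^2} - 2\frac{\mu_X\Cov(X,Y)}{\mu_Y^3} + \frac{\mu_X^2\Var(Y)}{\mu_Y^4},
\]
and factoring out $\mu_X^2/\mu_Y^2$ yields the stated expression.

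The main obstacle is not computational but a matter of rigor: the geometric series for $1/(1+\delta_Y/\mu_Y)$ converges only on the event $|\delta_Y| < |\mu_Y|$, and exchanging expectation with the infinite sum requires control of the tails of $\delta_Y$. Hence the two identities should be read as asymptotic approximations, valid in the regime where $Y$ concentrates tightly around its mean — which is precisely the regime ($m_{s_i}\to\infty$, so that the relevant $q_{j,i}$ and their sums have vanishing coefficient of variation) in which we apply the lemma in Propositions \ref{prop:pj} and \ref{prop:pjInfiniteK}. For this reason I would present the short formal derivation above for completeness and otherwise defer to \cite{stuart2010kendall} for the delta-method justification.
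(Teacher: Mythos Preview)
Your proposal is correct and is essentially the same argument as the paper's: the paper writes the bivariate Taylor expansion of $f(x,y)=x/y$ about $(\mu_X,\mu_Y)$, computes the partial derivatives $f''_{xx}=0$, $f''_{xy}=-1/y^2$, $f''_{yy}=2x/y^3$, and takes expectations (with the first-order terms vanishing), which reproduces exactly your five-term expansion $\mu_X/\mu_Y + \delta_X/\mu_Y - \mu_X\delta_Y/\mu_Y^2 - \delta_X\delta_Y/\mu_Y^2 + \mu_X\delta_Y^2/\mu_Y^3$; for the variance it likewise truncates at first order and computes the variance of the affine form. Your geometric-series factoring is just an alternative way of writing down the same Taylor polynomial, and your caveat about the formal nature of the approximation matches the paper's framing.
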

\begin{proof}
For any $f(x, y)$, the second order Taylor expansion about $\bm{a} = (a_x, a_y)$ is given by
    \begin{align}
        f(x,y) & = f(\bm{a}) + f'_x(\bm{a})(x - a_x) +  f'_y(\bm{a})(y - a_y) \\
        & + \frac{1}{2} f''_{xx}(\bm{a})(x - a_x)^2 + f''_{xy}(\bm{a})(x - a_x)(y - a_y) + \frac{1}{2} f''_{yy}(\bm{a})(y - a_y)^2  + R \, ,
    \end{align}
Let $\mathbb{E}(X) = \mu_x$,\,  $\mathbb{E}(Y) = \mu_y$, \,  $f(X,Y) = X/Y$ and $\bm{a} = (\mu_x, \mu_y)$. Then the expectation of the second order approximation is given by
\begin{align}\label{eq:secondorderexp}
    \mathbb{E}(f(X,Y)) & = \mathbb{E}\left( f(\bm{a}) + f'_x(\bm{a})(X - a_x) +  f'_y(\bm{a})(Y - a_y)  \right) \, ,  \notag \\
     & \quad + \mathbb{E}\left( \frac{1}{2} f''_{xx}(\bm{a})(X - a_x)^2 + f''_{xy}(\bm{a})(X - a_x)(Y - a_y) + \frac{1}{2} f''_{yy}(\bm{a})(Y - a_y)^2  \right) \, , \notag \\
     & = f(\bm{a}) + f'_x(\bm{a}) \mathbb{E}(X - \mu_x) +  f'_y(\bm{a})\mathbb{E}(Y - \mu_y)  \notag\\
     & \quad + \frac{1}{2} f''_{xx}\Var(X) + f''_{xy}(\bm{a})\Cov(X,Y) + \frac{1}{2} f''_{yy}\Var(Y) \, , \notag\\
     & = f(\bm{a}) + \frac{1}{2} f''_{xx}(\bm{a})\Var(X) + f''_{xy}(\bm{a})\Cov(X,Y) + \frac{1}{2} f''_{yy}(\bm{a})\Var(Y) 
\end{align}
as $\mathbb{E}(X - \mu_x) = \mathbb{E}(X - \mu_y) = 0$. As $f''_{xx} = 0$, $f''_{xy} = -\frac{1}{y^2}$ and $f''_{yy} = \frac{2x}{y^3}$ we obtain
    \[ \mathbb{E}\left( \frac{X}{Y} \right ) =   \frac{\mathbb{E}(X)}{\mathbb{E}(Y)} - \frac{\Cov(X,Y)}{\mathbb{E}(Y)^2} +  \frac{\mathbb{E}(X) \Var(Y)}{\mathbb{E}(Y)^3}    \, . 
    \]

  The variance is defined as
  \[
  \Var(f(X,Y)) = \mathbb{E}\left( \left[f(X,Y) - \mathbb{E} (f(X,Y))  \right]^2 \right)
  \]
  From equation \eqref{eq:secondorderexp} we let 
  \[
  \mathbb{E}(f(X,Y)) \approx f(\bm{a})
  \]
  and get
  \[
  \Var(f(X,Y)) \approx \mathbb{E}\left( \left[f(X,Y) - f(\bm{a})  \right]^2 \right) \, . 
  \]
  Then we use the first order Taylor expansion for $f(X,Y)$ around $\bm{a}$ inside the expectation term
  \begin{align}
       \Var(f(X,Y)) & \approx \mathbb{E}\left( \left[ f(\bm{a}) + f'_x(\bm{a})(x - a_x) +  f'_y(\bm{a})(y - a_y) - f(\bm{a})   \right]^2 \right) \, , \\
       & = \mathbb{E}\left( \left[  f'_x(\bm{a})(X - a_x) +  f'_y(\bm{a})(Y - a_y)    \right]^2 \right) \, , \\
       & = \mathbb{E}\left(  f'^2_x(\bm{a})(X - a_x)^2 + 2  f'_x(\bm{a})(X - a_x)f'_y(\bm{a})(Y - a_y) +     f'^2_y(\bm{a})(Y - a_y)^2      \right) \, , \\
       & = f'^2_x(\bm{a}) \Var(X) + 2  f'_x(\bm{a}) f'_y(\bm{a}) \Cov(X,Y) +  f'^2_y(\bm{a}) \Var(Y) \, . 
\end{align}
For $f(X,Y) = X/Y$ we have $f'(X) = 1/Y$, $f'(Y) = -\frac{X}{Y^2}$,  $f'^2_x(\bm{a}) = 1/\mu_y^2$,  $f'^2_y(\bm{a}) = \frac{\mu_x^2}{\mu_y^4}$ and $ f'_x(\bm{a}) f'_y(\bm{a})  = -\frac{\mu_x}{\mu_y^3}$. Substituting these we obtain
\begin{align}
    \Var(f(X,Y)) & \approx  \frac{1}{\mu_y^2} \Var(X) -2 \frac{\mu_x}{\mu_y^3}\Cov(X,Y) +  \frac{\mu_x^2}{\mu_y^4} \Var(Y) \, , \\
    & = \frac{\mu_x^2}{\mu_y^2} \left( \frac{\Var(X)}{ \mu_x^2} - 2\frac{\Cov(X,Y)}{\mu_x \mu_y} + \frac{\Var(Y)}{\mu_y^2}  \right) \, , \\
    & = \frac{\mathbb{E}(X)^2}{\mathbb{E}(Y)^2} \left( \frac{\Var(X)}{ \mathbb{E}(X)^2} - 2\frac{\Cov(X,Y)}{\mathbb{E}(X) \mathbb{E}(Y)} + \frac{\Var(Y)}{\mathbb{E}(Y)^2}  \right) \, .
\end{align}

\end{proof}

\subsection{Estimating $p_j$ }
\proppj*
\begin{proof}
    As $\tilde{q}_{j,i}$ is the degree of the star in $G_{s_i}$ corresponding to $p_j \neq 0$ and as $q_{j,i}$ denotes the degree of the corresponding vertex in $G_{n_i}$ we have 
    \[ q_{j, i} = \tilde{q}_{j,i} + m_{j, new_i} \,  
    \]
    where $m_{j, new_i}$ denotes the number of edges added to that vertex as part of the joining process. Then
    \[ \sum_{\ell = 1}^k q_{\ell, i} = \sum_{\ell = 1}^k \left( \tilde{q}_{\ell,i} + m_{\ell, new_i} \right) = m_{s_i} + \sum_{\ell = 1}^k m_{\ell, new_i} \, , 
    \]
    as the sum of the edges in the $k$ stars equal $m_{s_i}$. This is because of two reasons: First we consider a mass-partition $\bm{p}$ with $k$ non-zero entries with $\sum_{\ell=1}^k p_\ell = 1$. That is $G_{s_i}$ has only $k$ stars.  The second reason comes from the construction of  $(U,W)$-mixture graphs (Definition \ref{def:WURandomMixtureGraphs}). Recall we sample $m_{s_i}$ nodes from $U$ and construct a graph, which is a disjoint clique graph. Then we find its inverse line graph, which we call $G_{s_i}$. Consequently  $G_{s_i}$ has $m_{s_i}$ edges, which are spread across $k$ stars.  

    The number of edges $m_{s_i}$, $m_{new_i}$ and the number of nodes ${n_{s_i}}$ are not random variables. They are parameters of the $(U,W)$-mixture graph process. The degree of each node is a random variable as sampling from the graphons are involved in determining the degree.  Let $X_i = q_{j, i}$ and $Y_i = \sum_{\ell=1}^k q_{\ell, i}$.
    
    Then from Lemma \ref{lemma:WUrandomgraphsaboutU2}
    \begin{align}
        \mathbb{E}(X_i) & = m_{s_i}p_j + \frac{c m_{new_i}}{n_{s_i}} \, ,  \\
        \mathbb{E}(Y_i) & = m_{s_i} + \frac{k c m_{new_i}}{n_{s_i}} \, , \\
        \Var(X_i) & = m_{s_i} p_j(1 - p_j)  + m_{new_i}\frac{c}{n_{s_i}}\left( 1 - \frac{c}{n_{s_i}} \right) \, , \\
        \Var(Y_i) & = \Var \left( \sum_{\ell = 1}^k m_{j, new_i} \right) \, , \\
        & = \sum_{\ell = 1}^k \Var \left(m_{j, new_i} \right)  \, , \\
        & = k  m_{new_i}\frac{c}{n_{s_i}}\left( 1 - \frac{c}{n_{s_i}} \right) \, . 
    \end{align}

    The covariance
    \begin{align}
        \Cov(X_i, Y_i) & = \Cov \left(q_{j, i}, \sum_{\ell=1}^k q_{\ell, i} \right) \, , \\
        & = \Cov \left(q_{j, i}, q_{j, i} + \sum_{\substack{\ell = 1 \\ \ell \neq j}}^k q_{\ell, i} \right) \, , \\
        & = \Var( q_{j,i} ) \, , \\
        & = m_{s_i} p_j(1 - p_j)  +  m_{new_i}\frac{c}{n_{s_i}}\left( 1 - \frac{c}{n_{s_i}} \right) \, . 
    \end{align}
    
    From Lemma \ref{lemma:expetationofRVratio} using the second order Taylor series approximation we have
    \begin{equation}\label{eq:expectationofaratio}
        \mathbb{E}\left( \frac{X_i}{Y_i} \right ) =   \frac{\mathbb{E}(X_i)}{\mathbb{E}(Y_i)} - \frac{\Cov(X_i,Y_i)}{\mathbb{E}(Y_i)^2} +  \frac{\mathbb{E}(X_i) \Var(Y_i)}{\mathbb{E}(Y_i)^3}    \, .
    \end{equation}

    We compute each term separately and let $i$ go to infinity.  The first term
    \begin{align}\label{eq:firstterm}
        \frac{\mathbb{E}(X_i)}{\mathbb{E}(Y_i)}  & = \frac{ m_{s_i}p_j + \frac{c m_{new_i}}{n_{s_i}} }{ m_{s_i} + \frac{k c m_{new_i}}{n_{s_i}} } \, , \notag \\
        & = \frac{ p_j + \frac{c m_{new_i}}{n_{s_i} m_{s_i} } }{ 1 + \frac{k c m_{new_i}}{n_{s_i} m_{s_i} }  } \, ,  \notag\\
        \lim_{i \to \infty } \frac{\mathbb{E}(X_i)}{\mathbb{E}(Y_i)} & = p_j \, , 
        \end{align}
    as $m_{new_i} \in \mathcal{O}(n_{d_i}^2)$, $m_{s_i} \in \Theta(n_{s_i})$ and  $n_{d_i} \in o(n_{s_i})$ the ratio $m_{new_i}/(n_{s_i}m_{s_i}) \to 0$. 
    The second term 
    \begin{align}\label{eq:secondterm}
        \frac{\Cov(X_i,Y_i)}{\mathbb{E}(Y_i)^2} & = \frac{ m_{s_i} p_j(1 - p_j)  +  m_{new_i}\frac{c}{n_{s_i}}\left( 1 - \frac{c}{n_{s_i}} \right) }{ \left( m_{s_i} + \frac{k c m_{new_i}}{n_{s_i}} \right)^2 } \, ,  \notag\\
        & = \frac{ m_{s_i} p_j(1 - p_j)  +  m_{new_i}\frac{c}{n_{s_i}}\left( 1 - \frac{c}{n_{s_i}} \right) }{ m_{s_i}^2 + 2 m_{s_i} \frac{k c m_{new_i}}{n_{s_i}} + \left( \frac{k c m_{new_i}}{n_{s_i}} \right)^2  } \, , \notag \\
        & =  \frac{\frac{1}{m_{s_i}} p_j(1 - p_j)  +  \frac{m_{new_i}}{m_{s_i}^2 } \frac{c}{n_{s_i}}\left( 1 - \frac{c}{n_{s_i}} \right) }{ 1 + 2  \frac{k c m_{new_i}}{ m_{s_i} n_{s_i}} + \left( \frac{k c m_{new_i}}{n_{s_i} m_{s_i}} \right)^2} \, ,  \notag\\
        \lim_{i \to \infty}  \frac{\Cov(X_i,Y_i)}{\mathbb{E}(Y_i)^2} & = 0 \, , 
    \end{align}
    as both $m_{new_i}/(n_{s_i}m_{s_i}) \to 0$ and $m_{new_i}/m_{s_i}^2 \to 0$.

    The third term 
    \begin{align}\label{eq:thirdterm}
        \frac{\mathbb{E}(X_i) \Var(Y_i)}{\mathbb{E}(Y_i)^3} & = \frac{ \left( m_{s_i}p_j + \frac{c m_{new_i}}{n_{s_i}} \right)  \left(  k  m_{new_i}\frac{c}{n_{s_i}}\left( 1 - \frac{c}{n_{s_i}} \right) \right)   }{ \left( m_{s_i} + \frac{k c m_{new_i}}{n_{s_i}} \right)^3 } \, ,  \notag\\
        & = \frac{ \left( \frac{p_j}{m_{s_i}} + \frac{c m_{new_i}}{n_{s_i}m_{s_i}} \right)  \left(  k  \frac{m_{new_i}}{m_{s_i}^2} \frac{c}{n_{s_i}}\left( 1 - \frac{c}{n_{s_i}} \right) \right)   }{ \left( 1 + \frac{k c m_{new_i}}{n_{s_i} m_{s_i} } \right)^3 } \, , \notag \\
        \lim_{i\to \infty } \frac{\mathbb{E}(X_i) \Var(Y_i)}{\mathbb{E}(Y_i)^3} & = 0 \, , 
    \end{align}
    as $p_j \leq 1$, $p_j/m_{s_i}$ goes to zero in addition to the terms $m_{new_i}/(n_{s_i}m_{s_i})$ and $m_{new_i}/m_{s_i}^2$. Therefore, using the second order Taylor series approximation in equation \eqref{eq:expectationofaratio} and equations \eqref{eq:firstterm}, \eqref{eq:secondterm} and \eqref{eq:thirdterm}  we obtain 
     \[\lim_{i \to \infty}  \mathbb{E}\left( \frac{X}{Y} \right ) =  \lim_{i \to \infty}  \mathbb{E}\left( \frac{q_{j, i}}{\sum_{\ell=1}^k q_{\ell, i} } \right) = p_j \, .
    \]   
    Furthermore from the above computations we have
    \[
    \frac{\Cov(X_i,Y_i)}{\mathbb{E}(Y_i)^2} \leq \frac{c}{m_{s_i}} \, , 
    \]
    and
    \[
     \frac{\mathbb{E}(X_i) \Var(Y_i)}{\mathbb{E}(Y_i)^3} \leq \frac{c}{m_{s_i}^2} \, , 
    \]
    where each $c$ denotes a different constant. Thus we have
    \[
    \left\vert \mathbb{E}\left( \frac{q_{j, i}}{\sum_{\ell=1}^k q_{\ell, i} } \right) - p_j  \right\vert \leq \frac{c}{m_{s_i}} \, . 
    \]
    From Lemma \ref{lemma:expetationofRVratio} we know

    \[
    \Var\left(\frac{X}{Y} \right) = \frac{ \mathbb{E}(X)^2}{\mathbb{E}(Y)^2} \left( \frac{\Var(X)}{ \mathbb{E}(X)^2 } - 2\frac{ \Cov(X,Y)}{ \mathbb{E}(X) \mathbb{E}(Y) } + \frac{\Var(Y)}{ \mathbb{E}(Y)^2} \right) \, . 
    \]
    We consider the three terms inside the paranthesis separately.  The first term 
    \begin{align}
        \frac{\Var(X_i)}{ \mathbb{E}(X_i)^2 }  & = \frac{m_{s_i} p_j(1 - p_j)  + m_{new_i}\frac{c}{n_{s_i}}\left( 1 - \frac{c}{n_{s_i}} \right) }{ \left( m_{s_i}p_j + \frac{c m_{new_i}}{n_{s_i}} \right)^2 } \, , \\
        & = \frac{\frac{1}{m_{s_i}} p_j(1 - p_j)  + \frac{m_{new_i}}{m_{s_i^2}}\frac{c}{n_{s_i}}\left( 1 - \frac{c}{n_{s_i}} \right) }{ \left( p_j + \frac{c m_{new_i}}{n_{s_i} m_{s_i}} \right)^2 } \, , \\
        \lim_{i \to \infty} \frac{\Var(X_i)}{ \mathbb{E}(X_i)^2 }  & = \lim_{i \to \infty} \frac{\frac{1}{m_{s_i}} p_j(1 - p_j)  + \frac{m_{new_i}}{m_{s_i^2}}\frac{c}{n_{s_i}}\left( 1 - \frac{c}{n_{s_i}} \right) }{ \left( p_j + \frac{c m_{new_i}}{n_{s_i} m_{s_i}} \right)^2 } \, , \\
        \lim_{i \to \infty} \frac{\Var(X_i)}{ \mathbb{E}(X_i)^2 }  & = 0 \, .
    \end{align}
    The second term 
    \begin{align}
        \frac{ \Cov(X_i,Y_i)}{ \mathbb{E}(X_i) \mathbb{E}(Y_i) } & = \frac{m_{s_i} p_j(1 - p_j)  +  m_{new_i}\frac{c}{n_{s_i}}\left( 1 - \frac{c}{n_{s_i}} \right)}{ \left(m_{s_i}p_j + \frac{c m_{new_i}}{n_{s_i}} \right) \left( m_{s_i} + \frac{k c m_{new_i}}{n_{s_i}}\right) } \, , \\
        & = \frac{ \frac{1}{m_{s_i}} p_j(1 - p_j)  +   \frac{m_{new_i}}{m_{s_i^2}} \frac{c}{n_{s_i}} \left( 1 - \frac{c}{n_{s_i}} \right)}{ \left(p_j + \frac{c m_{new_i}}{n_{s_i}m_{s_i}} \right) \left( 1 + \frac{k c m_{new_i}}{n_{s_i}m_{s_i}}\right) } \, , \\
        \lim_{i \to \infty} \frac{ \Cov(X_i,Y_i)}{ \mathbb{E}(X_i) \mathbb{E}(Y_i) } = 0 \, .
    \end{align}
    The term term is similar to the first term. Combining these three terms we get using the first order approximation
    \[
    \lim_{i \to \infty} \Var\left( \frac{X}{Y} \right) = 0 \, . 
    \]
    Similar to the computation of the expectation we get 
    \[
    \Var\left( \frac{X}{Y} \right) \leq \frac{c}{m_{s_i}} \, . 
    \]
 \end{proof}

\subsection{Estimating $k$}
\begin{proposition}\label{prop:krelated1}
 Let $\{G_{n_i}\}_i$ be a sequence of sparse $(U,W)$-mixture graphs (Definition \ref{def:WURandomMixtureGraphs}) with dense and sparse parts $G_{d_i}$ and $G_{s_i}$ respectively. Let 
 $\bm{p} = (p_1, p_2, \ldots )$ be the mass-partition (Definition \ref{def:masspartition}) associated with $U$ with $k$ finite partitions.
 Let $\tilde{q}_{j,i}$ be the degree of the star in $G_{s_i}$ corresponding to $p_j \neq 0$. Let $q_{j,i}$ denote the degree of the corresponding vertex in $G_{n_i}$. Let $\deg v_{(\ell)}$ denote the $\ell$th largest degree in $G_{n_i}$. Suppose $\deg {v}_{(\ell)} = q_{\ell, i}$ for $\ell \in \{1, \ldots, k\}$ and let $\alpha \geq 0$ be a constant. Then for any $\ell \in \{1, \ldots, (k -1) \}$
 \[
 P \left( (1 + \alpha) q_{\ell, i} n_{d_i} > q_{k, i} q_{\ell+1, i} \right) \leq \frac{c}{m_{s_i}}
 \]
 where $n_{d_i}$  denotes the number of nodes in the dense part. Consequently, with high probability
 \[ \log q_{k, i} - \log(1 + \alpha ) n_{d_i} \geq \log  q_{\ell, i} - \log q_{\ell +1, i} \, . 
 \]
 That is, in the log scale the difference between successive top $k$ degrees is largest at $k$, where we have considered the highest degree in the dense part to be $n_{d_i}$. 
\end{proposition}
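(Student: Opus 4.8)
The plan is to reduce the statement to a second-moment (Chebyshev) estimate, in the spirit of the proof of Lemma~\ref{lemma:WUrandomgraphs5}. Define the random variable
\[
X_i = q_{k,i}\, q_{\ell+1,i} - (1+\alpha)\, q_{\ell,i}\, n_{d_i},
\]
so that the event $\{(1+\alpha) q_{\ell,i} n_{d_i} > q_{k,i} q_{\ell+1,i}\}$ is exactly $\{X_i < 0\}$, and the ``consequently'' line is obtained by taking logarithms on the complement. First I would compute $\mathbb{E}(X_i)$ using Lemma~\ref{lemma:WUrandomgraphsaboutU2}: each $q_{j,i}$ has mean $m_{s_i}p_j + c\, m_{new_i}/n_{s_i}$, and $\Cov(q_{k,i},q_{\ell+1,i})$, arising from the multinomial structure of the clique sizes (Lemma~\ref{lemma:WUrandomgraphsaboutU}) together with the joining process, is only of order $m_{s_i}$. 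Since $m_{s_i}\in\Theta(n_{s_i})$, $m_{new_i}\in\mathcal{O}(n_{d_i}^2)$ and $n_{d_i}\in o(n_{s_i})$ for sparse mixtures (Lemma~\ref{lemma:WUrandomgraphs1}), the leading term of $\mathbb{E}(q_{k,i}q_{\ell+1,i})$ is $m_{s_i}^2 p_k p_{\ell+1}$, whereas $\mathbb{E}\big((1+\alpha)q_{\ell,i}n_{d_i}\big)\in\Theta(m_{s_i}n_{d_i})$ is of strictly smaller order because $m_{s_i}/n_{d_i}\to\infty$. Hence $\mathbb{E}(X_i) = m_{s_i}^2 p_k p_{\ell+1}\,(1+o(1))$, positive and of order $m_{s_i}^2$ for all large $i$ (recall $0<p_k\le p_{\ell+1}\le p_\ell$).

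Next I would bound $\Var(X_i)$. All degrees are bounded by the number of nodes, so every moment is finite; writing each $q_{j,i}$ as its mean ($\Theta(m_{s_i})$) plus a fluctuation of variance $\Theta(m_{s_i})$ (Lemma~\ref{lemma:WUrandomgraphsaboutU2}), the product $q_{k,i}q_{\ell+1,i}$ has variance $\mathcal{O}(m_{s_i}^3)$, the term $(1+\alpha)q_{\ell,i}n_{d_i}$ has variance $\mathcal{O}(m_{s_i}n_{d_i}^2)\in o(m_{s_i}^3)$, and the covariance between the two is $\mathcal{O}(m_{s_i}^2 n_{d_i})\in o(m_{s_i}^3)$ by Cauchy--Schwarz. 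Thus $\Var(X_i)\in\mathcal{O}(m_{s_i}^3)$. Applying Chebyshev's inequality exactly as in Lemma~\ref{lemma:WUrandomgraphs5},
\[
P(X_i<0) = P\big(X_i - \mathbb{E}(X_i) \le -\mathbb{E}(X_i)\big) \le P\big(|X_i - \mathbb{E}(X_i)| \ge \mathbb{E}(X_i)\big) \le \frac{\Var(X_i)}{\mathbb{E}(X_i)^2} \le \frac{c}{m_{s_i}},
\]
which is the first displayed bound. For the second display, on the complementary event --- which holds with probability at least $1-c/m_{s_i}$, hence with high probability --- we have $(1+\alpha)q_{\ell,i}n_{d_i}\le q_{k,i}q_{\ell+1,i}$ with all factors positive, so taking logarithms and rearranging gives $\log q_{k,i}-\log\big((1+\alpha)n_{d_i}\big) \ge \log q_{\ell,i}-\log q_{\ell+1,i}$. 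Under the hypothesis $\deg v_{(\ell)}=q_{\ell,i}$ for $\ell\in\{1,\dots,k\}$, this is exactly the assertion that, on the log scale, the gap at position $k$ (measured against the dense part's maximal degree, bounded by $(1+\alpha)n_{d_i}$) is the largest of the top-$k$ gaps.

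I expect the main obstacle to be the variance bound: one must keep careful track of the order of $\Var(q_{k,i}q_{\ell+1,i})$ despite the negative correlation between the two clique sizes and the additional randomness injected by the joining process, and verify that no cross term inflates $\Var(X_i)$ beyond $\mathcal{O}(m_{s_i}^3)$, so that $\Var(X_i)/\mathbb{E}(X_i)^2$ genuinely decays like $1/m_{s_i}$ and not more slowly. A secondary point is to confirm that the $o(1)$ comparisons ($m_{new_i}/(n_{s_i}m_{s_i})\to0$ and $n_{d_i}/m_{s_i}\to0$) can be absorbed into the constant $c$ for all large $i$; these follow from the sparsity relations $m_{s_i}\in\Theta(n_{s_i})$ and $n_{s_i}/n_{d_i}\to\infty$ already established.
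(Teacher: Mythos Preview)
Your proposal is correct and follows essentially the same route as the paper: define the random variable $X_i$ (the paper uses the opposite sign), show $\mathbb{E}(X_i)\in\Theta(m_{s_i}^2)$ and $\Var(X_i)\in\mathcal{O}(m_{s_i}^3)$ via Lemma~\ref{lemma:WUrandomgraphsaboutU2}, apply Chebyshev to get the $c/m_{s_i}$ bound, and take logarithms on the complement. If anything you are more careful than the paper, which asserts that $q_{k,i}$ and $q_{\ell+1,i}$ are independent given $m_{s_i}$ (whereas the underlying multinomial clique sizes are actually negatively correlated, as you note); this does not affect the order of the variance, so both arguments go through.
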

\begin{proof}
   Let $X_i = (1 + \alpha ) q_{\ell, i} n_{d_i} - q_{k, i}q_{\ell +1, i}$. We show that $P(X_i >0 )$ goes to zero. From Lemma \ref{lemma:WUrandomgraphsaboutU2} we know
   \begin{align}
       \mathbb{E}(q_{\ell, i}) & = m_{s_i}p_\ell + \frac{c m_{new_i}}{n_{s_i}} = m_{s_i}p_\ell + c_{e_i}  \, ,  \\
        \Var(q_{\ell, i}) & = m_{s_i} p_\ell(1 - p_\ell)  + m_{new_i}\frac{c}{n_{s_i}}\left( 1 - \frac{c}{n_{s_i}} \right)  \\
        & = m_{s_i} p_\ell(1 - p_\ell) + c_{v_i} \, ,
   \end{align}
   where $c_{e_i} = \frac{c m_{new_i}}{n_{s_i}} $ and $c_{v_i} =  m_{new_i}\frac{c}{n_{s_i}}\left( 1 - \frac{c}{n_{s_i}} \right)$. Note that $\frac{c_{e_i}}{m_{s_i}}$ and $\frac{c_{v_i}}{m_{s_i}}$ goes to zero as $m_{new_i} \in \mathcal{O}(n_{d_i}^2)$ and as $n_{d_i}/n_{s_i}$ goes to zero. 
   Given $m_{s_i}$, the random variables $q_{k, i}$ and $q_{\ell, i}$ are independent for $k \neq \ell$. Hence
   \begin{align}\label{eq:expectedvalXi}
       \mathbb{E}(X_i) & = \mathbb{E}\left( (1 + \alpha ) q_{\ell, i} n_{d_i} - q_{k, i}q_{\ell +1, i} \right) \, ,  \notag \\
       & = (1 + \alpha )n_{d_i} \mathbb{E}\left( q_{\ell, i}\right) - \mathbb{E}\left( q_{k, i}  \right)  \mathbb{E}\left( q_{\ell +1, i}\right) \, ,   \notag \\
       & = (1 + \alpha )n_{d_i} \left( m_{s_i}p_\ell + c_{e_i} \right) - \left( m_{s_i}p_k + c_{e_i} \right) \left( m_{s_i}p_{\ell+1} + c_{e_i} \right) \, ,  \notag\\ 
       \frac{\mathbb{E}(X_i)}{m_{s_i}^2} & = \frac{(1 + \alpha )n_{d_i}}{m_{s_i}} \left( p_\ell + \frac{c_{e_i}}{m_{s_i}} \right) - \left( p_k + \frac{c_{e_i}}{m_{s_i}} \right) \left( p_{\ell+1} + \frac{c_{e_i}}{m_{s_i}} \right) \, ,  \notag \\
        \lim_{i \to \infty} \frac{\mathbb{E}(X_i)}{m_{s_i}^2} & = - p_k p_{\ell+1} \, , 
    \end{align}
    as $n_{d_i}/m_{s_i}$ goes to zero because $n_{d_i}/n_{s_i}$ goes to zero and $m_{s_i} \in \Theta(m_{s_i})$. 
    The variance of the product of two independent random variables $A$ and $B$ is given by
    \[ \Var(AB) = \Var(A) \Var(B) + \mathbb{E}(A)^2 \Var(B) + \mathbb{E}(B)^2 \Var(A) \, . 
    \]
    Using the product formula we get
    \begin{align}\label{eq:varianceXifirstterm}
        \Var(q_{k,i} q_{\ell +1, i)}) & =\Var(q_{k,i}) \Var(q_{\ell +1, i)}) + \mathbb{E}(q_{k,i})^2 \Var(q_{\ell +1, i)}) + \mathbb{E}(q_{\ell +1, i)})^2 \Var(q_{k,i})  \, , \notag \\
             &  =  \left( m_{s_i} p_k(1 - p_k) + c_{v_i}  \right)  \left( m_{s_i} p_{\ell+1}(1 - p_{\ell+1}) + c_{v_i}  \right) \, , \notag \\
             & \quad + \left( m_{s_i}p_k + c_{e_i} \right)^2\left( m_{s_i} p_{\ell+1}(1 - p_{\ell+1}) + c_{v_i}  \right) \, , \notag \\
             & \quad + \left( m_{s_i}p_{\ell +1} + c_{e_i} \right)^2\left( m_{s_i} p_{k}(1 - p_{k}) + c_{v_i}  \right) \, , \notag \\
         \frac{\Var(q_{k,i} q_{\ell +1, i)})}{m_{s_i}^3} & = \frac{1}{m_{s_i}}\left(  p_k(1 - p_k) + \frac{c_{v_i}}{m_{s_i}}  \right)  \left(  p_{\ell+1}(1 - p_{\ell+1}) + \frac{c_{v_i}}{m_{s_i}}  \right) \, , \notag  \\  
         & \quad + \left( p_k + \frac{c_{e_i}}{m_{s_i}} \right)^2\left(  p_{\ell+1}(1 - p_{\ell+1}) + \frac{c_{v_i}}{m_{s_i}}  \right) \, , \notag \\
           & \quad + \left( p_{\ell+1} + \frac{c_{e_i}}{m_{s_i}} \right)^2\left(  p_{k}(1 - p_{k}) + \frac{c_{v_i}}{m_{s_i}}  \right) \, , \notag \\
        \lim_{i \to \infty}  \frac{\Var(q_{k,i} q_{\ell +1, i)})}{m_{s_i}^3} & = p_k^2 p_{\ell+1}(1 - p_{\ell+1})  +  p_{\ell+1}^2p_{k}(1 - p_{k})
    \end{align}
    As $n_{d_i}$ is not a random variable  we get
    \begin{align}\label{eq:varianceXisecondterm}
        \Var((1 + \alpha )n_{d_i} q_{\ell, i}) & = (1 + \alpha )^2 n_{d_i}^2 \Var(q_{\ell, i}) \, , \\
        & =  (1 + \alpha )^2 n_{d_i}^2 \left( m_{s_i} p_\ell(1 - p_\ell) + c_{v_i}  \right)  \, , \notag \\
        \frac{ \Var(n_{d_i} q_{\ell, i})}{m_{s_i}^3} & =  \frac{(1 + \alpha )^2n_{d_i}^2}{m_{s_i}^2} \left(  p_\ell(1 - p_\ell) + \frac{ c_{v_i}}{m_{s_i}}  \right) \, , \notag \\
         \lim_{i \to \infty }\Var(n_{d_i} q_{\ell, i}) & = 0\, , 
    \end{align}
    as $n_{d_i}/m_{s_i}$ goes to zero. Combining terms for $X_i = (1 + \alpha )q_{\ell, i} n_{d_i} - q_{k, i}q_{\ell +1, i}$ in equations \eqref{eq:expectedvalXi}, \eqref{eq:varianceXifirstterm} and \eqref{eq:varianceXisecondterm} we get
    \begin{align}\label{eq:varexpratio}
        \Var (X_i) & = \Var\left( (1 + \alpha )q_{\ell, i} n_{d_i} \right) + \Var \left( q_{k, i}q_{\ell +1, i} \right) \, , \notag  \\
        \lim_{i \to \infty} \frac{\Var (X_i)}{m_{s_i}^3} & = p_k^2 p_{\ell+1}(1 - p_{\ell+1})  +  p_{\ell+1}^2p_{k}(1 - p_{k})\, , \notag  \\
        \lim_{i \to \infty} \frac{ \Var(X_i)}{\mathbb{E}(X_i)^2} & = \lim_{i \to \infty} \frac{ \Var(X_i)/{m_{s_i}^4} }{ \mathbb{E}(X_i)^2/{m_{s_i}^4} } \, , \notag  \\ 
        & = \lim_{i \to \infty} \frac{\frac{1}{m_{s_i}} \left( p_k^2 p_{\ell+1}(1 - p_{\ell+1})  +  p_{\ell+1}^2p_{k}(1 - p_{k}) \right) }{(p_k p_{\ell+1})^2} =  0 \, , 
    \end{align}
    as $\Var(X_i) \in \Theta(m_{s_i}^3)$ and $\mathbb{E}(X_i) \in \Theta(m_{s_i}^2)$. Furthermore
    \begin{equation}
     \frac{ \Var(X_i)}{\mathbb{E}(X_i)^2} \leq \frac{c}{m_{s_i}} \, , 
    \end{equation}
    for some $c$. 
    As shown in equation  \ref{eq:expectedvalXi} for large $i$, $\mathbb{E}(X_i) \leq 0$. We know that
    \begin{align}\label{eq:xpositiveprobability}
P(X_i >0  ) & = P(X_i - \mathbb{E}(X_i) \geq - \mathbb{E}(X_i)) \, , \notag \\
& = P(X_i - \mathbb{E}(X_i) \geq |\mathbb{E}(X_i)|) \quad \text{as } -\mathbb{E}(X_i) = | \mathbb{E}(X_i)| \, ,  \notag \\
& \leq P(|X_i - \mathbb{E}(X_i)| \geq |\mathbb{E}(X_i)|)  \, , \notag \\
& \leq \frac{\Var(X_i)}{\mathbb{E}(X_i)^2}
\end{align}
where we have used Chebyshev's inequality. Using equations \eqref{eq:xpositiveprobability} and \eqref{eq:varexpratio} we obtain
\begin{align}
     P \left( (1 + \alpha)q_{\ell, i} n_{d_i} > q_{k, i} q_{\ell+1, i} \right)  & = P(X_i >0  )  \, , \\
    & \leq  \frac{\Var(X_i)}{\mathbb{E}(X_i)^2} =  0 \, ,  \\
    & \leq \frac{c}{m_{s_i}}
\end{align}
giving us
\begin{align}
    \lim_{i \to \infty}  P \left( (1 + \alpha)q_{\ell, i} n_{d_i} > q_{k, i} q_{\ell+1, i} \right)   =  0 \, . 
\end{align}

Consequently 
\[  \lim_{i \to \infty} P \left((1 + \alpha) q_{\ell, i} n_{d_i} \leq q_{k, i} q_{\ell+1, i} \right) = 1 \, . 
\]
Therefore, with high probability we have
\begin{align}
 q_{k, i} q_{\ell+1, i}    & \geq  (1 + \alpha) q_{\ell, i} n_{d_i}\, , \\
  \log \left( \frac{ q_{k, i} q_{\ell+1, i}  }{(1 + \alpha) q_{\ell, i} n_{d_i} } \right)  & \geq 0 \, , \\
  \log q_{k, i}  - \log (1 + \alpha)n_{d_i} & \geq \log q_{\ell, i} - \log q_{\ell+1, i} \, . 
\end{align}
That is, if we take successive differences of the top-$k$ degrees in the log scale,  with high probability the difference between the $k$th largest degree and $(1 + \alpha)n_{d_i}$ is larger than the other successive degree differences for $\ell < k$ where we have used the $(1 + \alpha) n_{d_i}$ in place of the highest degree in the dense part $G_{d_i}$.  
\end{proof}

\begin{lemma}\label{lemma:densehighestdegree}
     Let $\{G_{n_i}\}_i$ be a sequence of sparse $(U,W)$-mixture graphs (Definition \ref{def:WURandomMixtureGraphs}) with dense and sparse parts $G_{d_i}$ and $G_{s_i}$ respectively. Let $\text{max\_deg}_{\widetilde{G}_{d_i}}$ denote the maximum degree in $G_{n_i}$ restricted to nodes from $G_{d_i}$. That is, $\text{max\_deg}_{\widetilde{G}_{d_i}}$ belongs to a node from the dense part $G_{d_i}$. Then 
     \[
      \mathbb{E} \left( \text{max\_deg}_{\widetilde{G}_{d_i}} \right) \leq (1 + \alpha) n_{d_i} \, , 
     \]
   where $\alpha$ depends on $W$ and the joining mechanism. 
\end{lemma}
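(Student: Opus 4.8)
The plan is to bound the restricted maximum degree by the largest degree any vertex can have inside $G_{d_i}$ plus the largest number of edges any dense vertex can acquire during joining, and to estimate each piece. Write $\text{max\_deg}_{\widetilde{G}_{d_i}}=\max_\ell \deg_{G_{n_i}}v_\ell$, the maximum over the vertices $v_\ell$ of $G_{n_i}$ that originate in the dense part $G_{d_i}$, and split $\deg_{G_{n_i}}v_\ell=\deg_{G_{d_i}}v_\ell+X_\ell$, where $X_\ell$ counts the edges added to $v_\ell$ by the joining process. Then $\text{max\_deg}_{\widetilde{G}_{d_i}}\le(n_{d_i}-1)+\max_\ell X_\ell$, since $\deg_{G_{d_i}}v_\ell\le n_{d_i}-1$ deterministically. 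By the Random edges condition (Definition \ref{def:joiningrules}) and the computation in the proof of Lemma \ref{lemma:expnewedges}, conditional on the realised $G_{d_i}$ each $X_\ell$ is distributed as $\mathrm{Binomial}(m_{new_i},c_1/n_{d_i})$, with mean $\bar\mu_i:=c_1 m_{new_i}/n_{d_i}$; and by the New edges condition $m_{new_i}=c'm_{d_i}\le c'\binom{n_{d_i}}{2}$, so $\bar\mu_i\le c_1c'(n_{d_i}-1)/2=O(n_{d_i})$.

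The crux is that $\mathbb{E}(\text{max\_deg}_{\widetilde{G}_{d_i}})$ is an expected maximum rather than a maximum of expectations, so I pass through concentration to control $\mathbb{E}(\max_\ell X_\ell)$. Conditioning on $G_{d_i}$, a Chernoff/Bernstein bound for the Binomial gives, for a suitable absolute constant $C$, $P\bigl(X_\ell\ge \bar\mu_i+C(\sqrt{\bar\mu_i\log n_{d_i}}+\log n_{d_i})\mid G_{d_i}\bigr)\le n_{d_i}^{-3}$; a union bound over the at most $n_{d_i}$ dense vertices together with integrating the tail (using the crude bound $\max_\ell X_\ell\le m_{new_i}=O(n_{d_i}^2)$ to control the residual mass) yields
\[
\mathbb{E}\bigl(\max_\ell X_\ell \mid G_{d_i}\bigr)\ \le\ \bar\mu_i + C'\bigl(\sqrt{\bar\mu_i\log n_{d_i}}+\log n_{d_i}\bigr)\ \le\ \bar\mu_i + o(n_{d_i}),
\]
where the $o(n_{d_i})$ term depends only on $n_{d_i}$ because $\bar\mu_i\le c_1 c' n_{d_i}/2$. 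Taking the outer expectation over $G_{d_i}$, using the standard identity $\mathbb{E}(m_{d_i})=\binom{n_{d_i}}{2}\bar W$ with $\bar W:=\int_{[0,1]^2}W(x,y)\,dx\,dy$, and applying Jensen's inequality to the square-root term gives $\mathbb{E}(\max_\ell X_\ell)\le \tfrac12 c_1 c'\bar W\,(n_{d_i}-1)+o(n_{d_i})$.

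Combining the two pieces,
\[
\mathbb{E}\bigl(\text{max\_deg}_{\widetilde{G}_{d_i}}\bigr)\ \le\ (n_{d_i}-1) + \tfrac12 c_1 c'\bar W\,(n_{d_i}-1) + o(n_{d_i})\ \le\ (1+\alpha)\,n_{d_i}
\]
for all large $i$ and any constant $\alpha>\tfrac12 c_1 c'\bar W$, which absorbs the lower-order terms; here $c_1,c'$ come from the joining mechanism and $\bar W$ from $W$, as claimed (a uniform-in-$i$ version holds at the price of a larger $\alpha$, since the $o(n_{d_i})$ terms are bounded by $n_{d_i}$ times an absolute constant). The main obstacle is the middle step — passing from the per-vertex Binomial estimate to a bound on the expected maximum over $\Theta(n_{d_i})$ possibly dependent variables — which needs a concentration inequality, a union bound, and care with the residual tail mass; a secondary technical nuisance is that $m_{new_i}$ is random through $m_{d_i}$, which is why the argument conditions on $G_{d_i}$ throughout and takes the outer expectation only at the very end.
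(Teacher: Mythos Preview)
Your argument is correct and follows the same decomposition as the paper: split the degree of a dense vertex into its degree inside $G_{d_i}$ plus the edges picked up during joining, bound the first piece by $n_{d_i}-1$ deterministically, and bound the second via $m_{new_i}=c'm_{d_i}$ together with $m_{d_i}\le\binom{n_{d_i}}{2}$, arriving at a constant multiple of $n_{d_i}$ controlled by $W$ (through $\bar W$ or $\rho_i$) and the joining constants $c_1,c'$.

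The difference is in how the maximum is handled. The paper simply bounds $\mathbb{E}(\deg_{G_{n_i}}v_\ell)\le(1+c'\rho_i)n_{d_i}$ for each fixed dense vertex $v_\ell$ via Lemma~\ref{lemma:expnewedges} and then reads this off as the claimed bound on $\mathbb{E}(\text{max\_deg}_{\widetilde{G}_{d_i}})$, i.e.\ it tacitly identifies the maximum of the per-vertex expectations with the expectation of the maximum. You instead treat the expected maximum honestly, invoking a Chernoff tail bound on the Binomial increments $X_\ell$, a union bound over the $n_{d_i}$ dense vertices, and the trivial bound $\max_\ell X_\ell\le m_{new_i}=O(n_{d_i}^2)$ to integrate the residual tail; this shows the fluctuation of $\max_\ell X_\ell$ about $\bar\mu_i$ is $O(\sqrt{n_{d_i}\log n_{d_i}})=o(n_{d_i})$ and hence is absorbed into $\alpha$. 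What your route buys is a rigorous control of $\mathbb{E}(\max_\ell\cdot)$ rather than $\max_\ell\mathbb{E}(\cdot)$; what the paper's route buys is brevity, at the cost of leaving that passage implicit.
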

\begin{proof}
Suppose $v_{\ell}$ is originally a vertex in the dense part $G_{d_i}$ Then, from Lemma \ref{lemma:expnewedges}
   \begin{align}
          \mathbb{E}\left( \deg_{G_{n_i}} v_\ell \right)  & = \deg_{G_{d_i}} v_\ell  + c_1 \frac{m_{new_i}}{n_{d_i}} \, ,  \\
          & \leq  \deg_{G_{d_i}} v_\ell  + c' \frac{m_{d_i}}{n_{d_i}}\,  , \\
          & =  \deg_{G_{d_i}} v_\ell  + c'n_{d_i} \frac{m_{d_i}}{n_{d_i}^2}\,  , \\
          & \leq n_{d_i}  + c'n_{d_i} \rho_i\,  , \\
          & = (1 + c'\rho_i) n_{d_i}
   \end{align}
    where number of nodes in the dense part $n_{d_i}$ is a large upper bound for the largest degree in $G_{d_i}$.  The constant $c_1$ and $m_{new_i}$ depend on the joining mechanism (Definition \ref{def:joiningrules}) and $m_{new_i} \leq c m_{d_i}$. The density of $G_{d_i}$ is denoted by $\rho_i$, which is a bounded quantity and depends on $W$. 
\end{proof}

\begin{lemma}\label{lemma:krelated2}
 Let $\{G_{n_i}\}_i$ be a sequence of sparse $(U,W)$-mixture graphs (Definition \ref{def:WURandomMixtureGraphs}) with dense and sparse parts $G_{d_i}$ and $G_{s_i}$ respectively. Suppose $W$ satisfies Assumption \ref{assump:1}. 
 We consider the degrees in $G_{d_i}$. Let $r_{x_j,i}$ and $r_{x_j + \epsilon_i,i}$ denote the degree of two nodes corresponding to $D(x_j)$ and $D(x_j + \epsilon_i)$ for $D(x_j), D(x_j + \epsilon_i) > \xi > 0$ for some $\xi >0$. Suppose $\lim_{i \to \infty} \epsilon_i = 0$. Then using the second order approximation of the Taylor expansion, we have
 \[
 \left\vert \mathbb{E} \left( \frac{r_{x_j,i} }{r_{x_j + \epsilon_i,i}} \right) - 1 \right\vert \leq \frac{c}{n_{d_i}}
 \]
 That is, if $W$ has a continuous degree function $D(x)$, then as $i$ goes to infinity, excluding small degrees the ratio of consecutive degrees in $G_{d_i}$ tend to one.
\end{lemma}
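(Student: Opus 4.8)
The plan is to express each of the two degrees as a sum of conditionally independent Bernoulli variables, read off its first two moments, and substitute them into the second-order ratio expansion of Lemma~\ref{lemma:expetationofRVratio}. Write $n=n_{d_i}$ and let $u,w$ be the two distinguished vertices of $G_{d_i}\sim\mathbb{G}(n_{d_i},W)$ (Definition~\ref{def:wrandomgraphs}) placed at latent coordinates $x_j$ and $x_j+\epsilon_i$. Conditioning on these two coordinates, the remaining $n-2$ latent coordinates $y_1,\dots,y_{n-2}$ are i.i.d.\ uniform and all edge indicators are conditionally independent, so
\[
X:=r_{x_j,i}=E_{uw}+\sum_{k=1}^{n-2}E_{uk},\qquad Y:=r_{x_j+\epsilon_i,i}=E_{uw}+\sum_{k=1}^{n-2}E_{wk},
\]
where $E_{uw},E_{uk},E_{wk}$ are Bernoulli with (possibly random) parameters $W(x_j,x_j+\epsilon_i)$, $W(x_j,y_k)$, $W(x_j+\epsilon_i,y_k)$, and $X,Y$ share only the term $E_{uw}$.

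\emph{Moments.} Using $\mathbb{E}[E_{uk}]=\int_0^1 W(x_j,y)\,dy=D(x_j)$ (Definition~\ref{def:degreefunction}) and the analogues, $\mathbb{E}X=W(x_j,x_j+\epsilon_i)+(n-2)D(x_j)$ and $\mathbb{E}Y=W(x_j,x_j+\epsilon_i)+(n-2)D(x_j+\epsilon_i)$, both $\Theta(n)$ because $D(x_j),D(x_j+\epsilon_i)>\xi$. Independence of the $E_{uk}$ and $E_{uk}^2=E_{uk}$ give $\Var X=\Var E_{uw}+(n-2)D(x_j)(1-D(x_j))=O(n)$, likewise $\Var Y=O(n)$; and since $E_{uk}\perp E_{w\ell}$ for $k\neq\ell$ while $E_{uw}$ is independent of $\sum_kE_{uk}$ and of $\sum_kE_{wk}$,
\[
\Cov(X,Y)=\Var E_{uw}+\sum_{k=1}^{n-2}\Bigl(\mathbb{E}_y\bigl[W(x_j,y)W(x_j+\epsilon_i,y)\bigr]-D(x_j)D(x_j+\epsilon_i)\Bigr),
\]
so $|\Cov(X,Y)|\le\tfrac14+(n-2)=O(n)$. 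Also $\mathbb{P}(Y=0)\le(1-\xi)^{n-2}$ is exponentially small, so conditioning on $Y\ge1$ changes $\mathbb{E}(X/Y)$ only by an exponentially small amount and Lemma~\ref{lemma:expetationofRVratio} applies.

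\emph{Substitution.} By Lemma~\ref{lemma:expetationofRVratio},
\[
\mathbb{E}\!\left(\frac{X}{Y}\right)=\frac{\mathbb{E}X}{\mathbb{E}Y}-\frac{\Cov(X,Y)}{(\mathbb{E}Y)^2}+\frac{\mathbb{E}X\,\Var Y}{(\mathbb{E}Y)^3}.
\]
The last two terms are $O(n)/\Theta(n^2)$ and $\Theta(n)\cdot O(n)/\Theta(n^3)$, hence $O(1/n_{d_i})$, and for the leading term
\[
\frac{\mathbb{E}X}{\mathbb{E}Y}-1=\frac{D(x_j)-D(x_j+\epsilon_i)}{D(x_j+\epsilon_i)+W(x_j,x_j+\epsilon_i)/(n-2)},
\]
whose denominator exceeds $\xi$, so $|\mathbb{E}X/\mathbb{E}Y-1|\le\xi^{-1}|D(x_j)-D(x_j+\epsilon_i)|$. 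Collecting the pieces gives $|\mathbb{E}(X/Y)-1|\le c/n_{d_i}$ once $|D(x_j)-D(x_j+\epsilon_i)|=O(1/n_{d_i})$.

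\emph{Main obstacle.} That last reduction is the crux. Uniform continuity of $D$ on $[0,1]$ (Assumption~\ref{assump:1}) only yields $|D(x_j)-D(x_j+\epsilon_i)|\to0$, with no rate, whereas the claimed bound needs the gap at scale $1/n_{d_i}$. This is recovered from the way the two vertices enter the application of the lemma: $x_j$ and $x_j+\epsilon_i$ are the latent coordinates of vertices of $G_{d_i}$ with \emph{consecutive distinct} realized degrees, which differ by $O(1)$. I would make this precise with a Chernoff/Bernstein bound showing that, with probability $1-o(1)$ and uniformly over vertices with $D(\cdot)>\xi$, every degree concentrates within $o(n_{d_i})$ of its mean $n_{d_i}D(\cdot)$; then consecutive distinct realized degrees force $n_{d_i}|D(x_j)-D(x_j+\epsilon_i)|$ to be of constant order (absorbed into $c$), while $\epsilon_i\to0$ follows from uniform continuity. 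The remaining steps are routine bookkeeping.
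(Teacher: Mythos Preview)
Your approach is the same as the paper's: compute the first two moments of the two degrees and plug them into the second-order ratio expansion of Lemma~\ref{lemma:expetationofRVratio}. You are in fact more careful in two respects. First, the paper simply asserts that, ``given $D(x_j)$, the degrees of the two nodes are independent,'' and therefore drops the covariance term entirely; you keep the shared edge $E_{uw}$ and the cross-term $\mathbb{E}_y[W(x_j,y)W(x_j+\epsilon_i,y)]-D(x_j)D(x_j+\epsilon_i)$, correctly arriving at $\Cov(X,Y)=O(n)$, which still feeds an $O(1/n_{d_i})$ contribution. Second, the paper takes $\mathbb{E}(r_{x_j,i})=(n_{d_i}-1)D(x_j)$ and $\Var(r_{x_j,i})=(n_{d_i}-1)D(x_j)(1-D(x_j))$ directly, obtaining
\[
\mathbb{E}\!\left(\frac{r_{x_j,i}}{r_{x_j+\epsilon_i,i}}\right)=\frac{D(x_j)}{D(x_j+\epsilon_i)}+\frac{D(x_j)(1-D(x_j+\epsilon_i))}{(n_{d_i}-1)\,D(x_j+\epsilon_i)^2},
\]
and then says only that the second term is $O(1/n_{d_i})$ while the first tends to~$1$ because $D$ is continuous and $\epsilon_i\to0$.

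The obstacle you flag---that continuity alone gives no rate and hence does not deliver $|D(x_j)/D(x_j+\epsilon_i)-1|=O(1/n_{d_i})$---is genuine, and the paper does not resolve it either: it simply states ``as $\epsilon_i$ goes to $0$, we have the result.'' So your argument is already at least as complete as the paper's, and your proposed concentration patch (forcing $n_{d_i}|D(x_j)-D(x_j+\epsilon_i)|=O(1)$ via the requirement that the two vertices have consecutive realized degrees) goes beyond what the paper supplies.
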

 \begin{proof}
 \SK{Might want to put in the variance as well. }
    The degree function acts like the edge probability. Thus, the expectation and variance of $r_{x_j,i}$ are given by
    \begin{align}
        \mathbb{E}(r_{x_j,i}) & = (n_{d_i} - 1) D(x_j) \, , \\
        \Var(r_{x_j,i}) & = (n_{d_i} - 1) D(x_j)(1 - D(x_j)) \, .
    \end{align}
    Using Lemma \ref{lemma:expetationofRVratio} we have
    \[ \mathbb{E}\left( \frac{X}{Y} \right ) =   \frac{\mathbb{E}(X)}{\mathbb{E}(Y)} - \frac{\Cov(X,Y)}{\mathbb{E}(Y)^2} +  \frac{\mathbb{E}(X) \Var(Y)}{\mathbb{E}(Y)^3}    \, . 
    \]
    for two random variables $X$ and $Y$. Given $D(x_j)$ the degrees of the two nodes are independent because each edge is independently sampled from a $\text{Bernoulli}(p)$ for $p = D(x_j), D(x_j + \epsilon_i)$. This gives us
    \begin{align}
        \mathbb{E} \left( \frac{r_{x_j,i} }{r_{x_j + \epsilon_i,i}} \right)  & =  \frac{(n_{d_i} - 1) D(x_j)}{(n_{d_i} - 1) D(x_j + \epsilon_i)} +  \frac{(n_{d_i} - 1) D(x_j) (n_{d_i} - 1) D(x_j + \epsilon_i)(1 - D(x_j+ \epsilon_i))  }{(n_{d_i} - 1)^3 D(x_j + \epsilon_i)^3 } \, , \\
        & = \frac{ D(x_j)}{ D(x_j + \epsilon_i)} + \frac{ D(x_j)  (1 - D(x_j+ \epsilon_i))  }{(n_{d_i} - 1) D(x_j + \epsilon_i)^2 } \, , 
    \end{align}
    As $n_{d_i}$ goes to infinity, the second term goes to zero and as $\epsilon_i$ goes to 0, we have the result. Continuity of $D(x)$ is used to say that $D(x_j + \epsilon_i)$ goes to $D(x_j)$. A discontinuous function with a gap at $x_j$ does not satisfy this condition. 

   Excluding small degrees such as 0, 1, and 2, we can think of consecutive degrees as realizations of nodes sampled with probabilities $D(x_j)$ and $D(x_j + \epsilon_i)$ for some $x_j$ where $D(x_j) > \xi > 0$. The condition $D(x_j) >  \xi >0$ guarantees that we stay away from small degrees. As $i$ goes gets larger, more and more nodes are sampled from the graphon $W$. As such $\epsilon_i$ goes to zero as $i$ tends to infinity. Therefore, the ratio of consecutive degrees go to 1. 
 \end{proof}

\PropestimatingKOne*
\begin{proof}
  Let $\tilde{q}_{j,i}$ be the degree of the star in $G_{s_i}$ corresponding to $p_j \neq 0$. Let $q_{j,i}$ denote the degree of the corresponding vertex in $G_{n_i}$. Then from the Order Preserving Property (Proposition \ref{prop:rankpreserving}) we know
   \[  \lim_{i \to \infty} P\left( \bigcap_{j = 1}^k \left( q_{j, i} = \deg_{G_{n_i}} v_{(j)} \right) \right) = 1 \, . 
   \]
   Combining with Proposition \ref{prop:krelated1} with high probability we have
\begin{equation}\label{eq:kest1}
 \log \deg_{G_{n_i}} v_{(k)} - \log(1 + \alpha ) n_{d_i} \geq \log  \deg_{G_{n_i}} v_{(\ell)} - \log \deg_{G_{n_i}} v_{(\ell + 1)} \, ,   
\end{equation}
 for $\ell \in \{1, \ldots, k-1\}.$  Let $\text{max\_deg}_{\widetilde{G}_{d_i}}$ denote the maximum degree in $G_{n_i}$ restricted to nodes from $G_{d_i}$. That is, $\text{max\_deg}_{\widetilde{G}_{d_i}}$ belongs to a node from the dense part $G_{d_i}$. For $W = 1$ we have $\text{max\_deg}_{\widetilde{G}_{d_i}} = n_{d_i} -1$ and 
\[
\mathbb{E} \left( \text{max\_deg}_{\widetilde{G}_{d_i}} \right) \leq (1 + c')n_{d_i} \, . 
\]
But for all other $W \neq 1$,  $\text{max\_deg}_{\widetilde{G}_{d_i}} < < n_{d_i}$.  For large enough $i$,  $(k+1)$st highest degree 
\[
\deg_{G_{n_i}} v_{(k+1)}  = \text{max\_deg}_{\widetilde{G}_{d_i}} \, . 
\]
Using Chernoff bounds, the probability
\[
P\left( \deg_{G_{n_i}} v_{(k+1)} > (1+ \epsilon)\mu \right) \leq \exp\left( -\frac{\mu \epsilon^2}{3}\right) \, , 
\]
making
\[
P\left(\deg_{G_{n_i}} v_{(k+1)} > (1 + \alpha) n_{d_i}  \right) \rightarrow 0 \, , 
\]
for appropriate $\alpha$ and $W \neq 1$. With high probability we have
\begin{align}
    (1 + \alpha)n_{d_i} &\geq \deg_{G_{n_i}} v_{(k+1)} \, , \\
    \log (1 + \alpha)n_{d_i} &\geq \log \deg_{G_{n_i}} v_{(k+1)} \, , \\
   \log \deg_{G_{n_i}} v_{(k)} - \log \deg_{G_{n_i}} v_{(k+1)}  & \geq \log \deg_{G_{n_i}} v_{(k)} -  \log (1 + \alpha)n_{d_i} \, , \\
   & \geq \log  \deg_{G_{n_i}} v_{(\ell)} - \log \deg_{G_{n_i}} v_{(\ell + 1)}  \, , 
\end{align}
where we have used equation \eqref{eq:kest1}. In the logscale the difference between $k$th highest vertex and $(k+1)$st highest vertex is larger than the $\ell$th highest vertex and the $(\ell +1)$st highest vertex for $\ell < k$.  That is, in the log scale, successive difference of the top $k$ degrees is highest at $k$. From Lemma \ref{lemma:krelated2} we know that for degrees in the dense part $G_{d_i}$, the ratios of successive degrees converge to 1, when we exclude small degrees. As we are taking ordered values $\deg_{G_{n_i}} v_{(\ell)}/\deg_{G_{n_i}} v_{(\ell + 1)}) \approx p_\ell/p_{\ell + 1} > 1$.  The  ratio $ \deg_{G_{n_i}} v_{(k)} /\deg_{G_{n_i}} v_{(k+1)} \approx m_{s_i}p_k / n_{d_i} D_{\max}$ where $D_{\max} = \max D(x)$. Thus, the log difference $ \log \deg_{G_{n_i}} v_{(k)} - \log \deg_{G_{n_i}} v_{(k+1)}$  is much greater than 1 for $i > I_0$. 
Therefore we have
\begin{equation}\label{eq:kmax}
     k = \max_\ell \left( \log  \deg_{G_{n_i}} v_{(\ell)} - \log \deg_{G_{n_i}} v_{(\ell + 1)} \right) \, ,
\end{equation}
with high probability.
\end{proof}




\begin{lemma}\label{lemma:3degreedistributions}{\textbf{(The 3 Degree Groups)}}
 Suppose $\{G_{n_i}\}_i$ is a sparse $(U,W)$-mixture graph sequence (Definition \ref{def:WURandomMixtureGraphs}) with $W$ having a continuous degree function (Definition \ref{def:degreefunction}).  Let $\bm{p} = (p_1, p_2, \ldots )$ be the mass-partition (Definition \ref{def:masspartition}) associated with $U$. Then there are 3 groups of nodes satisfying different degree expectations and variances. They are (1) \textbf{Group 1}: the large degrees generated from $U$  (2) \textbf{Group 2}: the degrees of nodes generated by $W$ and (3) \textbf{Group 3}: the very small degrees generated from $U$. 
\end{lemma}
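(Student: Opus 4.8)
The plan is to classify the vertices of $G_{n_i}$ by where they come from and, within the sparse part, by their role in the star structure, and then to read off the degree expectation and variance of each class from the expectation lemmas already established.

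First I would apply the graph joining rules (Definition~\ref{def:joiningrules}): since the join adds and deletes no vertices, every vertex of $G_{n_i}$ is a vertex of either the dense part $G_{d_i}$ or the sparse part $G_{s_i}$. Because $U$ is a disjoint clique graphon, $G_{s_i}=L^{-1}(H_{s_i})$ is a disjoint union of stars (together with isolated edges when $\sum_j p_j<1$, which fold into the degree-one case), and each star $K_{1,\tilde q_{j,i}}$ has one hub and $\tilde q_{j,i}$ leaves. This yields the three announced groups: \textbf{Group 1}, the hub vertices of the stars, whose degree in $G_{n_i}$ is $q_{j,i}$; \textbf{Group 2}, the vertices originating in $G_{d_i}$; and \textbf{Group 3}, the leaf vertices of the stars, which have degree $1$ in $G_{s_i}$.

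Next I would compute, over both the graph-generation randomness and the joining randomness, the mean and variance of the degree in each group. For Group 1 this is exactly Lemma~\ref{lemma:WUrandomgraphsaboutU2}: $\mathbb{E}(q_{j,i})=m_{s_i}p_j+c\,m_{new_i}/n_{s_i}$ and $\Var(q_{j,i})=m_{s_i}p_j(1-p_j)+m_{new_i}\tfrac{c}{n_{s_i}}(1-\tfrac{c}{n_{s_i}})$, so these degrees sit at scale $m_{s_i}\in\Theta(n_{s_i})$. For Group 2, a vertex $v_\ell$ of $G_{d_i}$ with latent coordinate $x_\ell$ has $\deg_{G_{d_i}} v_\ell$ equal to a sum of independent Bernoulli variables, with mean $(n_{d_i}-1)D(x_\ell)$ and variance $(n_{d_i}-1)D(x_\ell)(1-D(x_\ell))$ (as in the proof of Lemma~\ref{lemma:krelated2}); adding the joining contribution via Lemma~\ref{lemma:expnewedges} gives mean $(n_{d_i}-1)D(x_\ell)+c_1 m_{new_i}/n_{d_i}$, keeping the scale at $\Theta(n_{d_i})$, and continuity of $D$ (Assumption~\ref{assump:1}), via Lemma~\ref{lemma:krelated2}, guarantees that these means form a densely packed band rather than a spread-out set. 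For Group 3, Lemma~\ref{lemma:expnewedges} applied with original degree $1$ gives mean $1+c_2 m_{new_i}/n_{s_i}$ and variance $m_{new_i}\tfrac{c_2}{n_{s_i}}(1-\tfrac{c_2}{n_{s_i}})$.

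Finally I would verify that the three scales are genuinely separated and hence constitute distinct groups. The joining rules give $m_{new_i}=c\,m_{d_i}\in\Theta(n_{d_i}^2)$, and sparsity gives $n_{d_i}/n_{s_i}\to0$ with $m_{s_i}\in\Theta(n_{s_i})$; thus Group 1 has degree scale $\Theta(n_{s_i})$, Group 2 has degree scale $\Theta(n_{d_i})$, and Group 3 has degree scale $1+\Theta(n_{d_i}^2/n_{s_i})=o(n_{d_i})$, with the same ordering for the variances, and $n_{d_i}^2/n_{s_i}\ll n_{d_i}\ll n_{s_i}$ in the sparse regime so the bands do not overlap for large $i$. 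I expect the main obstacle to be two matters of bookkeeping rather than deep content: (i) controlling the joining term $m_{new_i}/n_{s_i}$ in Group 3 so that it stays $o(n_{d_i})$ without any extra hypothesis on the rate $n_{s_i}/n_{d_i}$; and (ii) handling vertices of $G_{d_i}$ with $D(x_\ell)$ near $0$, which must be excluded (as in Lemma~\ref{lemma:krelated2}, via a threshold $D(x)>\xi>0$) so that the informal phrase ``very small degrees'' cleanly separates Group 3 from the bottom of Group 2; making both precise is the delicate part of the argument.
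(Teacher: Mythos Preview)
Your proposal is correct and follows essentially the same route as the paper: partition the vertices into hub nodes of the sparse part, dense-part vertices, and leaf nodes of the sparse part, then read off means and variances from Lemmas~\ref{lemma:WUrandomgraphsaboutU2}, \ref{lemma:expnewedges}, and the degree-function computation in Lemma~\ref{lemma:krelated2}, and finally order the three scales using $n_{s_i}/n_{d_i}\to\infty$. If anything, your write-up is slightly more careful than the paper's, which simply records the three expectation/variance pairs and the inequalities without flagging the boundary issues you note in (i) and (ii).
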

\begin{proof}
The sparse part $G_{s_i}$ is a collection of stars with large degree hub nodes and degree-1 leaf nodes. Let $q_{j, i}$ be the degree of the hub node in $G_{n_i}$ corresponding to $p_j$. These are the Group 1 nodes. Then from Lemma \ref{lemma:WUrandomgraphsaboutU2}
\begin{align}
     \mathbb{E}(q_{j, i}) & = m_{s_i}p_j + \frac{c m_{new_i}}{n_{s_i}} \, , \\
    \Var(q_{j, i}) & = m_{s_i} p_j(1 - p_j)  + m_{new_i}\frac{c}{n_{s_i}}\left( 1 - \frac{c}{n_{s_i}} \right) \, . 
\end{align}
    In addition the the large degree hub nodes, the sparse part $G_{s_i}$ has a large number of nodes with degree 1. Let us denote the degree of these nodes in $G_{n_i}$ as $e_i$. These are Group 3 nodes. Their degree can change as a result of the joining process (Definition \ref{def:joiningrules}). Then
  \begin{align}
       \mathbb{E}(e_i) & = 1 + \frac{c m_{new_i}}{n_{s_i}} \, , \\
    \Var(e_i) & = m_{new_i}\frac{c}{n_{s_i}}\left( 1 - \frac{c}{n_{s_i}} \right) \, . 
  \end{align} 

  Next we consider nodes generated from $W$, the Group 2 nodes. Combining Lemma  \ref{lemma:WUrandomgraphs4} and Lemma \ref{lemma:krelated2} we know that nodes generated by $W$ have degree
   \begin{align}
        \mathbb{E}(r_{x_j,i}) & = (n_{d_i} - 1) D(x_j)  + \frac{c m_{new_i}}{n_{d_i}}\, , \\
        \Var(r_{x_j,i}) & = (n_{d_i} - 1) D(x_j)(1 - D(x_j)) + m_{new_i}\frac{c}{n_{d_i}}\left( 1 - \frac{c}{n_{d_i}} \right) \, .
    \end{align}
    where $r_{x_j,i} $ denotes the degree of a node in $G_{n_i}$ corresponding to $D(x_j)$. 
    Thus, the degree expectations and variances of these 3 groups of nodes are different.
    Furthermore as $n_{s_i}/n_{d_i}$ goes to infinity for sparse graphs, we have
    \begin{align}
        \mathbb{E}(q_{j, i}) & >  \mathbb{E}(r_{x_j,i})  >  \mathbb{E}(e_i) \, , \\
        \Var(q_{j, i}) & >  \Var(r_{x_j,i})  >  \Var(e_i) \, ,
    \end{align}
    for large $i$. 

\end{proof}
\section{Estimating $U$ for infinite partitions}

\begin{lemma}\label{lemma:sparsepartsteeperline}
    Let $\{G_{n_i}\}_i$  be a sparse sequence of $(U,W)$-mixture graphs. Let $\bm{p}$ be the mass-partition corresponding to $U$ and suppose $\bm{p}$ has infinite non-zero elements.  
    Let $k_i$ be the number of nodes in $G_{n_i}$ that are generated by $U$  with degrees higher than those generated by $W$. Let $\deg_{G_{n_i}} v_{(j)}$ denote the $j$th highest degree in $G_{n_i}$. 
    Then the line fitted to the points $(j, \log \deg_{G_{n_i}} v_{(j)})_{j = 1}^{k_i}$ using OLS regression is steeper than the line fitted to $\left\{ \left(j, \log \frac{m_{s_i}}{j} \right) \right\}_{j=1}^{k_i}$ for large enough $i$, i.e., if $\beta_{actual, i}$ and $\beta_{1, i}$  are the slopes of the lines fitted to points $\left\{(j, \log \deg_{G_{n_i}} v_{(j)})\right\}_{j = 1}^{k_i}$ and $\left\{\left(j, \log \frac{m_{s_i}}{j} \right)\right\}_{j = 1}^{k_i}$ respectively, then there exits $i > I_0$ such that
    \[
    |\beta_{actual, i}| > |\beta_{1, i}| \, . 
    \]
\end{lemma}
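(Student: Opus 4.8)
The plan is to strip off the randomness, reduce to a deterministic statement about the mass-partition, and establish that statement from the two defining features of an infinite mass-partition: $p_j$ is non-increasing and $\sum_j p_j=1$.

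First I would replace the random top degrees by their means. By the Order Preserving Property (Proposition~\ref{prop:rankpreserving}) one has $\deg_{G_{n_i}}v_{(j)}=q_{j,i}$ for all $j\le k_i$ with probability $\to 1$, and by Lemma~\ref{lemma:WUrandomgraphsaboutU2}, together with $\Var(q_{j,i})/\mathbb{E}(q_{j,i})^2\to 0$, the ratio $q_{j,i}/(m_{s_i}p_j)\to 1$; the joining correction $c\,m_{new_i}/n_{s_i}$ is negligible next to $m_{s_i}p_j$ precisely because $j\le k_i$ forces $p_j$ to be at least of order $n_{d_i}/m_{s_i}$. A union bound over $j\le k_i$ is affordable since $k_i/m_{s_i}\to 0$, and $k_i\to\infty$ because for each fixed $j$ the expected hub degree $m_{s_i}p_j$ (of order $n_{s_i}$) eventually beats the largest $W$-degree (of order $n_{d_i}$), as $n_{d_i}=o(n_{s_i})$. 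Hence, with high probability, $\log\deg_{G_{n_i}}v_{(j)}=\log(m_{s_i}p_j)+o(1)$ uniformly in $j\le k_i$. Since the OLS slope of a point set is linear in the ordinates and unchanged by adding a constant to all of them, a routine perturbation estimate (the slope moves by $O(\varepsilon/k_i)$ under an $\varepsilon$-perturbation of the ordinates, which is smaller than the gap between the two slopes we will exhibit) reduces the claim to a purely deterministic one: the OLS slope of $\{(j,\log p_j)\}_{j=1}^{k_i}$ is strictly more negative than that of $\{(j,-\log j)\}_{j=1}^{k_i}$. Equivalently, writing $g_j:=-\log(jp_j)\ge 0$ (non-negative since $jp_j\le p_1+\cdots+p_j\le 1$), the claim is that $j$ and $g_j$ are positively correlated on $\{1,\dots,k_i\}$ for all large $i$, i.e.\ $\sum_{j=1}^{k_i}(j-\bar\jmath)\,g_j>0$ with $\bar\jmath=(k_i+1)/2$.

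To prove this I would use summation by parts. With $n=k_i$ and $g_{j+1}-g_j=\log\frac{p_j}{p_{j+1}}-\log\bigl(1+\tfrac1j\bigr)$,
\[
\sum_{j=1}^{n}(j-\bar\jmath)g_j=\sum_{j=1}^{n-1}\tfrac{j(n-j)}{2}(g_{j+1}-g_j)=A_n-B_n,\quad A_n:=\sum_{j=1}^{n-1}\tfrac{j(n-j)}{2}\log\tfrac{p_j}{p_{j+1}},\quad B_n:=\sum_{j=1}^{n-1}\tfrac{j(n-j)}{2}\log\bigl(1+\tfrac1j\bigr).
\]
Here $A_n\ge 0$ by monotonicity of $\bm p$, and $B_n\le\sum_{j=1}^{n-1}\frac{n-j}{2}<\frac{n^2}{4}$, so it suffices to show $\liminf_n A_n/n^2>\tfrac14$. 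This is where summability enters: $p_j$ non-increasing with $\sum p_j<\infty$ forces $jp_j\to 0$, hence $g_j\to\infty$, so the total log-decay $\sum_{j<n}\log\frac{p_j}{p_{j+1}}=\log\frac{p_1}{p_n}$ diverges; and Cauchy condensation ($\sum_j p_j<\infty\iff\sum_k e^{-g_{2^k}}<\infty$) shows the decay cannot be concentrated near the endpoints of $[1,n]$, where the weights $j(n-j)$ are small. A dyadic decomposition of the index range then upgrades ``$\sum_{l\le n}p_l\le 1$'' to the lower bound $A_n\ge(\tfrac14+\delta)n^2$ for large $n$, completing the argument.

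The main obstacle is exactly this last step: converting the soft facts ($jp_j\to 0$, $p_j$ non-increasing) into the sharp constant comparison $\liminf_n A_n/n^2>\tfrac14$, as opposed to the borderline $\liminf_n A_n/n^2\ge\tfrac14$ that a crude one-scale estimate such as $A_n\ge\frac{n^2}{16}\log\frac{p_{n/4}}{p_{3n/4}}$ yields. For the mass-partitions of practical interest the whole argument collapses: for $p_j\propto j^{-\alpha}$ with $\alpha>1$ the slope of $(j,\log p_j)$ equals $\alpha$ times the reference slope, and for geometric $p_j$ the slope is a negative constant while the reference slope $\beta_{1,i}$ is of order $1/k_i\to 0$. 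A reasonable write-up would therefore carry out the general covariance inequality via the dyadic/condensation estimate and, should a fully general constant prove awkward, fall back on the mild regularity hypothesis (e.g.\ $\log(p_{\lfloor\theta n\rfloor}/p_n)$ bounded below by a constant exceeding the relevant threshold) under which the dyadic bound is transparent — a hypothesis met by every regularly varying or at-least-geometrically-decaying mass-partition.
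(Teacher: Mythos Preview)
Your approach is sound but quite different from the paper's. The paper's argument is much shorter: it computes the slope difference directly for the one-parameter family $y_j=\log(m_{s_i}/j^{1+\alpha})$, obtaining $\beta_{num,1}-\beta_{num,\alpha}=\alpha\sum_j(j-\bar\jmath)\log j$, which is positive because $\log$ is increasing and hence positively correlated with $j$; so $|\beta_\alpha|>|\beta_1|$ for every $\alpha>0$. The passage to an arbitrary mass-partition is then dispatched with the informal claim that a summable non-increasing sequence ``has to decrease faster than $\{c/j\}$.'' Your summation-by-parts decomposition $A_n-B_n$ and the reformulation via $\Cov(j,g_j)>0$ with $g_j=-\log(jp_j)$ is a genuinely different, more rigorous attack on the general case, and you correctly isolate the sharp-constant obstacle ($\liminf_n A_n/n^2>\tfrac14$) as the crux---this is exactly the step the paper elides. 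What the paper's route buys is brevity and a clean proof for the power-law family (which covers all of the paper's actual examples); what yours buys is a precise statement of what remains open for arbitrary $\bm p$, though neither argument fully closes the general case. Your probabilistic reduction (replacing $\deg_{G_{n_i}} v_{(j)}$ by $m_{s_i}p_j$ via concentration and the Order Preserving Property, then controlling the resulting $O(\varepsilon/k_i)$ perturbation of the slope) is also more careful than the paper, which works directly with the deterministic surrogate without comment.
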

\begin{proof}
    First we show for any $\alpha >0$ and for some $N \in \mathbb{N}$ the OLS line fitted to the points $ \left\{\left(j, \log \frac{m_{s_i}}{j^{1 + \alpha}} \right) \right\}_{j = 1}^N$ is steeper than the OLS line fitted to the points $\left\{\left(j, \log \frac{m_{s_i}}{j} \right)\right\}_{j = 1}^N$.  For a set of points $\{(x_j, y_j)\}_{j = 1}^N$ the slope of the OLS line is given by
    \[
    \beta =  \frac{\sum_{j} x_j y_j - \bar{x}y_j }{\sum_j (x_j - \bar{x})^2} \, . 
    \]
    As the denominator of the slope $\beta$ is the same for both sets of points $\left\{\left(j, \log \frac{m_{s_i}}{j^{1 + \alpha}} \right) \right\}_{j = 1}^N$ and $\left\{\left(j, \log \frac{m_{s_i}}{j} \right)\right\}_{j = 1}^N$  we focus on the numerator. 
    Let $\beta_{num, \alpha} = \sum_{j} x_j y_j - \bar{x}y_j$ for the points $\left\{\left(j, \log \frac{m_{s_i}}{j^{1 + \alpha}} \right)\right\}_{j = 1}^N$ and $\beta_{num, 1} = \sum_{j} x_j y_j - \bar{x}y_j$ for the points $\left\{\left(j, \log \frac{m_{s_i}}{j} \right) \right\}_{j = 1}^N$. Then we show that
    \[
    \beta_{num, \alpha} < \beta_{num, 1} \, . 
    \]
    Noting that $\bar{x}$ denotes the mean of $x_i$ values, which is the same for both lines we compute
    \begin{align}
        \beta_{num, 1} - \beta_{num, \alpha} & = \sum_j  \left( j\log \frac{m_{s_i}}{j} - \bar{x}\log \frac{m_{s_i}}{j}  \right) - \sum_j \left( j \log \frac{m_{s_i}}{j^{1 + \alpha}} - \bar{x}\log \frac{m_{s_i}}{j^{1 + \alpha}}  \right) \, , \\
        & =\sum_j j   \left( \log \frac{m_{s_i}}{j} - \log \frac{m_{s_i}}{j^{1 + \alpha}} \right)  - \bar{x} \sum_j \left( \log \frac{m_{s_i}}{j}  - \log \frac{m_{s_i}}{j^{1 + \alpha}} \right)  \, , \\
        & = \sum_j \left( j \alpha \log j - \bar{x} \alpha \log j \right) \, , \\
        & = \alpha \sum_j \left( j \log j - \bar{x}  \log j \right) \, ,  \\
        & = \alpha \beta_{num, -1} \, , 
    \end{align}
    where $\beta_{num, -1} = \sum_{j} x_j y_j - \bar{x}y_j$  for the points $(i, \log j)_{j = 1}^N$. These points lie on the curve $y = \log x$  and as such its slope is positive.  This gives us
    \[
     \beta_{num, 1} - \beta_{num, \alpha} > 0 \, . 
    \]
    Dividing by the same denominator we get
    \[
    \beta_{1, i} > \beta_{\alpha, i} \, , 
    \]
    where $\beta_{\alpha, i}$ is the slope of the line fitted to the points $\left\{ \left(j, \log \frac{m_{s_i}}{j^{1 + \alpha}} \right)\right\}_{j = 1}^N$. But these slopes are negative because the $y$ values are decreasing. Therefore  
    \[
   |\beta_{\alpha, i}| > |\beta_{1, i}| \, .
    \]
    A mass-partition $\bm{p}$ with infinite non-zero entries  cannot decrease similar to $\left\{ c/j \right\}_{j}$ because the series $\sum_j c/j$ diverges to infinity. As $\sum_j p_j = 1$, the sequence $\{p_j\}_j$  has to decrease faster than $\left\{ c/j \right\}_{j}$. As shown above for a set of points decreasing faster then $\left\{ c/j \right\}_{j}$ the slope of the OLS line fitted to the log values is steeper than the slope of the line fitted to $\left\{\left(j, \log \frac{m_{s_i}}{j} \right) \right\}_{j = 1}^N$, making
\[
    |\beta_{actual, i}| > |\beta_{1, i}| \, . 
    \]
\end{proof}

For the next lemma we consider nodes with relatively large degrees generated by $W$. As such we consider the unique degree values and take degree values greater than a given percentile $C$. In practice $C$ can be the median unique degree value. 

\begin{lemma}\label{lemma:densepartflatterline}
    Let $\{G_{n_i}\}_i$  be a sparse sequence of $(U,W)$-mixture graphs with $W$ satisfying Assumption \ref{assump:1}. We consider nodes generated by $W$ with degrees greater than some percentile $C$ when considering unique degree values. Suppose there are $N_i$ such points in $G_{n_i}$. We consider $\{ (j,  \log( \deg_{G_{n_i}} v_{(\ell)}  ) ) \}_{j = 1}^{N_i}$   where the unique degree values are sorted in decreasing order. Then the line fitted to the points $\{(j,  \log( \deg_{G_{n_i}} v_{(\ell)}  ) )\}_{j = 1}^{N_i}$  is less steeper than the line fitted to the points $\{(j, \log n_{d_i}/j)\}_{j=1^{N_i}}$.  If $\beta_{dense,i}$ and $\beta_{1,i}$  are the slopes of the lines fitted to points $\{(j, \log \deg_{G_{n_i}}  v_{(j)})\}_{j = 1}^{N_i}$ and $\{(j, \log n_{d_i}/j)\}_{j = 1}^{N_i}$ respectively, then
   \[
    |\beta_{dense, i}| < |\beta_{1, i}| \, . 
    \]
\end{lemma}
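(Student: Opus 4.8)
The plan is to reuse the device from the proof of Lemma \ref{lemma:sparsepartsteeperline}: the two point sets share the identical abscissae $j=1,\dots,N_i$, so the two OLS slopes have the same denominator $\sum_{j}(j-\bar{j})^2$ (with $\bar{j}=(N_i+1)/2$), and the statement reduces to comparing the two numerators $\sum_{j}(j-\bar{j})y_j$. Since replacing $y_j$ by $y_j+\mathrm{const}$ does not change $\sum_{j}(j-\bar{j})y_j$, the numerator attached to $\beta_{1,i}$ is
\[
\sum_{j=1}^{N_i}(j-\bar{j})\log\frac{n_{d_i}}{j}=-\sum_{j=1}^{N_i}(j-\bar{j})\log j<0 ,
\]
while the numerator attached to $\beta_{dense,i}$ is $\sum_{j=1}^{N_i}(j-\bar{j})\log \deg_{G_{n_i}}v_{(j)}$, which is also negative since the sorted unique degrees decrease in $j$. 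Hence $|\beta_{dense,i}|<|\beta_{1,i}|$ is equivalent to
\[
-\sum_{j=1}^{N_i}(j-\bar{j})\log \deg_{G_{n_i}}v_{(j)}\;<\;\sum_{j=1}^{N_i}(j-\bar{j})\log j .
\]

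Next I would pin down the shape of the dense-part degrees. By Assumption \ref{assump:1} and Lemma \ref{lemma:krelated2}, once the small degrees are discarded the ratio of consecutive degrees among the $W$-generated nodes tends to $1$; as these degrees are positive integers, this forces the sorted distinct degree values above the cutoff to be consecutive integers for all large $i$. Writing $M_i$ for the largest of them, this gives $\deg_{G_{n_i}}v_{(j)}=M_i-j+1$ for $j=1,\dots,N_i$. Since $C$ is the median unique degree value, the $N_i$ values exceeding it make up at most half of the (at most $M_i$) distinct degree values, so $N_i\le M_i/2<(M_i+1)/2$.

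Substituting $\deg_{G_{n_i}}v_{(j)}=M_i-j+1$, the inequality to be proved becomes
\[
\sum_{j=1}^{N_i}(j-\bar{j})\bigl(\log j+\log(M_i-j+1)\bigr)=\sum_{j=1}^{N_i}(j-\bar{j})\log\bigl(j(M_i-j+1)\bigr)>0 .
\]
Now $j\mapsto j(M_i-j+1)$ is a downward parabola with vertex at $j=(M_i+1)/2>N_i$, hence strictly increasing on $\{1,\dots,N_i\}$, and therefore so is $\log\bigl(j(M_i-j+1)\bigr)$; since $j\mapsto j-\bar{j}$ is strictly increasing as well, Chebyshev's sum inequality yields the required strict positivity, and thus $|\beta_{dense,i}|<|\beta_{1,i}|$ for all large $i$.

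The step I expect to be the real obstacle is the second paragraph: upgrading ``ratio of consecutive degrees $\to 1$'' (Lemma \ref{lemma:krelated2}) to ``consecutive integers'', and controlling the perturbation of the dense-part degrees induced by the joining process, which shifts each dense-part degree by a binomial amount of mean $\Theta(m_{new_i}/n_{d_i})=\Theta(n_{d_i})$. I would deal with this by only requiring that $\deg_{G_{n_i}}v_{(j)}$ decay at most linearly in $j$ and remain above a fixed fraction of $M_i$ throughout $\{1,\dots,N_i\}$ (which is what ``mostly consecutive integers'' provides): the Chebyshev argument above is unchanged for any such sequence, since all it uses is that $\log\deg_{G_{n_i}}v_{(j)}$ varies slowly in $j$ while $\log(n_{d_i}/j)$ carries the hyperbolic $-\log j$ decay. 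The hypothesis that $C$ is (near) the median is genuinely used, only through $N_i<(M_i+1)/2$, and I would record it explicitly.
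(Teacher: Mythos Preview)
Your approach is the same as the paper's: both argue that under Assumption~\ref{assump:1} the sorted unique dense-part degrees are (mostly) consecutive integers, so that $\deg_{G_{n_i}}v_{(j)}\approx N_0-j$, and then compare the resulting OLS slope to that of $(j,\log(n_{d_i}/j))$. The difference is one of completeness. The paper's proof simply \emph{asserts} the slope comparison once the consecutive-integers form is in place; you actually supply an argument, reducing to $\sum_j(j-\bar{j})\log\bigl(j(M_i-j+1)\bigr)>0$ and invoking Chebyshev's sum inequality via the monotonicity of $j\mapsto j(M_i-j+1)$ on $\{1,\dots,N_i\}$. Along the way you uncover the hypothesis $N_i<(M_i+1)/2$ (equivalently, $C$ at least the median), which the paper does not state but which your computation shows is genuinely needed---indeed, with no threshold and degrees $1,\dots,M_i$, the two numerators coincide by the substitution $j\leftrightarrow M_i{+}1{-}j$, so strict inequality fails.

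The obstacle you flag---upgrading Lemma~\ref{lemma:krelated2}'s ``ratio $\to 1$'' to ``consecutive integers'' and absorbing the joining perturbation---is exactly the step the paper leaves as a heuristic (``mostly consecutive integers''); you are not missing anything the paper supplies. One caution on your proposed relaxation: the Chebyshev step needs $j\mapsto j\,\deg_{G_{n_i}}v_{(j)}$ to be increasing, which for distinct decreasing integers amounts to $\deg v_{(j+1)}/\deg v_{(j)}>j/(j+1)$. This holds when the gaps are exactly $1$ and $j<M_i/2$, but a single large gap can break monotonicity locally, so ``decays at most linearly and stays above a fixed fraction of $M_i$'' is the right qualitative picture but would need a short averaging argument rather than pointwise monotonicity to be airtight.
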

\begin{proof}
    Assumption \ref{assump:1} is on $W$ having a continuous degree function $D(x)$ (Definition \ref{def:degreefunction}). When $W$ has a continuous degree function, for large enough $i$, the sorted unique degree values are mostly consecutive integers. Thus, the points $\{(j, \log \deg_{G_{n_i}}  v_{(j)})\}_{j = 1}^{N_i}$ have a large proportion of points in the form of $(j, \log(N_0 - j))$ for some $N_0$. The slope of the OLS line fitted to these points is much less steeper than the slope of the line fitted to $\{(j, \log n_{d_i}/j)\}_{j = 1}^{N_i}$. That is,
    \[
    |\beta_{dense, i}| < |\beta_{1, i}| \, . 
    \]
\end{proof}

\begin{lemma}\label{lemma:khatiIncreases}
Suppose $\{G_{n_i}\}_i$ is a sparse $(U,W)$-mixture graph sequence (Definition \ref{def:WURandomMixtureGraphs}) with $W$ having a continuous degree function (Definition \ref{def:degreefunction}).  Let $\bm{p} = (p_1, p_2, \ldots )$ be the mass-partition (Definition \ref{def:masspartition}) associated with $U$. Suppose U has infinite partitions. Let $G_{n_\ell}$ be a graph in the sequence $\{G_{n_i}\}_i$. We conduct Procedure \ref{proc:fit2lines} for  $G_{n_\ell}$ and estimate $\hat{k}_\ell$ using equation  \eqref{eq:khati}. Similarly we estimate $\hat{k}_i$ for each $G_{n_i}$.  Then as $i$ goes to infinity 
\[
\lim_{i \to \infty}P(\hat{k}_i > \hat{k}_{\ell}) = 1 \, . 
\]
Furthermore, with high probability $\lim_{i \to \infty} \hat{k}_i = \infty$. That is,  for any $M \in \mathbb{N}$
\[
\lim_{i \to \infty} P(\hat{k}_i < M) = 0 \, . 
\]

\end{lemma}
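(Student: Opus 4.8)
The plan is to reduce the lemma to two ingredients: (i) the number $k_i$ of $U$-generated hub nodes of $G_{n_i}$ whose degree dominates every $W$-generated degree tends to infinity; and (ii) the breakpoint $\hat k_i$ produced by Procedure~\ref{proc:fit2lines} cannot fall far below $k_i$. Since the first assertion $P(\hat k_i>\hat k_\ell)\to 1$ is the special case $M=\hat k_\ell+1$ of the second (and $\hat k_\ell$ is a.s.\ finite), it suffices to prove that $P(\hat k_i<M)\to 0$ for every fixed $M\in\mathbb{N}$.

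First I would establish (i) together with the global shape of the log-degree plot. By Lemma~\ref{lemma:WUrandomgraphsaboutU} the hub attached to a non-zero $p_j$ has degree concentrating around $m_{s_i}p_j$, while by Lemma~\ref{lemma:densehighestdegree} every node inherited from $G_{d_i}$ has expected degree at most $(1+\alpha)n_{d_i}$, with Chernoff control of the deviations exactly as in Lemmas~\ref{lemma:WUrandomgraphs4} and~\ref{lemma:3degreedistributions}. Because $\{G_{n_i}\}_i$ is sparse, $n_{s_i}/n_{d_i}\to\infty$ and $m_{s_i}\in\Theta(n_{s_i})$, hence $n_{d_i}/m_{s_i}\to 0$; so for every fixed $j_0$ we have $m_{s_i}p_{j_0}>(1+\alpha)n_{d_i}$ for all large $i$. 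Since $U$ has infinitely many non-zero $p_j$, this forces $k_i\to\infty$, and together with the Order Preserving Property (Proposition~\ref{prop:rankpreserving}) it shows that, with probability tending to one, the $k_i$ largest degrees of $G_{n_i}$ are precisely the hub degrees $q_{1,i}>\cdots>q_{k_i,i}$ and all remaining degrees are of the $W$-type described in Lemma~\ref{lemma:3degreedistributions}.

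Next comes the loss comparison, the heart of (ii). Fix $M$ and, on the high-probability event of the previous paragraph, suppose $\hat k_i=r$ with $r<M\le k_i$. Then the cut at $r$ keeps the first segment $\{1,\dots,r\}$ entirely in the steep hub regime, while the second segment $\{r+1,\dots,N\}$ straddles the kink at $k_i$: it is the union of the hub tail $A=\{r+1,\dots,k_i\}$, whose OLS slope is strictly steeper than the reference $\log(m_{s_i}/j)$ line by Lemma~\ref{lemma:sparsepartsteeperline}, and the block $B=\{k_i+1,\dots,N\}$ of $W$-degrees, whose OLS slope is strictly flatter than the corresponding reference line by Lemma~\ref{lemma:densepartflatterline}. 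Writing $\mathrm{RSS}_\ell$ for the residual sum of squares of a line $\ell$ and using $\mathrm{RSS}_\ell(A\cup B)=\mathrm{RSS}_\ell(A)+\mathrm{RSS}_\ell(B)$ together with the slope-perturbation bound $\mathrm{RSS}_\ell(A)\ge(\beta_\ell-\beta_A)^2 S_A$, where $\beta_A$ is the OLS slope on $A$ and $S_A$ is the centered second moment of the indices in $A$ (of order $(k_i-r)^3$), and minimizing over the common slope $\beta_\ell$, one obtains
\[
\mathcal{L}_2(N-r)\;\ge\;\frac{S_AS_B}{S_A+S_B}\,(\beta_A-\beta_B)^2 ,
\]
with $S_B$ of order $(N-k_i)^3$. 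Since $k_i-r\to\infty$ and $N-k_i\to\infty$ while $\beta_A$ and $\beta_B$ stay separated, the right-hand side diverges. On the other hand, the cut at $k_i$ produces two clean segments — pure hub and pure $W$ — whose residual sums of squares are governed only by the curvature of $\log q_{j,i}\approx\log m_{s_i}+\log p_j$ and of the nearly-consecutive $W$-degrees (Assumption~\ref{assump:1}); one argues that this total stays below the diverging bound for all large $i$. Hence $\mathcal{L}_1(k_i)+\mathcal{L}_2(N-k_i)<\mathcal{L}_1(r)+\mathcal{L}_2(N-r)$, contradicting $\hat k_i=r$. Therefore $\hat k_i\ge M$ on the event, so $P(\hat k_i<M)\le P(\text{event fails})\to 0$; taking $M>\hat k_\ell$ gives $P(\hat k_i>\hat k_\ell)\to 1$.

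The main obstacle is precisely this last comparison: rigorously showing that the straddling loss $\mathcal{L}_2(N-r)$ outgrows the loss at the clean cut near $k_i$. The lower bound on a straddling segment's loss is clean (the $\mathrm{RSS}_\ell$-additivity plus slope-perturbation inequality above), but one also needs a matching upper bound on the pure-hub and pure-$W$ residual sums of squares, which forces one to quantify the rates hidden in Lemmas~\ref{lemma:sparsepartsteeperline} and~\ref{lemma:densepartflatterline} and to use the continuity of the degree function $D$. The genuinely fiddly point is that the two reference lines are fitted over different widths ($k_i$ versus $N-k_i$), so keeping the slope gap $|\beta_A-\beta_B|$ provably bounded away from zero — and hence the straddling loss provably unbounded — requires care about which of $k_i$, $N-k_i$ is larger; in the unfavourable regime one must fall back on the absolute size of the log-degree drop across the hub block rather than on its slope alone.
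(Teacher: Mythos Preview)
Your two-step outline --- (i) show that the count $k_i$ of hub degrees exceeding every $W$-degree tends to infinity, then (ii) argue that the minimizing cut $\hat k_i$ from Procedure~\ref{proc:fit2lines} cannot fall far below $k_i$ --- is exactly the paper's strategy, and your derivation of (i) via $m_{s_i}p_j\gg n_{d_i}$ and Chernoff bounds mirrors the paper's almost line for line. The divergence is entirely in step (ii). The paper does \emph{not} carry out any loss comparison: having shown that $q_{r,i}>Cn_{d_i}$ with high probability for a fixed $r>\hat k_\ell$, it simply asserts in one sentence that ``$q_{r,i}$ gets fitted with the first line for larger $i$, making $\hat k_i$ at least as big as $r$,'' appealing implicitly to the slope separation of Lemmas~\ref{lemma:sparsepartsteeperline}--\ref{lemma:densepartflatterline}, and then iterates over an increasing sequence $r_\ell\to\infty$. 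Your residual-sum-of-squares comparison is therefore strictly more ambitious than what the paper supplies, and you have correctly identified the soft spot of the entire argument: the difficulties you list (an upper bound on the clean-cut loss $\mathcal L_1(k_i)+\mathcal L_2(N-k_i)$, keeping $|\beta_A-\beta_B|$ bounded away from zero when $k_i$ and $N-k_i$ grow at different rates) are genuine and are not addressed in the paper either. If your goal is to match the paper's level of rigor you can drop the RSS machinery and argue heuristically from the slope lemmas; if your goal is to close the gap properly, your outline is the right shape, but completing it will likely require an extra hypothesis (a rate on $p_j\to 0$ or quantitative regularity of $D$) that the paper does not state.
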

\begin{proof}
Let $G_{s_\ell}$ denote the sparse part of $G_{n_\ell}$.  Suppose the number of stars in the sparse part $G_{s_\ell}$ is given by $st_\ell$. We know that $\hat{k}_\ell < st_\ell$ because some stars have degrees that are indistinguishable from the dense part. Pick $r \in \mathbb{N}$ such that $\hat{k}_\ell <  r \leq st_\ell$. The star corresponding to $r$ has expected degree
\[
\mathbb{E}(q_{r, \ell}) = m_{s_\ell} p_{r} + \frac{c m_{new_\ell}}{n_{s_\ell}}
\]
from Lemma \ref{lemma:WUrandomgraphsaboutU2}. Furthermore as $\Var(q_{r, i})/\mathbb{E}(q_{r, i})^2$ goes to zero with $i$ the observed values of $q_{r, i}$ get more centered at its expected value. For sparse graphs $n_{s_i}/n_{d_i}$ goes to infinity. This results in $m_{s_i}/n_{d_i}$ going to infinity and 
\[
\mathbb{E}(q_{r, i}) = 
m_{s_i} p_{r} + \frac{c m_{new_i}}{n_{s_i}} > C n_{d_i}
\]
for some $i$ where $C > 1$. Furthermore from Chernoff bounds 
\[
P( q_{r, i} < Cn_{d_i}) << P( q_{r, i} < (1 - \epsilon) \mu) ) = \exp\left(- \frac{\mu\epsilon^2}{2} \right)
\]
where $\mu = \mathbb{E}(q_{r, i})$. As $m_{s_i}$ goes to infinity, this probability goes to zero making
\[
\lim_{i \to \infty }P( q_{r, i} > Cn_{d_i}) = 1 \, .
\]
The nodes generated by $W$ have degrees less than $Cn_{d_i}$ for some $C$, and as $i$ goes to infinity, $q_{r, i}$ gets much larger than these nodes. As such, when fitting 2 lines as described in Procedure \ref{proc:fit2lines},  $q_{r, i}$ gets fitted with the first line for larger $i$, making $\hat{k}_i$ at least as big as $r$, which is greater than $\hat{k}_\ell$. Thus, 
\[
\lim_{i \to \infty}P(\hat{k}_i \geq r > \hat{k}_{\ell}) = 1 \, . 
\]
Next consider the graph sequence $\{G_{n_\ell} \}_{\ell}$. 
For a given $m_{s_\ell}$ the number of stars in the sparse part $G_{s_\ell}$ is finite, but as $\ell$ increases, $m_{s_\ell}$ increases and the number of stars $st_{\ell}$ increases making the sequence $\{st_{\ell}\}_{\ell}$ tend to infinity because $\bm{p}$ has infinite non-zero elements. Thus, we consider an increasing sequence $\{r_{\ell} \}_{\ell}$ such that $\hat{k}_\ell <  r_{\ell} \leq st_\ell$  and $\lim_{\ell \to \infty} r_{\ell} = \infty$. For every $r_{\ell}$ there exists $I_{\ell}$ such that for $i > I_{\ell}$ we have
\[
\hat{k}_{i} \geq r_{\ell} > \hat{k}_{\ell} \, , 
\]
with high probability. Therefore, the sequence 
\[
 \lim_{i \to \infty} \hat{k}_{i}  = \infty
\]
with high probability. That is, for any $M \in \mathbb{N}$
\[
\lim_{i \to \infty} P(\hat{k}_i < M) = 0 \, . 
\]
\end{proof}

\proppjInfiniteK*
 \begin{proof}
    The proof is similar to that of Proposition \ref{prop:pj}.  Let $X_i = q_{j, i}$ and $Y_i = \sum_{\ell=1}^{\hat{k}_i} q_{\ell, i}$. The difference is that $k$ is replaced with $\hat{k}_i$.   Some of the expectations,  variances and covariance are changed to
        \begin{align}
        \mathbb{E}(X_i ) & = m_{s_i}p_j + \frac{c m_{new_i}}{n_{s_i}} \, ,  \\
        \mathbb{E}(Y_i ) & = m_{s_i}\sum_{\ell = 1}^{\hat{k}_i} p_{\ell}  + \frac{\hat{k}_i c m_{new_i}}{n_{s_i}} \, , \\
        \Var(X_i ) & = m_{s_i} p_j(1 - p_j)  + m_{new_i}\frac{c}{n_{s_i}}\left( 1 - \frac{c}{n_{s_i}} \right) \, , \\
        \Var(Y_i )    
        & = m_{s_i}\sum_{\ell = 1}^{\hat{k}_i} p_{\ell} \left(1 - \sum_{\ell = 1}^{\hat{k}_i} p_{\ell} \right) +  \hat{k}_i  m_{new_i}\frac{c}{n_{s_i}}\left( 1 - \frac{c}{n_{s_i}} \right) \, ,\\
          \Cov(X_i, Y_i ) & =  m_{s_i} p_j(1 - p_j) + m_{new_i}\frac{c}{n_{s_i}}\left( 1 - \frac{c}{n_{s_i}} \right) \, . 
    \end{align}
    We turn out attention to the term $\hat{k}_i$ . We note that $\hat{k}_i \in O(m_{s_i}/n_{d_i})$.  The sequence $\{\hat{k}_i\}_i$  increases at a slower rate compared to $m_{s_i}/n_{d_i}$  depending on the rate $\{p_j\}_{j}$ decreases. For $\sum_j p_j$ to add to 1, $p_j$ cannot decrease like $c/j$ for come constant $c$, because $\sum_j c/j$ diverges to infinity. Hence $p_j \approx c/j^{1 + \alpha}$ is an option. The estimate $\hat{k}_i$ satisfies
    \[
    m_{s_i}p_{\hat{k}_i} \geq C n_{d_i} \, , 
    \]
    for some $C \in \mathbb{R}$ because the hub vertex corresponding to $p_{\hat{k}_i}$ needs have a larger degree than the degrees generated by $W$. Approximating  $p_j = c/j^{1 + \alpha}$ and letting $j \leq \hat{k}_i$ we have
   \begin{align}
       p_{j} & \geq C\frac{n_{d_i}}{m_{s_i}} \, , \\
       \frac{c}{j^{1 + \alpha}} & \geq C\frac{n_{d_i}}{m_{s_i}} \, , \\
       j^{1 + \alpha} & \leq C'\frac{m_{s_i}}{n_{d_i}} \, , \\
       j & \leq \left(C'\frac{m_{s_i}}{n_{d_i}}  \right)^{\frac{1}{1 + \alpha}} \, , 
   \end{align}
   making 
   \[
   \hat{k}_i =\max \left\{j : j  \leq \left(C'\frac{m_{s_i}}{n_{d_i}}  \right)^{\frac{1}{1 + \alpha}}  \right\} \, .
   \]
   As this is true for any $\alpha >0$ we have
   \[
   \hat{k}_i \in O(m_{s_i}/n_{d_i}) \, . 
   \]
This makes 
\begin{align}
    \mathbb{E}(Y_i ) & = m_{s_i}\sum_{\ell = 1}^{\hat{k}_i} p_{\ell}  + \frac{\hat{k}_i c m_{new_i}}{n_{s_i}} \, , \\
    \lim_{i \to \infty }\frac{1}{m_{s_i}}\mathbb{E}(Y_i ) &  = \lim_{i \to \infty } \sum_{\ell = 1}^{\hat{k}_i} p_{\ell}  + \frac{\hat{k}_i c m_{new_i}}{m_{s_i}n_{s_i}} \, , \\
    & = \sum_{\ell = 1}^{\hat{k}_i} p_{\ell} \, , 
\end{align}
following the same reasoning as in Proposition \ref{prop:pj} and because $\hat{k}_i \in O(m_{s_i}/n_{d_i})$.  That is,  $\hat{k}_i$ is bounded by the rate  $m_{s_i}/n_{d_i}$, and it is offset by $m_{new_i}/(m_{s_i} n_{s_i})$, which behaves like $n_{d_i}^2/m_{s_i}^2$. 
Similarly, 
\[
\lim_{i \to \infty } \frac{1}{m_{s_i}}\Var(Y_i )    
     = \sum_{\ell = 1}^{\hat{k}_i} p_{\ell} \left(1 - \sum_{\ell = 1}^{\hat{k}_i} p_{\ell} \right)  \, .\\
\]
Therefore the  term $\hat{k}_i$ does not affect the expectation or the variance estimates in a substantial way.  The rest is the same as in Proposition \ref{prop:pj}. 

As in Proposition \ref{prop:pj} using the second order Taylor series approximation of the expectation of a ratio of two random variables 
    \begin{equation}\label{eq:expectationofaratio}
        \mathbb{E}\left( \frac{X_i}{Y_i}    \right ) =   \frac{\mathbb{E}(X_i  }{\mathbb{E}(Y_i  )} - \frac{\Cov(X_i,Y_i )}{\mathbb{E}(Y_i)^2} +  \frac{\mathbb{E}(X_i ) \Var(Y_i)}{\mathbb{E}(Y_i)^3}    \, .
    \end{equation}
    we obtain 
  \begin{align}
      \lim_{i \to \infty} \mathbb{E}\left( \frac{X_i}{Y_i}   \right ) &= \lim_{i \to \infty} \frac{m_{s_i}p_j}{m_{s_i}\sum_{\ell = 1}^{\hat{k}_i} p_{\ell} } \, ,  \\
      & = \lim_{i \to \infty} \frac{p_j}{\sum_{\ell = 1}^{\hat{k}_i} p_{\ell} } \, ,  \\
      & =  p_j \, ,  
  \end{align}
and 
\[
\left \vert \mathbb{E}\left( \frac{X_i}{Y_i}   \right ) - p_j \right \vert  \leq \frac{c}{m_{s_i}} \, .
\]

  Similarly, the first order Taylor approximation of the variance is given by
 \[
    \Var\left(\frac{X_i}{Y_i}   \right) = \frac{ \mathbb{E}(X_i  )^2}{\mathbb{E}(Y_i  )^2} \left( \frac{\Var(X_i )}{ \mathbb{E}(X_i )^2 } - 2\frac{ \Cov(X_i,Y_i )}{ \mathbb{E}(X_i ) \mathbb{E}(Y_i) } + \frac{\Var(Y_i )}{ \mathbb{E}(Y_i )^2} \right) \, . 
    \]  
    The terms inside the parenthesis go to zero as $i$ goes to infinity making
\[
  \Var\left(\frac{X_i}{Y_i}   \right)  = 0\, , 
\]    
 and
 \[
 \Var\left(\frac{X_i}{Y_i}   \right)  \leq \frac{c}{m_{s_i}} \, . 
 \]
    
\end{proof}

\begin{lemma}\label{lemma:errorbounds}
    Let $\{G_{n_i}\}_i$ be a sequence of sparse $(U,W)$-mixture graphs (Definition \ref{def:WURandomMixtureGraphs}) with dense and sparse parts $G_{d_i}$ and $G_{s_i}$ respectively. Let $\bm{p} = (p_1, p_2, \ldots )$ be the mass-partition (Definition \ref{def:masspartition}) associated with $U$ which has infinite partitions. For a given graph $G_{n_i}$ we conduct Procedure \ref{proc:fit2lines} and estimate $\hat{k}_i$. Furthermore we estimate the mass partition $\hat{\bm{p}} = (\hat{p}_i, \hat{p}_2, \ldots )$ using equation \eqref{eq:hatpjforinfinite}. Suppose $p_i < 1/(i+1)^{1+\alpha}$, i.e., $\alpha$ gives an upper bound for the rate at which $p_i$ goes to zero. Then
    \[
    \sum_{j = \hat{k}_i}^{\infty} p_j \leq \frac{1}{\alpha \hat{k}_i^{\alpha}} \, . 
    \]
\end{lemma}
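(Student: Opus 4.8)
The plan is to reduce the statement to the standard integral comparison for a convergent $p$-series, since once the hypothesis $p_i < 1/(i+1)^{1+\alpha}$ is in hand the estimate is essentially a one-line tail bound. First I would dominate the tail sum termwise using the hypothesis: for every index $j \geq \hat{k}_i$ we have $p_j < 1/(j+1)^{1+\alpha}$, so
\[
\sum_{j=\hat{k}_i}^{\infty} p_j \;<\; \sum_{j=\hat{k}_i}^{\infty} \frac{1}{(j+1)^{1+\alpha}} \;=\; \sum_{m=\hat{k}_i+1}^{\infty} \frac{1}{m^{1+\alpha}},
\]
after the index shift $m = j+1$. This series converges precisely because $\alpha > 0$, which is exactly the condition that allows $\bm p$ to decrease faster than $\{1/j\}_j$ and hence be a genuine mass-partition (cf.\ Lemma \ref{lemma:sparsepartsteeperline}).

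Next I would compare this tail with an integral. The function $f(x) = x^{-(1+\alpha)}$ is positive and decreasing on $[\hat{k}_i,\infty)$, so for each integer $m \geq \hat{k}_i + 1$ we have $f(m) \leq \int_{m-1}^{m} f(x)\,dx$; summing over $m \geq \hat{k}_i+1$ telescopes the intervals of integration and yields
\[
\sum_{m=\hat{k}_i+1}^{\infty} \frac{1}{m^{1+\alpha}} \;\leq\; \int_{\hat{k}_i}^{\infty} \frac{dx}{x^{1+\alpha}} \;=\; \frac{1}{\alpha\,\hat{k}_i^{\alpha}},
\]
where the last equality is the elementary evaluation $\int_{\hat{k}_i}^{\infty} x^{-(1+\alpha)}\,dx = \alpha^{-1}\hat{k}_i^{-\alpha}$. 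Chaining the two displays gives the claimed bound $\sum_{j=\hat{k}_i}^{\infty} p_j \leq \frac{1}{\alpha\,\hat{k}_i^{\alpha}}$.

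The only points requiring any care are bookkeeping: getting the index shift from $j+1$ to $m$ right, and orienting the integral comparison so that it produces an \emph{upper} bound on the tail rather than a lower one (which is why one integrates $f$ over $[m-1,m]$ and bounds $f(m)$ from above, not over $[m,m+1]$). Since $\hat{k}_i \geq 1$ by construction (Lemma \ref{lemma:khatiIncreases} shows $\hat{k}_i \to \infty$, so in particular $\hat{k}_i$ is a positive integer), the improper integral is finite and every step is well-defined. There is no genuine obstacle here; the entire content of the lemma is packed into the hypothesis $p_i < 1/(i+1)^{1+\alpha}$, and combined with Proposition \ref{prop:pjInfiniteK} it quantifies the total mass of the mass-partition left unestimated by Procedure \ref{proc:fit2lines}.
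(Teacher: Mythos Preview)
Your proof is correct and follows essentially the same route as the paper: bound $p_j$ by $1/(j+1)^{1+\alpha}$, then use the integral comparison $\frac{1}{(j+1)^{1+\alpha}} < \int_j^{j+1} x^{-(1+\alpha)}\,dx$ and sum to get $\int_{\hat{k}_i}^{\infty} x^{-(1+\alpha)}\,dx = \frac{1}{\alpha \hat{k}_i^{\alpha}}$. Your explicit index shift $m=j+1$ is just a cosmetic rewriting of the same inequality the paper uses directly.
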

\begin{proof}
    We consider $f(x) = \frac{1}{x^{1+\alpha}}$ and have
    \begin{align}
        \frac{1}{(j+1)^{1 + \alpha}} & < \int_{j}^{j+1} \frac{1}{x^{1+\alpha}} \,  dx \, , \\
      \sum_{j = \hat{k}_i}^{\infty} p_j <   \sum_{j = \hat{k}_i}^{\infty}  \frac{1}{(j+1)^{1 + \alpha}} & \leq \int_{\hat{k}_i}^{\infty} \frac{1}{x^{1+\alpha}} \,  dx \, , \\ 
        & = \frac{1}{\alpha \hat{k}_i^{\alpha}} \, . 
    \end{align}
\end{proof}
\section{Experiments}\label{app:experiments}

\subsection{Illustration with synthetic data}
\subsubsection{Degree prediction}

\begin{figure}[!ht]
    \centering
    \includegraphics[width=0.95\linewidth]{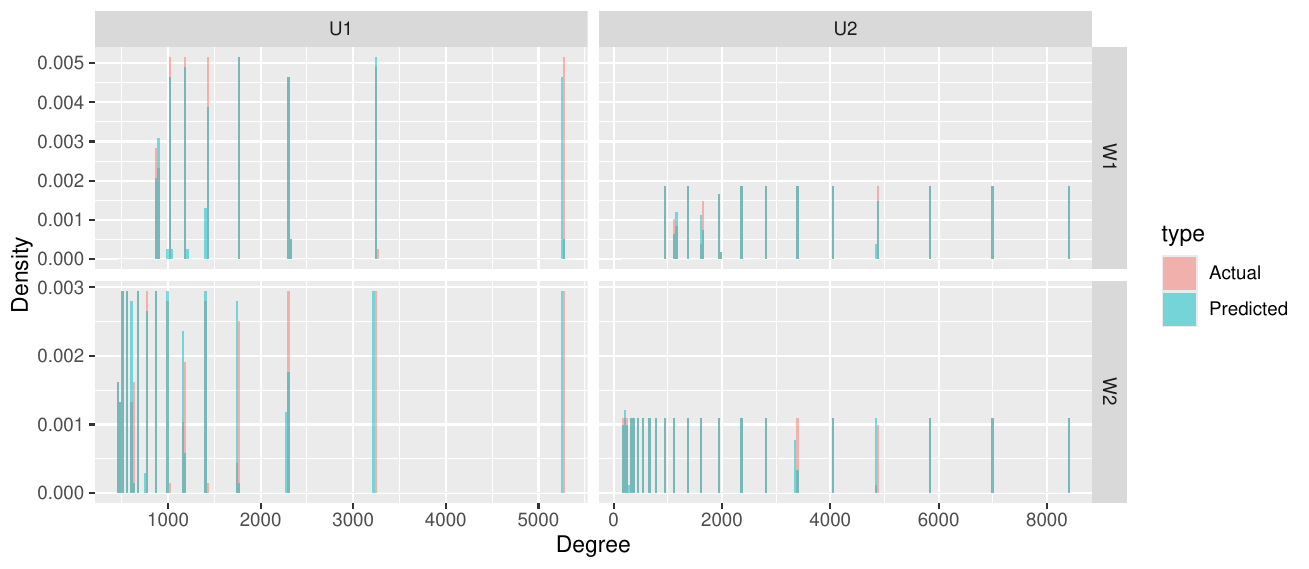}
    \caption{Densities of predicted degree using equation \eqref{eq:degreepred} for experiments with $(W_1, U_1)$, $(W_1, U_2)$, $(W_2, U_1)$ and $(W_2, U_2)$-mixture graphs.}
    \label{fig:experiment1to4Degrees}
    \centering
    \includegraphics[width=0.95\linewidth]{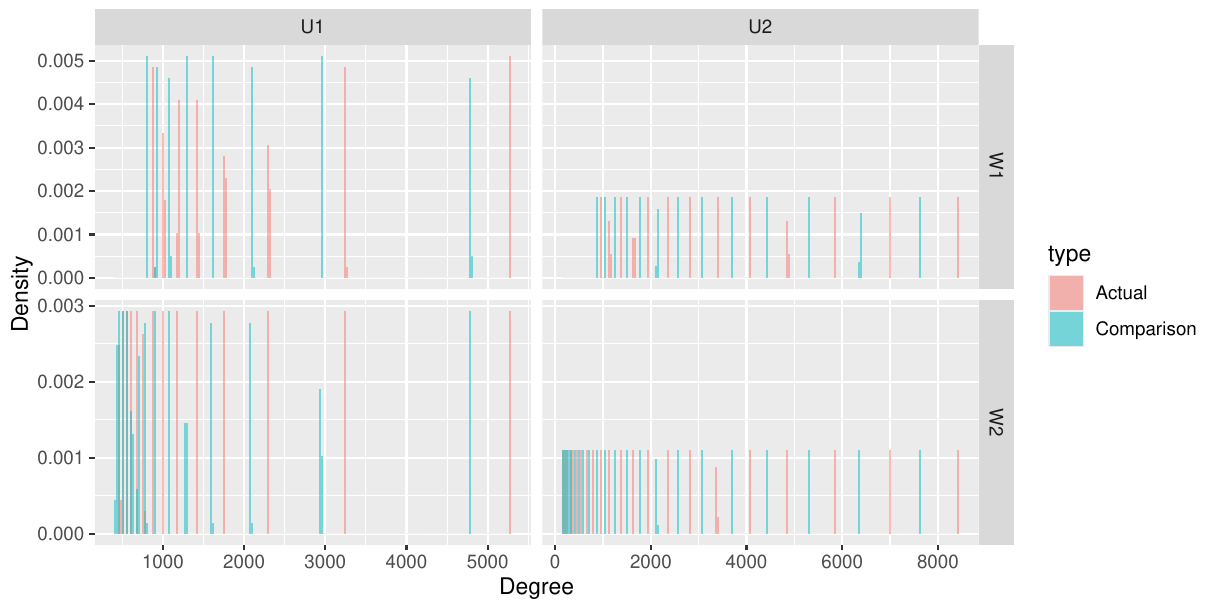}
    \caption{Densities of predicted degrees using equation \eqref{eq:degcompare} for experiments with $(W_1, U_1)$, $(W_1, U_2)$, $(W_2, U_1)$ and $(W_2, U_2)$-mixture graphs.}
    \label{fig:experiment1to4DegreesComparison}
\end{figure}

To predict the top-$k$ degrees using synthetic data we use $(U,W)$-mixture graphs with 4 combinations. We consider $W_1 = \exp(-(x+y))$, $W_2 = 0.1$  and $U_1$ with mass-partitions $\bm{p}_1 \propto \{1/j^{1.2}\}_{j=2}^{50}$ and $U_2$ with mass-partition $\bm{p}_2 \propto \{1/1.2^{j}\}_{j=2}^{50}$.  The mass-partitions are normalized so that they add up to 1.  For each $(U,W)$ combination we have training graphs with $n_{i} = 11,000$ nodes and test graphs with $n_{j} = 13,200$ nodes. We then predict the top-$k$ degrees in graph $G_{n_{j}}$ using equation \eqref{eq:degreepred} for  $k = \hat{k}_i$ in each instance. For the four experiments the estimated $\hat{k}_i$ values were  $8, 13, 14, 22$ respectively. 

As a baseline comparison method we used a scale-free graph property discussed in \cite{bollobas2001degree}.  For a scale-free graph with $n$ nodes the maximum degree behaves like $\Theta(\sqrt{n})$. Suppose $\deg_{G_{n_i}} v_{(\ell)}$ denotes the $\ell$th highest degree in $G_{n_i}$. Then, for an unseen graph $G_{n_j}$ with $n_j$ nodes we estimate the $\ell$th highest degree using
\begin{equation}\label{eq:degcompare}
   \deg_{G_{n_j}} \hat{v}_{(\ell)} = \deg_{G_{n_i}} {v}_{(\ell)} \times \sqrt{\frac{n_j}{n_i}} \, ,  
\end{equation}
as a means of comparison. Figures \ref{fig:experiment1to4Degrees} and \ref{fig:experiment1to4DegreesComparison} show the results of the four experiments using equations \ref{eq:degreepred} and \eqref{eq:degcompare} respectively. 


\subsubsection{Estimating $U$ with finite partitions}

\begin{figure}
    \centering
    \includegraphics[width=0.8\linewidth]{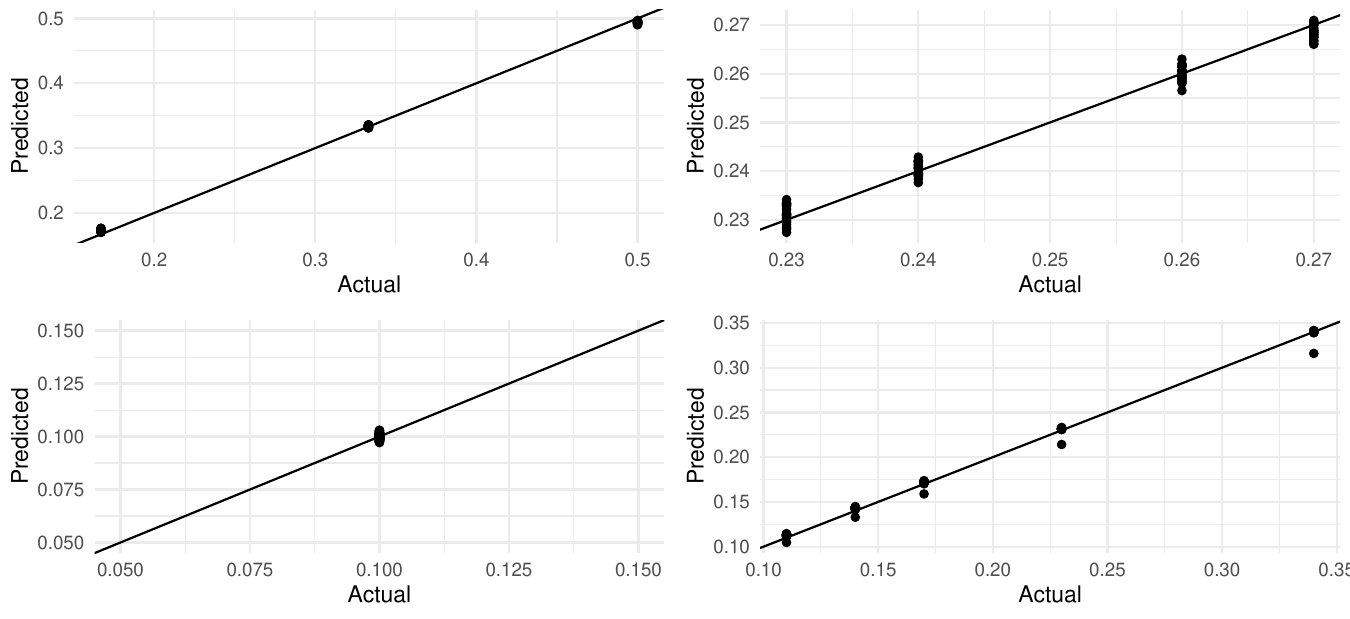}
    \caption{Experiments for $U$ with finite partitions. The actual $p_j$ in  mass-partition in $\bm{p} = (p_1, \ldots, p_n, 0, \ldots)$ is given on the $x$-axis and the predicted $\hat{p}_j$ using equation \eqref{eq:pjhat} on the $y$-axis.}
    \label{fig:FiniteUExperiment}
     \centering
    \includegraphics[width=0.8\linewidth]{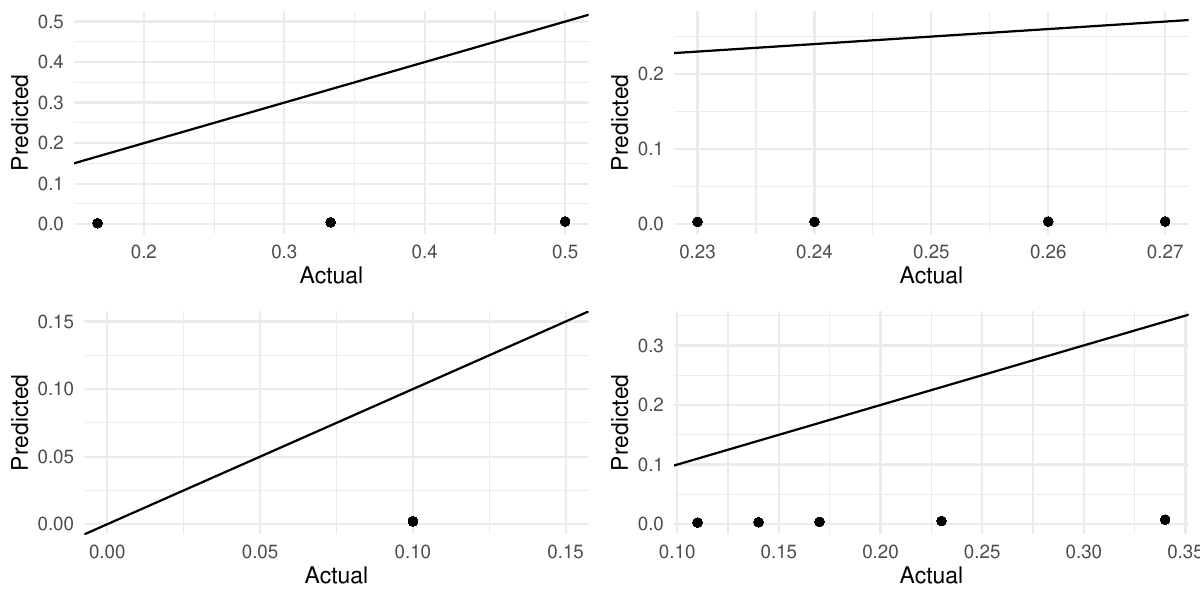}
    \caption{Experiments for $U$ with finite partitions. The actual $p_j$ in mass-partition in $\bm{p} = (p_1, \ldots, p_n, 0, \ldots)$ is given on the $x$-axis and the predicted  $\hat{p}_j$ using equation \eqref{eq:pjhatcompare} on the $y$-axis.}
    \label{fig:FiniteUExperimentComparison}
\end{figure}

We conduct 4 experiments with different mass-partitions $\bm{p}$ corresponding to $U$. We consider $\bm{p}_1 = \left(\frac{1}{2}, \frac{1}{3}, \frac{1}{6} \right)$ for experiment 1, $\bm{p}_2 = \left(0.27, 0.26, 0.24, 0.23 \right)$ for experiment 2, $\bm{p}_3 = \left(0.1, 0.1, \cdots, 0.1 \right)$, i.e., 0.1 repeated 10 times for experiment 3 and $\bm{p}_4 \propto \left( \frac{1}{2}, \frac{1}{3},  \frac{1}{4}, \frac{1}{5}, \frac{1}{6} \right)$ in experiment 4 where the mass-partition is obtained by dividing by $\sum_{i=2}^6 1/i$.  For each experiment we use a set of graphs with a given number of nodes and we estimate $\hat{k}_i$ using equation \eqref{eq:khati}, which gives the same result as equation \eqref{eq:khatfinite}.  Using $\hat{k}_i$ and equation \eqref{eq:pjhat} we estimate the mass-partition.

As a comparison method we use
\begin{equation}\label{eq:pjhatcompare}
   \hat{p}_j = \frac{ \deg_{G_{n_i}} v_{(j)} }{ \sum_{\ell = 1}^{n_i} \deg_{G_{n_i}} v_{\ell)} } \, , 
\end{equation}
i.e., the ratio of the $j$th highest degree to the sum of all degrees. Figures \ref{fig:FiniteUExperiment} and \ref{fig:FiniteUExperimentComparison} show the results of the 4 experiments using equations \eqref{eq:pjhat} and \eqref{eq:pjhatcompare} respectively.   


\subsubsection{Estimating $U$ with infinite partitions}

\begin{figure}[!ht]
    \centering
    \includegraphics[width=0.8\linewidth]{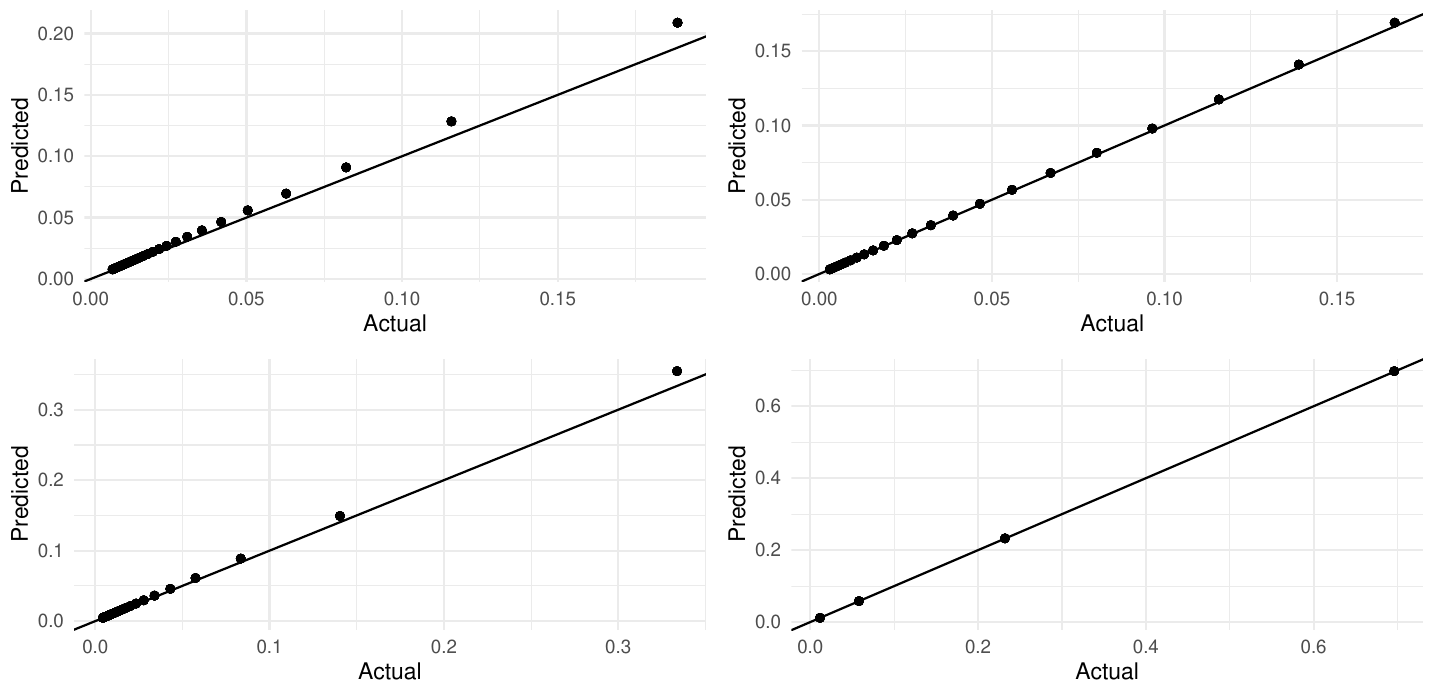}
    \caption{Experiments for infinite mass partition $\bm{p}$ with experiment 1 (top left), 2 (top right), 3 (bottom left) and 4 (bottom right). Equation \eqref{eq:pjhat} is used to estimate $p_i$.  }
    \label{fig:InfiniteUExperiments}
    \centering
    \includegraphics[width=0.8\linewidth]{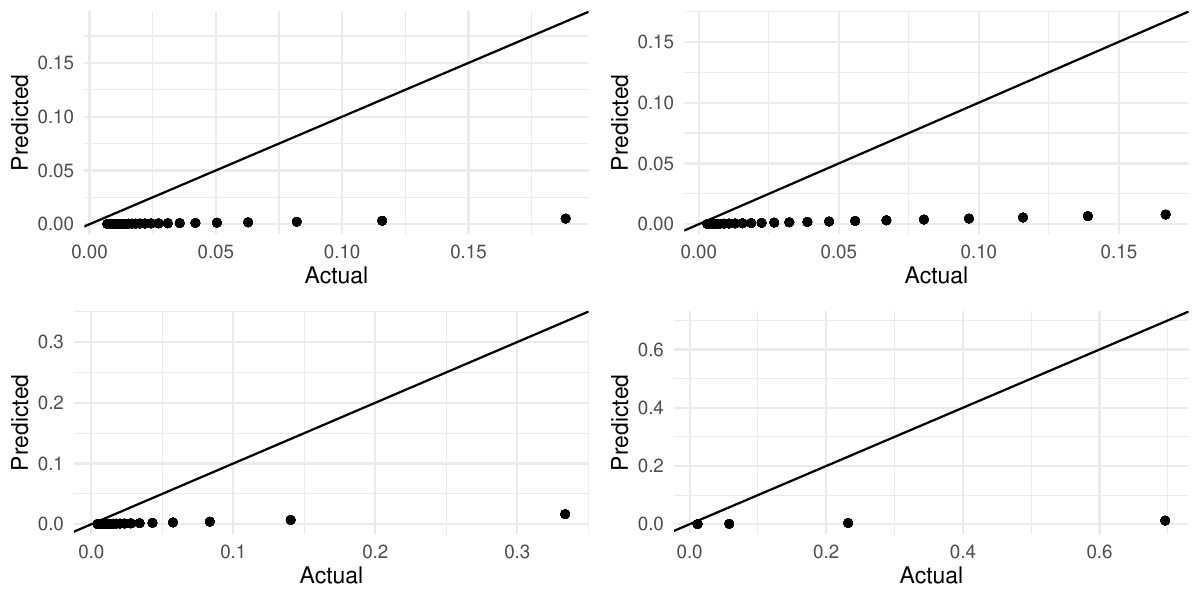}
    \caption{Experiments for infinite mass partition $\bm{p}$ with experiment 1 (top left), 2 (top right), 3 (bottom left) and 4 (bottom right). Equation \eqref{eq:pjhatcompare} is used to estimate $p_i$. }
    \label{fig:InfiniteUExperimentsComparison}
\end{figure}

In practice we cannot generate a mass-partition with infinite non-zero elements. As such we consider a finite number of partitions from an infinite sequence. For the 4 experiments we consider these sequences: for experiment 1 we want the $j$th element of the mass-partition $p_j \propto \frac{1}{{j + 1}^{1.2}}$. Noting that we cannot have a mass partition proportional to $\frac{1}{j}$ we consider a relatively low exponent of 1.2. For experiment 2 we consider $p_j \propto \frac{1}{{1.2}^{j+1}}$. This is a geometric series. For experiment 3 we consider $p_j \propto \frac{1}{{(j + 1)} \log(j+1)}$. For experiment 4 we consider $p_j \propto \frac{1}{(j+1)!}$. For all experiments we consider $\bm{p}$ to have 49 elements and rescale the mass-partition to add up to 1.  

For each experiment we use a set of graphs and using equation \eqref{eq:khati} we estimate $\hat{k}_i$, which are 30, 23, 30 and 4 for the four experiments. Figures \ref{fig:InfiniteUExperiments} and \ref{fig:InfiniteUExperimentsComparison} show the estimated and actual $p_j$ values using equations \eqref{eq:pjhat} and \eqref{eq:pjhatcompare} respectively.

For fast decreasing sequences such as $\frac{1}{(j+1)!}$ a smaller number of elements contribute to a larger sum  $\sum_{j=1}^{\hat{k}_i} p_j$  and as such both $k_i$ (the number of hubs with degrees greater than those generated by $W$) and the estimate $\hat{k}_i$ are small. Notwithstanding this,  the estimate $\hat{p}_j$ is quite accurate because the sum $\sum_{j=1}^{\hat{k}_i} p_j$ is quite large. In contrast, for slowly decreasing sequences such as $\frac{1}{{j + 1}^{1.2}}$, as the proportions decrease slowly, the estimate $\hat{k}_i$ is large, but the value $\sum_{j=1}^{\hat{k}_i} p_j$ is not that large. As such $\hat{p}_j$ deviates from $p_j$ quite a bit for small $j$. This is seen in the topleft subplot of Figure \ref{fig:InfiniteUExperiments}, which corresponds to experiment 1. The estimate $\hat{k}_i$ and the sum $\sum_{j=1}^{\hat{k}_i} p_j$ for these 4 experiments are given in Table \ref{tab:results2}.  

\begin{table}[!ht]
    \centering
        \caption{The estimates $\hat{k}_i$ and $\sum_{j=1}^{\hat{k}_i} p_j$ for  infinite $U$ experiments. }
    \begin{tabular}{cccccc} 
    \toprule
    Task & Description    &  Experiment 1 & Experiment 2 & Experiment 3 & Experiment 4  \\
    \midrule
    \multirow{2}{*}{Infinite $U$} 
                                             & $\hat{k}_i$ & 30 & 23 & 30 & 4 \\
                                             & $\sum_{j = 1}^{\hat{k}_i} p_j$ & 0.902 & 0.985 & 0.941 & 0.998 \\
    \bottomrule                                         
    \end{tabular}
    \label{tab:results2}
\end{table}

\subsection{Real datasets}

We use Facebook (FB) links \citep{viswanath2009activity}, Hep-PH Physics citations  \citep{leskovec2005graphs, gehrke2003overview}, MOOC interactions \citep{kumar2019predicting}, SMS \citep{wu2010evidence}, UCI Messages \citep{panzarasa2009patterns} and Yahoo messages \citep{yahoo} datasets. Each dataset is given as an edgelist with timestamped edges. The edges accumulate at different speeds in each dataset. In Yahoo messages dataset the edges accumulate quite fast and as such we use a 2-hourly time window to construct graphs. In contrast, edges accumulate slowly in Hep-PH citation dataset and we use monthly graphs. For the other datasets we use daily graphs. In each example we consider growing networks, i.e., new edges and nodes are added to the existing graph as time passes.  

Each dataset comprises a sequence of growing graphs $\{G_{n_i}\}_i$ and we consider $G_{n_i}$ for $i \in \{20, \ldots , 24 \}$ as training graphs. For each $G_{n_i}$ we select the test graph $G_{n_j}$ such that $j = \min_{k} \{ |G_{n_k}|: |G_{n_k}| - |G_{n_i}| \geq 500  \}$, i.e., $G_{n_j}$ is the first graph that has more than 500 nodes compared to $G_{n_i}$.  This is to ensure that the test graph is different from the training graph. Using the training graphs we predict the top-10 degrees of the test graphs.

For \cite{bollobas2001degree} we used equation \eqref{eq:degcompare} to predict degrees. \cite{caron2017sparse} have made their code available at \url{https://www.stats.ox.ac.uk/~caron/code/bnpgraph/}. We used their code to train their graph generation model on the training graph $G_{n_i}$ and using the trained model generated a graph having $n_k$ nodes, which we compared with $G_{n_k}$.  Kronecker graphs have 2 settings: deterministic and stochastic, and their code is available at \url{https://github.com/snap-stanford/snap/tree/master/examples/krongen}.  We generated Kronecker graphs with  (1) the default setting (stochastic $2\times 2$ initiator matrix), (2)  stochastic  setting with a $3 \times 3$ initiator matrix and  (3) the deterministic setting with a $3 \times 3$ initiator matrix.  For the stochastic settings, we first fitted a Kronecker model on the training graph $G_{n_i}$ and then using the fitted parameters generated the graph with $n_k$ nodes, which we compared with the test graph $G_{n_k}$. \cite{leskovec2010kronecker} explored the initiator matrix resulting from  a 3-star ($K_{1,3}$) with self loops added at each node. The adjacency matrix of this graph is  given by 
\begin{equation}\label{eq:KroneckerInitiator}
    A = \begin{pmatrix}
1 & 1 & 1 \\
1 & 1 & 0 \\
1 & 0 & 1
\end{pmatrix} \, . 
\end{equation}

We explored the deterministic case with this initiator matrix. Using the predicted top-$10$ degrees from each method, we computed the average MAPE and the standard deviation, which is given in Table \ref{tab:realworld}. We compared the performance  of the top 2 methods on each dataset using the Student's t-test  with $\alpha = 0.1$. For all datasets except UCI Messages we found that $(U,W)$-mixture was significantly better than \cite{bollobas2001degree}, which was the ranked the second. For UCI, while $(U,W)$-mixtures performed better on average, it was not significantly better. Figures \ref{fig:FB} to \ref{fig:Yahoo} show the actual and predicted degrees of the test graphs with Actual = Predicted line drawn. In some instances, the $y=x$ line is out of range.

\begin{figure}[!p]
    \centering
    \includegraphics[height=0.13\textheight]{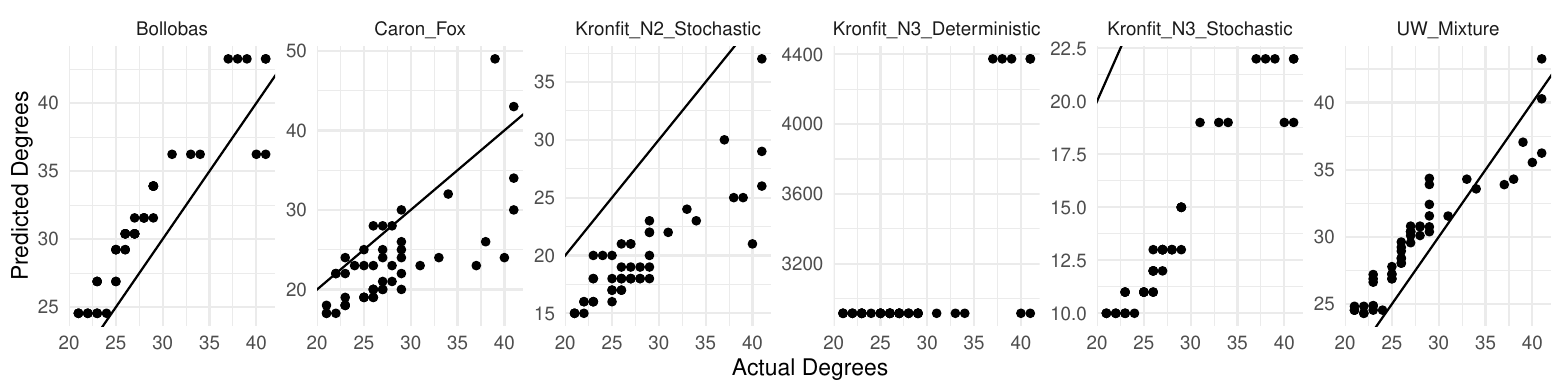}
    \caption{Facebook links dataset}
    \label{fig:FB}
    \includegraphics[height=0.13\textheight]{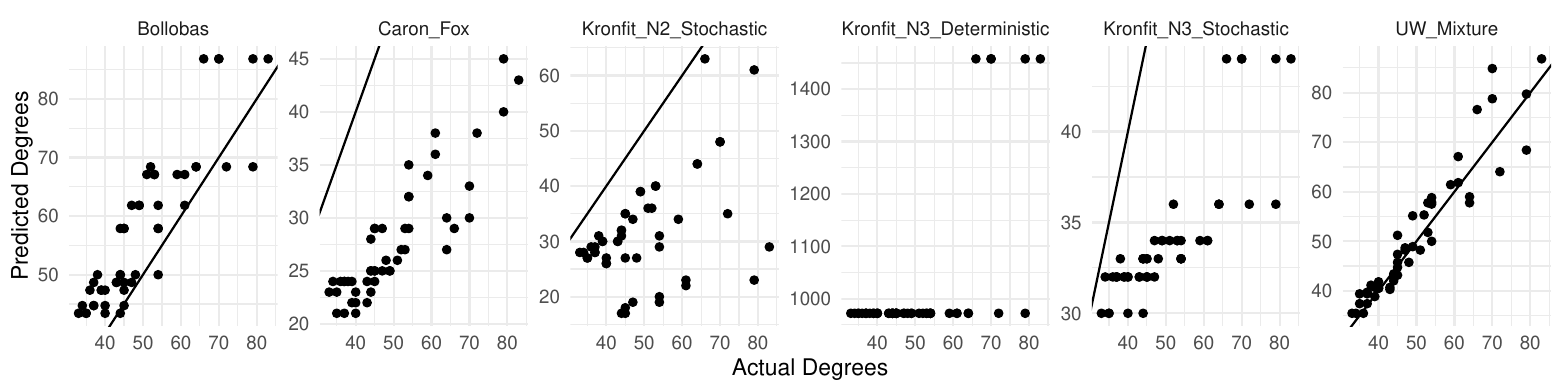}
    \caption{HEP-PH dataset}
    \label{fig:HEPPH}
    \includegraphics[height=0.13\textheight]{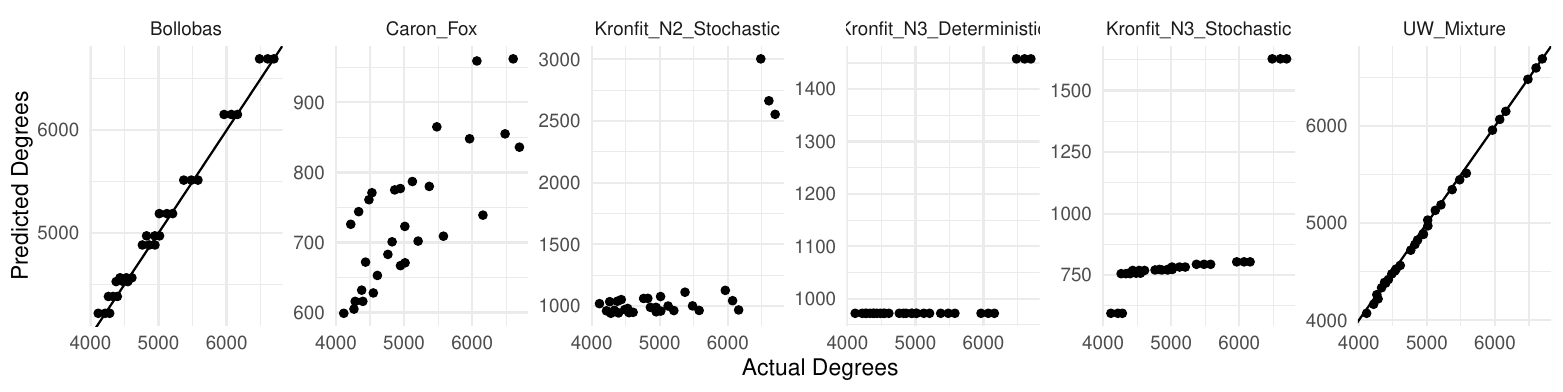}
    \caption{MOOC dataset}
    \label{fig:MOOC}
    \includegraphics[height=0.13\textheight]{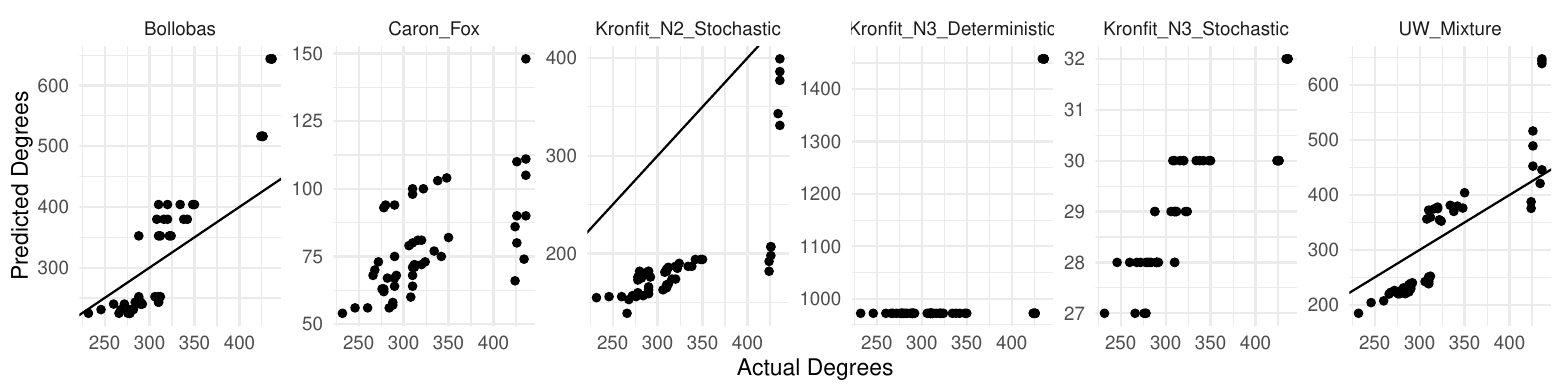}
    \caption{UCI Messages dataset}
    \label{fig:UCI}
    \includegraphics[height=0.13\textheight]{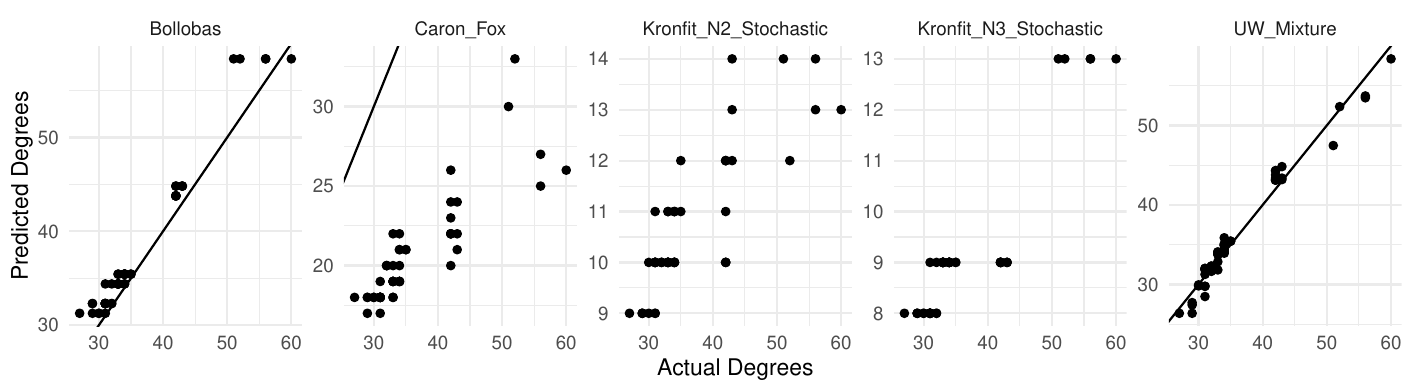}
    \caption{SMS dataset}
    \label{fig:SMS}
    \includegraphics[height=0.13\textheight]{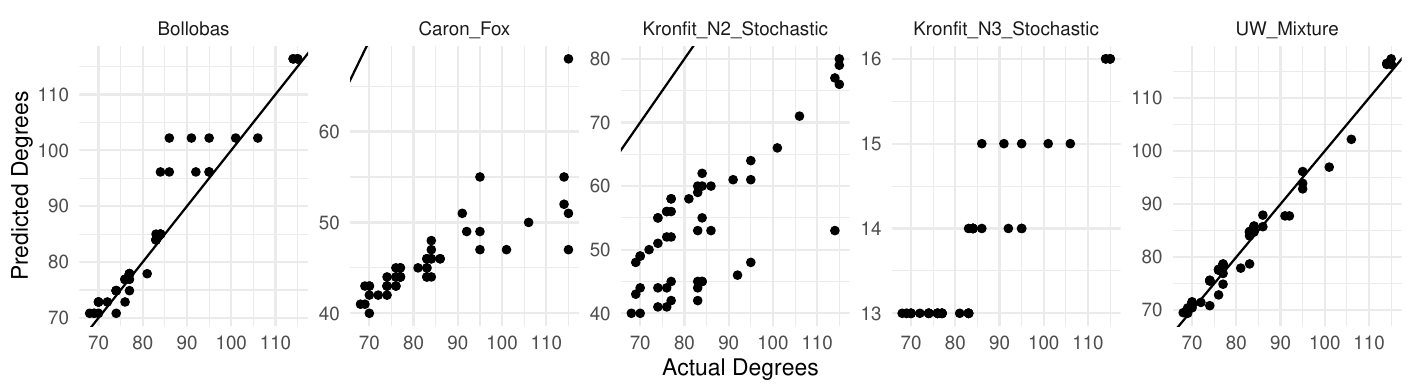}
    \caption{Yahoo dataset}
    \label{fig:Yahoo}
\end{figure}

\subsubsection{HEP-TH dataset comparison as in \cite{leskovec2005graphs}}

\cite{leskovec2010kronecker} have used the initiator matrix $A$ given in equation \eqref{eq:KroneckerInitiator} to generate graphs similar to the HEP-TH dataset using the deterministic setting.  For the stochastic setting, they have replaced 1s with $\alpha=0.41$ and 0s with $\beta=0.11$.  With these two initiators we ran their algorithm and verified we could produce the degree distribution in their paper. In their paper they say that Kronecker graph results qualitatively match the original results. We found this to be true in our simulation as well. Table \ref{tab:KroneckerHepTH} gives the actual and predicted top-10 degrees using $(U,W)$-mixture graphs, Kronecker deterministic and stochastic versions and the absolute percentage error $|\hat{y} - y| \times 100/y$. 


\begin{table}[t]
    \centering
    \caption{The top-10 degree comparison results on HEP-TH dataset where APE denotes the absolute percentage error $|\hat{y} - y| \times 100/y$.}
    \begin{tabular}{p{1cm}p{1.5cm}p{2cm}p{1.5cm}p{1.5cm}p{1.5cm}p{1.5cm}}
    \toprule
          Actual & Predicted $(U,W)$ degree & Predicted Kronecker Deterministic. & Predicted Kronecker Stochastic & APE $(U,W)$ & APE Kronecker Det. & APE Kronecker Stoch. \\
    \midrule
 2468 & 2249 & 32768 & 74 & 8.89 & 1227.71 & 97.00  \\
 1797 & 1633 & 16384 & 51 & 9.14 & 811.74 & 97.16 \\
 1653 & 1503 & 16384 & 47 & 9.09 & 891.17 & 97.16 \\
 1369 & 1281 & 16384 & 45 & 6.44 & 1096.79 & 96.71 \\
 1308 & 1253 & 16384 & 44 & 4.19 & 1152.60 & 96.64 \\
 1219 & 1147 & 16384 & 44 & 5.92 & 1244.05 & 96.39 \\
 1218 & 1124 & 16384 & 43 & 7.69 & 1245.16 & 96.47 \\
 1165 & 1101 & 16384 & 42 & 5.51 & 1306.35 & 96.39 \\
 1124 & 1019 & 16384 & 42 & 9.35 & 1357.65 & 96.26 \\
 1038 & 996 & 16384 & 42 & 4.01 & 1478.42 & 95.95 \\
 \bottomrule
    \end{tabular}
    \label{tab:KroneckerHepTH}
\end{table}

\end{document}